\documentclass[twoside,11pt]{article}

\usepackage{blindtext}

\usepackage{subcaption}
\usepackage{caption}
\usepackage[abbrvbib, preprint]{jmlr2e}
\usepackage{natbib}
\usepackage{microtype}
\usepackage{xcolor}
\usepackage{url}
\usepackage{booktabs}
\usepackage{multirow}
\usepackage{array}
\usepackage{threeparttable}
\usepackage{amsmath,amssymb,amsfonts,mathtools,bm,latexsym}
\usepackage{bbm}
\usepackage{dsfont}
\usepackage{graphicx}
\usepackage[export]{adjustbox}
\usepackage{float}
\usepackage[ruled,vlined,linesnumbered]{algorithm2e}
\usepackage{algorithmic}
\usepackage{hyperref}
\usepackage[all]{hypcap}
\usepackage{cleveref}
\usepackage{makecell}
\usepackage{adjustbox}
\usepackage{tabularx}
\usepackage{array}

\newcolumntype{C}[1]{>{\centering\arraybackslash}p{#1}}

\newtheorem{theorem}{Theorem}

\newtheorem{lemma}{Lemma}

\newtheorem{assumption}{Assumption}

\newtheorem{remark}{Remark}

\newcommand{\E}{\mathbb{E}}

\newcommand{\Prb}{\mathbb{P}}
\newcommand{\Var}{\mathbb{V}}
\newcommand{\Dc}{\mathcal{D}}
\newcommand{\State}{\mathcal{S}}
\newcommand{\Action}{\mathcal{A}}
\newcommand{\Pest}{\hat{P}}
\newcommand{\Vest}{\hat{V}}

\newcommand{\src}{\mathrm{src}}
\newcommand{\tar}{\mathrm{tar}}
\newcommand{\TV}{\mathrm{TV}}
\newcommand{\KL}{\mathrm{KL}}
\newcommand{\Ber}{\operatorname{Ber}}

\newcommand{\pos}[1]{\left[#1\right]_+}
\newcommand{\kl}{\mathrm{kl}}
\newcommand{\wt}{\widetilde}

\newcommand{\clip}{\operatorname{clip}}

\newcommand{\on}{\mathrm{on}}
\newcommand{\hyb}{\mathrm{hyb}}

\newcommand{\bud}{\mathrm{bud}}
\newcommand{\loth}{\mathrm{l.o.t.}}

\newcommand{\1}{\mathbf{1}}

\newcommand{\one}{\mathbbm{1}}
\newcommand{\Reg}{\mathrm{Regret}}
\newcommand{\Sset}{\mathcal S}
\newcommand{\Aset}{\mathcal A}

\newcommand{\instdep}{\makecell[l]{\textbf{Instance-}\\\textbf{dependent}}}
\newcommand{\instind}{\makecell[l]{\textbf{Instance-}\\\textbf{independent}}}

\usepackage{lastpage}
\jmlrheading{23}{2022}{1-\pageref{LastPage}}{1/21; Revised 5/22}{9/22}{21-0000}
{Zheshun Wu, Renjie Zheng, Jinhang Zuo, Zenglin Xu, and Fang Kong}

\ShortHeadings{Hybrid RL under Shifted Transition Dynamics}{Wu, Zheng, Zuo, Xu, and Kong}
\firstpageno{1}

\begin{document}

\title{A Unified Algorithmic Framework for Hybrid Reinforcement Learning in Tabular MDPs with Shifted Transition Dynamics}

\author{\name Zheshun Wu \email wuzhsh23@gmail.com \\
       \addr Southern University of Science and Technology\\
       Harbin Institute of Technology, Shenzhen\\
    Shenzhen, Guangdong, China
       \AND
       \name Renjie Zheng \email zhengrj2024@mail.sustech.edu.cn \\
       \addr Southern University of Science and Technology\\
           Shenzhen, Guangdong, China
       \AND
       \name Jinhang Zuo \email jinhangzuo@gmail.com \\
       \addr City University of Hong Kong\\
         Hong Kong SAR, China
       \AND
       \name Zenglin Xu \email zenglin@gmail.com \\
       \addr Fudan University\\
       Shanghai Academy of AI for Science\\
           Shanghai, China
       \AND
       \name Fang Kong\thanks{Corresponding author: Fang Kong.}\email kongf@sustech.edu.cn \\
       \addr Southern University of Science and Technology\\
        Shenzhen 518055, Guangdong, China
     }

\editor{My Editor}

\maketitle
\begin{abstract}%
This paper investigates a hybrid reinforcement learning setting in tabular Markov Decision Processes (MDPs), where an agent aims to learn an optimal policy by combining online interactions with a target environment and offline data from a source environment.
  A central challenge is that offline data may be collected from outdated environments with shifted transition dynamics, making naive integration of historical data ineffective. To address this, we propose a unified algorithmic framework featuring two algorithms: MIN-UCB-VI for regret minimization and MAX-LCB-VI for best policy identification. Both algorithms leverage fine-grained bias information to more effectively exploit offline data under general transition shifts. We provide theoretical guarantees for our framework, including both instance-dependent and independent upper bounds on regret and sub-optimality gap. Furthermore, we establish matching lower bounds to demonstrate the optimality of our approach and validate our theoretical findings through extensive experiments.
\end{abstract}

\begin{keywords}
hybrid reinforcement learning, tabular Markov decision processes, shifted transition dynamics, regret minimization, best policy identification
\end{keywords}

\section{Introduction}

Reinforcement learning (RL) is a powerful paradigm for formulating a learning agent interacting with an uncertain environment and has achieved great success in many intelligence applications like RLHF~\citep{DBLP:conf/nips/RafailovSMMEF23,bai2022training}, robot learning~\citep{kober2013reinforcement}, game playing~\citep{silver2016mastering,silver2017mastering}, and more. Many applications often require training the RL agent with millions of samples to learn an effective policy, which is costly and time-consuming in practice~\citep{li2023understandingcomplexitygainssingletask}. This has motivated a line of recent efforts to study the sample efficiency of RL algorithms~\citep{azar2017minimax,NEURIPS2021_e61eaa38,wagenmaker2023leveraging}. One promising direction is to improve sample efficiency through hybrid learning~\citep{DBLP:conf/icml/CheungL24}. Hybrid RL frameworks enable an agent to learn an effective policy by leveraging both offline and online data~\citep{qu2025hybrid,huang2025augmenting}. Specifically, the agent conducts online exploration to refine its policy while simultaneously exploiting offline experience to enhance sample efficiency.
 

A large body of empirical work has investigated hybrid RL and demonstrated strong empirical performance~\citep{serrano2023similarity,ammar2015autonomous,you2022cross,kalashnikov2018scalable}. In parallel, several studies have sought to provide theoretical explanations for the advantages of hybrid learning. However, existing hybrid RL algorithms with theoretical guarantees typically rely on the assumption that offline and online data are drawn from the same environment, i.e., they share identical transition dynamics~\citep{huang2025augmenting,NEURIPS2021_e61eaa38,wagenmaker2023leveraging,li2023reward,song2022hybrid}.
In practice, this assumption is often violated. Offline data may originate from environments with shifted transition dynamics, for example due to data collection from imperfect simulators, noisy datasets, or outdated environments. As a result, it is difficult to guarantee that offline experience is fully consistent with the online environment. While some empirical studies~\citep{ball2023efficient} indicate that leveraging data with shifted dynamics can still improve learning efficiency, existing theoretical results under such transition shifts rely on strong structural assumptions and remain limited in scope.
 

Recently, \citet{qu2025hybrid} propose algorithms with theoretical guarantees for hybrid RL under shifted transition dynamics. Their methodology relies on two restrictive assumptions: (i) specific region reachability conditions must hold, and (ii) the transition dynamics of the source and target environments are separable.
The region reachability condition requires the agent to reliably access all regions exhibiting transition shifts through online exploration, which may be infeasible in large-scale or safety-constrained environments. The separability assumption presumes that the transition dynamics of the offline and online environments are either identical or differ by a sufficiently large margin. In realistic settings, however, discrepancies between offline and online dynamics can be arbitrary and continuous, exhibiting neither perfect alignment nor clearly separable differences. Such nuanced and heterogeneous shifts violate the separability condition, thereby limiting the applicability of the proposed guarantees.

Based on the above observations, we argue that despite recent theoretical progress on hybrid RL, a fundamental challenge remains:
\begin{center}
\textit{Can we develop a unified algorithmic framework that provably exploits offline data to improve hybrid RL under general transition shifts?}
\end{center}

This paper tries to answer this question by studying the hybrid RL problem in the tabular MDP setting with the main contributions summarized as follows.

\begin{itemize}
    \item  We develop a unified algorithmic framework for hybrid tabular MDPs with shifted transition dynamics. This framework includes both a MIN-UCB-VI algorithm for regret minimization and a MAX-LCB-VI algorithm for best policy identification, both of which effectively leverage fine-grained bias information between the source and target environments to  enhance online learning efficiency. We also integrate a model selection scheme into our proposed MIN-UCB-VI algorithm to address settings in which the agent does not have prior knowledge of the environment bias. 
    \item  We provide rigorous theoretical guarantees for our algorithms by deriving both instance-dependent and independent upper bounds on regret and sub-optimality gap. Furthermore, we establish matching lower bounds to demonstrate the optimality of our approach.
\item We show that cumulative bias over steps in regret bounds adversely impacts online learning performance due to the inherent step-wise dependency structure  in MDPs. Besides, we introduce a novel concentrability metric that jointly characterizes environment bias and policy mismatch, providing a more appropriate measure of the learned policy’s quality under shifted transition dynamics.
\end{itemize}

\begin{table}[t] \small \centering \setlength{\tabcolsep}{2pt} 
\begin{adjustbox}{max width=\textwidth}
\begin{tabular}{lcc} \toprule & Upper bound & Lower bound \\ \midrule \multicolumn{3}{c}{\textbf{Regret bounds}} \\ \midrule \instdep & $\Tilde{\mathcal{O}}\left(\sum_{s,a} \max\left\{ \frac{H^2\Var^*(s,a)}{\Delta(s,a)}-\mathrm{sav}(s,a),\Delta(s,a) \right\}\right)$ & $\Omega\left(\sum_{s,a}\frac{H^2L}{\Delta(s,a)}-N^\src_{s,a}\Delta(s,a)\Big(1-\frac{\Tilde{\nu}(s,a)}{\Delta(s,a)}\Big)_+^2\right)$ \\ \addlinespace[0.55em] \instind & $\Tilde{\mathcal{O}}\left(H\min\left\{\sqrt{SAK}, K\Tilde{\nu}_{\max}+K\sqrt{\frac{H}{\tau_*}}\right\}\right)$ & $\Omega\left(H\min\left\{\sqrt{SAK},KH\nu_{\max}+K\sqrt{\frac{H}{\tau_*}} \right\} \right)$ \\[0.8em] \midrule \multicolumn{3}{c}{\textbf{Sub-optimality gap bounds}} \\ \midrule \instdep & $K
\geq\Omega\left(
\frac{H^3SAL^2}{\epsilon\Delta_{\min}}
-
\frac{AN^{\rm src}}{C_\nu^*}
\left(
1-\frac{ H^2\nu_{\max}}{\Delta_{\min}}
\right)_+
\right),$ & $ K\leq \mathcal{O}\left( \frac{SHA}{\epsilon\Delta_{\min}}-\frac{A N^{\src}}{C_\nu^*}
\left(
1-\frac{\nu_{\max}}{\Delta_{\min}}
\right)_+\right)$ \\ \addlinespace[0.55em] \instind & $\Tilde{\mathcal{O}}\left(H\min\left\{ \sqrt{\frac{SA}{K}}, H\nu_{\max}+\sqrt{\frac{S}{N^{\src}/C^{*}_\nu+K/A} }\right\}\right)$ & $\Omega\left(H\min\left\{
\sqrt{\frac{SM}{K}},\,
H\nu_{\max}
+\sqrt{\frac{S}{N^{\src}/C_\nu^*+K/M}}
\right\}\right)$ \\ \bottomrule \end{tabular}\end{adjustbox} \vspace{0.1in} \caption{Regret bounds and sub-optimality gap bounds. Let $K$ be the number of online episodes and $N^\src$ be the number of offline trajectories. $\nu(s,a)$ bounds the TV distance between the target and source transitions. Let $\Tilde{\nu}(s,a):=\max\{H^2\nu(s,a),H^3\nu^2(s,a)\}$ and $\Tilde{\nu}_{\max}=\max_{s,a}\Tilde{\nu}(s,a)$ to reflect the effect of cumulative bias. $\mathrm{sav}_{s,a}:=N^\src_{s,a}\Delta(s,a)\max\left\{1-\frac{H\Tilde{\nu}(s,a)}{\Delta(s,a)},0\right\}^2$ denotes the saving term.  $C^*_\nu$ denotes the concentrability tailored to hybrid RL with shifted transitions. We define $M:=\min\{A,C^*_{\nu}\}$, $\Delta_{\min}=\min_{s,a:\Delta(s,a)>0}\Delta(s,a)$ and $L=\log(SAKH/\delta)$.} \label{tab:combined-bounds} 
\end{table}

\section{Related Work}
\label{sec:related_work}

\subsection{Online and Offline RL} There is a large body of work on provably sample-efficient online RL, particularly in the tabular setting with finite states and actions, where exploration strategies yield polynomial sample complexity guarantees~\citep{azar2017minimax,zhang2021reinforcement,chen2025sharp,simchowitz2019non}. When state and action spaces are large or continuous, function approximation with structural assumptions is typically used to retain sample efficiency~\citep{wang2020reinforcement,pmlr-v125-jin20a}. In offline (batch) RL, where learning relies solely on a fixed dataset, sample efficiency is more challenging due to distributional shift between the behavior policy and the policies being evaluated~\citep{levine2020offline}. Classical results require strong coverage or concentrability assumptions on the dataset to control this shift~\citep{duan2020minimax}. To relax these requirements, recent theoretical work widely adopts the pessimism principle during policy evaluation and optimization, which yields finite-sample guarantees under weaker partial coverage assumptions  without necessitating full exploration of the state–action space~\citep{rashidinejad2021bridging,shi2022pessimistic,jin2021pessimism}.

\subsection{Hybrid RL} The study of leveraging offline datasets to improve the efficiency of online exploration has  been extensively studied in RL, often under the umbrella of hybrid RL.  Several recent works have studied hybrid RL in finite-horizon tabular MDPs using both offline and online data~\citep{NEURIPS2021_e61eaa38}. Beyond the tabular setting,~\citet{wagenmaker2023leveraging} consider linear MDPs and develop novel hybrid RL algorithms. There are also several  approaches that address this problem within the general function approximation framework. For example, \citet{DBLP:conf/iclr/0001ZSBK023} introduce hybrid Q-learning, and \citet{DBLP:conf/rlc/TanX24} develop an algorithm based on Global Optimism with Local Fitting (GOLF) that accommodates general Q-function approximation.  However, a common assumption in these works is that the offline and online Markov Decision Processes (MDPs) are identical, i.e., they share the same transition dynamics and reward functions. Such an assumption may be overly restrictive in many practical applications where the environment evolves or shifts over time. To address discrepancies between offline and online environments, a recent study by \citet{qu2025hybrid} formulates this as a hybrid transfer RL problem and proposes the HySRL method to mitigate distribution shift.  Nonetheless, their approach relies on assumptions such as region reachability, and separability which includes access to a known lower bound on the distribution shift between the offline and online environments. Such assumptions may be unrealistic in practical applications. 

\subsection{Hybrid Bandits} The hybrid bandits problem, often known as the warm-start bandits setting, is intrinsically linked to hybrid RL by treating bandits as a simplified, one-step RL task focused on leveraging offline data to accelerate online exploration. Within the classic multi-armed bandit (MAB) regime, research has evolved from integrating historical data to refine confidence bounds—as seen in the HUCB algorithm \citep{shivaswamy2012multi} and warm-started Thompson Sampling \citep{DBLP:conf/icml/HaoJLRW23}—to addressing the restrictive homogeneity assumption that offline and online reward distributions are identical. Recognizing that this assumption rarely holds in practice, \citet{DBLP:conf/icml/CheungL24} introduced the MIN-UCB algorithm to maintain near-optimal regret even under distribution shifts. While this progress extends to the hybrid linear bandit setting through adaptations of LinUCB and Thompson Sampling \citep{DBLP:journals/kais/OetomoPBR23}, a significant portion of current literature still relies on the assumption of unbiased reward distributions, highlighting a continuing gap between theoretical models and real-world application.

\section{Preliminaries}
\subsection{Notations} In this paper, we  use $[N]$ to denote the set $\{1,2,\ldots,N\}$ for $N\in\mathbb{Z}_{+}$. For an event $\mathcal{E}$, we use $\mathbbm{1}[\mathcal{E}]$ to denote the indicator function.  We use standard $\mathcal{O}(\cdot)$ and $\Omega(\cdot)$ notation, and use the tilde notation $\Tilde{\mathcal{O}}(\cdot)$ to hide additional log factors. We denote the cardinality of a set $\mathcal{X}$ by $|\mathcal{X}|$. For a function $f$ defined on $\mathcal{S}$, we define its expectation under the probability measure $P$ as $P\cdot f:=\mathbb{E}_{s\sim P}[f(s)]$ and its variance as $\mathbb{V}_{s\sim p}(f):=P\cdot (f-P\cdot f)^2$.

\subsection{Episodic Tabular Markov Decision Processes} In this paper, we focus on episodic Markov decision processes (MDPs) with time-homogeneous (stage-independent) transitions, described by a tuple $M=(\mathcal{S},\mathcal{A},H,P,r)$. $\mathcal{S}$ is the finite state space with cardinality $S$. $\mathcal{A}$ is the finite action space with cardinality $A$. $H$ is the horizon length. $P: \mathcal{S}\times \mathcal{A}\rightarrow \Delta(\mathcal{S})$ denotes the  probability transition kernel of the MDP, and $r: \mathcal{S}\times \mathcal{A}\rightarrow [0,1]$ is the deterministic known reward function. Besides, we assume that each episode of the MDP begins from an initial state sampled from a distribution $\rho\in\Delta(\mathcal{S})$ and $s_1\sim \rho$.

A Markov policy $\pi=\{\pi(\cdot|s,h)\}_{h\in[H],s\in\mathcal{S}}$ includes distributions $\pi(\cdot|s,h)\in \Delta_{\mathcal{A}}$. In specific,  $\pi(\cdot|s,h)$ is the probability of picking action $a$ in state $s$ at time step $h$. We denote the set of all Markov policies by $\Pi$. For a given transition $P$, the policy $\pi$ induces a trajectory $\{s_1,a_1,r_1,s_2,a_2,r_2,\ldots,s_H,a_H,r_H\}$, where $a_h\sim\pi(\cdot|s_h,h),r_h=r(s_h,a_h),s_{h+1}\sim P(\cdot|s_h,a_h)$. For the transition $P$ and policy $\pi$, let $V_h^{P,\pi}:\mathcal{S}\rightarrow \mathbb{R}$ and $Q^{P,\pi}_h:\mathcal{S}\times \mathcal{A}\rightarrow \mathbb{R}$ denote its value functions and Q-functions at each step $h\in[H]$, that is,
\begin{align}
 V_h^{P,\pi}(s)&:=\mathbb{E}\left[ \sum_{h'=h}^H r(s_{h'},a_{h'})\bigg\vert s_h=s\right],\nonumber\\
    Q_h^{P,\pi}(s,a)&:=\mathbb{E}\left[ \sum_{h'=h}^H r(s_{h'},a_{h'})\bigg\vert s_h=s,a_h=a\right],\nonumber
\end{align}
where the expectation  is over the initial state distribution $\rho$, the transition kernel $P$ and the policy $\pi$. We further define $V_h^{P,\pi}=V^{P,\pi}_h(\rho):=\mathbb{E}_{s\sim \rho}[V_h^{P,\pi}(s)]$.

Then Bellman equation establishes the following identities for $P$ and $\pi$ : $Q_h^{P,\pi}(s,a)=r(s,a)+P_{s,a}\cdot V^{P,\pi}_{h+1}$ and $V_h^{P,\pi}(s)=\sum_{a\in\mathcal{A}}\pi_h(a|s)Q^{P,\pi}_h(s,a)$. We use $\pi^*:=\arg\max_{\pi} V_1^{P,\pi}$ to denote any optimal policy, and  $V_{h}^{*}:=V_h^{P,\pi^*}$ and $Q_{h}^{*}:=Q_h^{P,\pi^*}$ to  denote the value function and Q function of $\pi^*$ at $h\in[H]$. We emphasize that unless otherwise stated, all value functions are evaluated in the target MDP.

 Additionally, we introduce the  occupancy distributions associated with the policy $\pi$ and transition $P$   at step $h$: $ d^{\pi, P}_h(s):=P(s_h=s|P,\pi)$, $ d^{\pi, P}_h(s,a):=P(s_h=s,a_h=a|P,\pi)$, which are conditioned on the event that all actions are selected by $\pi$ and transitions occur based on $P$.


\subsection{Hybrid RL in Tabular MDPs}

For  Hybrid RL in Tabular MDPs, the  agent interacts with the target MDP $\mathcal{M}_{\mathrm{tar}}=(\mathcal{S},\mathcal{A},H,P^{\mathrm{tar}},r)$ for $K$ episodes. In an episode, at each step $h\in[H]$, the agent observes a state $s_h\in\mathcal{S}$, takes an action $a_h\in \mathcal{A}$, receives a reward $r(s_h,a_h)$ and transitions to a new state $s_{h+1}$ based on the underlying transition probability $P^{\mathrm{tar}}(\cdot|s_h,a_h)$.

In addition, the agent is granted access to an offline/batch dataset $\mathcal{D}_{\mathrm{src}}$, which comprises a collection of $N^{\mathrm{src}}$ i.i.d. sample trajectories pre-collected from a source MDP $\mathcal{M}_{\mathrm{src}}=(\mathcal{S},\mathcal{A},H,P^{\mathrm{src}},r)$ using a behavior policy $\pi^b$. In specific, the $n$-th sample trajectory includes a sequence $\{s^n_1,a^n_1,r^n_1,s^n_2,a^n_2,r^n_2,\ldots,s^n_H,a^n_H,r^n_H\}$, which is generated by the MDP $\mathcal{M}_{\mathrm{src}}$ and the behavior policy $\pi^b$ in the below manner: $a_h^n \sim \pi^b(\cdot|s_h^n)$ and $s^n_{h+1}\sim P^{\mathrm{src}}(\cdot|s_h^n,a_h^n)$. 

We highlight that the source and target MDPs share the same state space, action space, horizon, and reward function, while their transition kernels may differ.

Two classical reinforcement learning objectives are considered in this paper, as defined below.

\subsection{Regret Minimization}
For this learning objective, the agent
 aims to minimize the cumulative regret during the online interactions for $K$ episodes, and it chooses a policy $\pi^k$ at the $k$-th episode. The cumulative regret is defined as
 \begin{align}
    \mathrm{Regret}(K):=\sum_{k=1}^K \left(V_1^{P_{\mathrm{tar}},*}-V_1^{P_{\mathrm{tar}},\pi^k}\right).\nonumber
\end{align}

\subsection{Best Policy Identification} For this learning objective, the goal of the agent is to learn an $\epsilon$-optimal policy $\hat{\pi}$ for $\mathcal{M}_{\mathrm{tar}}$ from both $\mathcal{D}_{\mathrm{src}}$ and $\mathcal{M}_{\mathrm{tar}}$. In specific, the agent learns to find a policy $\hat{\pi}$ for $\mathcal{M}_{\mathrm{tar}}$ satisfying that:
\begin{align}
    V_1^{P_{\mathrm{tar}},*}-  V_1^{P_{\mathrm{tar}},\hat{\pi}}\leq \epsilon.\nonumber
\end{align}

\subsection{Auxiliary Input: Valid Bias Bound} In this paper,  we presume that the learner is endowed with prior knowledge of   the bias level between $P^{\mathrm{tar}}$ and $P^{\mathrm{src}}$. In particular, we say $\nu=\{\nu(s,a)\}$ is a valid \emph{bias bound}, if $ \forall (s,a) \in \mathcal{S}\times \mathcal{A}$, the $\ell_1$ distance satisfies
\begin{align} 
    \left\Vert P^{\mathrm{tar}}(\cdot|s,a)- P^{\mathrm{src}}(\cdot|s,a)\right\Vert_{1}     \leq \nu(s,a).\nonumber
\end{align}

The knowledge on an upper bound like $\nu(s,a)$ defined above is in line with  the assumptions considered in prior studies on learning under distribution shift on multi-armed bandit~\citep{DBLP:conf/icml/CheungL24,pmlr-v286-yang25b}. In settings where the offline and online environments are similar, we may set $\nu(s,a)$ to be small. As shown in~\citet{DBLP:conf/icml/CheungL24}, in the presence of biased offline
data, no hybrid algorithm in MAB can be guaranteed to outperform a pure online method unless some
prior knowledge about the bias is available. Given that MAB is a special case of episodic tabular MDP (when horizon numbers $H=1$), we claim that such prior bias knowledge is necessary for our setting.

Below we introduce a novel concentrability,  defined by extending the common single-policy concentrability used in offline RL~\citep{NEURIPS2021_e61eaa38,li2024settling}, and it further captures the effect of transition shift.
\begin{assumption}[Single-policy concentrability under bias MDPs] For MDPs $\mathcal{M}_{\mathrm{tar}}$ and $\mathcal{M}_{\mathrm{src}}$, the reference policy $\pi^b$ satisfies that
\begin{align}
    \max_{s,a,h} \frac{d_h^{\pi^{*},P_{\mathrm{tar}}}(s,a)}{d_h^{\pi^b,P_{\mathrm{src}}}(s,a)} \leq C^*_{\nu}.\nonumber
\end{align}

\begin{remark}
    Notice that $C^*_{\nu}$ measures the level of both the  transition shift $\nu$, and policy mismatch between $\pi^*$ and $\pi^b$. In words, this metric reflects: 1) the environment heterogeneity between $\mathcal{M}_{\mathrm{tar}}$ interacting with the agent in the online stage, and $\mathcal{M}_{\mathrm{src}}$  which generated the batch dataset; 2) the dissimilarity between the optimal policy $\pi^{*}$ for $\mathcal{M}_{\mathrm{tar}}$  and the behavior policy $\pi^b$ used for collected batch dataset. Hence, $C^*_{\nu}=1$   holds when $P^{\mathrm{tar}}=P^{\mathrm{src}}$ and $\pi^*=\pi^b$.
\end{remark}

\end{assumption}

\section{Algorithm and Theoretical Results}

In this section, we first introduce the procedure of our proposed unified algorithmic framework for hybrid tabular MDPs under shifted transition. We then delve into two specific learning tasks and introduce our devised novel value iteration methods tailored for them. We summarize our theoretical results in Table~\ref{tab:combined-bounds}.

\subsection{A Unified   Hybrid Tabular MDP Framework}

In this subsection, we introduce our proposed unified hybrid tabular MDP framework. The procedure of this framework is illustrated in Algorithm~\ref{alg:unified}.

\begin{algorithm}[htbp]
    \caption{Hybrid Tabular MDP Framework}
    \begin{algorithmic}[1]
        \STATE {\bf Input:} Offline data set $\Dc_{\mathrm{src}}$, number of online episodes $K$, valid bias bound $\nu$.
        \FOR{$k=1,\ldots, K$}
        \STATE $\pi_k \gets \mathtt{MIN-UCB-VI}(\Dc_{\mathrm{src}}\cup\Dc_{k-1}, \nu)$.
            \STATE Collect trajectory $\tau_k$. 
            \STATE Update $\Dc_k=\Dc_{k-1}\cup\{\tau_k\}$.
        \ENDFOR
        \IF{Best Policy Identification:}
        \STATE {\bf Output:} $\hat{\pi}\gets\mathtt{MAX-LCB-VI}(\Dc_{\mathrm{src}}\cup\Dc_{K}, \nu)$. 
        \ENDIF
    \end{algorithmic}
        \label{alg:unified}
\end{algorithm}

In particular,  at each online episode $k\in[K]$, the learner maintains an online data set $\mathcal{D}_{k-1}$, which stores all trajectories collected during the online learning so far. Compared with pure online RL only utilizing $\mathcal{D}_{k-1}$ to learn the online policy, we instead conduct hybrid RL based on the augmented data set $\mathcal{D}_{k-1}\cup \mathcal{D}_{\mathrm{src}}$.

\subsection{MIN-UCB-VI for Regret Minimization}

In this subsection, we introduce our proposed MIN-UCB-VI method in Algorithm~\ref{alg:min-ucbvi} which is devised for tackling regret minimization. At episode $k\in[K]$, MIN-UCB-VI follows the similar idea of~\citet{DBLP:conf/icml/CheungL24} to maintain two optimistic estimates $\{Q^k_h(s,a),Q^{k,\hyb}_h(s,a)\}$ of the Q-function $Q^*_{h}(s,a)$  and chooses the smaller one $\min\{Q^k_h(s,a),Q^{k,\hyb}_h(s,a)\}$ as the UCB of $Q^{*}_h(s,a)$.  

Below we provide the detailed introduction for our MIN-UCB-VI algorithm. We first denote the number of times that the agent plays action $a$ in state $s$  before episode $k$ during online exploration as $N^k_{s,a}$. $N^k_{s,a,s'}$ is the number of times that agent does so and observes the next state $s'$. For the offline data,  we use $N^\src_{s,a}$ and $N^\src_{s,a,s'}$ to denote the corresponding counts of state–action pairs and transitions observed in the offline data, respectively.

Before running the online learning algorithm, the agent first uses the offline data to compute the empirical transition probability $\hat{P}^{\src}(s'|s,a)=\frac{N^\src_{s,a,s'}}{\max\{N^\src_{s,a,}1\}}$. For each episode $k\in[K]$ during online exploration, the agent first computes the empirical transition  probabilities $\hat{P}^k(s'|s,a)=\frac{N^k_{s,a,s'}}{\max\{N^k_{s,a,},1\}},$ using purely online data, and $\hat{P}^{k,\hyb}(s'|s,a)=\frac{N^\src_{s,a} \hat{P}^\src(s'|s,a)+N^k_{s,a}\hat{P}^k(s'|s,a)}{N^k_{s,a}+N^\src_{s,a}}$ using both offline and online data. 

Then the agent conducts the optimistic value iteration beginning from stage $H$ backward, where  two optimistic estimates of $Q^*_h$ are constructed as follows: $Q^k_h(s,a)=\min\{H, r(s,a)+\hat{P}_{s,a}^k\cdot V^k_{h+1}+b^k_h(s,a)\}$ and $Q^{k,\hyb}_h(s,a)=\min\{H, r(s,a)+\hat{P}_{s,a}^{k,\hyb}\cdot V^k_{h+1}+b^{k,\hyb}_h(s,a)\}$. $b^k_h(s,a)$ and $b^{k,\hyb}_h(s,a)$ are bonus terms used to encourage exploration under environment uncertainty. 

In this paper, we use the Monotonic Value Propagation  (MVP) algorithm proposed in~\citet{zhang2021reinforcement} to construct these two bonus terms. In specific, the purely online bonus constructed in a Bernstein-style is computed as
\begin{align}
b_h^k(s,a)&=c_1\sqrt{\frac{\mathbb{V}_{s'\sim\hat{P}^k_{s,a}}(V^k_{h+1}(s'))L}{N^k_{s,a}}}+\frac{c_2HL}{N^k_{s,a}},\nonumber
\end{align}
where $L:=\log(SAKH/\delta)$ and $\delta \in (0,1]$. $c_1$ and $c_2$ are  constants selected carefully to guarantee the optimism~\citep{zhang2021reinforcement}.

Similarly, the bonus corresponding to both offline and online data is computed by
\begin{align}
     b_h^{k,\hyb}(s,a)&=c_1\sqrt{\frac{\mathbb{V}_{s'\sim\hat{P}^{k,\hyb}_{s,a}}(V^k_{h+1}(s'))L}{N^{k}_{s,a}+N^\src_{s,a}}}\nonumber +\frac{c_2HL}{N^k_{s,a}+N^\src_{s,a}}+\frac{H\nu(s,a)N^\src_{s,a}}{N^k_{s,a}+N^\src_{s,a}}.\nonumber
\end{align}

The value function estimate $V^k_h$ is computed as usual value iteration based on the smaller optimistic estimates, i.e., $V^k_h(s)=\max_{a\in\mathcal{A}}\min\{Q^{k,\on}_h(s,a),Q^{k,\hyb}_h(s,a)\}$. After that, the agent can interact with the environment by the following way: at each stage $h$ and episode $k$, the agent observes state $s_h^k$ and chooses the action based on the smaller optimistic estimate, i.e., $a_h^k \in \arg\max_{a\in\mathcal{A}} \min\{Q^{k,\on}(s,a),Q^{k,\hyb}(s,a)\}$.

Below we provide both instance-independent and instance-dependent regret upper bounds for our proposed MIN-UCB-VI algorithm.

\RestyleAlgo{ruled}
\begin{algorithm}[htbp]{
\caption{MIN-UCB-VI.}\label{alg:min-ucbvi}
\textbf{Input}: Valid bias bound $\nu$, Offline data set $\Dc_{\mathrm{src}}$. \\ 
\textbf{Initialize}: $\hat{P}^{\mathrm{src}}(s'|s,a)= \frac{N^{\mathrm{src}}(s,a,s')}{\max\{1,N^{\mathrm{src}}(s,a)\}}$ 

    \vspace{.1cm}

    \textcolor{black!40!white}{\texttt{// Update the two optimistic estimates for episode} $k$}
    
 Estimate $\Pest^k (s' | s, a) = \frac{N^k(s, a, s')}{\max\{1,N^k(s, a)\}}$ \label{alg:ucbvi:pest}

 Define $\Pest^{k,\hyb} (s' | s, a):=\frac{\Pest^kN^k_{s,a} +\Pest^{\src}N^{\src}_{s,a}}{N^k_{s,a}+N^{\src}_{s,a}}$
    
    
    \For{$h = \{ H, H-1, \ldots, 1 \}$}{
    
        \For{$s \in \mathcal{S},a \in \mathcal{A}$}{
                $Q^{k,\on}_h (s,a) =H \wedge(r(s,a) + \Pest^k_{s,a}\cdot V^k_{h+1}+ b^k_{h} (s, a)) $ \label{alg:ucbvi:q_k}

                 $Q^{k,\hyb}_h (s,a) = H \wedge(r(s,a) + \Pest^{k,\hyb}_{s,a}\cdot V^k_{h+1}+ b^{k,\hyb}_{h} (s, a))$ \label{alg:ucbvi:q_combine}

                 $Q^{k}_h(s,a)=\min\{Q^{k,\on}_h(s, a),Q^{k,\hyb}_h(s, a)\}$

            $V^k_h(s) = \max_{a \in \mathcal{A}} Q^{k}_h(s,a)$ \label{alg:ucbvi:vvals}
                        
        }
    }

    \textcolor{black!40!white}{\texttt{// Interact with the environment for episode} $k$}
    
    
    \For{$h \in [H]$}{

takes action by:
\\$a^k_{h} \in \arg\max_{a \in \mathcal{A}} \min\{Q^{k,\on}_h(s_h^k, a),Q^{k,\hyb}_h(s_h^k, a)\}$ \label{alg:ucbvi:play}

        Environment returns the next state $s^k_{h+1}$ \label{alg:ucbvi:observe}
        
        Update $\Dc_{k} \gets \Dc_{k-1}\cup  \{s^k_h,a^k_h,s^k_{h+1}\}$  \label{alg:ucbvi:updatefirst}




    }
}
\end{algorithm}

\subsubsection{Instance-independent Regret Upper Bounds}

\begin{theorem}\label{thm:regret_independent}
   Let $\tilde{\nu}(s,a):=40\max \{H^2\nu(s,a),H^3\nu^2(s,a)\}$. Let $\delta \in (0,1) $, the regret of \emph{MIN-UCB-VI} is bounded w.p. at least $1-\delta$, by 
    \begin{align}
\mathrm{Regret}(K)\nonumber&\leq \Tilde{\mathcal{O}}\left( H\min \left\{\sqrt{SAK},
K\Tilde{\nu}_{\max}+K\sqrt{\frac{H}{\tau_*}} \right\}\right),\label{eq:regret_independent}
    \end{align}
    where $\Tilde{\nu}_{\max}=\max_{s,a}\tilde{\nu}(s,a)$ and $(\tau_*,n^*)$ is an optimum solution of the following linear program:
     \begin{equation}
        \begin{aligned}
             \text{(LP): }\max_{\tau,n} \quad & \tau \\
            \text{s.t.} \quad & \tau \leq N^{\src}_{s,a} + n_{s,a} \quad \forall (s,a )\in \State \times \Action,\\
            & \sum_{s,a}n_{s,a} =KH,\\
            & \tau\geq 0, n_{s,a}\geq 0\quad\;\; \forall (s,a )\in \State \times \Action.
        \end{aligned}
        \label{eq:indpt-LP}
    \end{equation}

    \begin{remark}
        The proof for Theorem~\ref{thm:regret_independent} is in Appendix~\ref{sec:regret_independent_proof}. In the minimum in this regret bound, the first term is the same as the regret of  vanilla UCB-VI. The second term is the regret caused by using the fused data. The minimum corresponds to the fact that MIN-UCB-VI adapts to the better of the two UCBs. For $\tau_*$, we know that $\tau_*\geq \frac{KH}{SA}$ for any $N^\src_{s,a}$. Note that we have $\sqrt{SAKH}\geq K\sqrt{\frac{H}{\tau_*}}$, meaning that $\mathrm{Regret}(K)=o(H\sqrt{SAKHL})$ when $\Tilde{\nu}_{\max}=o(\sqrt{SA/K})$. Besides, in the case when $N^\src_{s,a}=m$ for all $s,a$, we know that $\tau_*=\frac{KH}{SA}+m$ is the optimum of (LP) in~\eqref{eq:indpt-LP}, indicating that $K\sqrt{H/\tau_*}\leq \sqrt{SAKH}$. $\Tilde{\nu}_{\max}$ captures the cumulative effect of bias due to the inherent stage-dependency of MDP.
    \end{remark}

\end{theorem}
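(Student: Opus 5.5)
The argument rests on a good event $\mathcal{E}$ of probability at least $1-\delta$. On $\mathcal{E}$ both $Q^{k,\on}_h$ and $Q^{k,\hyb}_h$ are optimistic, so $Q^k_h=\min\{Q^{k,\on}_h,Q^{k,\hyb}_h\}\ge Q^*_h$ and $V^k_1\ge V^*_1$. The regret is then at most $\sum_k (V^k_1-V^{\pi^k}_1)(s^k_1)$, and each term can be bounded using either of the two estimates. That gives the $\min$ in the bound: the first branch is obtained by always comparing against $Q^{k,\on}$, the second by always comparing against $Q^{k,\hyb}$.

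\textbf{Optimism.} For the online estimate this is the standard MVP argument of \citet{zhang2021reinforcement}: Bernstein/Freedman concentration of $(\hat P^k_{s,a}-P^\tar_{s,a})\cdot V^*_{h+1}$, together with the monotonicity of the MVP update.

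For the hybrid estimate, decompose
\[
\hat P^{k,\hyb}_{s,a}-P^\tar_{s,a}=(\hat P^{k,\hyb}_{s,a}-\bar P_{s,a})+\frac{N^\src_{s,a}}{N^k_{s,a}+N^\src_{s,a}}(P^\src_{s,a}-P^\tar_{s,a}),
\qquad
\bar P_{s,a}=\frac{N^\src_{s,a}P^\src_{s,a}+N^k_{s,a}P^\tar_{s,a}}{N^k_{s,a}+N^\src_{s,a}}.
\]
The first term is a martingale average over $N^k_{s,a}+N^\src_{s,a}$ samples, so the Bernstein bonus controls it. The second term is at most $H\nu(s,a)N^\src_{s,a}/(N^k_{s,a}+N^\src_{s,a})$ by Hölder's inequality, which is exactly the extra bonus term.

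One subtlety needs care. The Bernstein term uses the empirical variance under $\hat P^{k,\hyb}$, whereas the concentration argument naturally produces the variance under $\bar P$. The gap between $\mathbb{V}_{\bar P}$ and $\mathbb{V}_{P^\tar}$ is of order $H^2\nu$, and after the square root it yields a lower-order contribution. I expect to absorb it by enlarging $c_1,c_2$.

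\textbf{First branch.} Since $Q^k_h\le Q^{k,\on}_h$, the usual MVP regret recursion goes through unchanged:
\[
V^k_h-V^{\pi^k}_h\le \big(\hat P^k-P^\tar\big)\cdot V^k_{h+1}+b^k_h+P^\tar\cdot\big(V^k_{h+1}-V^{\pi^k}_{h+1}\big).
\]
Summing, applying the total-variance lemma, and using the pigeonhole bound $\sum 1/\sqrt{N^k_{s,a}}$ gives $\tilde{\mathcal O}(H\sqrt{SAK})$ (in the $\sqrt{SAKH}$-type normalization used in the paper).

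\textbf{Second branch.} Use $Q^k_h\le Q^{k,\hyb}_h$ and unroll in the same way. There are three kinds of per-step error.
\begin{enumerate}
\item Statistical terms of size $\sqrt{\mathbb{V}L/(N^k+N^\src)}+HL/(N^k+N^\src)$.
\item The bias bonus, at most $H\nu(s,a)$.
\item The cross term $(\hat P^{k,\hyb}-P^\tar)\cdot(V^k_{h+1}-V^*_{h+1})$, which cannot be handled by concentration because $V^k$ is data dependent.
\end{enumerate}
For the cross term, split $\hat P^{k,\hyb}-P^\tar$ into the part concentrating around $\bar P$ and the bias part. The concentration part is handled by the standard $\sqrt{S}$-type lower-order argument. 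The bias part contributes $\nu(s,a)\,\|V^k_{h+1}-V^*_{h+1}\|_\infty$, and this is where the compounding appears. Each step adds $H\nu$ to the optimistic gap, the gap propagates through $H$ steps, and the variance mismatch adds a term $\sqrt{H^2\cdot H^2\nu\cdot H}\lesssim H^2\sqrt\nu$. I will bound per-episode bias contributions by $\max\{H^2\nu,H^3\nu^2\}$, giving $K\tilde\nu_{\max}$ up to the factor $H$ in front. This cumulative-bias accounting is the main obstacle. I would first try an induction over $h$ showing
\[
V^k_h-V^*_h\le \text{(statistical part)}+(H-h)H\nu_{\max},
\]
and then feed that bound back into the cross term.

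For the statistical part, sum over visited pairs, $\sum_{k,h}1/\sqrt{N^k_{s_h,a_h}+N^\src_{s_h,a_h}}$. Let $n_{s,a}$ be the final online counts, so $\sum n_{s,a}=KH$. This sum is at most $\sum_{s,a} n_{s,a}/\sqrt{N^\src_{s,a}+n_{s,a}}$ up to constants, which in turn is at most $KH/\sqrt{\min_{s,a}(N^\src_{s,a}+n_{s,a})}\le KH/\sqrt{\tau_*}$. Combining with Cauchy--Schwarz on the variances (total variance $\lesssim H^2$ per episode) yields $HK\sqrt{H/\tau_*}$. The $1/(N^k+N^\src)$ terms are of lower order.

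Adding both branches, together with the martingale terms from Azuma, gives the stated bound.
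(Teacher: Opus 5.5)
Your overall plan tracks the paper's proof closely. Both arguments use optimism of the two branches (MVP monotonicity, plus the bias bonus covering $\frac{N^\src_{s,a}}{N^k_{s,a}+N^\src_{s,a}}(P^\tar_{s,a}-P^\src_{s,a})\cdot V^*_{h+1}$). Both bound the sum of minima by the minimum of the two branch sums. Both use an inductive bound on $V^k_h-V^*_h$ whose accumulated bias, of order $H^2\nu$, is fed back into the squared cross and variance terms.

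The step that fails is the one that introduces $\tau_*$. You bound the statistical sum by $KH/\sqrt{\min_{s,a}(N^\src_{s,a}+n_{s,a})}$, with $n_{s,a}$ the realized online counts, and then assert that this is at most $KH/\sqrt{\tau_*}$. That inequality is reversed. The realized counts satisfy $\sum_{s,a}n_{s,a}=KH$, so $\bigl(\min_{s,a}(N^\src_{s,a}+n_{s,a}),\,n\bigr)$ is feasible for (LP), and hence $\min_{s,a}(N^\src_{s,a}+n_{s,a})\le\tau_*$. The allocation is chosen by the algorithm, not by you. A single visited pair with $N^\src_{s,a}=0$ and $n_{s,a}=1$ already makes the minimum $1$ and your bound the trivial $KH$. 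For example, with two pairs, $N^\src=(0,100)$ and $KH=10$, one has $\tau_*=10$, while the allocation $n=(1,9)$ has minimum $1$. The final bound is true, but not for the reason you give.

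The missing ingredient is a worst-case-over-allocations (water-filling) argument, which is exactly the claim the paper imports from Cheung and Lyu. Since $n\mapsto 1/(N^\src_{s,a}+n)$ is decreasing, $\sum_{s,a}\sum_{n=1}^{N^K_{s,a}}\frac{1}{N^\src_{s,a}+n}$ is maximized, over all allocations of $KH$ visits, by always giving the next visit to a pair at the lowest current level. This raises every pair that receives visits to level $\tau_*$ (up to rounding). Using $\sum_{s,a}(\tau_*-N^\src_{s,a})_+=KH$ at the LP optimum, the sum is therefore at most $\sum_{s,a}\sum_{n=1}^{(\tau_*-N^\src_{s,a})_+}\frac{1}{N^\src_{s,a}+n}\le\frac{1+\log\tau_*}{\tau_*}\sum_{s,a}(\tau_*-N^\src_{s,a})_+=\frac{(1+\log\tau_*)KH}{\tau_*}$. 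Equivalently, write $F_{s,a}(n)$ for the partial sum and $u_{s,a}=(\tau_*-N^\src_{s,a})_+$. Concavity gives $F_{s,a}(n_{s,a})\le F_{s,a}(u_{s,a})+(n_{s,a}-u_{s,a})/\tau_*$, and the linear terms cancel after summing.

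Cauchy--Schwarz with the total-variance lemma needs precisely this bound on $\sum_{k,h}1/(N^k+N^\src)$ to give $KH\sqrt{H/\tau_*}$. A bound $\sum_{k,h}1/\sqrt{N^k+N^\src}\lesssim KH/\sqrt{\tau_*}$ combined with $\mathbb{V}\le H^2$ yields only $KH^2/\sqrt{\tau_*}$.

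Two smaller points. First, as written, your per-episode variance-mismatch term $H^2\sqrt{\nu}$ is not dominated by $H\max\{H^2\nu,H^3\nu^2\}$ when $\nu\ll H^{-2}$. Keep the $1/N$ factor and split per step as $\sqrt{H^2\nu L/N}\le H\nu+HL/N$, as the paper does. Second, this mismatch plays no role in optimism: there the empirical variance under $\hat P^{k,\hyb}$ only has to track the variance under $\bar P$, so nothing needs to be absorbed into $c_1,c_2$.
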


\subsubsection{Instance-dependent Regret Upper Bounds}

\begin{theorem} \label{thm:regret_dependent} We define $\Delta(s,a):=\min_{h: \Delta_h(s,a)>0}\Delta_h(s,a)$ and $\Var^*(s,a):=\max_{h\in[H]}\Var_h^*(s,a)$. The instance-dependent regret upper bound of \emph{MIN-UCB-VI} is
    \begin{align}
        \mathrm{Regret}(K)\leq \Tilde{\mathcal{O}}\left(   \sum_{s,a} \max\left\{ \frac{H^2\Var^*(s,a)}{\Delta(s,a)}-\mathrm{sav}(s,a),\Delta(s,a)\right\}\right), \label{eq:regret_dependent}
    \end{align}
    where $\mathrm{sav}(s,a):=N^{src}(s,a)\Delta(s,a)\max\{1-\frac{H\Tilde{\nu}(s,a)}{\Delta(s,a)},0\}^2$.
    \begin{remark}
        The proof for Theorem~\ref{thm:regret_dependent} is provided in Appendix~\ref{sec:regret_dependent_proof}. Comparing with the regret bound $\Tilde{\mathcal{O}}(\sum_{s,a}\frac{H^2\Var^*(s,a)}{\Delta(s,a)})$ of vanilla UCB-VI, MIN-UCB-VI provides an improvement on the regret by the saving term $\mathrm{sav}(s,a)$. Note that $\mathrm{sav}(s,a)\geq 0$ always. The case of $H\Tilde{\nu}(s,a)\leq \Delta(s,a)$ means that the environment $\mathcal{M}^\tar$ and $\mathcal{M}^\src$ on area  $(s,a)$ are sufficiently similar. Hence the offline data can be leveraged by MIN-UCB-VI to enhance online learning efficiency. In contrast, when $H\Tilde{\nu}(s,a)\geq \Delta(s,a)$, offline data on $(s,a)$ will be ignored and MIN-UCB-VI still matches the performance guarantee of the vanilla UCB-VI. Additionally, when $H\Tilde{\nu}(s,a)<\Delta(s,a)$, we observe that $\mathrm{sav}(s,a)$ is increasing in $N^\src(s,a)$, which reflects that more offline data benefit the online learning process when the target and source environments are sufficiently similar. We also remark that the current proof prioritizes the offline-to-online effect and uses a simple
local bonus-threshold argument. The horizon dependence could be sharpened by applying more fine-grained clipped regret
decomposition of~\citet{simchowitz2019non}.
    \end{remark}
\end{theorem}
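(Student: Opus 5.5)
The plan is to combine a gap-dependent regret decomposition for optimistic value iteration with the MIN-UCB argument of \citet{DBLP:conf/icml/CheungL24}, carried out separately for each pair $(s,a)$.

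\textbf{Step 1: optimism on a good event.} Define a high-probability event $\mathcal{E}$ on which both estimates in Algorithm~\ref{alg:min-ucbvi} dominate the optimal values, i.e.\ $Q^{k,\on}_h\ge Q^*_h$ and $Q^{k,\hyb}_h\ge Q^*_h$. For $Q^{k,\on}_h$ this is the MVP argument of \citet{zhang2021reinforcement}. For $Q^{k,\hyb}_h$, write
\[
\hat P^{k,\hyb}_{s,a}-P^\tar_{s,a}=\frac{N^\src_{s,a}(P^\src_{s,a}-P^\tar_{s,a})}{N^k_{s,a}+N^\src_{s,a}}+(\text{martingale noise}).
\]
The bias part contributes at most $H\nu(s,a)N^\src_{s,a}/(N^k_{s,a}+N^\src_{s,a})$ against any value vector in $[0,H]$, and this is exactly the extra bonus term. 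The Bernstein part requires the empirical variance under $\hat P^{k,\hyb}$ to be compared with the variance under $P^\tar$. That comparison incurs an additional $H^2\nu$ discrepancy inside the square root, and propagating it through the recursion over the horizon accumulates further $H$ factors. This is where $\tilde\nu=40\max\{H^2\nu,H^3\nu^2\}$ comes from, and I take it from the instance-independent analysis (Theorem~\ref{thm:regret_independent}). Since the minimum of two upper bounds is still an upper bound, optimism holds for $Q^k_h$ as well.

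\textbf{Step 2: clipped decomposition.} On $\mathcal{E}$ I use the standard clipping argument, following \citet{simchowitz2019non} in a simplified local form. The regret is bounded by
\[
\sum_k\sum_h \mathbb{E}\Big[\mathrm{clip}\big(\mathrm{surplus}^k_h(s_h,a_h)\big)\Big],
\]
where the surplus at $(s,a)$ is dominated by the smaller of the two bonuses, $\min\{b^k_h,\,b^{k,\hyb}_h\}$ up to constants and lower-order terms. A suboptimal pair $(s,a)$ with gap $\Delta(s,a)$ is chosen only while that minimum is at least of order $\Delta(s,a)/H$; I use this local bonus-threshold form to keep things simple.

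\textbf{Step 3: per-pair counting.} This is the core step. From the online bonus, with variance bounded by $\Var^*(s,a)$ plus lower-order terms, the pair stops being played once
\[
N^k_{s,a}\gtrsim \frac{H^2\Var^*(s,a)L}{\Delta(s,a)^2}+\frac{H^2L}{\Delta(s,a)}.
\]
From the hybrid bonus, playing requires
\[
\sqrt{\frac{H^2\Var L}{N^k+N^\src}}+\frac{H\tilde\nu\, N^\src}{N^k+N^\src}\gtrsim \Delta.
\]
Suppose $H\tilde\nu<\Delta$, and set $\beta=1-H\tilde\nu/\Delta$. Writing $x=N^k+N^\src$, the condition becomes $x\,\Delta\beta\lesssim\sqrt{xH^2\Var L}$ after absorbing the bias term. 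Solving gives
\[
N^k\lesssim \frac{H^2\Var L}{\Delta^2\beta^2}-N^\src.
\]
I will handle the squared factor by splitting the bias threshold between the two terms (for example, halves), to recover the stated $\beta^2$ form.

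\textbf{Step 4: assembly.} Each play of a suboptimal pair costs at most $\Delta$ in gap-weighted terms. The total cost is therefore
\[
\Delta\cdot\min\Big\{\frac{H^2\Var^*}{\Delta^2},\;\frac{H^2\Var^*}{\Delta^2\beta^2}-N^\src\Big\},
\]
which is at most $\max\{H^2\Var^*/\Delta-\mathrm{sav},\Delta\}$. The $\Delta$ floor accounts for at least one play, and the $\beta^{-2}$ versus $1$ discrepancy is absorbed into $\tilde{\mathcal O}$ via the constant slack in $\tilde\nu$. Summing over $(s,a)$ gives the theorem.

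\textbf{Main obstacle.} The hardest part is Step 1 together with the variance-substitution part of Step 3. Relating $\mathbb{V}_{\hat P^{k,\hyb}}(V^k_{h+1})$ to $\Var^*(s,a)$ under both bias and estimation error, without losing more than the $\tilde\nu$ factors, is delicate. I would first try the MVP lemma bounding the difference of square roots of variances by $\|V^k-V^*\|$ plus $H\sqrt{\nu}$, and fold the resulting lower-order terms into the $\Delta$ floor.
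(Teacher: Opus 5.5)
Your skeleton matches the paper's: optimism of both branches, clipping at $\Delta(s,a)/(4H)$, per-pair budgets for the online and hybrid branches, and the minimum of the two. However, Steps 3--4 fall back on bandit-style play counting, which is not valid in an MDP as stated.

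By optimism, $a$ being greedy at $(s,h)$ only forces $E^k_h(s,a)+P^{\tar}_{s,a}\cdot(V^k_{h+1}-V^*_{h+1})\ge\Delta_h(s,a)$. The downstream optimism gap $V^k_{h+1}-V^*_{h+1}$ does not shrink as $N^k_{s,a}$ grows. So nothing makes a suboptimal pair ``stop being played'' once its own bonus falls below $\Delta/H$.

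Nor does each play cost at most $\Delta(s,a)$. A play at step $h$ costs $\Delta_h(s,a)\ge\Delta(s,a)$, and a charged visit in the clipped sum costs its bonus, which can be much larger than $\Delta$.

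The paper never counts plays. It proceeds as follows:
\begin{itemize}
\item It writes the episode regret as $\E_{\pi}\sum_h E_h(s_h,a_h)$.
\item It bounds the sub-threshold part by $\Delta(s_h,a_h)/C_H\le d_h(s_h)/C_H$ with $d_h:=V^*_h-V^{\pi}_h$. It then absorbs this part using $\E_{\pi}\sum_h d_h(s_h)\le H\,\mathrm{Regret}(k)$ and $C_H=4H$; this is where the $1/H$ in the threshold comes from.
\item With $\epsilon:=\Delta(s,a)/C_H$, it uses $\clip[\min\{x,y\}\mid\epsilon]\le\min\{\clip[x\mid\epsilon],\clip[y\mid\epsilon]\}$. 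It then sums each branch's clipped bonus over all visits to $(s,a)$.
\item Only the first $N_{\bud}$ visits contribute. This gives $\sum_{n\le N^{\on}_{\bud}}\sqrt{B/n}\le 2\sqrt{BN^{\on}_{\bud}}\lesssim\Delta N^{\on}_{\bud}$, and analogously $\lesssim\Delta N^{\hyb}_{\bud}$.
\end{itemize}
The factor $\Delta$ in $\Delta(s,a)N_{\bud}(s,a)$ comes from summing bonuses down to the threshold, not from a per-play cost.

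Your final conversion is also mis-justified. Let $a:=H^2\Var^*(s,a)L/\Delta(s,a)^2$, $\beta:=1-H\tilde\nu(s,a)/\Delta(s,a)$ and $N:=N^{\src}_{s,a}$. The hybrid budget is then $(a-N\beta^2)/\beta^2$, and $\beta^{-2}$ is not a constant. For example, take $\beta^2=\eta$ and $N\beta^2=(1-\eta)a$: your minimum equals $a$, while $a-N\beta^2=\eta a$.

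Appealing to slack in $\tilde\nu$ works only if the bias coefficient in the budget is a constant factor smaller than the $H\tilde\nu$ in $\mathrm{sav}$. That would keep the budget's $\beta$ bounded away from $0$ whenever $\mathrm{sav}>0$. Your Step 3 does not establish this, since a threshold $\Delta/(cH)$ against a per-step bias $\tilde\nu$ pushes the coefficient up, not down.

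The paper instead puts the extra constant on the leading term only, via $\min\{a,[a/\beta^2-N]_+\}\le[2a-N\beta^2]_+$. The hybrid budget vanishes when $N\beta^2\ge a$, and otherwise the minimum is at most $a\le 2a-N\beta^2$.

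Two smaller points:
\begin{itemize}
\item Splitting the threshold in halves is unnecessary. Subtracting the bias directly, as your $x$-computation already does, gives the paper's denominator $(\Delta-C_H\tilde\nu)^2$.
\item Optimism of the hybrid branch needs no comparison with $\Var_{P^{\tar}}$. It only needs concentration around the mixture kernel plus the $H\nu(s,a)N^{\src}_{s,a}/(N^k_{s,a}+N^{\src}_{s,a})$ term. The quantity $\tilde\nu$ enters only through the upper bound on the hybrid surplus.
\end{itemize}
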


\subsection{MAX-LCB-VI for Best Policy Identification}

In this subsection, we introduce our presented MAX-LCB-VI in Algorithm~\ref{alg:max-lcbvi}, which is tailored for the task of best policy identification. Different from the optimism principle applied in MIN-UCB-VI to encourage exploration, the key idea of MAX-LCB-VI is to leverage the pessimism principle to identify the optimal policy. However, we note that the vanilla idea of selecting the smaller optimistic estimates in~\citet{DBLP:conf/icml/CheungL24} can  not be directly applied in this setting since here we construct the pessimistic estimates of $Q^*_h$ here. We thus devise a novel hybrid algorithm tailored for this setting, introduced below.

\RestyleAlgo{ruled}
\LinesNumbered
\begin{algorithm}[htbp]{
\caption{MAX-LCB-VI.}\label{alg:max-lcbvi}

\textbf{Input}: Valid bias bound $\nu$, Offline data set $\Dc_{\mathrm{src}}$, online data set $\Dc_K$ . \\ 
\textbf{Initialize}: $\hat{P}^{\mathrm{src}}(s'|s,a)= \frac{N^{\mathrm{src}}(s,a,s')}{\max\{1,N^{\mathrm{src}}_{s,a}\}}$, $\hat{P}^{K}(s'|s,a)=\frac{N^{K}(s,a,s')}{\max\{1,N^{K}_{s,a}\}}$. 

Define $\hat{P}^{K,\hyb}(s'|s,a):=\frac{\Pest_{s,a}^K N^K_{s,a} +\Pest_{s,a}^{\src}N^{\src}_{s,a}}{N^K_{s,a}+N^{\src}_{s,a}}$

    \For{$h = \{ H, H-1, \ldots, 1 \}$}{
    
        \For{$s \in \mathcal{S}$}{

            \For{$a \in \mathcal{A}$}{
                $\hat{Q}^{K,\on}_h (s,a) = 0\lor (r(s,a) + \Pest^K_{s,a}\cdot \Vest_{h+1}- b^K_{h} (s, a))$ \label{alg:lcbvi:q_k}

                 $\hat{Q}^{K,\hyb}_h (s,a) =  0\lor (r(s,a) +\hat{P}^{K,\hyb}_{s,a}\cdot \Vest_{h+1}- b^{K,\hyb}_{h} (s, a))$ \label{alg:lcbvi:q_combine}
            }
    
            $\Vest_h(s) = \max_{a \in \mathcal{A}} \max\{\hat{Q}^{K,\on}_h(s, a),\hat{Q}^{K,\hyb}_h(s, a)\}$ \label{alg:Lcbvi:vvals}

            $\hat{\pi}_h(s) \in \arg\max_{a \in \mathcal{A}}  \max\{\hat{Q}^{K,\on}_h(s, a),\hat{Q}^{K,\hyb}_h(s, a)\} $
                        
        }
    }

}

\end{algorithm}

For the task of best policy identification, the agent needs to  first compute two empirical transition probabilities $\hat{P}^{\src}_{s,a}$ and $ \hat{P}^{K}_{s,a}$ based on offline and online data, respectively. Then a fused empirical transition probability $\hat{P}^{K,\hyb}_{s,a}$ is computed. With these estimated transition probabilities, the MAX-LCB-VI algorithm maintains the value function estimate $\hat{V}_h$ and two Q-function estimates $\hat{Q}^{K}_h, \hat{Q}^{K,\hyb}_h$, working backward from $h=H$ to $h=1$. The construction of Q-function estimates in this subsection follows the pessimism principle. In specific, the agent updates them by $\hat{Q}^K_h(s,a)=\max\{r(s,a)+\hat{P}_{s,a}^K\cdot\hat{V}_{h+1}-b^K_h(s,a),0\}$ and $\hat{Q}^{K,\hyb}_h(s,a)=\max\{r(s,a)+\hat{P}_{s,a}^{K,\hyb}\cdot\hat{V}_{h+1}-b^{K,\hyb}_h(s,a),0\}$. 

The key idea of MAX-LCB-VI is to compute $\hat{V}_h$ by selecting the bigger one of two pessimistic Q-function estimates, i.e., $\hat{V}_{h}(s)=\max_{a}\max\{\hat{Q}^K_h(s,a),\hat{Q}^{K,\hyb}_h(s,a)\}$. Similarly, the policy $\hat{\pi}$ is selected greedily in accordance to the bigger Q-function estimates: $\hat{\pi}(s)\in\arg\max_a \max\{\hat{Q}^K_h(s,a),\hat{Q}^{K,\hyb}_h(s,a)\}$.

In this paper, we follow the manner considered in~\citet{NEURIPS2021_e61eaa38,li2024settling} to compute two exploration bonuses in a Bernstein style.  The bonus terms based on purely online data and fused data are computed by
\begin{align}
b_h^K(s,a)&=c_1\sqrt{\frac{\mathbb{V}_{s'\sim\hat{P}^K_{s,a}}(\hat{V}(s'))L}{N^K_{s,a}}}+\frac{c_2HL}{N^K_{s,a}},\nonumber\\
 b_h^{K,\hyb}(s,a)&=c_1\sqrt{\frac{\mathbb{V}_{s'\sim\hat{P}^{K,\hyb}_{s,a}}(\hat{V}_{h+1}(s'))L}{N^{K}_{s,a}+N^\src_{s,a}}}+\frac{c_2HL}{N^K_{s,a}+N^\src_{s,a}}+\frac{H\nu(s,a)N^\src_{s,a}}{N^K_{s,a}+N^\src_{s,a}}.\nonumber
\end{align}
where $L:=\log(SAKH/\delta)$.

    

Besides, we utilize the two-fold subsampling techniques proposed in~\citet{li2024settling} in our proposed MAX-LCB-VI algorithm to tackle  the technical difficulty that sample transitions in collected data cannot be regarded as independently generated. Below we provide both instance-independent and dependent theoretical guarantees for our MAX-LCB-VI.

\begin{algorithm}[htp]
    \DontPrintSemicolon
    \SetKwData{Left}{left}\SetKwData{This}{this}\SetKwData{Up}{up}
    \SetKwFunction{Union}{Union}\SetKwFunction{FindCompress}{FindCompress}
    \SetKwInOut{Input}{input}\SetKwInOut{Output}{output}

    \Input{Dataset $\mathcal{D}$, reward function $r$}
    \BlankLine
    Split $\mathcal{D}$ into 2 halves containing same number of sample trajectories, $\mathcal{D}^{\mathrm{main}}$ and $\mathcal{D}^{\mathrm{aux}}$\; $\mathcal{D}_0=\{\}$\;

    \For{$(h,s)\in[H]\times \mathcal{S}$}{
        $N^{\mathrm{trim}}_h(s)\leftarrow \max\{0, N^{\mathrm{aux}}_h(s) - 10\sqrt{N_h^{\mathrm{aux}}(s)\log\frac{HS}{\delta}}\}$\;
        Randomly subsample $\min\{N^{\mathrm{trim}}_h(s), N^{\mathrm{main}}_h(s)\}$ samples of transition from $(h,s)$ from $\mathcal{D}^{\mathrm{main}}$ to add to $\mathcal{D}_0$
    }
    $\hat{\pi}\leftarrow \text{MAX-LCB-VI}(\mathcal{D}_0, r)$\;
    \Output{policy $\hat{\pi}$}
\caption{Two-fold Subsampling for MAX-LCB-VI}
\label{alg:LCB with subsampling}
\end{algorithm}

\subsubsection{Instance-Independent Sub-optimality Upper Bound}

\begin{theorem}\label{thm:bpi_independent} Let $\hat{\pi}$ be the output policy of MAX-LCB-VI. Suppose $\pi^*$ is an optimal policy. Then with probability $1-\delta$, the sub-optimality gap of $\hat{\pi}$ is
\begin{align}
     &V_1^{P_{\tar},*}-V_1^{P_{\tar},\hat{\pi}}\leq \Tilde{\mathcal{O}}\left(  H\min\left\{ \sqrt{\frac{SA}{K}}, H\nu_{\max}+\sqrt{\frac{S}{N^{\src}/C^{*}_\nu+K/A}} \right\}\right).\nonumber
\end{align}
\begin{remark}
   The proof for Theorem~\ref{thm:bpi_independent} is provided in Appendix~\ref{sec:bpi_independent_proof}. Below, we provide performance analysis for MAX-LCB-VI w.r.t different qualities of  $\pi^b$ and bias $\nu$. When the behavior policy satisfies  $\pi^b=\pi^*$ and the environment bias $\nu(s,a)=0, \forall s,a$, i.e., $P^\tar_{s,a}=P^\src_{s,a},\forall s,a$, the occupancy measure $d^{\pi^b,P^\src}_h(s,a)$ satisfies $d^{\pi^b,P^\src}_h(s,a)=d^{\pi^*,P^\tar}_h(s,a),\forall s,a,h$ and thus the concentrability $C^*_\nu=1$. The sub-optimality gap of $\hat{\pi}$ becomes $\Tilde{\mathcal{O}}(H\sqrt{\frac{S}{N^\src+K/A}})$, implies the strict improvement of offline data on MAX-LCB-VI and increasing offline data size $N^\src_{s,a}$ will benefit the performance of learned policy $\hat{\pi}$. When the environment bias satisfies $\nu(s,a)=0, \forall s,a$, the second term of this upper bound recovers $H\sqrt{\frac{S}{N^\src/C^*+K/A}}$, which matches with the performance of studies for hybrid RL~\citep{huang2025augmenting}. 
\end{remark} 
\end{theorem}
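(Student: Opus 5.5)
We follow the standard pessimistic offline analysis of \citet{li2024settling}, applied separately to each of the two lower confidence bounds. The max-selection of MAX-LCB-VI then lets the sub-optimality inherit the better of the two bounds.

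\textbf{Step 1: pessimism.} Work on a high-probability event $\mathcal{E}$. We first show that $\hat{Q}^{K,\on}_h \le Q^{\hat\pi}_h$ and $\hat{Q}^{K,\hyb}_h \le Q^{\hat\pi}_h$ hold pointwise, so that $\hat V_h\le V^{P_\tar,\hat\pi}_h$. Two-fold subsampling (Algorithm~\ref{alg:LCB with subsampling}) makes the retained transitions behave as independent draws from $P^\tar$ and $P^\src$, conditional on the counts. This gives Bernstein concentration for $(\hat P^K_{s,a}-P^\tar_{s,a})\cdot\hat V_{h+1}$ with radius $b^K_h$.

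For the hybrid estimate, decompose
\[
(\hat P^{K,\hyb}_{s,a}-P^\tar_{s,a})\cdot \hat V_{h+1}
=\tfrac{N^K}{N^K+N^\src}(\hat P^K-P^\tar)\cdot\hat V_{h+1}+\tfrac{N^\src}{N^K+N^\src}\big[(\hat P^\src-P^\src)+(P^\src-P^\tar)\big]\cdot\hat V_{h+1}.
\]
The statistical part is controlled by the Bernstein terms with total count $N^K+N^\src$. The bias part is at most $\|P^\src-P^\tar\|_1 H\le H\nu(s,a)$, weighted by $N^\src/(N^K+N^\src)$. This is exactly the last bonus term in $b^{K,\hyb}_h$. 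An induction over $h$ shows that both estimates are pessimistic, hence so is their max.

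\textbf{Step 2: reduction to the optimal policy.} Because $\hat\pi$ is greedy with respect to the max, we get $V^*_1-V^{\hat\pi}_1\le V^*_1-\hat V_1$. Recursing along $\pi^*$ in $P^\tar$ gives
\[
V^*_1-\hat V_1\le 2\sum_h\sum_{s,a} d^{\pi^*,P^\tar}_h(s,a)\min\{b^K_h(s,a),\,b^{K,\hyb}_h(s,a)\}.
\]
The key point is that at each $(s,a)$ we may charge whichever estimate is larger. Since $\hat V_h(s)\ge\max\{\hat Q^{\on}_h,\hat Q^{\hyb}_h\}(s,\pi^*(s))$, each step's deficit is bounded by the smaller of the two bonuses. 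We then bound this sum in two ways and take the minimum.

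\textbf{Step 3a: the online-only bound.} Use $b^K_h$ throughout. Online exploration (MIN-UCB-VI phase) must give $N^K_{s,a}\gtrsim K\,d^{\pi^*}_h(s,a)/A$ on the relevant pairs; alternatively, invoke a reward-agnostic/UCB-exploration coverage argument. Cauchy--Schwarz combined with a law-of-total-variance bound $\sum_h \Var\lesssim H^2$ then yields $H\sqrt{SA/K}$.

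\textbf{Step 3b: the hybrid bound.} Use $b^{K,\hyb}_h$. The bias contribution is $\sum_h\sum_{s,a}d^{\pi^*}_h H\nu\le H^2\nu_{\max}$. For the statistical part, concentrability and Chernoff give $N^\src_{s,a}\gtrsim N^\src d^{\pi^b,P^\src}_h(s,a)\ge N^\src d^{\pi^*,P^\tar}_h(s,a)/C^*_\nu$. Together with the online count $K d^{\pi^*}_h/A$, this yields
\[
N^K_{s,a}+N^\src_{s,a}\gtrsim d^{\pi^*}_h(s,a)\,(N^\src/C^*_\nu+K/A).
\]
Plugging this in and applying Cauchy--Schwarz over $(s,h)$ gives $H\sqrt{S/(N^\src/C^*_\nu+K/A)}$; the variance is handled as in \citet{li2024settling}. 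The $1/N$ lower-order terms are absorbed into the $\Tilde{\mathcal{O}}$.

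\textbf{Main obstacle.} The hardest part will be justifying the online coverage $N^K_{s,a}\gtrsim K d^{\pi^*}_h(s,a)/A$ from the regret-minimizing exploration phase. I would first try to convert the regret guarantee of Theorem~\ref{thm:regret_independent} into a visitation lower bound along $\pi^*$. If that fails, I would argue via per-state action counts, as in \citet{huang2025augmenting}, which is what produces the $1/A$ factor. A secondary obstacle is keeping the variance terms evaluated under $P^\tar$ despite bias in the hybrid empirical variance. This difference costs at most $H^2\nu$ and is absorbed into the $H^2\nu_{\max}$ term.
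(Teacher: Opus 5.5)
Your Steps 1--2 match the paper's monotonicity and performance-decomposition lemmas. Your per-step bound $2\min\{b^K_h,b^{K,\hyb}_h\}$ is actually slightly sharper than the paper's, which carries an extra $H\nu(s,a)$ in the hybrid branch; the last summand of $b^{K,\hyb}_h$ already absorbs $(P^{\tar}_{s,a}-P^{K,\hyb}_{s,a})\cdot\hat V_{h+1}$.

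The gap is in Step 3. Every place where $K$ enters rests on the pointwise coverage $N^K_{s,a}\gtrsim K\,d^{\pi^*}_h(s,a)/A$. You leave this open, and the analysis of MIN-UCB-VI does not deliver it.
\begin{itemize}
\item A regret bound only controls gap-weighted plays, $\mathrm{Regret}(K)=\sum_k\E^{\pi^k}[\sum_h\Delta_h(s_h,a_h)]$. So an action with zero or tiny gap can be played instead of $\pi^*_h(s)$ at no cost.
\item In this hybrid algorithm that is a real possibility. Suppose abundant unbiased source data pin the estimate at $(s,\pi^*_h(s))$ close to $Q^*_h$. An equally good action with no source data can then keep the larger optimistic value for all $K$ episodes, and $\pi^*_h(s)$ is essentially never played online.
\item Hence your route (i) cannot work instance-independently. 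Route (ii) needs an exploration property you have not established. The $K/A$ in your Step 3b is unsupported for the same reason.
\end{itemize}

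There is also a secondary loss of a $\sqrt H$ factor. Plugging a per-$h$ count bound into Cauchy--Schwarz over the $SH$ pairs $(s,h)$ gives $H\sqrt{SH/(N^{\src}/C^*_\nu+K/A)}$, which is $\sqrt H$ above the claim. The paper avoids this by keeping the pooled count $N^{\src}_{s,a}=\sum_h N^{\src}_{h,s,a}\gtrsim N^{\src}\sum_h d^{\pi^b}_h(s,a)/L$. It then applies Cauchy--Schwarz over $h$ inside each pair, so concentrability enters as $\sum_h d^{\pi^*}_h(s,a)/\sum_h d^{\pi^b}_h(s,a)\le C^*_\nu$, and the outer sum runs over the $S$ pairs $(s,\pi^*(s))$.

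\textbf{How the paper handles the online part.} The paper never lower-bounds $N^K_{s,a}$ pointwise. It bounds the hybrid sum twice and merges the results with $\min\{1/x,1/y\}\le 2/(x+y)$:
\begin{itemize}
\item once using $N^K_{s,a}+N^{\src}_{s,a}\ge N^{\src}_{s,a}$, via the binomial lemma, concentrability, the variance lemma and self-bounding;
\item once using $N^K_{s,a}$ alone.
\end{itemize}
So its $K/A$ is just $SA/K$ from the pigeonhole sum $\sum_{s,a}\sum_{n\le N^K_{s,a}}1/n\le SAL$, not a per-state share of visits. To reach that sum, however, the paper replaces $d^{\pi^*}_h$ by $\frac1K\sum_k d^{\pi^k}_h$ without justification. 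That is an averaged form of the very coverage property you flagged, so quoting the appendix does not close your gap.

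\textbf{A repair.} Change the comparator rather than prove coverage. Your Step 2 argument works for any policy $\pi$, giving $V^{\pi}_1-\hat V_1\le 2\sum_h\E^{\pi}[\min\{b^K_h,b^{K,\hyb}_h\}]$. Averaging over the played policies gives
\[
V^*_1-\hat V_1\le\frac{\mathrm{Regret}(K)}{K}+\frac{2}{K}\sum_{k=1}^{K}\sum_{h=1}^{H}\E^{\pi^k}\big[b^K_h(s_h,a_h)\big].
\]
The first term is $\Tilde{\mathcal{O}}(H\sqrt{SA/K})$ by Theorem~\ref{thm:regret_independent}. The second is a bonus sum along the played policies of exactly the kind the regret analysis controls: pigeonhole using $N^K_{s,a}\ge N^k_{s,a}$, plus the variance lemma. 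Pair this with the offline-only bound along $\pi^*$, $V^*_1-\hat V_1\lesssim H^2\nu_{\max}+\Tilde{\mathcal{O}}(H\sqrt{SC^*_\nu/N^{\src}})$. Then $\min\{a,c+b\}\le c+\min\{a,b\}$ and $\min\{x^{-1/2},y^{-1/2}\}\le\sqrt{2/(x+y)}$ yield the stated bound, with no coverage claim on the online data.
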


\subsubsection{Instance-dependent Sub-optimality Upper Bound}


\begin{theorem}\label{thm:bpi_dependent}
MAX-LCB-VI returns an
$\epsilon$-optimal policy with probability at least $1-\delta$ whenever
\begin{align}
K
\geq
\Omega\left(A\left[
\frac{H^3SL^2}{\epsilon\Delta_{\min}}
-
\frac{N^{\rm src}}{C_\nu^*}
\left(
1-\frac{ H^2\nu_{\max}}{\Delta_{\min}}
\right)_+
\right]_+\right),
\label{eq:sample-complexity}
\end{align}
where $\Delta_{\min}:=
\min_{\substack{h,s,a:\Delta_h(s,a)>0}}
\Delta_h(s,a)$.
  \begin{remark}
      The proof for Theorem~\ref{thm:bpi_dependent} is provided in Appendix~\ref{sec:bpi_dependent_proof}. This theorem shows that to learn an $\epsilon$-optimal policy, MAX-LCB-VI needs the number  of episodes $K$ during online exploration to be greater than $\Omega(\frac{SAH^3L^2}{\epsilon\Delta_{\min}})$, where the improvement is from the offline data. When $\nu(s,a)=0,\forall (s,a)$ and $\pi^b=\pi^*$, the number  of episodes $K$ should satisfy $K\geq \Omega(\frac{SAH^3L^2}{\epsilon\Delta_{\min}}-\frac{AN^\src}{C^*_\nu})$ to output an optimal policy. For this case, we can clearly see that the increasing of $N^\src$ can improve the performance of MAX-LCB-VI.
  \end{remark}
    
\end{theorem}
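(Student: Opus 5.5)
The plan is a gap-based (clipping) argument layered on the pessimistic analysis behind Theorem~\ref{thm:bpi_independent}. First I would fix the good event of probability at least $1-\delta$. On this event two things hold. (i) Both pessimistic estimates are valid lower bounds, $\hat Q^{K,\on}_h\le Q^{\hat\pi}_h$ and $\hat Q^{K,\hyb}_h\le Q^{\hat\pi}_h$ in the target MDP. (ii) For the hybrid estimate, the bias term $H\nu(s,a)N^\src_{s,a}/(N^K_{s,a}+N^\src_{s,a})$ in $b^{K,\hyb}_h$ dominates the systematic error $|(\hat P^{K,\hyb}_{s,a}-P^\tar_{s,a})\cdot\hat V_{h+1}|$ coming from $P^\src\neq P^\tar$. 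Both claims use Bernstein concentration applied to the two-fold subsampled data of Algorithm~\ref{alg:LCB with subsampling}, which restores independence. I would then use the standard pessimism decomposition
\[
V_1^{*}-V_1^{\hat\pi}\le V_1^*-\hat V_1\le \sum_{h}\E_{d_h^{\pi^*,P^\tar}}\bigl[\min\{2b^K_h,2b^{K,\hyb}_h\}(s_h,a_h)\bigr].
\]
The minimum appears because $\hat V_h$ takes the larger of the two lower bounds, so the error at each $(s,a)$ is controlled by whichever bonus is smaller.

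The second step converts this into a gap-dependent statement. I would argue that $\hat\pi_h(s)$ is optimal at every $(h,s)$ reachable under $\pi^*$ as soon as $2\min\{b^K_h,b^{K,\hyb}_h\}(s,\pi^*_h(s))<\Delta_{\min}$. The reason is that a suboptimal action $a$ has $\hat Q_h(s,a)\le Q^*_h(s,a)\le V^*_h(s)-\Delta_{\min}$. Meanwhile the optimal action has $\hat Q_h(s,\pi^*)\ge Q^*_h(s,\pi^*)-(\text{accumulated bonus})$; this lower bound comes from unrolling the recursion and requires controlling the downstream error as well. Once this holds, the suboptimality is bounded by the occupancy-weighted mass of the states where the condition fails, times $H$. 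Requiring that mass to be at most $\epsilon$ with a Markov-type argument then yields the product $\epsilon\Delta_{\min}$ in the denominator.

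The third step handles the two data sources separately. For the online bonus, $b^K_h\lesssim H\sqrt{L/N^K_{s,a}}+HL/N^K_{s,a}$ summed over $H$ steps. The online trajectories visit each $(s,a)$ with total count of order $K/(SA)$ weighted by occupancy, and this gives the requirement $K\gtrsim AH^3SL^2/(\epsilon\Delta_{\min})$; the $L^2$ comes from squaring after the $\sqrt{L}$ and from the union bound. For the hybrid bonus I would use Assumption~1: $N^\src_{s,a}\gtrsim N^\src d_h^{\pi^*,P^\tar}(s,a)/C^*_\nu$. The hybrid bias term contributes at most $H\nu$ per step, so $H^2\nu_{\max}$ after the backward recursion. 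Hence the hybrid bonus falls below the threshold exactly when the statistical part is below $\Delta_{\min}(1-H^2\nu_{\max}/\Delta_{\min})_+$. Its statistical part depends on $N^K_{s,a}+N^\src_{s,a}$, so the offline samples act as $N^\src/C^*_\nu$ effective online counts scaled by $(1-H^2\nu_{\max}/\Delta_{\min})_+$. Subtracting this from the required count, with the per-action factor $A$ arising from converting effective counts to episodes, gives \eqref{eq:sample-complexity}.

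The main obstacle is the second step. Pessimism controls errors only along $\pi^*$, so showing that $\hat\pi$ picks optimal actions requires bounding $Q^*_h(s,\pi^*)-\hat Q_h(s,\pi^*)$, which includes propagated downstream error. I would try an induction backward in $h$: on states where all later stages are already resolved, that error reduces to the local bonus plus the $\epsilon$-small contribution of unresolved states. The cumulative bias factor $H^2\nu_{\max}$ would enter precisely through this propagation.
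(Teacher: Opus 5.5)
Your Step 1 is in line with the paper: pessimism plus a decomposition along $\pi^*$ controlled by $\min\{b^K_h,b^{K,\hyb}_h\}$. Step 2, which is where the gap dependence has to come from, does not go through.

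\textbf{The induction cannot close.} Suppose $\hat\pi$ already selects optimal actions at all later stages. Even then, $V^*_{h+1}-\hat V_{h+1}$ is not small. The error of a pessimistic estimate is the whole accumulated deficit $\E_{\pi^*}[\sum_{h'>h}D_{h'}]$, not a count of wrong argmaxes. So the condition for a correct choice at $(h,s)$ is inherently non-local, which is why the natural per-step threshold is of order $\Delta_{\min}/H$ rather than $\Delta_{\min}$.

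\textbf{The accounting gives the wrong rate.} Bounding the suboptimality by ``$H\times$ ($\pi^*$-mass of bad states)'' and then applying Markov does not yield $1/(\epsilon\Delta_{\min})$. A pair is bad when a bonus of order $H\sqrt{L/n}$ exceeds a threshold proportional to $\Delta_{\min}$, i.e.\ for $n\lesssim \mathrm{poly}(H)L/\Delta_{\min}^2$. The bad mass is then at most $\mathrm{poly}(H)L\,\Delta_{\min}^{-2}\sum_h\E_{\pi^*}[1/n]$. This gives $K\gtrsim \mathrm{poly}(H)SAL^2/(\epsilon\Delta_{\min}^2)$, or $1/(\epsilon\Delta_{\min})^2$ with a first-moment Markov bound.

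\textbf{The missing idea.} An error must be charged by the size of the deficit that caused it, not by $H$. By pessimism, $\Delta_h(s,\hat\pi_h(s))\le Q^*_h(s,\pi^*_h(s))-\hat Q_h(s,\pi^*_h(s))$. So the loss has the form $x\1\{x\ge t\}\le x^2/t$, and $x^2\propto 1/n$ is what produces a single factor of $\Delta_{\min}$. Doing this rigorously also requires handling the fact that after an error $\hat\pi$ leaves the support of $\pi^*$, where no deficit is controlled.

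The paper gets both at once from the deficit-thresholding result of Wang, Cui and Du (Lemma~\ref{lem:deficit-threshold}):
\begin{itemize}
\item Take the imaginary reward $\tilde r_h=\hat Q_h-P^{\tar}\hat V_{h+1}$, for which $\hat\pi$ is optimal.
\item The deficit satisfies $D_h=r-\tilde r_h\le 2\min\{b^{K}_h,b^{K,\hyb}_h\}$ (Lemma~\ref{lem:min-deficit}).
\item The lemma then gives $V_1^*-V_1^{\hat\pi}\le 2\sum_h\E_{\pi^*}[(D_h(s_h,a_h^*)-\Delta_{\min}/(2H))_+]$. This single per-step threshold absorbs both the propagation and the off-support issues.
\item Applying $[x-t]_+\le x^2/t$ per step, plus Lemma~\ref{lem:linear-credit} for the hybrid bonus, gives the online and source branch bounds. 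These are combined via $\min\{1/x,1/y\}\le 2/(x+y)$ into \eqref{eq:sample-complexity}.
\end{itemize}

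Your flat treatment of the bias, shrinking the per-step threshold by the per-step bias, would be adequate inside this framework. After dropping the online count, it yields the same $(1-H^2\nu_{\max}/\Delta_{\min})_+N^{\src}/C^*_\nu$ discount up to constants.
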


\subsection{Model Selection under Unknown Distribution Bias}
In this subsection, we consider the setting where the agent does not have prior knowledge of bias level $\{\nu\}_{s,a}$.  Here we employ a classical model selection paradigm, widely used in online learning, to adaptively estimate the unknown distribution bias $\nu=\{\nu(s,a)\}$. In specific, we apply a model selection method called EXP3.P~\citep{auer2002nonstochastic}, to maintain multiple experts with different estimates of the unknown bias.


Below, we present the detailed algorithm for model selection used to handle unknown bias in regret minimization.
When the bias level $\nu(s,a)$ is unknown, the bonus in MIN-UCB-VI can not be directly constructed since the enlarged term $H\nu(s,a)N^\src_{s,a}/(N^\src_{s,a}+N^k_{s,a})$ depends explicitly on $\nu(s,a)$. This term plays a crucial role in compensating for the transition shift $(P^\tar-P^\src) \cdot V^*_h $ when establishing optimism guarantees for MIN-UCB-VI. Thus, we can regard $(P^\tar-P^\src) \cdot V^*_h $ as an unknown but fixed quantity that must be estimated during learning. Since we know that $(P^\tar-P^\src) \cdot V^*_h$ can be upper bounded by $\Vert P^\tar-P^\src \Vert_1 \Vert V^*_h \Vert_{\infty}\leq 2H$, we can use the peeling technique commonly used in online learning to adaptively estimate this term. In specific, we can maintain $\log (2H)$ experts for each bonus $b^{k,\hyb}_{h}(s,a), \forall h,s,a$. Hence we should maintain a total of $SAH\log 2H$ experts. Then we use the model selection introduced below to adaptively estimate the unknown parameter needed by MIN-UCB-VI.

Below we present the protocol for model selection used in MIN-UCB-VI. This high-level overview outlines the workflow for integrating model selection into MIN-UCB-VI. Further technical details can be found in~\citet{pacchiano2020model}.

\begin{algorithm}[htbp]
    \caption{Model selection for MIN-UCB-VI under unknown bias}
    \label{alg:hybrid}
    \begin{algorithmic}[1]
        \STATE {\bf Input:} Base algorithm $\{B_j\}_{j=1}^{\log (2H)}$ for each $(s,a,h)$, learning rate $\eta$.
        \STATE { Initialize all base algorithms}
        \FOR{$k=1,\ldots, K$}
        \FOR{$(h,s,a) \in [H]\times \State \times \Action$}
        \STATE Sample $j^k_{s,a,h}\sim q^k_{s,a,h}$
        \STATE Run MIN-UCB-VI based on the $b^{j^k,\hyb}_h(s,a)$ output by expert $j^k_{s,a,h}$.
        \STATE Receive feedback and updates expert $j^k_{s,a,h}$.
        \STATE Update the sampling distribution $q^k_{s,a,h}$ of experts.
            \ENDFOR
        \ENDFOR
    \end{algorithmic}
    \label{alg:model_selection}
\end{algorithm}

From Algorithm~\ref{alg:model_selection}, we see that for each $(s,a,h)$ and each expert $j\in\log(2H)$, the expert maintains its own estimate of the quantity $H\nu(s,a)$ used in the bonus term $b^{k,\hyb}_h(s,a)$. The core idea of model selection is to allocate more samples to an expert when the meta-algorithm achieves better performance using that expert's estimate of $H\nu(s,a)$, thereby automatically adapting to the unknown bias.


 Below we introduce a key lemma that relates the regret bound of the meta-algorithm to that of the base algorithm.
 
\begin{lemma}[Theorem 5.3 in~\citet{pacchiano2020model}]
    If base algorithm instance has regret upper bound $\mathrm{Regret}(K)\leq K^{\alpha} c(\delta)$ with probability $1-\delta$ for $\alpha\in[\frac{1}{2},1)$ and the function $c$, then the regret of meta-algorithm EXP3.P is $\mathrm{Regret}(K)\leq  \Tilde{\mathcal{O}}(K^{\alpha} c(\delta)^{\frac{1}{\alpha}})$.
\end{lemma}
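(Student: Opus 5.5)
This lemma is Theorem 5.3 of \citet{pacchiano2020model}, so the plan is not to re-derive it but to check that our setting meets its hypotheses and to sketch the argument behind it. The meta-algorithm runs EXP3.P over the $B=\log(2H)$ base instances: each instance uses a different guess of $H\nu(s,a)$ in $b^{k,\hyb}_h(s,a)$, and the learner sees bandit feedback equal to the episode return. Regret splits into two parts,
\begin{align*}
\mathrm{Regret}(K)=\underbrace{\sum_{k}\big(V_1^{*}-V_1^{\pi^{k}_{j^*}}\big)}_{\text{base regret of a well-specified } j^*}+\underbrace{\sum_k \big(V_1^{\pi^k_{j^*}}-V_1^{\pi^k}\big)}_{\text{meta regret}},
\end{align*}
where $j^*$ is the smallest guess that is at least the true $H\nu(s,a)$. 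Since a guess of $H\nu(s,a)$ larger than the true one is still a valid bias bound, MIN-UCB-VI run with $j^*$ keeps its optimism, and Theorem~\ref{thm:regret_independent} applies to it with bias inflated by at most a factor $2$.

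The key steps, in order, are as follows.
\begin{enumerate}
\item \emph{Smoothing.} Following \citet{pacchiano2020model}, wrap each base in a smoothing transformation. The base $j$ is then only updated in the rounds where it is sampled. Each base receives roughly $q_j K$ rounds, with $q_j$ the sampling probability.
\item \emph{Importance weighting and a floor.} The smoothed base regret over those rounds is at most $(q_jK)^\alpha c(\delta)$. After importance weighting by $1/q_j$, the realized contribution of $j^*$ is bounded by $q_j^{\alpha-1}K^\alpha c(\delta)$. EXP3.P guarantees $q_j\ge \eta/B$, which caps this term.
\item \emph{EXP3.P regret.} EXP3.P against the fixed comparator $j^*$ contributes $\tilde{\mathcal{O}}(\sqrt{BK}+B/\eta)$. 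The high-probability version of this bound needs the losses to be bounded in $[0,H]$, which holds for episode returns.
\item \emph{Balancing.} Balance the exploration term $B/\eta$ against the base term $\eta^{\alpha-1}K^{\alpha}c(\delta)$. Taking $\eta\approx K^{-\alpha}c(\delta)^{-1/\alpha}$-scaled gives the total $\tilde{\mathcal{O}}(K^\alpha c(\delta)^{1/\alpha})$. A union bound over the $SAH$ parallel selectors only adds logarithmic factors, which $\tilde{\mathcal{O}}$ absorbs.
\end{enumerate}

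The main obstacle is step 2. The per-$(s,a,h)$ experts are not independent bandit problems: the choice made at one $(s,a,h)$ changes the value targets $V^k_{h+1}$ used at the others. Because of this coupling, feedback attributed to a single expert is confounded.

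I would handle this first by treating the joint configuration as one base index, so that a single EXP3.P runs over the product. Its size still enters only logarithmically in the EXP3.P bound through $\log B$, but the $\sqrt{B}$ terms then become polynomial in the product size. If that blow-up is unacceptable, I would instead rely on the monotonicity of MIN-UCB-VI: over-estimated biases only weaken the hybrid bonus's benefit and never break optimism, so the well-specified configuration can serve as the comparator. That lets the lemma be invoked as stated, with $c(\delta)$ read off from Theorem~\ref{thm:regret_independent}.
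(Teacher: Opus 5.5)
Citing \citet{pacchiano2020model} is all the paper does for this lemma, so at that level you agree with it. The problem is the sketch you give as ``the argument behind it'': it does not produce the stated rate, and it breaks at steps 3--4.

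\textbf{Where the fixed-floor argument fails.} Keeping every base above the floor $q_j\ge\eta/B$ means mixing in uniform play with total mass $\eta$. That costs up to $\eta KH$ regret against $j^*$, and your EXP3.P term $\tilde{\mathcal{O}}(\sqrt{BK}+B/\eta)$ drops it. With it restored, the fixed-floor analysis gives
\[
\eta KH+\eta^{\alpha-1}B^{1-\alpha}K^{\alpha}c(\delta).
\]
Its minimum over $\eta$ is of order $K^{1/(2-\alpha)}c(\delta)^{1/(2-\alpha)}$, up to $B,H$ factors. Since $1/(2-\alpha)>\alpha$ for every $\alpha<1$, this is strictly worse in $K$ than the claim, e.g.\ $K^{2/3}$ instead of $K^{1/2}$ at $\alpha=1/2$, the case relevant to MIN-UCB-VI.

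Even with your own terms the balancing is off. The choice $\eta\approx K^{-\alpha}c(\delta)^{-1/\alpha}$ makes the base term $\eta^{\alpha-1}K^\alpha c(\delta)=K^{\alpha(2-\alpha)}c(\delta)^{1/\alpha}$, not $K^\alpha c(\delta)^{1/\alpha}$. That tuning also uses $c(\delta)$, which in the intended application contains the unknown $\tilde{\nu}_{\max}$. And if $c(\delta)$ were available for tuning, you would not be paying the $c(\delta)^{1/\alpha}$ exponent at all, which is a sign the bookkeeping is off.

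\textbf{The missing idea.} The $K^\alpha c(\delta)^{1/\alpha}$ rate comes from the self-adjusting mechanism of the CORRAL master (log-barrier mirror descent with increasing learning rates). In \citet{pacchiano2020model} this is where that rate appears; with EXP3.P as the master they only obtain $K^{1/(2-\alpha)}$-type rates. CORRAL's master bound contains a negative term $-\rho/(C\eta\log K)$ with $\rho=\max_k 1/q_{j^*,k}$. This term absorbs the importance-weighting cost $\rho^{1-\alpha}K^\alpha c(\delta)$, because
\[
\max_{\rho}\{\rho^{1-\alpha}K^\alpha c(\delta)-\rho/(C\eta\log K)\}\lesssim Kc(\delta)^{1/\alpha}(\eta\log K)^{(1-\alpha)/\alpha}.
\]
The $c$-free choice $\eta\approx K^{-\alpha}$ then gives $\tilde{\mathcal{O}}(K^\alpha c(\delta)^{1/\alpha}+BK^\alpha+K^{1-\alpha}+\sqrt{BK})$. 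This is $\tilde{\mathcal{O}}(K^\alpha c(\delta)^{1/\alpha})$ for $\alpha\ge 1/2$, $c(\delta)\ge 1$ and logarithmic $B$. So you should invoke the CORRAL version of the result, or settle for the weaker EXP3.P rate.

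\textbf{The coupling discussion.} This concerns the subsequent theorem rather than this lemma, and neither of your remedies closes it. The product class has $(\log 2H)^{SAH}$ members, so the $B$-dependent terms become exponential in $SAH$. The monotonicity argument only certifies that the well-specified configuration is a valid comparator. It does not turn $SAH$ parallel selectors fed by one shared episode return into an instance of the single-master theorem.
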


Based on this lemma, we can derive the regret bound of the meta algorithm as follows.
\begin{theorem}
The regret of MIN-UCB-VI with model selection for unknown bias setting is 
    \begin{align}
       \mathrm{Regret}(K)&\leq \Tilde{\mathcal{O}}\left( \min\left\{ H^2SA\sqrt{K},  K^{\frac{3}{2}}\left(\Tilde{\nu}_{\max}^2+\frac{H^3}{\tau^2_*}\right)\right\}\right).\nonumber
    \end{align}
\end{theorem}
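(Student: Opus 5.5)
The plan is to plug the two branches of Theorem~\ref{thm:regret_independent} into the model-selection lemma (Theorem 5.3 of \citet{pacchiano2020model}), using $\alpha=\tfrac12$ for the first branch and a suitable $\alpha$ for the second. Then I take the minimum, since the meta-algorithm's guarantee holds against whichever base configuration is best.

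\textbf{Base algorithm with a valid guess.} The experts attached to each $(s,a,h)$ are indexed by dyadic guesses $2^{-j}\cdot 2H$ of the quantity $H\nu(s,a)$. The peeling grid makes some expert $j^\star$ valid, i.e.\ its guess dominates the true $(P^\tar-P^\src)\cdot V^*_h$, while exceeding the true value by at most a factor $2$. Running MIN-UCB-VI with the bonuses from $j^\star$ therefore preserves optimism, and it inflates $\tilde\nu$ by only a constant. Theorem~\ref{thm:regret_independent} then gives, with probability $1-\delta$, regret at most $\tilde{\mathcal O}(H\sqrt{SAK})$. It also gives regret at most $\tilde{\mathcal O}(HK\tilde\nu_{\max}+HK\sqrt{H/\tau_*})$.

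\textbf{Applying the lemma to each branch.} For the first branch I write the bound as $K^{1/2}c(\delta)$ with $c(\delta)=\tilde{\mathcal O}(H\sqrt{SA})$, so $\alpha=\tfrac12$. The lemma then gives $K^{1/2}c^2=\tilde{\mathcal O}(H^2SA\sqrt K)$.

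The second branch is linear in $K$, so it has to be recast in the form $K^{1/2}c$ with a $K$-dependent $c$. I set $c=\sqrt K\,H(\tilde\nu_{\max}+\sqrt{H/\tau_*})$. The lemma yields $K^{1/2}c^2=K^{3/2}H^2(\tilde\nu_{\max}+\sqrt{H/\tau_*})^2$. Using $(x+y)^2\le 2(x^2+y^2)$ and absorbing the $H$ powers into the $\tilde\nu$ and $\tau_*$ terms up to the stated scaling gives the displayed $K^{3/2}(\tilde\nu_{\max}^2+H^3/\tau_*^2)$ form. This matching of $H$-powers is where I would check the bookkeeping most carefully.

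\textbf{Combining.} Both branches are guarantees for the same base instance $j^\star$, so the meta-regret is bounded by the smaller of the two resulting expressions. A union bound over the $SAH\log(2H)$ experts and over the confidence events is absorbed into the $\tilde{\mathcal O}$.

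\textbf{Main obstacle.} The hardest step is justifying that running independent EXP3.P instances per $(s,a,h)$ behaves like a single model-selection problem over the product of experts. The lemma is stated for one meta-learner whose reward is the episode return, whereas here the instances interact through value iteration. My first attempt would be to treat the joint choice as a single meta-learner over the product set of experts. The price is a polynomial factor in $SAH$ inside $c(\delta)$, which would be hidden in the $\tilde{\mathcal O}$ and in the stated $H^2SA$ prefactor. I would also need to verify that the optimism argument of Theorem~\ref{thm:regret_independent} still goes through when other $(s,a,h)$ use invalid experts.
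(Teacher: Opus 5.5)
Your route is exactly the one the paper intends: the paper gives no argument beyond invoking Theorem 5.3 of \citet{pacchiano2020model} on the two branches of Theorem~\ref{thm:regret_independent}. Your first branch ($\alpha=\tfrac12$, $c=\tilde{\mathcal O}(H\sqrt{SA})$, giving $H^2SA\sqrt K$) is fine. The second branch, however, does not close. With $c=\sqrt K\,H(\tilde\nu_{\max}+\sqrt{H/\tau_*})$ the lemma yields $K^{3/2}H^2(\tilde\nu_{\max}+\sqrt{H/\tau_*})^2\le 2K^{3/2}(H^2\tilde\nu_{\max}^2+H^3/\tau_*)$. Since $\tilde{\mathcal O}$ hides only logarithmic factors, there is nothing you may ``absorb''. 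The $H^2$ on $\tilde\nu_{\max}^2$ can be removed honestly by going back to the appendix: summing the per-step bias term of Lemma~\ref{lemma_surplus} over $(k,h)$ gives $K\tilde\nu_{\max}$, not the $HK\tilde\nu_{\max}$ carried in Theorem~\ref{thm:regret_independent}. But $\tau_*^2$ cannot be reached, because $H^3/\tau_*$ exceeds $H^3/\tau_*^2$ by the factor $\tau_*\ge KH/(SA)$. In fact the displayed form is not provable by any route. Take $N^{\src}\equiv0$ and $\nu\equiv0$; then $\tau_*=KH/(SA)$ and the displayed second branch equals $HS^2A^2/\sqrt K$, which decreases in $K$. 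That is impossible for what is then a pure online problem, whose minimax regret is $\Omega(H\sqrt{SAK})$. What your argument actually proves is $\tilde{\mathcal O}\bigl(\min\{H^2SA\sqrt K,\,K^{3/2}(\tilde\nu_{\max}^2+H^3/\tau_*)\}\bigr)$, with $H^2\tilde\nu_{\max}^2$ if you use Theorem~\ref{thm:regret_independent} as stated. You should state that bound rather than claim the displayed exponents.

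Two further steps are wrong as written. First, merging the $SAH$ per-triple EXP3.P samplers into one meta-learner over the product of experts does not cost only a polynomial factor. The product set has $(\log_2 2H)^{SAH}$ members, and the guarantees of \citet{pacchiano2020model} grow polynomially in the number $M$ of base learners; the master's own EXP3.P regret is already of order $\sqrt{MK}$, though the paper's statement of the lemma suppresses $M$. So this route is exponential in $SAH$. Even a genuine $\mathrm{poly}(SAH)$ factor in $c(\delta)$ would be squared by the lemma at $\alpha=\tfrac12$, and could not be hidden in $\tilde{\mathcal O}$ or in the $H^2SA$ prefactor. The paper does not resolve this composition either, since it applies the single-learner lemma to separately run samplers without argument. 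You should therefore mark it as an unproven step rather than present it as settled. Second, $\log_2(2H)$ dyadic guesses starting from $2H$ bottom out at a constant. When $H\nu(s,a)$ lies below that floor, which is exactly the regime where the hybrid branch helps, no valid expert is within a factor $2$ of the truth, and the bias term of the hybrid bonus is inflated to order one. You need about $\log_2(2HK)$ levels, down to scale $1/K$, after which the overshoot contributes only a lower-order term.
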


\begin{remark}
  We observe that when $\nu_{\max}=o(\frac{H\sqrt{SA}}{K})$ and $N^{\src}\geq \Omega(\frac{H\sqrt{K}}{\sqrt[4]{SA}})$, the regret of model selection improves over the $\tilde{\mathcal{O}}(H\sqrt{SAKH})$ rate of pure online learning up to logarithmic factors. Conversely, when the cumulative bias $\tilde{\nu}_{\max}^2$ dominates, the bound degrades to $\tilde{\mathcal{O}}(H^2SA\sqrt{K})$, which is a factor of $H\sqrt{SA}$ worse than the $\tilde{\mathcal{O}}(H\sqrt{SAKH})$ guarantee of the vanilla UCB-VI. In the special case of multi-armed bandits ($S=H=1$), this approach incurs an extra $\sqrt{A}$ factor compared to UCB, which is tolerable.
\end{remark}

\section{Lower Bound}
In this section, we provide the instance-independent and dependent lower bounds 
for two considered learning objectives   to show the optimality of our proposed algorithms.

\subsection{Regret Minimization}
In this subsection, we adopt the techniques from~\citet{DBLP:conf/icml/CheungL24} and~\citet{domingues2021episodic} in our considered settings to derive the regret lower bound. 

\begin{theorem}[Instance-Independent Regret Lower Bound] \label{thm:regret_independ_lower} Let $S\geq6$, $A\geq2$, $H\geq8$, and $K\geq1$ be integers. Let $\nu_{\max}\geq 0$ and $\{N^\src_{s,a}\}_{s,a}$ be fixed. Set $\nu(s,a)=\nu_{\max}$ for all $(s,a)\in \State \times \Action$. There exists a target MDP $\mathcal{M}_{\tar}$ , such that
\begin{align}
 \mathbb{E}[\mathrm{Regret}(K)]&\geq \Omega\left( H\min\left\{\sqrt{SAK},KH\nu_{\max}+K\sqrt{\frac{H}{\tau^*}} \right\} \right),\nonumber
\end{align}
where  $\tau^*$  the optimum of the LP.
    
\begin{remark}
  The proof for  Theorem~\ref{thm:regret_independ_lower} is provided in Appendix~\ref{sec:regret_independent_lower_proof}. In the case of $P^\tar_{s,a}=P^\src_{s,a}$, for all $(s,a)$, MIN-UCB-VI achieves a regret of $\Tilde{\mathcal{O}}(KH\sqrt{\frac{H}{\tau^*}})$ and our derived lower bound is $\Omega(KH\sqrt{H/\tau^*})$, indicating the near-optimality of MIN-UCB-VI when there exists no environment heterogeneity.
\end{remark}
\end{theorem}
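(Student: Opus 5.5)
The plan is to prove the bound by exhibiting two families of hard instances and combining them: the bound is a minimum of two terms, so it suffices to show that for every $(\nu_{\max},\{N^\src_{s,a}\})$ some target MDP forces regret at least a constant times whichever term is smaller. All three lower-bound arguments use the standard hard-MDP construction of \citet{domingues2021episodic}. In it, a small waiting/tree gadget with $\Theta(S)$ states routes the agent at a random time to one of $\Theta(S)$ leaf states. At each leaf, $A$ actions lead to a rewarding absorbing state with probability $1/2$, except one unknown action $a^*$ at one unknown leaf $\ell^*$, which gets $1/2+\varepsilon$. Rewards accrue for $\Theta(H)$ remaining steps, so picking a wrong action at the relevant leaf costs $\Theta(H\varepsilon)$ per episode. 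The source MDP $\mathcal{M}_\src$ will be a fixed reference instance with no biased action.

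\textbf{Pure-online term $H\sqrt{SAK}$.} If the source kernel equals the reference and $\nu_{\max}$ is large enough that $\Vert P^\tar_{s,a}-P^\src_{s,a}\Vert_1\le 2\varepsilon\le\nu_{\max}$ for all alternatives, the offline data carry no information about $(\ell^*,a^*)$. Two cases follow.

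\emph{Case 1: $\nu_{\max}\gtrsim\sqrt{SA/K}$.} The bias bound is valid and uninformative, and the argument of \citet{domingues2021episodic} applies verbatim. Summing KL divergences over the $\Theta(SA)$ alternatives with $\varepsilon\asymp\sqrt{SA/(HK)}$ and then applying Pinsker gives $\Omega(H\sqrt{SAKH})$, hence $\Omega(H\sqrt{SAK})$. This is at least the right-hand side, since the minimum is at most the first term.

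\emph{Case 2: $\nu_{\max}$ is small.} Here I will lower bound the second term $KH\nu_{\max}+K\sqrt{H/\tau^*}$ in two parts.

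\emph{Bias part.} Take $\varepsilon=\nu_{\max}/2$, with the source equal to the reference (all arms at $1/2$). The offline data are then identical in distribution under every alternative target. Because $\varepsilon\lesssim\sqrt{SA/(HK)}$, the online data alone cannot identify $a^*$, so a constant fraction of episodes incur loss $H\varepsilon$. Summed over $K$ episodes and the $\Theta(H)$ horizon factor from the leaf, this gives $\Omega(KH\cdot H\nu_{\max})$. This mirrors the bandit argument in \citet{DBLP:conf/icml/CheungL24}, with the horizon amplification supplying the extra factor $H$.

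\emph{Sample-size part.} Take $P^\src=P^\tar$, so the offline data are informative. Write $n_{s,a}$ for the expected number of online visits to $(s,a)$, and charge each leaf-action pair its total sample count $N^\src_{s,a}+n_{s,a}$. Standard change-of-measure then shows that distinguishing $a^*$ at the pair with the fewest total samples requires $\varepsilon\lesssim 1/\sqrt{N^\src_{s,a}+n_{s,a}}$. The LP feasibility constraint $\sum n_{s,a}=KH$ implies $\min_{s,a}(N^\src_{s,a}+n_{s,a})\le\tau^*$. Choosing $\varepsilon\asymp\sqrt{1/(H\tau^*)}$ (times the $H$ amplification) yields regret $\Omega(KH\sqrt{H/\tau^*})$.

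\textbf{Main obstacle.} The sample-size part is the hardest step. The adversary must place $(\ell^*,a^*)$ at the minimizing pair of the LP even though the learner chooses its visitation adaptively. The first thing to try is the averaging trick of \citet{DBLP:conf/icml/CheungL24}: take a uniform prior over alternatives weighted by the LP dual variables, and use the fact that $\tau^*$ is a max-min value, so some pair receives total count at most $\tau^*$ in expectation under the reference instance. Then transfer to the perturbed instances via the divergence decomposition, KL $\le\sum_{s,a}\E[N_{s,a}]\,\KL(P^\tar_{s,a}\Vert P^{\mathrm{ref}}_{s,a})$, which also counts the offline samples.

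\textbf{Combining.} Whichever construction matches the smaller term (and the two constructions in the second term give the maximum of their summands, hence their sum up to a factor $2$) achieves the minimum, up to constants. The conditions $S\ge6$, $A\ge2$ and $H\ge8$ are exactly what the gadget of \citet{domingues2021episodic} needs.
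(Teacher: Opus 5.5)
\textbf{Gap: the horizon factors in the second term.} Splitting the minimum into separate instance families is fine, but the gadget you use cannot produce the horizon factors in the second term. The proposal hides this by counting the horizon twice.

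In the single-visit leaf of \citet{domingues2021episodic}, an action with success probability $1/2+\varepsilon$ differs from the reference by $2\varepsilon$ in $\ell_1$. It has action gap $\Theta(H\varepsilon)$ and is played at most once per episode.
\begin{itemize}
\item \emph{Bias part.} The constraint $2\varepsilon\le\nu_{\max}$ caps the per-episode loss at $\Theta(H\nu_{\max})$, so this part gives only $\Omega(KH\nu_{\max})$. The extra ``horizon amplification'' you multiply by is the same $H$ already inside $H\varepsilon$.
\item \emph{Sample-size part.} One leaf sample carries KL divergence $\Theta(\varepsilon^2)$. With $\tau$ samples the alternatives stay indistinguishable only for $\varepsilon\lesssim 1/\sqrt{\tau}$, giving regret $\Omega(KH/\sqrt{\tau})$. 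That is a factor $\sqrt{H}$ short of $KH\sqrt{H/\tau^*}$. Your choice $\varepsilon\asymp 1/\sqrt{H\tau^*}$ yields $K\sqrt{H/\tau^*}$ before the duplicated $H$.
\item \emph{LP mismatch.} Only $K$ online samples ever reach the leaves, so the LP with budget $KH$ does not govern your gadget.
\end{itemize}
The shortfall is real. For $\nu_{\max}=\sqrt{SA/K}/H$ and large offline counts, the theorem asks for $\Omega(H\sqrt{SAK})$, while your families certify only about $\sqrt{SAK}$. Case 1 has the same slip: a single-visit leaf with $\varepsilon\asymp\sqrt{SA/(HK)}$ gives only $\sqrt{SAKH}$. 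You need $\varepsilon\asymp\sqrt{SA/K}$, which your condition on $\nu_{\max}$ allows.

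\textbf{Missing idea: a self-looping informative state.} In the paper, each $x_i$ is reached uniformly from a gateway. Under every action it returns to itself with probability $1-1/H$. Under the special action it exits to $g$ or $b$ with probabilities $(1/2\pm\gamma)/H$, and with $1/(2H)$ each otherwise. This fixes each factor:
\begin{itemize}
\item The $\ell_1$ shift is $2\gamma/H$, so $\gamma$ may be as large as $\Theta(H\nu_{\max})$.
\item The gap $\gamma(H-h)/H\ge\gamma/2$ is paid at each of the $\Theta(H)$ early steps spent at $x_i$, so a wrong choice costs $\Theta(H\gamma)$ per episode.
\item One transition carries KL divergence at most $4\gamma^2/H$, so $n_0$ source transitions leave $\gamma\asymp\sqrt{H/n_0}$ undetectable.
\item The learner spends $\Theta(KH)$ steps on informative pairs, which matches the LP budget.
\end{itemize}
A single family with $\gamma\asymp\min\{1,\sqrt{SA/K},H\nu_{\max}+\sqrt{H/\tau^*}\}$ and source parameter $\gamma_{\src}=(\gamma-H\nu_{\max}/2)_+$ then gives $\Omega(KH\gamma)$, covering all terms at once.

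The paper also does not use LP dual weights for your adaptive-allocation obstacle. It fixes $N^{\src}=n_0$ on every informative pair and at least $n_0+KH/((S-3)A)$ elsewhere, so $\tau^*=n_0+KH/((S-3)A)$ in closed form. The issue therefore never arises, at the price of covering only that offline profile.
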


\begin{theorem}[Instance-Dependent Regret Lower Bound] \label{thm:regret_depend_lower} Let $\nu_{\max}\geq 0$ and $\{N^\src_{s,a}\}_{s,a}$ be fixed but arbitrary. There exists a target MDP $\mathcal{M}_{\tar}$ , such that
\begin{align}
 \mathbb{E}[\mathrm{Regret}(K)]&\geq \Omega\left(\sum_{s,a}\frac{H^2\log K}{\Delta(s,a)}-N^\src_{s,a}\Delta(s,a)\max\left\{1-\frac{\Tilde{\nu}(s,a)}{\Delta(s,a)},0 \right\}^2\right),\nonumber
\end{align}
for all sufficiently large $K$.
\begin{remark}
    Theorem~\ref{thm:regret_depend_lower} is proved in Appendix~\ref{sec:regret_dependent_lower_proof}. We observe that this lower bound and~\eqref{eq:regret_dependent} are nearly matching up to $\mathrm{poly}(H)$ factor, which can be improved by the conditional variance technique proposed in~\citet{chen2025sharp}. 
\end{remark}
\end{theorem}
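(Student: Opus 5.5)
The plan is to build a hard family of target MDPs in which each pair $(s,a)$ behaves like an independent two-armed bandit sub-problem. The lower bound then follows by running, for every pair, the gap-dependent change-of-measure argument of \citet{DBLP:conf/icml/CheungL24}, lifted to episodic MDPs in the manner of \citet{domingues2021episodic}.

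\textbf{Construction.} I would use the standard tree/chain MDP of \citet{domingues2021episodic}. A waiting phase of length $\Theta(H)$ leads into a binary tree of $\Theta(S)$ leaves. Each leaf $s$ has actions $a$ whose transition to an absorbing rewarding state is $\Ber(1/2+\epsilon_{s,a})$. Reaching that state yields reward $1$ for the remaining $\Theta(H)$ steps. A suboptimal leaf action then loses $\Delta(s,a)=\Theta(H\epsilon_{s,a})$ in value, and its next-state value variance is $\Theta(H^2)$.

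The source kernel $P^\src_{s,a}$ is chosen inside the $\ell_1$ ball of radius $\nu(s,a)$ around $P^\tar_{s,a}$. It is placed adversarially so that the source data point toward an alternative instance in which $a$ becomes optimal. Because the leaf value contrast is $\Theta(H)$, a source shift of $\nu$ moves the leaf Q-value by up to $\Theta(H\nu)$. Biases accumulating along the path through the tree yield the effective bias scale $\tilde\nu(s,a)$.

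\textbf{Change of measure.} For each suboptimal $(s,a)$ I consider the alternative MDP $\mathcal{M}'$, which differs only at $(s,a)$ by raising its success probability by about $2\epsilon_{s,a}$ so that $a$ becomes optimal. The source kernel is kept identical between the two instances whenever the required shift is within the bias budget. When it is not, the source kernel is shifted as little as possible. The KL divergence between the joint laws of (offline data, online trajectories) is then at most
\[
\E[N^K_{s,a}]\,\kl_{s,a}\;+\;N^\src_{s,a}\,\kl^\src_{s,a},
\qquad
\kl^\src_{s,a}\lesssim \Bigl(\epsilon_{s,a}-\Theta(\tilde\nu(s,a)/H)\Bigr)_+^2 .
\]
Here $\kl_{s,a}=\Theta(\epsilon_{s,a}^2)$ is the per-visit divergence of the online transitions.

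Consistency of the algorithm (regret $o(K^\alpha)$ on every instance), combined with the Bretagnolle–Huber or data-processing inequality as in Lai–Robbins, gives the requirement that this KL be at least $(1-o(1))\log K$. This yields
\[
\E[N^K_{s,a}]\gtrsim \frac{\log K}{\epsilon_{s,a}^2}-N^\src_{s,a}\Bigl(1-\frac{\tilde\nu(s,a)}{\Delta(s,a)}\Bigr)_+^2 .
\]
Multiplying by the per-visit regret $\Delta(s,a)$, substituting $\epsilon_{s,a}=\Theta(\Delta(s,a)/H)$, and summing over pairs gives the stated bound. To make the per-pair contributions add, I would use the regret decomposition $\E[\Reg(K)]=\sum_{h,s,a}\E[N^K_{h,s,a}]\,\Delta_h(s,a)$.

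\textbf{Main obstacle.} The delicate step is the source-side KL, that is, matching the $\tilde\nu(s,a)$ scale exactly rather than only $H\nu$. This requires showing that an $\ell_1$ shift of size $\nu$ at each stage can be made to move the relevant Q-gap by $\Theta(\tilde\nu)$. My first attempt would let the shift also perturb the waiting-phase transitions. That compounds over $H$ steps into the $H^2\nu$ and $H^3\nu^2$ terms, and it must be done while keeping $\mathcal{M}$ and $\mathcal{M}'$ indistinguishable on the source side. If exact matching fails, I would settle for the $H\nu$ scale, which still gives the bound up to poly$(H)$ factors, consistent with the remark following the theorem.
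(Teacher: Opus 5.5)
Your skeleton matches the paper's proof. Both use a per-pair two-point change of measure, the split $\E[\bar N_K(x)]\,\KL(P^\tar_x,\wt P^\tar_x)+N^\src_x\,\KL(P^\src_x,\wt P^\src_x)$, and a uniformly-good argument that forces this quantity to be $\gtrsim\log K$. In both, the two source kernels are pulled toward each other by the bias budget, and the count bound is then multiplied by $\Delta(s,a)$ and summed. The paper does not need the Domingues tree or a waiting phase. It starts uniformly on $[S]$, makes the only decision at stage $1$, and sends $(s,a)$ to absorbing $g$ (reward $1$) or $b$ (reward $1/2$) with $P^\tar(g\mid s,a)=\frac12-\frac{2\Delta(s,a)}{H-1}$. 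The per-pair subproblems are then independent, and $\E[\Reg(K)]\ge\sum\Delta(s,a)\E[\bar N_K(s,a)]$ is immediate.

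\textbf{The gap.} The gap is your displayed bound $\kl^\src_{s,a}\lesssim(\epsilon_{s,a}-\Theta(\tilde\nu(s,a)/H))_+^2$, which you assert but cannot derive. The two instances differ only at $x=(s,a)$, and each source kernel lies within $\ell_1$ distance $\nu(s,a)$ of its target. So the two source kernels stay separated by $(\Theta(\epsilon_{s,a})-\Theta(\nu(s,a)))_+$. That means offline data are discounted only at value scale $H\nu$, not $\tilde\nu$.

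Your waiting-phase remedy cannot change this. $Q_h(s,a)=r(s,a)+P_{s,a}V_{h+1}$ and the source likelihood ratio depend only on $P_{s,a}$ and on transitions downstream of $x$. Upstream perturbations shared by both instances leave the gap and the source KL at $x$ untouched, and $N^\src_{s,a}$ is fixed anyway. Perturbations that differ between the instances only add KL. At pairs every policy traverses, such differences are detected online at rate $K$.

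Your fallback also does not give the theorem ``up to $\mathrm{poly}(H)$''. The factor sits inside $(1-\cdot/\Delta)_+$. When $H\nu(s,a)\lesssim\Delta(s,a)<\tilde\nu(s,a)$, the stated bound subtracts nothing, while yours may subtract all of $N^\src_{s,a}\Delta(s,a)(1-cH\nu(s,a)/\Delta(s,a))^2$. That is a strictly weaker conclusion.

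\textbf{How the paper handles this step.} The paper obtains the $\tilde\nu$ scale not by compounding but directly at the single pair. It introduces a value-scale budget $B_x$ and notes $\tilde\nu_x\ge B_x$. It then shifts $P^\src(g\mid x)$ by $+2m_x/(H-1)$ in $\mathcal M$ and $\wt P^\src(g\mid x)$ by $-2m_x/(H-1)$ in $\wt{\mathcal M}_x$, with $m_x=\min\{\tilde\nu_x,\Delta_x\}$. Admissibility is checked against $\tilde\nu_x$, not $B_x$. The residual source separation is $2(\Delta_x-\tilde\nu_x)_+$ on the value scale, which gives exactly the claimed subtraction. In effect, the paper treats $\tilde\nu(s,a)$ itself as the admissible value-scale shift. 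If you adopt that convention, your argument closes immediately.

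Under the literal constraint $\|P^\tar_x-P^\src_x\|_1\le\nu(s,a)$, however, that shift has $\ell_1$ size $4m_x/(H-1)$. Whenever $m_x=\tilde\nu_x$, this exceeds $\nu(s,a)$ by a factor $\gtrsim H$. So your diagnosis of the obstacle is correct. Proving the statement in the $\ell_1$ model would need a new ingredient, and neither your proposal nor the paper supplies one. A single-pair shift moves $Q$ by only $O(H\nu)$, so one would need perturbed pairs downstream of the decision that are revisited over many steps of an episode.
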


\subsection{Best Policy Identification}

In this subsection, we follow the techniques from~\citet{NEURIPS2021_e61eaa38} to derive the sub-optimality gap lower bound by constructing hard MDP instances.  The key idea is to introduce a behavior policy defined over the first $\min\{C^*_\nu, A\}$ actions, and  define an alternative  MDP using $\nu_{\max}$ to capture the intrinsic difficulty of learning under shifted dynamics.

\begin{theorem}[Instance-Independent Sub-optimality gap Lower Bound] \label{thm:bpi_independent_lower} Suppose \(S\geq 8\), \(A\geq 2\), \(H\geq 8\), \(K\geq 1\),
\(N^{\src}\geq 0\), \(C_\nu^*\geq 2\), and \(\nu_{\max}\geq 0\). We define $M:=\min\{\lfloor C_\nu^*\rfloor,A\}$.
 There exists a class of source--target
MDP pairs 
such that every algorithm which observes \(N^{\src}\) source trajectories,
interacts with the target MDP for \(K\) episodes, and outputs
\(\widehat\pi\) satisfies
\begin{align}
\E\!\left[
V_1^{*,\tar}(s_0)-V_1^{\widehat\pi,\tar}(s_0)
\right]
\geq
\Omega\left(\min\left\{
H\sqrt{\frac{SM}{K}},\,
H^2\nu_{\max}
+H\sqrt{\frac{S}{K/M+N^{\src}/C_\nu^*}}
\right\}\right).
\nonumber
\end{align}

\begin{remark}
    Theorem~\ref{thm:bpi_independent_lower} is proved in Appendix~\ref{sec:bpi_independent_lower_proof}. Recall that when $\nu(s,a)=0$, for all $(s,a)$, MAX-LCB-VI achieves a sub-optimality gap of $\Tilde{\mathcal{O}}(H\min\{\sqrt{\frac{SA}{K}},\sqrt{\frac{S}{N^\src/C^*+K/A}}\})$ which nearly matches with this lower bound  when \(C_\nu^*\geq A\), \(M=A\).
\end{remark}
\end{theorem}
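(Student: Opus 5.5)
The lower bound is the larger of two separate hard instances, since $\Omega(\min\{a,b+c\})$ is within a constant of $\Omega(\min\{a,\max\{b,c\}\})$. So it suffices to exhibit three families. The first gives $H\sqrt{SM/K}$, a pure online term that bites when offline data is useless. The second gives $H\sqrt{S/(K/M+N^{\src}/C_\nu^*)}$, a statistical term for the combined data. The third gives $H^2\nu_{\max}$, a bias term that no amount of data removes. Each is then capped at the first term, because we only need to show the gap exceeds the minimum.

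\textbf{Base construction.} We follow the hard instance of \citet{NEURIPS2021_e61eaa38}, built on \citet{domingues2021episodic}. A waiting state $s_0$ leads, through a short tree, to one of $\Theta(S)$ leaf states. Each leaf has $M$ candidate actions. Taking an action at a leaf moves the agent to an absorbing good state $g$ with probability $1/2+\varepsilon\mathbbm{1}[a=a^*_\ell]$, and to an absorbing bad state otherwise. The state $g$ yields reward $1$ for the remaining $\Theta(H)$ steps. Consequently, picking a wrong action at a leaf reached with probability $\Theta(1/S)$ costs $\Theta(H\varepsilon/S)$.

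\textbf{Behavior policy.} The policy $\pi^b$ is uniform over the first $M$ actions at every leaf, and the optimal actions $a^*_\ell$ lie among these $M$. This gives $d^{\pi^*}/d^{\pi^b}\le M\le C_\nu^*$, so the instance is admissible. Each leaf then receives about $N^{\src}/(SM)\approx N^{\src}/(SC_\nu^*)$ offline samples per relevant action when $M=C_\nu^*$. When $M=A$, the offline term is dominated by the online term anyway.

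\textbf{Statistical terms.} Online, each leaf--action pair receives about $K/(SM)$ samples in expectation. We apply the Bretagnolle--Huber/Fano reduction over the $M^{\Theta(S)}$ choices of $(a^*_\ell)_\ell$, with KL per sample $\Theta(\varepsilon^2)$. Any algorithm must then misidentify a constant fraction of leaves unless $\varepsilon^2(K/(SM)+N^{\src}/(SC_\nu^*))\gtrsim1$. Choosing $\varepsilon\asymp\min\{1/4,\sqrt{S/(K/M+N^{\src}/C_\nu^*)}\}$ gives gap $\Omega(H\varepsilon)$, which is the second term. If the source data instead comes from a source MDP that reveals nothing (see below), the same argument with $N^{\src}$ removed gives $H\sqrt{SM/K}$.

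\textbf{Bias term.} We take a pair of target MDPs that share one source MDP $P^{\src}$. The transitions are chosen so that at every step along the chain, each target differs from $P^{\src}$ in $\ell_1$ by at most $\nu_{\max}$, and the two targets differ from each other by $2\nu_{\max}$ in opposite directions. We favor an opposite action in each target, and let the per-step discrepancy compound over $\Theta(H)$ steps of a chain (e.g.\ a ``stay in $g$'' probability shifted by $\nu_{\max}$). This produces value differences of order $H\cdot H\nu_{\max}$. The offline data is identically distributed under both targets, so it cannot distinguish them. Online data distinguishes them only at rate $K\nu_{\max}^2$, and the reduction forces error $\Omega(H^2\nu_{\max})$ whenever that term is below $H\sqrt{SM/K}$. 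Otherwise the first term is the smaller one, and the online instance above already handles it.

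\textbf{Main obstacle.} The hardest step is this compounding construction. It must keep every row within $\ell_1$ distance $\nu_{\max}$ of the source, keep $C_\nu^*$ bounded, and still give an $H^2$ (not $H$) gap. I would first try the chain in which the bias shifts the self-loop probability of $g$ at each of $H/2$ steps, and verify the value gap $\sum_h (H-h)\nu_{\max}$ directly.

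Finally, we combine the three families by a mixture argument: each instance is an admissible source--target pair, so the worst case is at least their maximum, and we conclude with the $\min$/$\max$ algebra noted at the start.
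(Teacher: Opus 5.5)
Reducing $\min\{a,b+c\}$ to $\max\{\min\{a,b\},\min\{a,c\}\}$ is legitimate. The paper does not do this: it uses one family and gets the whole minimum from a single choice of the perturbation parameter. However, two of your pieces do not deliver what you claim.

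\textbf{Offline coverage.} Your $\pi^b$ sends every source trajectory into the leaves and is uniform over $M$ actions there. Its concentrability is therefore only $M$, and each relevant leaf--action pair receives about $N^{\src}/(SM)$ samples. When $\lfloor C_\nu^*\rfloor\ge A$ (so $M=A$), your family is genuinely easier than claimed. The best it can certify is $H\sqrt{S/(K/A+N^{\src}/A)}$, which falls short of $H\sqrt{S/(K/A+N^{\src}/C_\nu^*)}$ by a factor of up to $\sqrt{C_\nu^*/A}$. Your remark that the offline term is then ``dominated by the online term anyway'' is false whenever $N^{\src}/C_\nu^*\gtrsim K/A$ (e.g.\ $N^{\src}=C_\nu^*K$).

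The missing idea is to thin the source data at the root. The paper's behavior policy takes the gateway action at $s_0$ only with probability $M/C_\nu^*$, and otherwise enters an absorbing dead state $z$. Each $(x_i,a)$ with $a\in[M]$ then gets an effective $N^{\src}/(S_0C_\nu^*)$ source episodes, and the occupancy ratio at $(x_i,a_i^*)$ is exactly $C_\nu^*$.

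\textbf{Bias term.} Your bias family is a two-point test between targets placed at the full budget $\pm\nu_{\max}$ around a common source. By your own count the online information is of order $K\nu_{\max}^2$. Le Cam therefore forces constant error only when $K\nu_{\max}^2\lesssim 1$. That condition involves neither $S$ nor $M$, and it is much stronger than the regime you claim, $H\nu_{\max}\le\sqrt{SM/K}$ (i.e.\ $K\nu_{\max}^2\le SM/H^2$), whenever $SM\gg H^2$.

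Your ``uninformative source'' online family does not rescue this. It is built on the one-shot leaf, where a target perturbation $\varepsilon$ costs an $\ell_1$ shift of $2\varepsilon$, so it is admissible only when $\nu_{\max}\gtrsim\sqrt{SM/K}$. Neither family covers the band $\sqrt{SM/K}/H\lesssim\nu_{\max}\lesssim\sqrt{SM/K}$, where the claimed bound is $H\sqrt{SM/K}$.

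The fix, which is what the paper does, is to put both the horizon amplification and the bias inside the same $S_0\times M$ multi-hypothesis family:
\begin{itemize}
\item Each informative state $x_i$ self-loops with probability $1-1/H$ under every action.
\item The special action shifts the exit split between $g$ and $b$ by only $\theta/H$.
\item The state is then revisited $\Theta(H)$ times with per-visit gap $\theta(H-h)/H$, giving a total gap of $\Theta(H\theta)$, while the per-step shift is only $O(\theta/H)$.
\item The source parameter is $(\theta-H\nu_{\max})_+$, which makes the source data uninformative whenever $\theta\le H\nu_{\max}$.
\end{itemize}
The single choice $\theta\asymp\min\{1,\sqrt{SM/K},H\nu_{\max}+\sqrt{S/(K/M+N^{\src}/C_\nu^*)}\}$ then satisfies the combined information constraint $K\theta^2/(S_0M)+N^{\src}(\theta-H\nu_{\max})_+^2/(S_0C_\nu^*)\lesssim 1$. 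This yields the entire minimum at once, with no separate pure-online or bias instances.
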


\begin{theorem}[Instance-Dependent Sub-optimality gap Lower Bound] \label{thm:bpi_dependent_lower} Let $S\geq 8$, $A\geq2$, $H\geq5$, $C_\nu^*\geq2$.
For every $\Delta_{\min}\in(0,1/4]$,
$\nu_{\max}\geq0$, and $0<\epsilon
\leq
\frac{(H-2)\Delta_{\min}}{8}$,  such that the following holds for every learning algorithm.
when $C_\nu^*\geq A$, 
\begin{align}
K\leq \mathcal{O}\left( \frac{SHA}{\epsilon\Delta_{\min}}-\frac{A N^{\src}}{C_\nu^*}
\left(
1-\frac{\nu_{\max}}{\Delta_{\min}}
\right)_+\right)
\label{eq:SA-lower-condition}
\end{align}
implies at least one instance in the family satisfies $\Prb\!\left(
V_1^*(s_1)-V_1^{\widehat\pi}(s_1)
\geq\epsilon
\right)
\geq\frac17$.

\begin{remark}
  The proof for  Theorem~\ref{thm:bpi_dependent_lower} is deferred in Appendix~\ref{sec:bpi_dependent_lower_proof}. This lower bound also matches with the upper bound  up to $H$ factor.
\end{remark}
    
\end{theorem}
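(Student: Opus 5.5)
We prove the statement with a two-regime hard family and a Fano/Le Cam change-of-measure argument, following the instance-dependent construction of~\citet{NEURIPS2021_e61eaa38}. The family is a layered MDP with $\Theta(S)$ ``decision'' states, reached uniformly from $s_1$ through a short tree of depth $O(\log S)$, so at least $H-2$ steps remain after the decision. In each decision state $s$ and at the single decision step, action $a^\star_s$ moves to an absorbing rewarding state with probability $p+\Delta_{\min}/(H-2)$, while every other action uses probability $p$. The value gap per wrong action is then $\Delta_{\min}$, which is why we need $\epsilon\le (H-2)\Delta_{\min}/8$. The source MDP equals the target except that, at decision pairs, the source success probability is shifted toward $p$ by $\min\{\nu_{\max},\Delta_{\min}/(H-2)\}$, which is allowed by the $\ell_1$ budget. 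The behavior policy $\pi^b$ is uniform over the first $M=\min\{\lfloor C^*_\nu\rfloor,A\}=A$ actions, since $C^*_\nu\ge A$; this gives $d^{\pi^*,P^\tar}_h/d^{\pi^b,P^\src}_h\le A\le C^*_\nu$, up to the constant caused by the source reaching the decision states with the same probabilities. Each decision pair $(s,a)$ therefore receives about $N^\src/(SA)\asymp N^\src/(S C^*_\nu)$ offline samples, up to constants when $C^*_\nu\asymp A$.

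The core step is the information bound. For each state $s$ and alternative $a\neq a^\star_s$, define an alternative instance by moving the optimal action to $a$. Target KL per online sample is $\Theta(\Delta_{\min}^2/(H^2p))$. Source KL per offline sample is $\Theta\big((\Delta_{\min}/H)^2(1-\nu_{\max}H/\Delta_{\min})_+^2/p\big)$, because the source discrepancy between instances is reduced by the bias. Using the bound of the form $(1-x)_+^2\le(1-x)_+$ gives the stated factor. Choose $p\asymp 1/H$ so that the divergences scale as $\Delta_{\min}^2/H$.

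By the divergence decomposition (the Bretagnolle--Huber inequality), distinguishing the instance at state $s$ with constant probability requires
\[
\mathbb{E}[n^{\mathrm{on}}_{s,a}]\,\frac{\Delta_{\min}^2}{H}+\frac{N^\src}{SA}\frac{\Delta_{\min}^2}{H}\Big(1-\frac{\nu_{\max}}{\Delta_{\min}}\Big)_+\gtrsim 1
\]
for every alternative action $a$. Online visits to the decision layer total about $K\cdot O(1)$, spread over $S$ states. Being $\epsilon$-optimal requires identifying $a^\star_s$ in at least a $1-\Theta(\epsilon/\Delta_{\min})$ fraction of states. This yields the $1/(\epsilon\Delta_{\min})$ scaling once we use a gap that is tuned per state, as in the construction of~\citet{NEURIPS2021_e61eaa38}. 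Here only $\Theta(\epsilon H/\Delta_{\min})$ states carry a gap, each with reward weight scaled accordingly. Summing gives $K\gtrsim SHA/(\epsilon\Delta_{\min})-AN^\src(1-\nu_{\max}/\Delta_{\min})_+/C^*_\nu$. A Fano or averaging argument over the $A^{S'}$ hypotheses then shows that some instance has failure probability at least $1/7$.

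The main obstacle is getting the $\epsilon$ versus $\Delta_{\min}$ scaling and the offline saving term simultaneously. The offline KL must enter subtractively with exactly the $(1-\nu_{\max}/\Delta_{\min})_+$ factor and the correct $A/C^*_\nu$ normalization. To handle this, we first fix the hypotheses to differ only at decision pairs. We then verify that the shifted source, taken as a convex combination toward the null, still satisfies the concentrability bound for every instance in the family. Only after that do we apply the counting argument. The statement is only claimed up to constant and $H$ factors, so we will be loose with the logarithmic tree depth.
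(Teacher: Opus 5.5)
There is a genuine gap at exactly the step that is supposed to produce the $1/(\epsilon\Delta_{\min})$ scaling. In your family each decision state is reached with probability $\Theta(1/S)$, so it receives $\Theta(K/S)$ online visits. A wrong action there costs only $\Theta(\Delta_{\min}/S)$ in value, so $\epsilon$-optimality forces correct identification on just a $1-O(\epsilon/\Delta_{\min})$ fraction of states, as you yourself note. A learner using on the order of $H\log(A\Delta_{\min}/\epsilon)/\Delta_{\min}^2$ samples per pair already achieves this. Hence this family can certify at most $K\gtrsim SAH/\Delta_{\min}^2$ up to logarithms, which falls short of $SHA/(\epsilon\Delta_{\min})$ by the factor $\epsilon/\Delta_{\min}$ whenever $\epsilon<\Delta_{\min}$.

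Your proposed repair does not help. Letting only $\Theta(\epsilon H/\Delta_{\min})$ states carry a gap does not reduce the visits each remaining state receives. Rescaling rewards pushes the gaps below the prescribed $\Delta_{\min}$.

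The missing idea is to dilute the \emph{reach probability}. In the paper, the gateway action at $s_1$ sends each of the $S-4$ informative states probability $8\epsilon/((S-4)(H-2)\Delta_{\min})$ and all remaining mass to an absorbing zero-reward state $z$. This has three consequences:
\begin{itemize}
\item each informative pair gets only $\Theta(K\epsilon/(SHM\Delta_{\min}))$ online visits, each carrying KL $O(\Delta_{\min}^2)$;
\item mistakes at a constant fraction of states cost $\Theta(\epsilon)$;
\item the loss never exceeds $8\epsilon$, which is what converts expected loss $2\epsilon$ into failure probability $1/7$.
\end{itemize}
The hypothesis $\epsilon\le(H-2)\Delta_{\min}/8$ is precisely the condition that these gateway probabilities are valid. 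It does not come from a per-state gap of $\Delta_{\min}$. Under your scaling the maximal suboptimality is about $\Delta_{\min}$, so $\epsilon\in(\Delta_{\min},(H-2)\Delta_{\min}/8]$ is out of reach altogether.

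The $C^*_\nu$ normalization is a second gap, which you partly acknowledge. A behavior policy uniform over $A$ actions has concentrability about $A$. Your subtraction term therefore comes out as $N^{\src}(1-\nu_{\max}/\Delta_{\min})_+$ rather than $AN^{\src}(1-\nu_{\max}/\Delta_{\min})_+/C^*_\nu$, which is strictly weaker when $C^*_\nu\gg A$. The paper lets $\pi^b$ take the gateway action only with probability $M/C^*_\nu$, diverting the rest to $z$, which makes the occupancy ratio exactly $C^*_\nu$ at $(x_i,a_i^*)$. Because $P^{\src}=P^{\tar}$ at the gateway, the expected source count per pair, $8N^{\src}\epsilon/((S-4)(H-2)C^*_\nu\Delta_{\min})$, carries the same dilution factor as the online count. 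This common prefactor is what merges $K/M$ and $N^{\src}(1-\nu_{\max}/\Delta_{\min})_+/C^*_\nu$ into a single threshold.

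Two smaller points. First, with a per-step increment of $\Delta_{\min}/(H-2)$ in a time-homogeneous kernel, your decision state evaluated at stage $H-1$ has gap $\Delta_{\min}/(H-2)$. Under the theorem's definition (minimum over all $h$), your instance's minimum gap is therefore off by a factor $H$. The paper uses a per-step increment of $\Delta_{\min}$ and puts the $H$ into the reach probability instead. Second, your source shift by $\min\{\nu_{\max},\Delta_{\min}/(H-2)\}$ is fine: it gives $(1-H\nu_{\max}/\Delta_{\min})_+^2\le(1-\nu_{\max}/\Delta_{\min})_+$. The paper obtains the linear factor directly from the source parameter $\sqrt{\Delta_{\min}(\Delta_{\min}-\nu_{\max})_+}$.
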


\section{Experiments}

In this section, we conduct simulation experiments to evaluate the empirical performance of our algorithms. All experiments are performed in the GridWorld environment considered in~\citet{qu2025hybrid}. The environment has $S=16$ states, $A=4$ actions, and horizon $H=20$. The source and target environments share the same state space, action space, and reward function, but differ in their transition dynamics. Specifically, when the success probability is $p$, the agent moves in the intended direction with probability $p$; with probability $1-p$, it moves uniformly at random to one of the unintended directions. We control the bias level by varying the difference between the success probabilities of the source and target environments.

\begin{figure}[htbp]
    \centering
    \begin{subfigure}{0.35\textwidth}
        \centering
        \includegraphics[width=\linewidth]{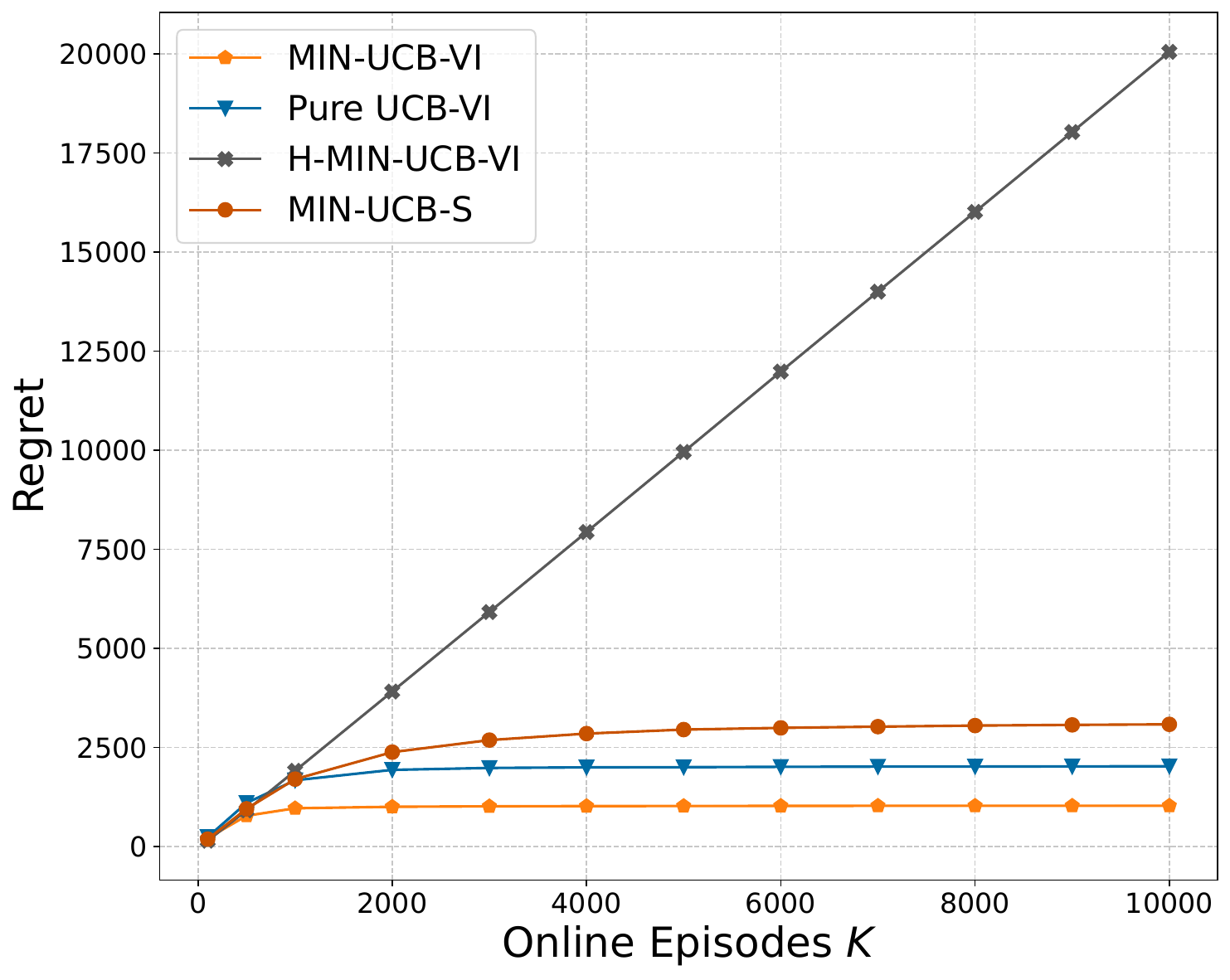}
        \caption{Bias level $0.2$}
        \label{fig:1a}
    \end{subfigure}
    \begin{subfigure}{0.35\textwidth}
        \centering
        \includegraphics[width=\linewidth]{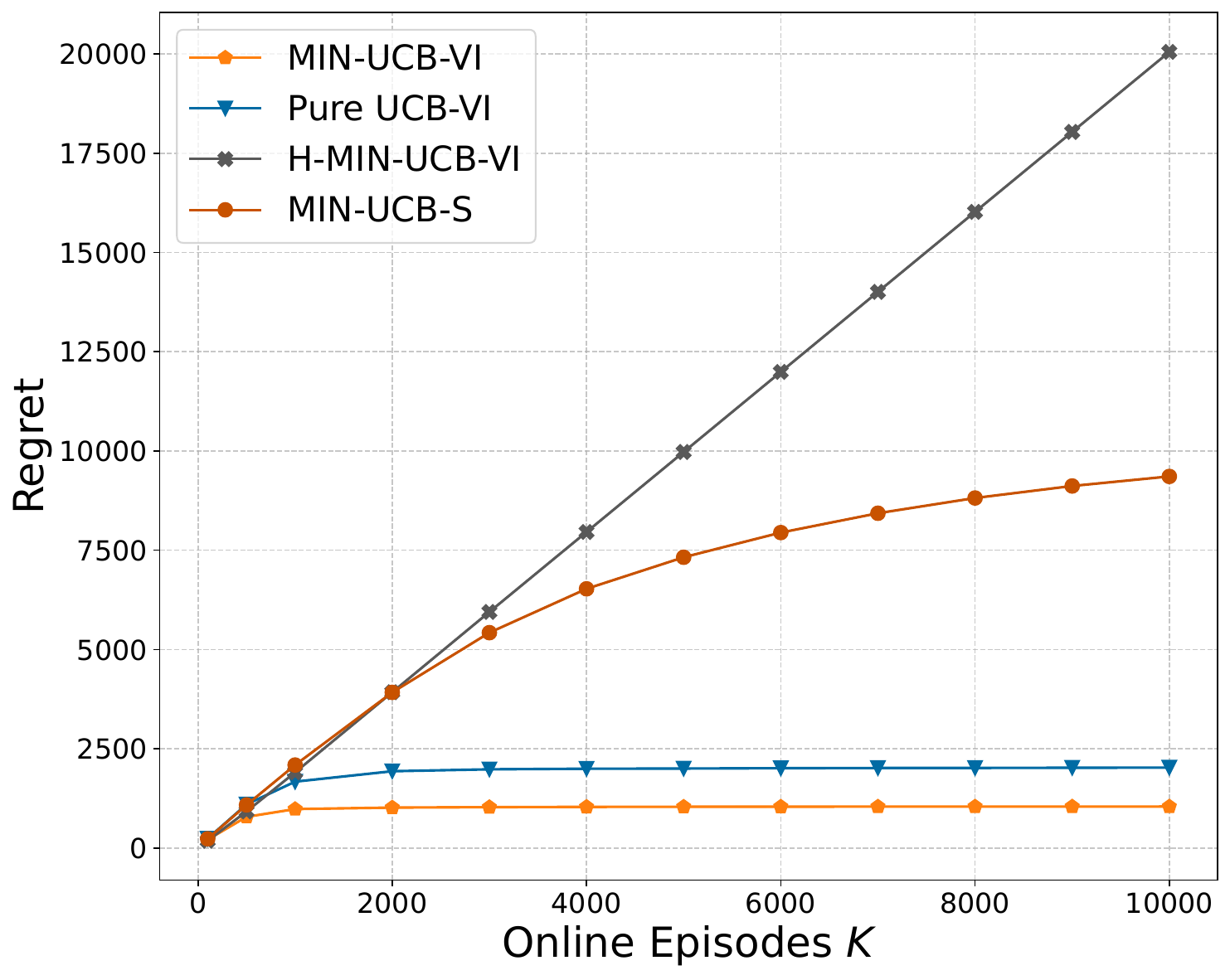}
        \caption{Bias level $0.7$}
        \label{fig:1b}
    \end{subfigure}

    \begin{subfigure}{0.35\textwidth}
        \centering
        \includegraphics[width=\linewidth]{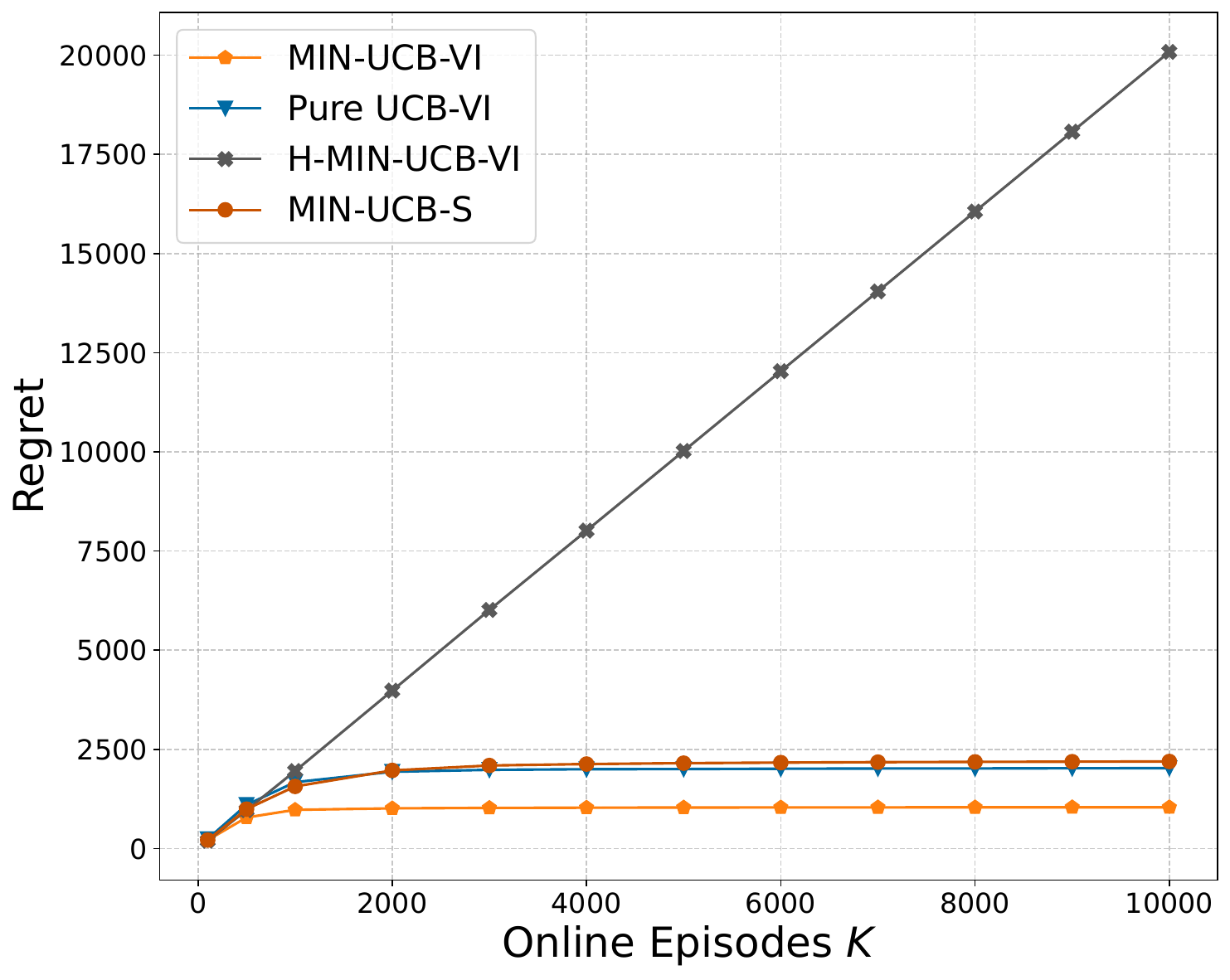}
        \caption{$N^{\mathrm{src}}=100$}
        \label{fig:1c}
    \end{subfigure}
    \begin{subfigure}{0.35\textwidth}
        \centering
        \includegraphics[width=\linewidth]{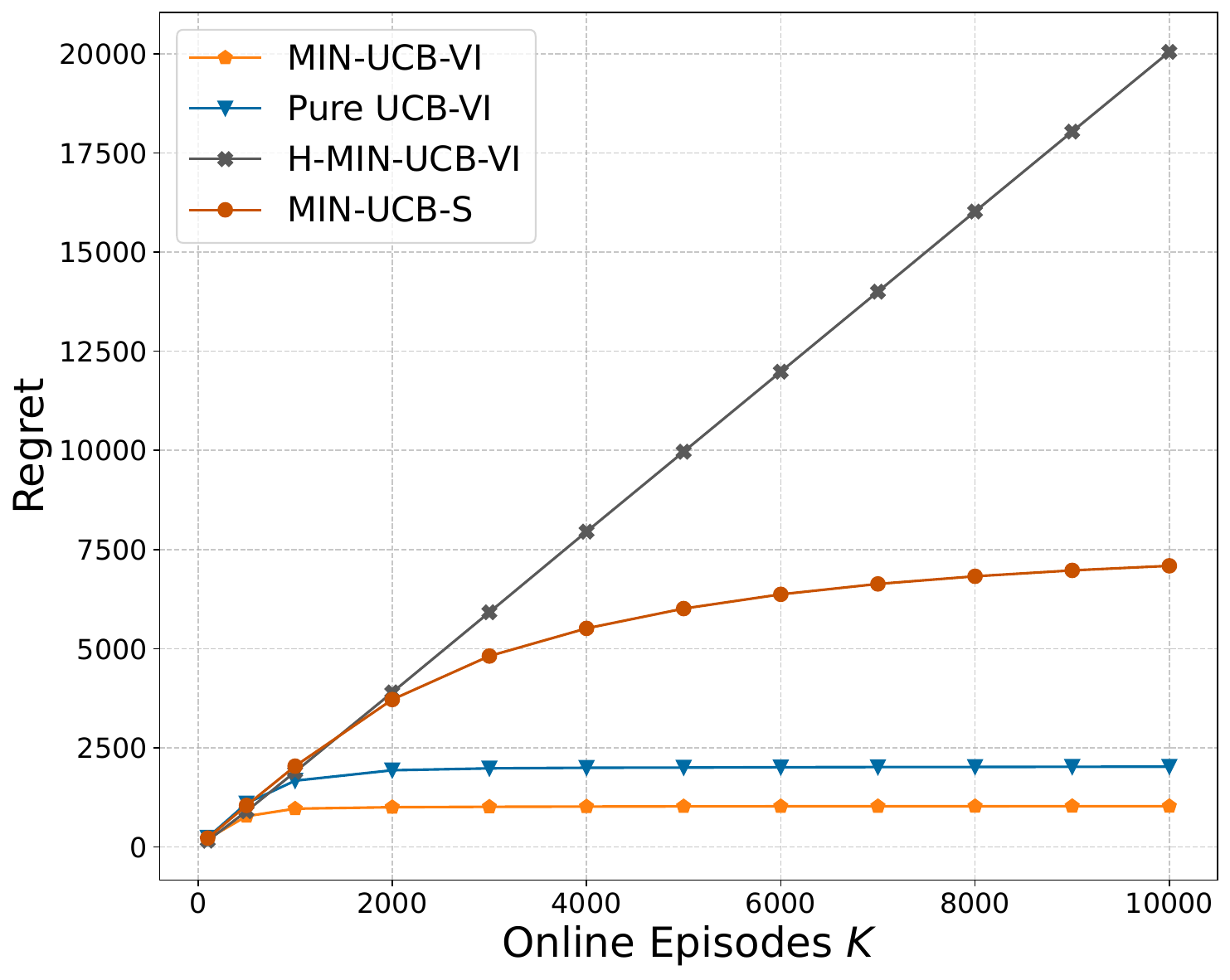}
        \caption{$N^{\mathrm{src}}=1000$}
        \label{fig:1d}
    \end{subfigure}
    \caption{Cumulative regret of MIN-UCB-VI and baselines under different bias levels and offline sample sizes.}
    \label{fig:1}
\end{figure}

All results are reported as means with standard error bars over multiple independent runs. For all algorithms, we set the confidence parameter to $\delta=0.1$ and the bonus scale to $c=0.01$.

\begin{figure}[htbp]
    \centering
    \begin{subfigure}{0.35\textwidth}
        \centering
        \includegraphics[width=\linewidth]{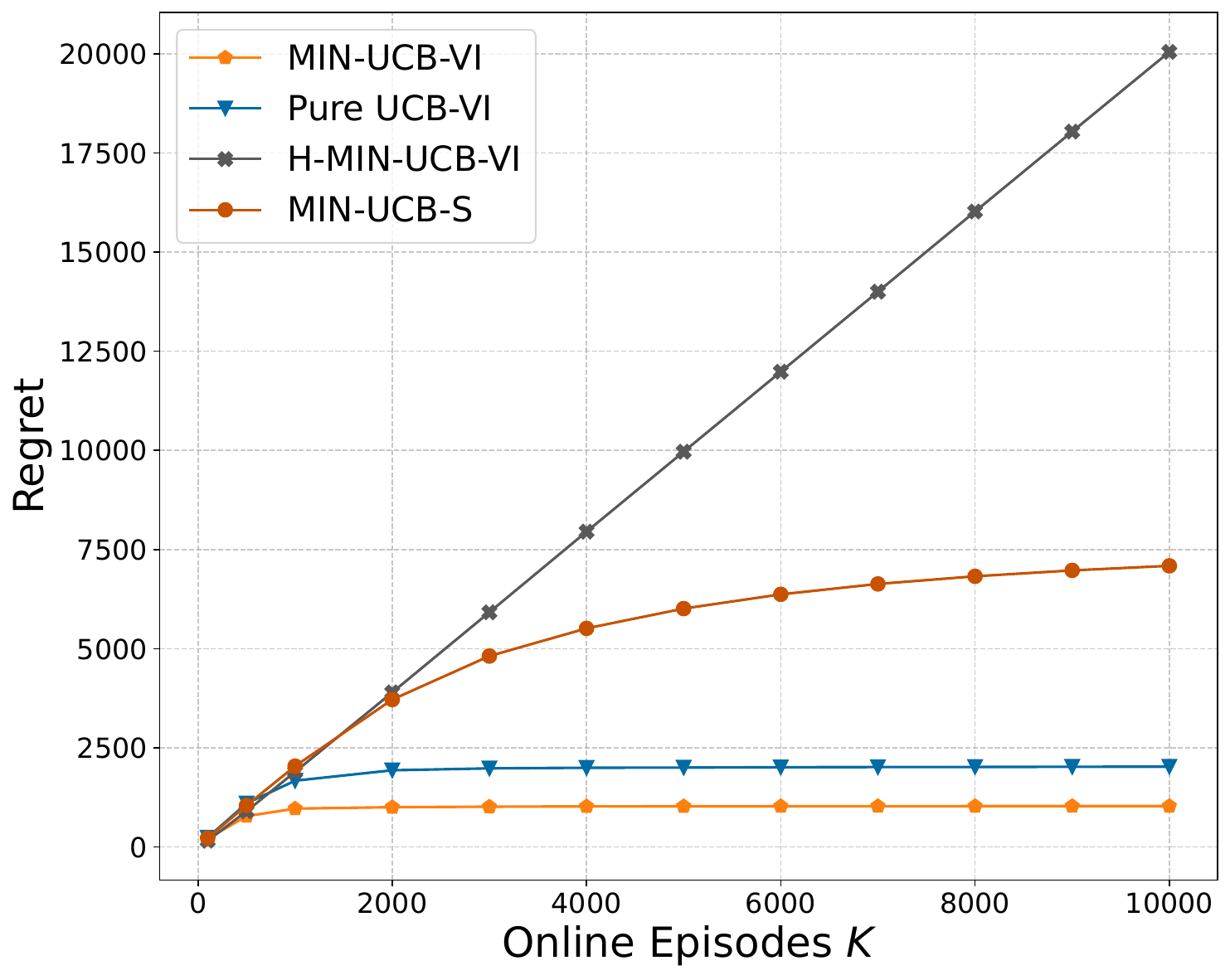}
        \caption{Bias level $0.5$}
        \label{fig:3a}
    \end{subfigure}
    \begin{subfigure}{0.35\textwidth}
        \centering
        \includegraphics[width=\linewidth]{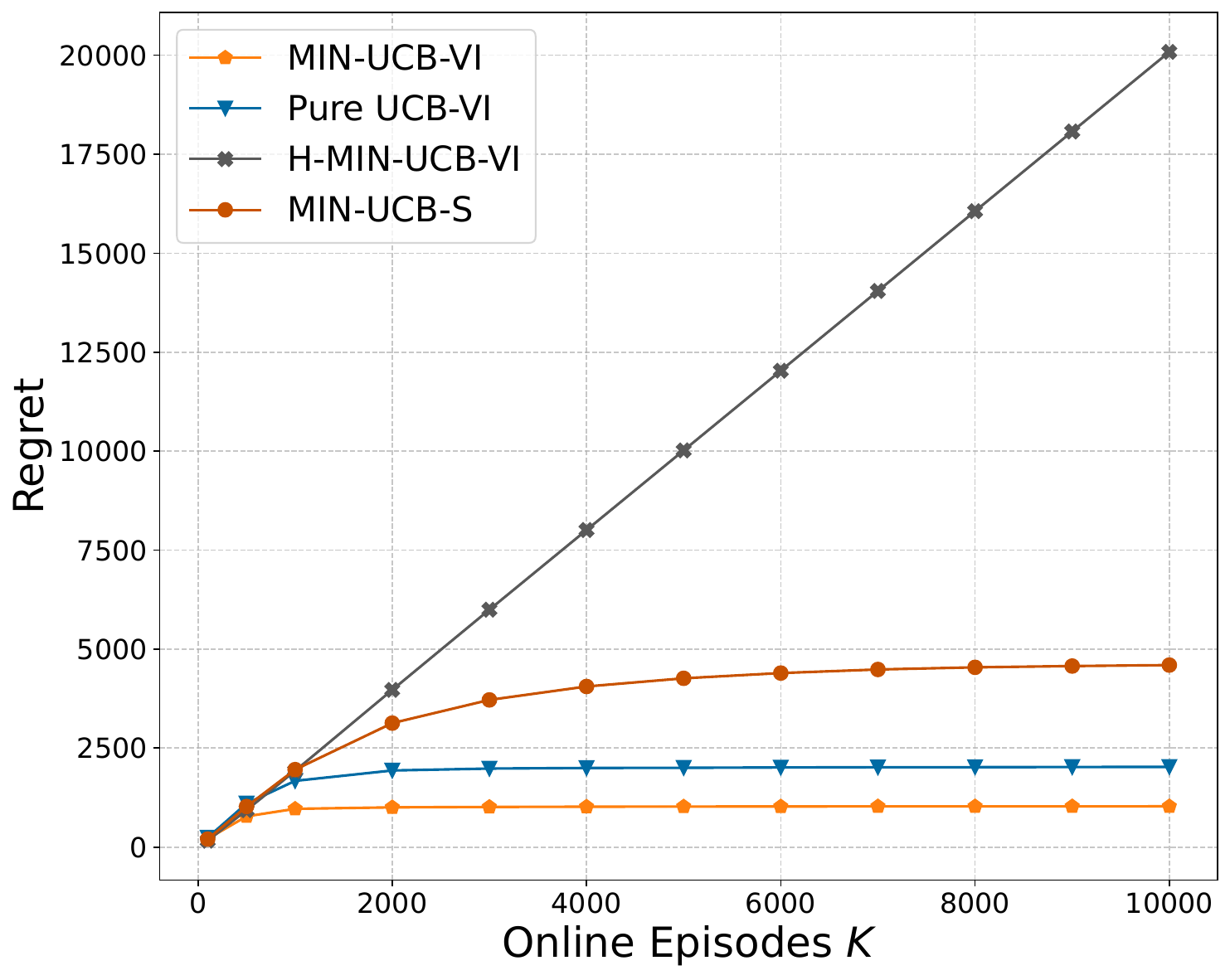}
        \caption{$N^{\mathrm{src}}=500$}
        \label{fig:3b}
    \end{subfigure}
    \caption{Additional cumulative-regret experiments for MIN-UCB-VI.}
    \label{fig:3}
\end{figure}

\subsection{Regret Minimization}
We first evaluate MIN-UCB-VI for regret minimization. We compare it with three baselines: (i) Vanilla UCB-VI with the MVP bonus, which is a purely online baseline~\citep{zhang2021reinforcement}; (ii) UCB-VI-S, which omits the comparison operation and directly uses the source-based optimistic estimate, i.e., $V^k_h(s)=\max_{a\in\mathcal A}Q^{k,\mathrm{src}}_h(s,a)$; and (iii) H-UCB-VI, an MDP extension of H-UCB~\citep{shivaswamy2012multi}, which uses offline data without enlarging the confidence radius to account for source-target bias. For all offline-data-based methods, the offline dataset is collected in the source environment using a common random policy.

\begin{figure}[htbp]
    \centering
    \begin{subfigure}{0.35\textwidth}
        \centering
        \includegraphics[width=\linewidth]{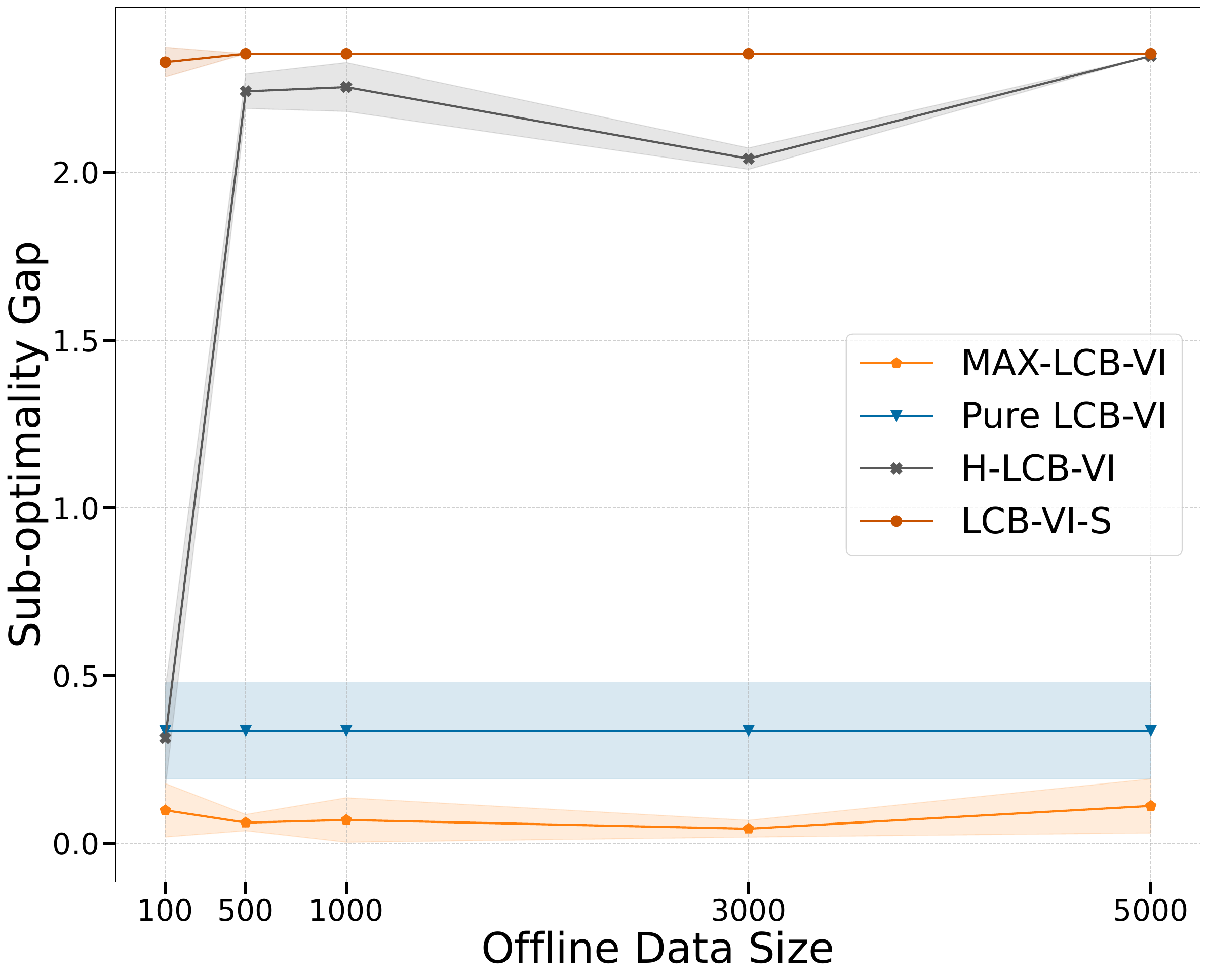}
        \caption{Bias level $0.5$}
        \label{fig:2a}
    \end{subfigure}
    \begin{subfigure}{0.35\textwidth}
        \centering
        \includegraphics[width=\linewidth]{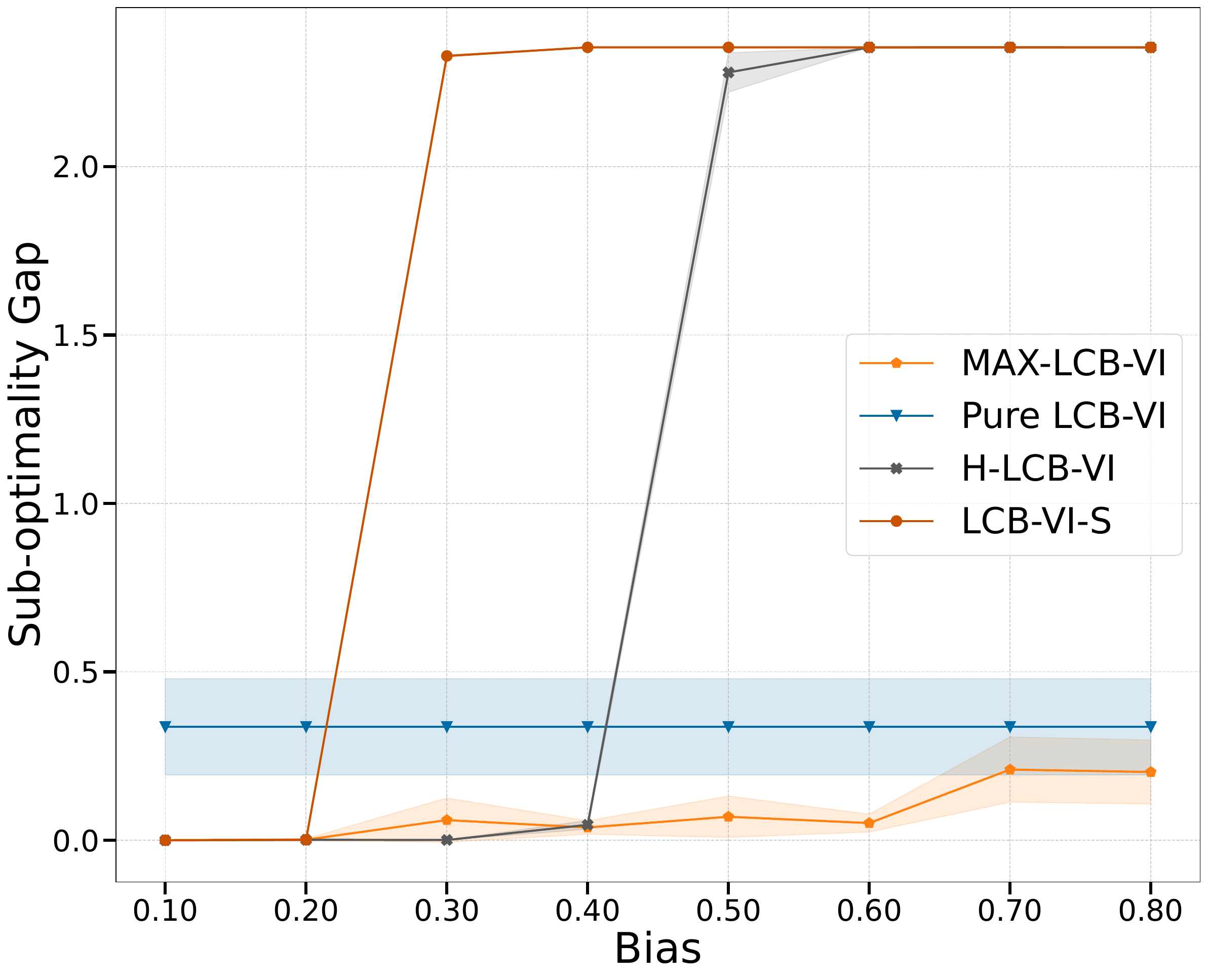}
        \caption{$N^{\mathrm{src}}=1000$}
        \label{fig:2b}
    \end{subfigure}

    \begin{subfigure}{0.35\textwidth}
        \centering
        \includegraphics[width=\linewidth]{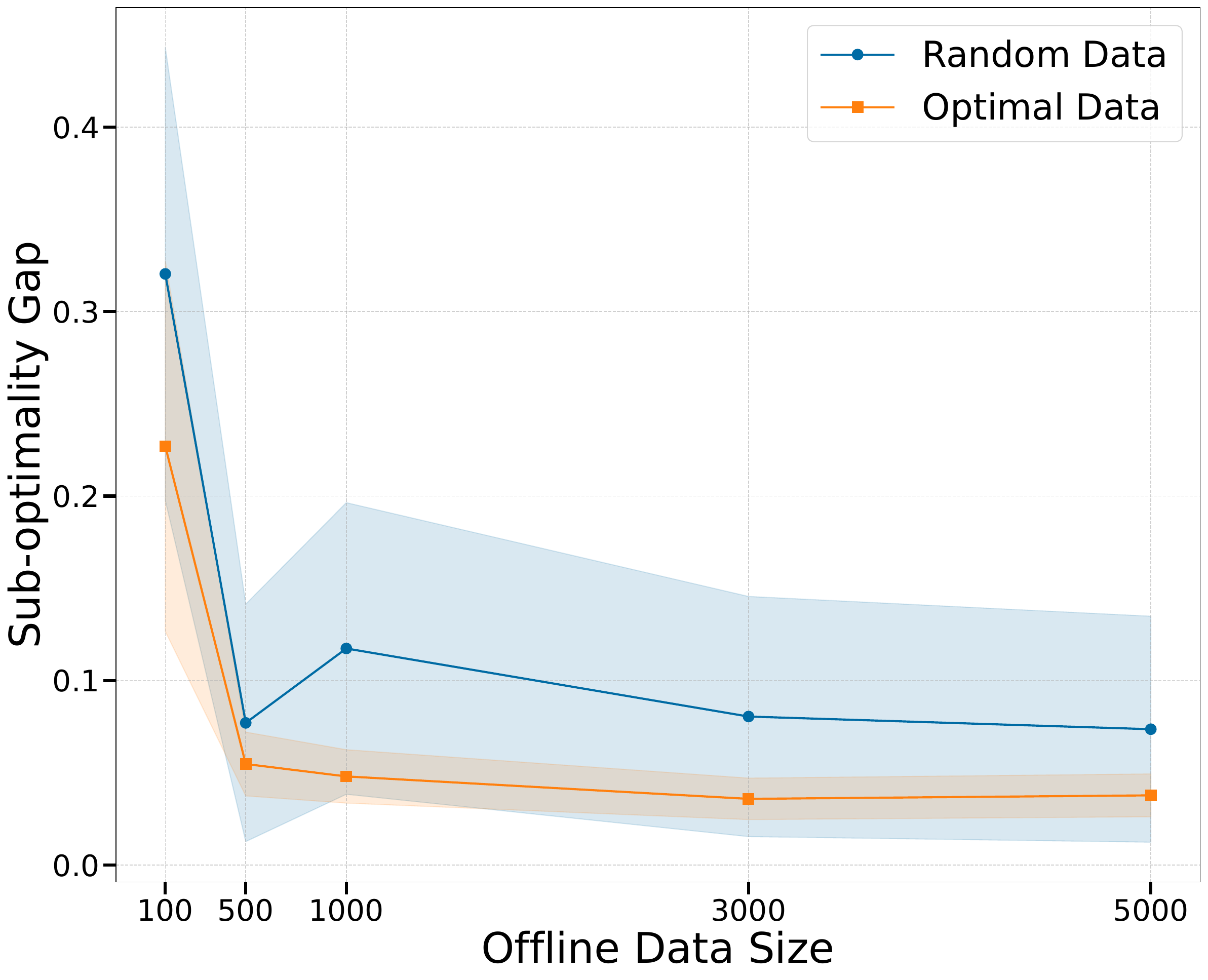}
        \caption{Bias level $0.2$}
        \label{fig:2c}
    \end{subfigure}
    \begin{subfigure}{0.35\textwidth}
        \centering
        \includegraphics[width=\linewidth]{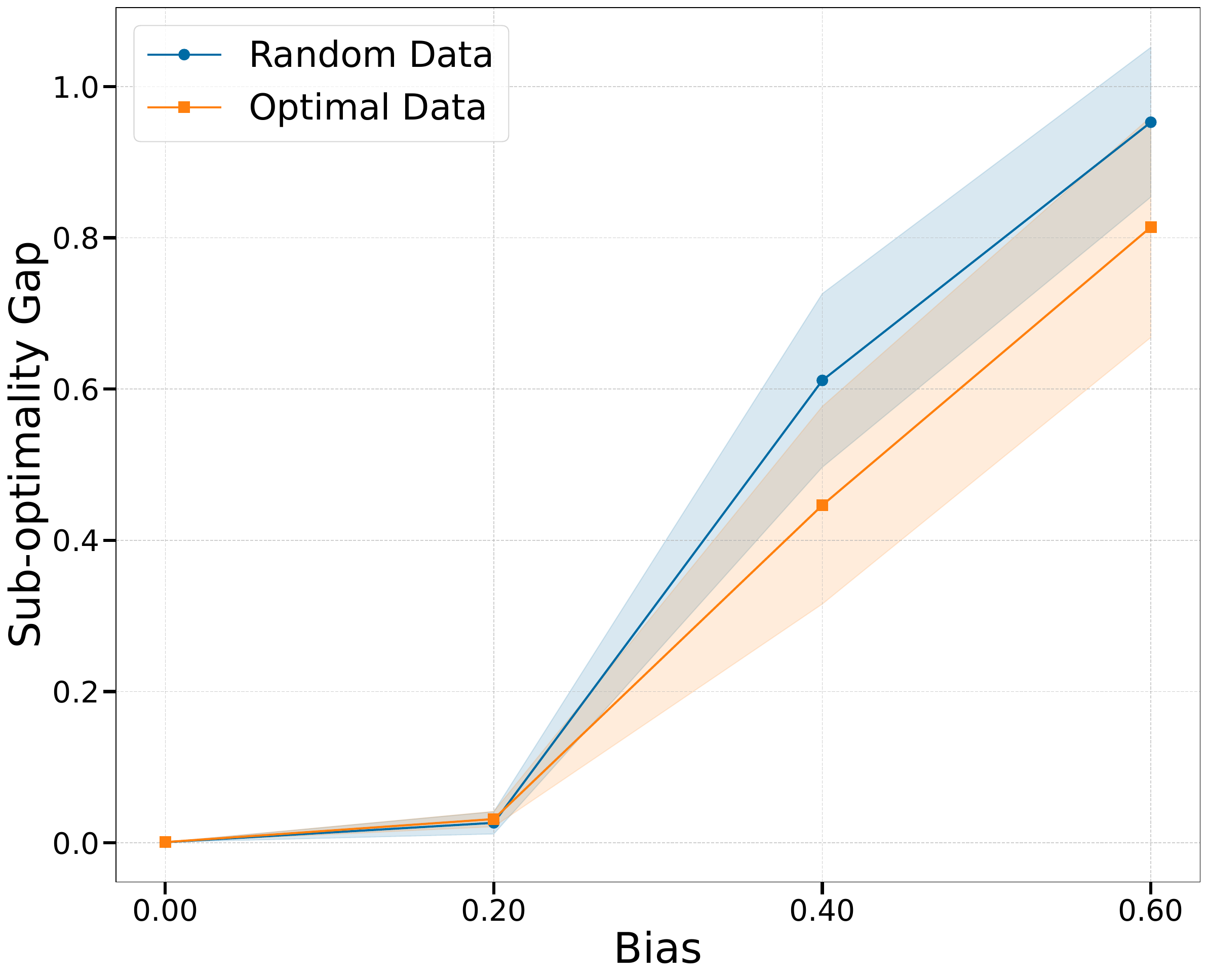}
        \caption{$N^{\mathrm{src}}=1000$}
        \label{fig:2d}
    \end{subfigure}
    \caption{Suboptimality gap of MAX-LCB-VI and baselines under different bias levels, offline sample sizes, and offline data-collection policies.}
    \label{fig:2}
\end{figure}

In Figures~\ref{fig:1} and~\ref{fig:3}, the target success probability is fixed at $0.95$. We vary the source success probability among $0.75$, $0.45$, and $0.25$, corresponding to bias levels $0.2$, $0.5$, and $0.7$, respectively. Figure~\ref{fig:1} shows that MIN-UCB-VI consistently achieves lower cumulative regret than Vanilla UCB-VI across different bias levels and offline sample sizes. Figures~\ref{fig:1a} and~\ref{fig:1b} compare performance under different bias levels, while Figures~\ref{fig:1c} and~\ref{fig:1d} compare performance under different amounts of offline data. The results show that UCB-VI-S can suffer from negative transfer when the bias is large, whereas MIN-UCB-VI remains robust by adaptively comparing source-based and target-based estimates. Figure~\ref{fig:3} provides two additional cumulative-regret experiments, further confirming the robustness of MIN-UCB-VI.

\subsection{Best Policy Identification}
We next evaluate MAX-LCB-VI for best policy identification under the same GridWorld environment. We compare MAX-LCB-VI with three baselines: Vanilla LCB-VI~\citep{NEURIPS2021_e61eaa38}, LCB-VI-S, and H-LCB-VI. The performance metric is the suboptimality gap of the output policy.

For Figures~\ref{fig:2a} and~\ref{fig:2b}, the target success probability is fixed at $0.95$, while the source success probability ranges from $0.85$ to $0.15$, corresponding to bias levels from $0.1$ to $0.8$. The offline data is collected using a random policy. Figure~\ref{fig:2a} shows the effect of increasing the offline sample size when the bias level is fixed. MAX-LCB-VI effectively leverages larger offline datasets to reduce the suboptimality gap. In contrast, H-LCB-VI can degrade significantly when $N^{\mathrm{src}}=1000$, indicating negative transfer from biased offline data. Figure~\ref{fig:2b} further shows that as the bias level increases, LCB-VI-S and H-LCB-VI deteriorate rapidly, whereas MAX-LCB-VI remains stable and consistently outperforms Vanilla LCB-VI.

For Figures~\ref{fig:2c} and~\ref{fig:2d}, the target success probability is fixed at $0.8$. We vary the source success probability among $0.8$, $0.6$, $0.4$, and $0.2$, corresponding to bias levels $0$, $0.2$, $0.4$, and $0.6$, respectively. To evaluate the effect of offline data quality, we compare two data-collection policies: a random policy that selects actions uniformly at random, and an optimal policy $\pi^\star$ computed by value iteration in the target environment and then executed in the source environment. Figures~\ref{fig:2c} and~\ref{fig:2d} show that offline data collected by the optimal policy consistently leads to smaller suboptimality gaps than offline data collected by the random policy, across both varying sample sizes and varying bias levels.

\section{Conclusion}
This paper presents a unified algorithmic framework for hybrid RL in tabular MDPs with shifted transition dynamics, addressing both regret minimization and best policy identification. By explicitly incorporating prior bias information, our framework more effectively leverages shifted offline data and achieves improved learning efficiency. We derive both instance-dependent and instance-independent upper bounds on regret and suboptimality gap, together with matching lower bounds, demonstrating the optimality of our results. To handle the case where the bias magnitude is unknown, we further propose a model selection strategy and establish regret upper bounds for our algorithm. A promising future direction is to investigate how hybrid RL algorithms can retain performance guarantees that are no worse than purely online methods when the bias magnitude is unknown.

\acks{The corresponding author Fang Kong is supported by Guangdong Basic and Applied Basic Research Foundation (2025A1515011412) and National Natural Science Foundation of China (62506150). The authors declare no competing financial interests.}



\newpage

\appendix
\section{Notations}\label{sec:appendix_notations}

In this section, we introduce the main notations used in this paper. We list notations in~\Cref{tab:notations}.
\begin{table}[htbp]
    \centering
\setlength{\tabcolsep}{2pt}
\renewcommand{\arraystretch}{1.05}
\begin{tabularx}{\textwidth}
{|C{0.40\textwidth}|>{\raggedright\arraybackslash}X|}
\hline
         $\State,S=|\State|$ & State space and its size\\
         $\Action, A=|\Action|$ & Action space and its size\\
         $H, K$ & Horizon and Learning episodes \\
         $s, s'$ & States in $\State$ \\
         $a, a'$ & Actions in $\Action$ \\
         $h,h'$ & Horizon numbers \\
         $k$ & Index of learning episode \\
         $P^{\tar}_{s,a}$ & Transition probability of the target MDP  \\
         $P^{\src}_{s,a}$ & Transition probability of the source MDP  \\
         $ N^k_{s,a}/N^k(s,a)$ & Visitation count at $(s,a)$ before the $k$-th episode \\
          $ N^k_{s,a,h}/N^k_{h}(s,a)$ & Visitation count at $(s,a,h)$ before the $k$-th episode \\
           $P^{k,\hyb}_{s,a}$ & Virtual transition probability   \\
           $ N^\src_{s,a}/N^\src(s,a)$ & Visitation count at $(s,a)$ in the offline dataset\\
         $ N^\src_{s,a,h}/N^\src_{h}(s,a)$ & Visitation count at $(s,a,h)$ in the offline dataset\\
         $r(s,a)$ & Reward function\\
         $\pi^k$ & Policy at the $k$-th episode used in the UCB algorithm  \\
         $\hat{\pi}$ & Policy output by the LCB algorithm\\
         $\pi_h(s)$ & Action that policy $\pi$ takes at state $s$, step $h$ \\
         $V_h^\pi(s),V_h^*(s)$ & $V$-function of policy $\pi$ and of optimal policy, respectively \\
         $Q_h^\pi(s,a),Q_h^*(s,a)$ & $Q$-function of policy $\pi$ and of optimal policy, respectively \\
         $\Var_h^*/\mathbb{V}_{P^\tar}(V^*_h)$ & Variance of $V^*$ evaluated in $P^\tar$ at step $h$ \\
         $\mathbb{V}_{\hat{P}}(V_h)$ & Empirical variance of $V$ evaluated in $\hat{P}$ at step $h$ \\
         $\Delta_h(s,a)$ & Sub-optimality gap \\
                 $s_h^k,a_h^k,r^k$ & States, actions, and rewards observed in the $k$-th episode \\
        $V_h^k(s)$ & Optimistic estimate of $V_h$ before the $k$-th episode used in the UCB algorithm  \\
           $\hat{V}_h(s)$ & Pessimistic estimate of $V_h$ of the LCB algorithm\\
        $Q_h^{k,\on}(s,a), Q_h^{k,\hyb}(s,a)$ & Two optimistic estimates \\
        $\hat{P}_{s,a}^k, \hat{P}_{s,a}^{k,\hyb}$ & Two estimates of $P^\tar_{s,a}$ before the $k$-th episode \\
        $b_h^k(s,a),b_h^{k,\hyb}(s,a) $ & Bonus terms in the $k$-th episode \\
         $E_h^{k,\on}(s,a)$ & Surplus; $Q_h^{k,\on}(s,a)-r(s,a)-\mathbb E^{s'\sim P_{s,a}}[V_{h+1}^k(s')]$ \\
          $E_h^{k,\hyb}(s,a)$ & Surplus; $Q_h^{k,\hyb}(s,a)-r(s,a)-\mathbb E^{s'\sim P_{s,a}}[V_{h+1}^k(s')]$ \\
         \hline
  \end{tabularx}
\caption{Notations used throughout the paper.}
\label{tab:notations}
\end{table}


\section{Proof for Instance-Independent Regret Upper Bound} \label{sec:regret_independent_proof}

Below we provide some useful lemmas used for deriving the instance-independent regret upper bound.

\begin{lemma} [Lemma F.5 in~\citet{simchowitz2019non}] \label{lemma_variance} Let $X,Y$ be two random variables defined on the same probability space. Then
\begin{align}
    |\sqrt{\mathbb{V}[X]}-\sqrt{\mathbb{V}[Y]}|\leq \sqrt{\mathbb{E}(X-Y)^2}.\nonumber
\end{align}
\end{lemma}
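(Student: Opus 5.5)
We prove the inequality $|\sqrt{\mathbb{V}[X]}-\sqrt{\mathbb{V}[Y]}|\leq \sqrt{\mathbb{E}(X-Y)^2}$ by reading standard deviation as a norm. Work in $L^2$ of the underlying probability space, with inner product $\langle U,W\rangle=\mathbb{E}[UW]$ and norm $\|U\|_2=\sqrt{\mathbb{E}U^2}$. If $\mathbb{E}X^2$ or $\mathbb{E}Y^2$ is infinite the conventions must be checked separately. The only case needing care is when the right-hand side is finite: then $X-Y\in L^2$, so $X$ and $Y$ have finite variance together or infinite variance together, and the inequality holds trivially or by convention. We therefore assume $X,Y\in L^2$.

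\textbf{Step 1: standard deviation as a norm.} Define the centered variables $\bar X:=X-\mathbb{E}X$ and $\bar Y:=Y-\mathbb{E}Y$. Then $\sqrt{\mathbb{V}[X]}=\|\bar X\|_2$ and $\sqrt{\mathbb{V}[Y]}=\|\bar Y\|_2$.

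\textbf{Step 2: reverse triangle inequality.} Minkowski's inequality in $L^2$, in its reverse form, gives
\begin{align}
\bigl|\|\bar X\|_2-\|\bar Y\|_2\bigr|\leq \|\bar X-\bar Y\|_2=\sqrt{\mathbb{V}[X-Y]}.\nonumber
\end{align}

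\textbf{Step 3: variance is at most the second moment.} We have
\begin{align}
\mathbb{V}[X-Y]=\mathbb{E}(X-Y)^2-(\mathbb{E}[X-Y])^2\leq \mathbb{E}(X-Y)^2.\nonumber
\end{align}
Equivalently, centering is an orthogonal projection onto the complement of the constants and so cannot increase the norm. Taking square roots and combining with Step 2 gives the claim.

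This lemma will be applied with $X=V_{h+1}^k(s')$ and $Y=V^*_{h+1}(s')$ for $s'$ drawn from a common empirical or true transition. The result is a bound on the gap between the empirical-variance bonus and $\Var^*$ by $\sqrt{P\cdot(V^k_{h+1}-V^*_{h+1})^2}$.

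No step here is a real obstacle. The only subtlety is the finiteness caveat at the start, which is irrelevant in our use because all value functions are bounded in $[0,H]$.
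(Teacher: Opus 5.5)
Your argument is correct and complete. Writing $\sqrt{\mathbb{V}[X]}=\|X-\mathbb{E}X\|_2$, applying the reverse triangle inequality in $L^2$, and then using $\mathbb{V}[X-Y]\le\mathbb{E}(X-Y)^2$ is the standard proof. The paper has no proof of its own to compare against: it imports the lemma from Simchowitz and Jamieson without proving it. Your reading of how the lemma is used matches the paper, which applies it with $X=V^k_{h+1}(s')$, $Y=V^*_{h+1}(s')$ under the common law $\hat{P}^{k,\hyb}_{s,a}$ in the bonus lemma; since both value functions lie in $[0,H]$ there, your integrability caveat does not arise.
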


\begin{lemma}[Good events~\citep{chen2025sharp}] \label{lemma:reg_indpendent_good} Let $L=\log(SAKH/\delta)$. With probability at least $1-10\delta$, the following inequalities hold for all $s,a,s',h,k$:
\begin{align}
    |\hat{P}^{k,\hyb}_{s,a}(s')-P^{k,\hyb}_{s,a}(s')|&\leq \sqrt{\frac{2\hat{P}^{k,\hyb}_{s,a}(s')L}{N^k_{s,a}+N^\src_{s,a}}}+\frac{L}{N^k_{s,a}+N^\src_{s,a}},\nonumber\\
 |\mathbb{E}_{s'\sim\hat{P}^{k,\hyb}_{s,a}}[V^*_{h+1}(s')]-\mathbb{E}_{s'\sim P^{k,\hyb}_{s,a}}[V^*_{h+1}(s')]|&\leq \sqrt{\frac{2\mathbb{V}_{s'\sim P^{k,\hyb}} V^*_{h+1}(s')L }{N^k_{s,a}+N^{\src}_{s,a}}}+\frac{HL}{N^k_{s,a}+N^{\src}_{s,a}},\nonumber\\
 \sqrt{\mathbb{V}_{s'\sim \hat{P}^{k,\hyb}_{s,a}}[V^*_{h+1}(s')]}-\sqrt{\mathbb{V}_{s'\sim P^{k,\hyb}_{s,a}}[V^*_{h+1}(s')]}&\leq H\sqrt{\frac{2L}{N^k_{s,a}+N^{\src}_{s,a}-1}},\nonumber
\end{align}
 where $P^{k,\hyb}_{s,a}:=\frac{P^{\tar}_{s,a}N^k_{s,a}+P^{\src}_{s,a}N^{\src}_{s,a}}{N^k_{s,a}+N^\src_{s,a}}$.
    
\end{lemma}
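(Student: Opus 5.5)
We prove the three inequalities of Lemma~\ref{lemma:reg_indpendent_good} by conditioning on counts and applying Bernstein-type concentration to each sample group, with one union bound at the end. The key observation is that $\hat{P}^{k,\hyb}_{s,a}$ is an average of $n:=N^k_{s,a}+N^\src_{s,a}$ independent next-state draws. Of these, $N^\src_{s,a}$ come from $P^\src_{s,a}$ and $N^k_{s,a}$ come from $P^\tar_{s,a}$. Its mean, conditional on the counts, is therefore exactly the mixture $P^{k,\hyb}_{s,a}$. The needed deviations are thus those of an average of independent but non-identically distributed bounded variables around the average of their means.

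We use the standard ``stack of samples'' device, as in \citet{chen2025sharp} and the MVP analysis. For each $(s,a)$, imagine an infinite i.i.d.\ sequence of next states from $P^\tar_{s,a}$, where the $i$-th online visit reads the $i$-th entry. The offline trajectories are drawn independently beforehand, and conditional on their counts, their next-state samples at $(s,a)$ are independent draws from $P^\src_{s,a}$. Then, for each fixed pair $(m,N^\src_{s,a})$ with $m\le KH$, the empirical hybrid estimate is a fixed function of $m+N^\src_{s,a}$ independent samples. This removes the adaptivity of the online counts, at the cost of a union bound over $m\in[KH]$.

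Fix $(s,a,s',m)$ and write $n=m+N^\src_{s,a}$.

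\textbf{First inequality.} Apply the empirical Bernstein inequality (Maurer--Pontil form, which holds for independent non-identical variables with the variance replaced by the average variance) to the indicators $\1[s'_i=s']$. The average variance is at most $\frac1n\sum_i p_i(1-p_i)\le P^{k,\hyb}_{s,a}(s')$. Converting this population-variance bound into one involving $\hat{P}^{k,\hyb}_{s,a}(s')$ is done in the usual way: solve the quadratic inequality in $\sqrt{P^{k,\hyb}}$. This yields $\sqrt{2\hat P L/n}+L/n$ up to constants absorbed into $L$.

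\textbf{Second inequality.} Apply Bernstein to $V^*_{h+1}(s'_i)\in[0,H]$. Here $V^*_{h+1}$ is deterministic, so no covering over value functions is needed. The sum of conditional variances is $\sum_i \Var_{P_i}(V^*_{h+1})$. By the law of total variance, the variance of a mixture dominates the mixture of variances, so this sum is at most $n\,\Var_{P^{k,\hyb}}(V^*_{h+1})$. This gives the stated bound.

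\textbf{Third inequality.} Use the concentration of the empirical standard deviation (Maurer--Pontil Theorem~10), extended to independent non-identical samples. The only subtlety is that the ``target'' becomes the mixture variance instead of the average within-distribution variance. We handle this by writing the empirical variance as a U-statistic $\frac{1}{2n(n-1)}\sum_{i\ne j}(X_i-X_j)^2$. Its expectation is $\frac1n\sum_i\Var_{P_i}+\frac{1}{2n(n-1)}\sum_{i\neq j}(\mu_i-\mu_j)^2$, which is at most $\Var_{P^{k,\hyb}}$ times $\frac{n}{n-1}$ by a direct computation. The self-bounding concentration argument then gives a one-sided deviation $H\sqrt{2L/(n-1)}$. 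Only the stated direction (upper bound on the empirical standard deviation) is needed, which is the favorable direction for this inequality.

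Finally, union bound over $s,a,s',h\in[H]$ and $m\in[KH]$. The total count is $\mathrm{poly}(S,A,K,H)$ events, which is absorbed by choosing $L=\log(SAKH/\delta)$ with constants inflated, giving total failure probability $\le 10\delta$.

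We expect the main obstacle to be the third inequality: standard references state it only for i.i.d.\ samples, so the mixture/heterogeneous case needs the U-statistic expectation computation above. We would first try to reduce to the i.i.d.\ case by a random-permutation argument, treating the source and online samples as a randomly labeled mixture. If that fails, we would fall back on the self-bounding-function proof directly.
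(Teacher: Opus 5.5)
The paper gives no proof of this lemma. It attributes it to \citet{chen2025sharp}, whose good events concern next-state samples from the single target kernel, so your proposal has to handle the heterogeneous samples on its own. Most of it does, but one step fails as written.

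\textbf{The flawed step: conditioning on the offline counts.} You assert that, conditional on the counts, the source next-states at $(s,a)$ are independent draws from $P^{\src}_{s,a}$. For trajectory data collected by $\pi^b$ this is false. $N^{\src}_{s,a}$ is itself a function of those next states: a draw that returns the trajectory to $s$, with $\pi^b$ again choosing $a$, creates another visit. Conditioning on the count therefore tilts the law of the samples. This is the same dependence the paper later removes with two-fold subsampling for MAX-LCB-VI.

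The repair is the device you already use for the online data.
\begin{itemize}
\item Attach to each $(s,a)$ an independent i.i.d.\ $P^{\src}_{s,a}$ tape, and let the $j$-th visit to $(s,a)$ in $\Dc_{\src}$ (ordered trajectory by trajectory) read its $j$-th entry. This reproduces the law of $\Dc_{\src}$.
\item For every fixed pair $(n_1,m)$, the hybrid estimate is then a function of $n_1$ i.i.d.\ $P^{\src}_{s,a}$ draws and $m$ i.i.d.\ $P^{\tar}_{s,a}$ draws, all independent.
\item You must union-bound over $n_1\le N^{\src}H$ as well as $m\le KH$.
\end{itemize}
This adds a $\log(N^{\src}H)$ term to the confidence level, which $L=\log(SAKH/\delta)$ does not contain. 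So your closing ``absorb into constants'' step is valid only when $N^{\src}$ is polynomial in $SAKH$; otherwise $L$ must be enlarged. The stated lemma has the same issue.

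\textbf{The rest of the proposal is sound.} With that fixed, the remaining steps are correct and supply what the citation leaves implicit. Write $\mu_i,\sigma_i^2$ for the mean and variance of $V(s'_i)$ and $\bar\mu$ for the average of the $\mu_i$.
\begin{itemize}
\item The law of total variance, $\mathbb{V}_{P^{k,\hyb}_{s,a}}(V)=\frac1n\sum_i\sigma_i^2+\frac1n\sum_i(\mu_i-\bar\mu)^2$, is the right way to pass from the average per-sample variance to the mixture variance in the second inequality.
\item Your U-statistic computation $\mathbb{E}V_n=\frac1n\sum_i\sigma_i^2+\frac{1}{n-1}\sum_i(\mu_i-\bar\mu)^2\le\frac{n}{n-1}\mathbb{V}_{P^{k,\hyb}_{s,a}}$ is correct.
\item The factor $\frac{n}{n-1}$ cancels exactly, because the statement uses the plug-in variance $\mathbb{V}_{\hat P^{k,\hyb}_{s,a}}=\frac{n-1}{n}V_n$:
\[
\sqrt{\mathbb{V}_{\hat P^{k,\hyb}_{s,a}}(V^*_{h+1})}\le\sqrt{\tfrac{n-1}{n}}\Big(\sqrt{\mathbb{E}V_n}+H\sqrt{\tfrac{2L}{n-1}}\Big)\le\sqrt{\mathbb{V}_{P^{k,\hyb}_{s,a}}(V^*_{h+1})}+H\sqrt{\tfrac{2L}{n-1}}.
\]
\item The self-bounding argument behind the Maurer--Pontil sample-variance bound needs only independence, so no extension is required.
\end{itemize}

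Two smaller points. The random-permutation reduction you mention would not work: relabeling gives exchangeable draws with fixed composition $(n_1,m)$, not i.i.d.\ draws from the mixture. Also, converting the population-variance Bernstein bound in the first inequality to $\hat P^{k,\hyb}_{s,a}(s')$ yields a constant larger than $1$ in front of $L/n$. That is harmless given how loosely the paper tracks constants and logarithms in this lemma.
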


\begin{lemma}[Lemma 7 in~\citet{chen2025sharp}] \label{lemma:reg_indpendent_good_two} Let $V$ be function defined on $\mathcal{S}$. Conditioned on the success of Lemma~\ref{lemma:reg_indpendent_good},
\begin{align}
    |\mathbb{E}_{s'\sim\hat{P}^{k,\hyb}_{s,a}}[V(s')]-\mathbb{E}_{s'\sim P^{k,\hyb}_{s,a}}[V(s')]|\leq \sqrt{\frac{2S\mathbb{E}_{s'\sim P^{k,\hyb}_{s,a}}[V^2(s')]L}{N^k_{s,a}+N^\src_{s,a}}}+\frac{HSL}{N^k_{s,a}+N^\src_{s,a}},\nonumber
\end{align}
    where $P^{k,\hyb}_{s,a}:=\frac{P^{\tar}_{s,a}N^k_{s,a}+P^{\src}_{s,a}N^{\src}_{s,a}}{N^k_{s,a}+N^\src_{s,a}}$.
\end{lemma}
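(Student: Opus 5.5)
The plan is a direct coordinatewise argument: no union bound over functions $V$ is needed, because the good event of Lemma~\ref{lemma:reg_indpendent_good} already controls every entry of $\hat{P}^{k,\hyb}_{s,a}-P^{k,\hyb}_{s,a}$ simultaneously. Write $n:=N^k_{s,a}+N^\src_{s,a}$, $\hat P:=\hat{P}^{k,\hyb}_{s,a}$ and $P:=P^{k,\hyb}_{s,a}$. I would proceed in three steps, assuming $\|V\|_\infty\leq H$, as for all value functions in the paper.

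\textbf{Step 1 (triangle inequality).} Bound the difference by entrywise absolute values:
\begin{align}
\Big|\sum_{s'}(\hat P(s')-P(s'))V(s')\Big|\leq \sum_{s'}|\hat P(s')-P(s')|\,|V(s')|.\nonumber
\end{align}
This step is valid for any $V$, including one that depends on the data. That is exactly why an $S$ factor is paid compared with the second inequality of Lemma~\ref{lemma:reg_indpendent_good}, which only holds for the fixed function $V^*_{h+1}$.

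\textbf{Step 2 (empirical to true variance proxy).} The first inequality of Lemma~\ref{lemma:reg_indpendent_good} has the empirical $\hat P(s')$ inside the square root, whereas the target statement uses $P$. I would convert it by treating $x=\sqrt{\hat P(s')}$ as the unknown in a quadratic. From $\hat P(s')\leq P(s')+\sqrt{2\hat P(s')L/n}+L/n$, solving gives $\sqrt{\hat P(s')}\leq\sqrt{P(s')}+c\sqrt{L/n}$ for an absolute constant $c$. Substituting this back yields
\begin{align}
|\hat P(s')-P(s')|\leq \sqrt{\frac{2P(s')L}{n}}+\frac{c'L}{n}.\nonumber
\end{align}

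\textbf{Step 3 (sum the two terms).} Insert the Step 2 bound into Step 1 and treat the two pieces separately.
\begin{itemize}
\item For the square-root part, Cauchy--Schwarz gives $\sum_{s'}\sqrt{P(s')}\,|V(s')|\leq\sqrt{S\sum_{s'}P(s')V^2(s')}$. This produces $\sqrt{2S\,\mathbb{E}_{s'\sim P}[V^2(s')]L/n}$.
\item For the lower-order part, $\sum_{s'}(c'L/n)|V(s')|\leq c'HSL/n$.
\end{itemize}
Together these give the stated bound, up to the constants discussed next.

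The only delicate point is the constant bookkeeping in Step 2. A crude conversion inflates the coefficient of the $L/n$ term; it can also place a cross term $\sqrt{L/n}\cdot\sqrt{L/n}$ into the lower-order part. To keep the leading term at exactly $\sqrt{2S\cdots}$, I would put all such cross terms into the $HSL/n$ term, using $|V|\leq H$. If the exact constant $1$ on $HSL/n$ cannot be recovered this way, I would state the bound with an absolute constant in front of $HSL/n$, as in the original lemma of \citet{chen2025sharp}. This is harmless for every downstream use in the regret analysis.
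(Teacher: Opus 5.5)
Your argument is the standard one behind the cited Lemma~7 of \citet{chen2025sharp}; the paper gives no proof of its own. It consists of the entrywise triangle inequality, per-coordinate Bernstein, and Cauchy--Schwarz, together with the hypothesis $\Vert V\Vert_\infty\le H$, which the statement omits and you rightly add.

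Your hedge on the lower-order constant is justified. With the good event as this paper writes it, with $\hat{P}^{k,\hyb}_{s,a}(s')$ under the square root, your Step~2 gives $\sqrt{\hat P(s')}\le\sqrt{P(s')}+\frac{\sqrt2+\sqrt6}{2}\sqrt{L/n}$, hence $c'=1+\sqrt2\cdot\frac{\sqrt2+\sqrt6}{2}=2+\sqrt3$. The constant $1$ genuinely cannot be derived from that event alone. Take $S=4$ and $L/n$ small. On three states set $P(s')$ arbitrarily small, $\hat P(s')\approx(2+\sqrt3)L/n$ (the stated inequality holds with equality in the limit) and $V(s')=H$. On the fourth state set $V=0$; it absorbs the mass deficit within its own confidence width. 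The variance term then vanishes, while the left side is about $3(2+\sqrt3)HL/n>4HL/n$.

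If instead you use the true-variance event $|\hat P(s')-P(s')|\le\sqrt{2P(s')L/n}+L/n$, your Step~2 is unnecessary. A Bernstein bound gives this form directly, because the pooled target/source average has total per-sample variance at most $nP^{k,\hyb}_{s,a}(s')$. Steps~1 and~3 then yield the statement with constant exactly $1$.
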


\begin{lemma}[Variance bounds~\citep{chen2025sharp,NEURIPS2021_e61eaa38}] \label{lemma:variance_bound} Let $\pi$ be  any fixed policy, and $\mathcal{M}$ be a MDP with its transition $P$. For any $h\in [H]$ and $s\in \mathcal{S}$, we have
    \begin{align}
        \mathbb{E}^{\pi,\mathcal{M}}\left[\sum_{h'=h}^H \mathbb{V}^*_h(s_{h'},a_{h'}) \Big|s_h=s\right]\leq H^2.\nonumber
    \end{align}
\end{lemma}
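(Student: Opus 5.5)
The plan is a telescoping argument on the squared optimal values along a trajectory of $\pi$. I read $\mathbb{V}^*_{h'}(s,a)$ as $\mathbb{V}_{s'\sim P_{s,a}}(V^*_{h'+1}(s'))$, with $V^*_{H+1}\equiv 0$, and use Bellman optimality in $\mathcal{M}$ in the form
\begin{align}
P_{s,a}\cdot V^*_{h'+1}=Q^*_{h'}(s,a)-r(s,a)=V^*_{h'}(s)-\Delta_{h'}(s,a)-r(s,a),\nonumber
\end{align}
where $\Delta_{h'}(s,a):=V^*_{h'}(s)-Q^*_{h'}(s,a)\ge 0$. The route is to write each variance as $P_{s,a}\cdot (V^*_{h'+1})^2-(P_{s,a}\cdot V^*_{h'+1})^2$, telescope the first part, and control the leftover by the expected sum of rewards plus gaps along the trajectory. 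That leftover equals $V^*_h(s)$ exactly.

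\textbf{Step 1 (per-step inequality).} Fix $(s_{h'},a_{h'})=(x,a)$ and abbreviate $V=V^*_{h'}(x)$, $g=\Delta_{h'}(x,a)$, $r=r(x,a)$. Since $P_{x,a}\cdot V^*_{h'+1}=V-g-r\ge 0$, we get
\begin{align}
(P_{x,a}\cdot V^*_{h'+1})^2=(V-g-r)^2=V^2-(g+r)(2V-g-r)\ge V^2-2H(g+r).\nonumber
\end{align}
The last inequality uses $0\le g+r$ and $2V-g-r\le 2H$. Hence
\begin{align}
\mathbb{V}^*_{h'}(x,a)\le P_{x,a}\cdot (V^*_{h'+1})^2-V^*_{h'}(x)^2+2H\big(\Delta_{h'}(x,a)+r(x,a)\big).\nonumber
\end{align}

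\textbf{Step 2 (sum and take expectations under $\pi$).} Conditioned on $s_h=s$, we have $\mathbb{E}[P_{s_{h'},a_{h'}}\cdot (V^*_{h'+1})^2]=\mathbb{E}[V^*_{h'+1}(s_{h'+1})^2]$ by the Markov property. The squared terms therefore telescope to
\begin{align}
\mathbb{E}[V^*_{H+1}(s_{H+1})^2]-V^*_h(s)^2=-V^*_h(s)^2\le 0.\nonumber
\end{align}
For the remaining term, the performance-difference lemma gives $\mathbb{E}^{\pi}[\sum_{h'\ge h}\Delta_{h'}(s_{h'},a_{h'})\mid s_h=s]=V^*_h(s)-V^\pi_h(s)$. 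Also $\mathbb{E}^{\pi}[\sum_{h'\ge h} r(s_{h'},a_{h'})\mid s_h=s]=V^\pi_h(s)$. So the extra term equals $2H\,V^*_h(s)$, which is at most $2H^2$.

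\textbf{Main obstacle.} The delicate point is that $\pi$ is arbitrary rather than optimal. Without Step 1, the sum of variances of $V^*$ along $\pi$'s trajectory is not a total-variance quantity. Bounding the gaps pathwise is also too crude, since each gap can be up to $H$ and the sum up to $H^2$ per step. The algebraic identity in Step 1 converts the gaps into a quantity whose expectation is controlled by the performance-difference lemma.

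This argument yields $2H^2$, which matches the stated $H^2$ up to an absolute constant. That constant is harmless inside the $\Tilde{\mathcal{O}}$ regret bounds where the lemma is used. If the sharp constant is required, I would refine Step 1 using the exact value $-V^*_h(s)^2$ from the telescoping rather than dropping it.
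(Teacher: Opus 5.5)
Your proof is correct. It also supplies an argument the paper does not give: the paper states this lemma without proof and defers to \citet{chen2025sharp,NEURIPS2021_e61eaa38}.

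The textbook route to such bounds is the law of total variance. That argument yields $H^2$ directly only when the value function inside the variance belongs to the policy generating the trajectory, i.e.\ $V^{\pi}$ along $\pi$ or $V^*$ along $\pi^*$. For a non-optimal $\pi$, such as $\pi^k$ in \eqref{eq:ucb_variance}, that route needs an extra comparison of $V^*$ with $V^{\pi}$. It then typically picks up a term proportional to the suboptimality of $\pi$.

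Your exact identity $\mathbb{V}^*_{h'}(x,a)=P_{x,a}\cdot(V^*_{h'+1})^2-V^*_{h'}(x)^2+(\Delta_{h'}(x,a)+r(x,a))\,(2V^*_{h'}(x)-\Delta_{h'}(x,a)-r(x,a))$ sidesteps this. After telescoping and the performance-difference lemma, the gap and reward contributions add up in expectation to exactly $V^*_h(s)$. The final bound therefore does not depend on $\pi$ at all, and the argument goes through verbatim for history-dependent $\pi$.

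You do not need to touch Step 1 to get the stated constant. Just keep the telescoped $-V^*_h(s)^2$ instead of discarding it, which gives $2HV^*_h(s)-V^*_h(s)^2=H^2-(H-V^*_h(s))^2\le H^2$. Bounding $2V^*_{h'}(x)\le 2(H-h+1)$ in Step 1 sharpens this further to $(H-h+1)^2$.

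Finally, you read the summand as $\mathbb{V}_{s'\sim P_{s_{h'},a_{h'}}}(V^*_{h'+1}(s'))$ rather than the literal $\mathbb{V}^*_h$ with $h$ fixed. That is the reading the paper actually uses in \eqref{eq:ucb_variance}.
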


Before formally deriving the regret bound, we first provide the proof of $Q^{k,\hyb}_h(s,a)>Q^*_h(s,a)$ for all $(s,a,h,k)\in \State \times \Action \times [H] \times [K]$ if the good events mentioned above hold. The proof of the two principles, optimism and monotonicity largely follows~\citet{zhang2021reinforcement}, which rely on exploiting the properties of the  $f$ defined in the following lemma.
\begin{lemma}[Lemma 14 in~\citet{zhang2021reinforcement}]\label{lemma:f_properties}
Let $f: \Delta^{S} \times \mathbb{R}^S \times \mathbb{R} \times \mathbb{R} \rightarrow \mathbb{R}$ with $f(p,v,n,\iota) =pv+ \max\left\{\bar{c}_1\sqrt{\frac{ \mathbb{ V}(p,v) \iota }{n }} ,\bar{c}_2\frac{\iota}{n} \right\}$ with $\bar{c}_1= \frac{20}{3}$ and $\bar{c}_2 = \frac{400}{9}$.
Then  $f$ satisfies: 1) $f(p,v,n,\iota)$ is non-decreasing in $v(s)$  for all $p\in \Delta^{S}$,$\|v\|_{\infty}\leq 1$  and $n,\iota>0$;
2) $f(p,v,n,\iota)\geq pv +  2\sqrt{\frac{ \mathbb{ V}(p,v) \iota }{n }} +\frac{14\iota}{3n}$ for all $p,v$ and $n,\iota>0$. 

\end{lemma}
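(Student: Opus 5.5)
The plan is to prove the two properties separately. Property 2 is a pointwise inequality that follows from the definition of the maximum. Property 1 is a monotonicity statement, which I would attack by differentiating $f$ in a single coordinate $v(s)$ with $p,n,\iota$ held fixed, on each of the two branches of the $\max$. Throughout I write $\mu:=pv$, $\sigma^2:=\mathbb{V}(p,v)=\sum_{s'}p(s')(v(s')-\mu)^2$ and $x:=\iota/n$.

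\textbf{Property 2.} Since $\max\{\alpha,\beta\}\geq \lambda\alpha+(1-\lambda)\beta$ for every $\lambda\in[0,1]$, it suffices to choose $\lambda$ so that $\lambda\bar c_1\geq 2$ and $(1-\lambda)\bar c_2\geq 14/3$. The choice $\lambda=3/10$ gives $\lambda\bar c_1=2$ and $(1-\lambda)\bar c_2=\tfrac{7}{10}\cdot\tfrac{400}{9}=\tfrac{280}{9}\geq \tfrac{14}{3}$. This holds for all $p,v$ and all $n,\iota>0$, with no constraint on $\|v\|_\infty$.

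\textbf{Property 1.} The function $f$ is continuous in $v$. It is also piecewise smooth, away from the set $\{\sigma=0\}$ and the switching surface $\bar c_1\sigma\sqrt{x}=\bar c_2x$. Monotonicity therefore reduces to showing $\partial f/\partial v(s)\geq 0$ on each branch.

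On the branch where $\bar c_2 x$ is the maximum, the derivative is $p(s)\geq 0$.

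On the branch where $\bar c_1\sigma\sqrt{x}$ is the maximum, we have $\partial\sigma^2/\partial v(s)=2p(s)(v(s)-\mu)$. Hence
\[
\frac{\partial f}{\partial v(s)}=p(s)\Big(1+\bar c_1\sqrt{x}\,\frac{v(s)-\mu}{\sigma}\Big).
\]
This is nonnegative whenever $v(s)\geq\mu$. When $v(s)<\mu$, I would use the branch condition $\sqrt{x}\leq(\bar c_1/\bar c_2)\sigma$. This gives $\bar c_1\sqrt{x}(\mu-v(s))/\sigma\leq(\bar c_1^2/\bar c_2)(\mu-v(s))=\mu-v(s)$, since $\bar c_1^2=\bar c_2=400/9$. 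So the derivative is nonnegative as soon as $\mu-v(s)\leq 1$. This is automatic when the entries of $v$ lie in an interval of length $1$, for example $[0,1]$.

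\textbf{Main obstacle.} The hypothesis as worded is $\|v\|_\infty\leq 1$, i.e.\ $v\in[-1,1]^S$, where $\mu-v(s)$ can approach $2$. Closing this gap is the main obstacle.

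The first thing I would try is to sharpen the bound by also using $\sigma^2\geq p(s)(\mu-v(s))^2$ together with the Bhatia--Davis bound $\sigma^2\leq(\max v-\mu)(\mu-\min v)$.

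I would first test the two-point configuration $p=(1-q,q)$, $v=(1,-1)$, perturbing the coordinate with $v=-1$. There the branch condition permits $\sqrt{x}$ up to $(2\bar c_1/\bar c_2)\sqrt{q(1-q)}$. Substituting this value makes the derivative equal to $p(s)(1-2(1-q))$, which is negative for small $q$. If this calculation survives scrutiny, the statement with $\|v\|_\infty\leq 1$ cannot be proved as written. The correct route is then to establish monotonicity on every set $\{v:\max v-\min v\leq 1\}$; this covers $\|v\|_\infty\leq 1$ whenever $v\geq 0$.

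This restricted version is exactly what the downstream optimism and monotonicity arguments need. There the lemma is applied to the normalised value function $V^k_{h+1}/H\in[0,1]^S$, with the bonuses rescaled by $H$ accordingly.
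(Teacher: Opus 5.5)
Your argument is correct and is essentially the proof of Lemma 14 in Zhang et al., which the paper cites without reproducing. Property 2 holds because the maximum dominates a convex combination, and Property 1 follows by differentiating on each branch and using $\bar c_1^2=\bar c_2$ together with $pv-v(s)\le 1$; that last bound is exactly where nonnegativity of $v$ enters. Your two-point counterexample also holds up: with $p=(0.9,0.1)$ and $\iota/n=0.0064$, moving $v$ from $(1,-1)$ to $(1,-0.9)$ lowers $f$ from $1.12$ to $1.114$. So the literal hypothesis $\|v\|_\infty\le 1$ is too weak, and the lemma should be read for $v\in[0,1]^S$ (or range at most $1$), which is the normalized form $V^k_{h+1}/H$ in which the paper actually applies it.
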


Based on it, we have
\begin{align}
&\quad Q^{k,\hyb}(s,a)\nonumber\\
&=\min\{ r(s,a) +\hat{P}_{s,a}^{k,\hyb}\cdot V_{h+1}^k +b_{h}^{k,\hyb}(s,a) ,H\}\nonumber\\
 &\geq \min\{ r(s,a) +\hat{P}_{s,a}^{k,\hyb}\cdot V_{h+1}^k +b_{h}^{k,\hyb}(s,a) ,Q^{*}(s,a)\}\nonumber\\
  &\geq \min\{ r(s,a) +\hat{P}_{s,a}^{k,\hyb}\cdot V_{h+1}^k+ c_1\sqrt{\frac{\mathbb{V}_{\hat{P}^{k,\hyb}}(V^k_{h+1})L}{N^k_{s,a}+N^{\src}_{s,a}}}+\frac{c_2HL}{N^k_{s,a}+N^{\src}_{s,a}} +\frac{H\nu(s,a)N^{\src}_{s,a}}{N^k_{s,a}+N^{\src}_{s,a}},Q^{*}(s,a)\}\label{eq_lemma1_1}\\
    &\geq \min\{ r(s,a) +\hat{P}_{s,a}^{k,\hyb}\cdot V_{h+1}^k+ c'_1\sqrt{\frac{\mathbb{V}_{\hat{P}^{k,\hyb}}(V^k_{h+1})L}{N^k_{s,a}+N^{\src}_{s,a}}} \lor \frac{c'_2HL}{N^k_{s,a}+N^{\src}_{s,a}} +\frac{H\nu(s,a)N^{\src}_{s,a}}{N^k_{s,a}+N^{\src}_{s,a}},Q^{*}(s,a)\}\label{eq_lemma1_2}\\
    &\geq \min\{ r(s,a) +\hat{P}_{s,a}^{k,\hyb}\cdot V_{h+1}^k+ c'_1\sqrt{\frac{\mathbb{V}_{\hat{P}^{k,\hyb}}(V^*_{h+1})L}{N^k_{s,a}+N^{\src}_{s,a}}} \lor \frac{c'_2HL}{N^k_{s,a}+N^{\src}_{s,a}} +\frac{H\nu(s,a)N^{\src}_{s,a}}{N^k_{s,a}+N^{\src}_{s,a}},Q^{*}(s,a)\}\label{eq_lemma1_3}\\
&\geq \min\{ r(s,a) +\hat{P}_{s,a}^{k,\hyb}\cdot V_{h+1}^*+2\sqrt{\frac{\mathbb{V}_{\hat{P}^{k,\hyb}}(V^*_{h+1})L}{N^k_{s,a}+N^{\src}_{s,a}}}+\frac{14HL}{3(N^k_{s,a}+N^{\src}_{s,a})} +\frac{H\nu(s,a)N^{\src}_{s,a}}{N^k_{s,a}+N^{\src}_{s,a}},Q^{*}(s,a)\}\label{eq_lemma1_4}\\
 &= \min\{ r(s,a) +P_{s,a}^{\tar}\cdot V_{h+1}^*+(\hat{P}_{s,a}^{k,\hyb}-P_{s,a}^{\tar})\cdot V_{h+1}^* +2\sqrt{\frac{\mathbb{V}_{\hat{P}^{k,\hyb}}(V^*_{h+1})L}{N^k_{s,a}+N^{\src}_{s,a}}}+\frac{14HL}{3(N^k_{s,a}+N^{\src}_{s,a})}\notag\\
     &\quad+\frac{H\nu(s,a)N^{\src}_{s,a}}{N^k_{s,a}+N^{\src}_{s,a}},Q^{*}(s,a)\}\nonumber\\
&= \min\{ r(s,a) +P_{s,a}^{\tar}\cdot V_{h+1}^*+\Bigg(\frac{\hat{P}^kN^k+\hat{P}^{\src}N^{\src}}{N^k+N^{\src}} +\frac{P^{\tar}N^{\src}}{N^k+N^\src}-\frac{P_{s,a}^\src N^\src}{N^k+n^\src}\nonumber\\
    &\quad -\frac{P^\tar(N^k+N^\src)}{N^k+N^\src}\Bigg)\cdot V_{h+1}^*+2\sqrt{\frac{\mathbb{V}_{\hat{P}^{k,\hyb}}(V^*_{h+1})L}{N^k_{s,a}+N^{\src}_{s,a}}}+\frac{14HL}{3(N^k_{s,a}+N^{\src}_{s,a})} +\frac{H\nu(s,a)N^{\src}_{s,a}}{N^k_{s,a}+N^{\src}_{s,a}},Q^{*}(s,a)\}\nonumber\\
    &\geq \min\{ r(s,a) +P_{s,a}^{\tar}\cdot V_{h+1}^*+\left(\frac{\hat{P}_{s,a}^kN_{s,a}^k+\hat{P}_{s,a}^{\src}N_{s,a}^{\src}}{N^k_{s,a}+N_{s,a}^{\src}} -\frac{P^\tar N_{s,a}^k+ P_{s,a}^\src N_{s,a}^\src}{N_{s,a}^k+N_{s,a}^\src}\right)\cdot V_{h+1}^*\nonumber\\
    &\quad +2\sqrt{\frac{\mathbb{V}_{\hat{P}^{k,\hyb}}(V^*_{h+1})L}{N^k_{s,a}+N^{\src}_{s,a}}}+\frac{14HL}{3(N^k_{s,a}+N^{\src}_{s,a})} -\frac{N_{s,a}^\src(P_{s,a}^\tar-P_{s,a}^\src)\cdot V^*_{h+1}}{N_{s,a}^k+N_{s,a}^\src} +\frac{H\nu(s,a)N^{\src}_{s,a}}{N^k_{s,a}+N^{\src}_{s,a}},Q^{*}(s,a)\}\nonumber\\
       &\geq \min\{ r(s,a) +P_{s,a}^{\tar}\cdot V_{h+1}^*+\left(\frac{\hat{P}_{s,a}^kN_{s,a}^k+\hat{P}_{s,a}^{\src}N_{s,a}^{\src}}{N_{s,a}^k+N_{s,a}^{\src}} -\frac{P_{s,a}^\tar N_{s,a}^k+ P_{s,a}^\src N_{s,a}^\src}{N_{s,a}^k+N_{s,a}^\src}\right)\cdot V_{h+1}^*\nonumber\\
    &\quad +2\sqrt{\frac{\mathbb{V}_{\hat{P}_{s,a}^{k,\hyb}}(V^*_{h+1})L}{N^k_{s,a}+N^{\src}_{s,a}}} +\frac{14HL}{3(N^k_{s,a}+N^{\src}_{s,a})}-\frac{N_{s,a}^\src(P_{s,a}^\tar-P_{s,a}^\src)\cdot V^*_{h+1}}{N_{s,a}^k+N_{s,a}^\src} \nonumber\\
    &\quad +\frac{\Vert V^*_{h+1} \Vert_{\infty}\Vert P_{s,a}^\tar-P_{s,a}^\src \Vert_1 N^{\src}_{s,a}}{N^k_{s,a}+N^{\src}_{s,a}},Q^{*}(s,a)\},\nonumber
\end{align}
where~\eqref{eq_lemma1_1} is by the definition of $b^{k,\hyb}_{h}(s,a)$.~\eqref{eq_lemma1_2} is by the  choice of $c_1,c_2$ and $c'_1,c'_2$.~\eqref{eq_lemma1_3} is by using the first property of Lemma~\ref{lemma:f_properties} and the induction of $V^k_{h+1}\geq V^*_{h+1}$.~\eqref{eq_lemma1_4} is by the second property of Lemma~\ref{lemma:f_properties}. \eqref{eq_lemma1_5} is by $(P^\tar-P^\src)\cdot V^*_{h+1}\leq \Vert V^*_{h+1} \Vert_{\infty}\Vert P^\tar-P^\src \Vert_1 $ and the definition of good events.

Then we have
\begin{align}
 Q^{k,\hyb}(s,a) &\geq \min\{ r(s,a) +P_{s,a}^{\tar}\cdot V_{h+1}^*+\left(\hat{P}_{s,a}^{k,\hyb}-P_{s,a}^{k,\hyb}\right)\cdot V_{h+1}^*\nonumber\\
&\quad+2\sqrt{\frac{\mathbb{V}_{\hat{P}_{s,a}^{k,\hyb}}(V^*_{h+1})L}{N^k_{s,a}+N^{\src}_{s,a}}}+\frac{14HL}{3(N^k_{s,a}+N^{\src}_{s,a})} ,Q^{*}(s,a)\}\label{eq_lemma1_5}\\
    &\geq   \min\{ r(s,a) +P_{s,a}^{\tar}\cdot V_{h+1}^*,Q^{*}(s,a)\}\nonumber\\
    &=Q^{*}(s,a),\nonumber
\end{align}
where~\eqref{eq_lemma1_5} is by $(P^\tar-P^\src)\cdot V^*_{h+1}\leq \Vert V^*_{h+1} \Vert_{\infty}\Vert P^\tar-P^\src \Vert_1 $ and the definition of good events.

\begin{lemma}
    \begin{align}
        b^{k,\hyb}_h(s,a)\leq \frac{2}{H}  \mathbb{E}_{\hat{P}^{k,\hyb}}(V^k_{h+1}-V^*_{h+1})^2+     2\sqrt{\frac{\mathbb{V}(V^*_{h+1})}{N^k_{s,a}+N^\src_{s,a}}}+\frac{22HL}{N^k_{s,a}+N^\src_{s,a}}+3H\nu(s,a).\nonumber
    \end{align}
    
    \begin{proof}
      Below we denote the variance term $\mathbb{V}_{s'\sim P_{s,a}}[V_{h+1}(s')]$ as $\mathbb{V}_{P}(V_{h+1})$ for simplicity, where $P$ is a probability distribution and $V$ is a function defined on the state space.  Recall that our choice of bonus in the algorithm is 
        \begin{align}
b^{k,\hyb}_h(s,a)=2\sqrt{\frac{\mathbb{V}_{\hat{P}^{k,\hyb}}(V^k_{h+1})L}{N^k_{s,a}+N^\src_{s,a}}}+\frac{10HL}{N^k_{s,a}+N^\src_{s,a}}+\frac{H\nu(s,a)N^\src_{s,a}}{N^k_{s,a}+N^\src_{s,a}}. \nonumber
        \end{align}

       We first focus on the first term. Notice that $ \sqrt{\mathbb{V}_{\hat{P}^{k,\hyb}}(V^k_{h+1})}$ can be bounded following
        \begin{align}
           &\quad \sqrt{\mathbb{V}_{\hat{P}^{k,\hyb}}(V^k_{h+1})}\nonumber\\
           &=    \sqrt{\mathbb{V}_{\hat{P}^{k,\hyb}}(V^k_{h+1})}-    \sqrt{\mathbb{V}_{\hat{P}^{k,\hyb}}(V^*_{h+1})}+    \sqrt{\mathbb{V}_{\hat{P}^{k,\hyb}}(V^*_{h+1})}-    \sqrt{\mathbb{V}_{P_{s,a}^\tar}(V^*_{h+1})}+\sqrt{\mathbb{V}_{P_{s,a}^\tar}(V^*_{h+1})}\nonumber\\
            &\leq \sqrt{\mathbb{E}_{\hat{P}^{k,\hyb}}(V^k_{h+1}-V^*_{h+1})^2}+    \sqrt{\mathbb{V}_{\hat{P}^{k,\hyb}}(V^*_{h+1})}-    \sqrt{\mathbb{V}_{P_{s,a}^\tar}(V^*_{h+1})}+\sqrt{\mathbb{V}_{P_{s,a}^\tar}(V^*_{h+1})}\label{eq_lemma2_1}\\
             &= \sqrt{\mathbb{E}_{\hat{P}^{k,\hyb}}(V^k_{h+1}-V^*_{h+1})^2}+    \sqrt{\mathbb{V}_{\hat{P}^{k,\hyb}}(V^*_{h+1})}- \sqrt{\mathbb{V}_{P_{s,a}^{k,\hyb}}(V^*_{h+1})} +\sqrt{\mathbb{V}_{P_{s,a}^{k,\hyb}}(V^*_{h+1})}\notag\\
             &\quad -   \sqrt{\mathbb{V}_{P_{s,a}^\tar}(V^*_{h+1})}+\sqrt{\mathbb{V}_{P_{s,a}^\tar}(V^*_{h+1})} \nonumber\\
        &\leq  \sqrt{\mathbb{E}_{\hat{P}^{k,\hyb}}(V^k_{h+1}-V^*_{h+1})^2}+   H\sqrt{\frac{2L}{N^K_{s,a}+N^\src_{s,a}-1}} +\sqrt{\mathbb{V}_{P_{s,a}^{k,\hyb}}(V^*_{h+1})}\nonumber\\
        &\quad-   \sqrt{\mathbb{V}_{P_{s,a}^\tar}(V^*_{h+1})}+\sqrt{\mathbb{V}_{P_{s,a}^\tar}(V^*_{h+1})}\label{eq_lemma2_2}\\
    &\leq  \sqrt{\mathbb{E}_{\hat{P}^{k,\hyb}}(V^k_{h+1}-V^*_{h+1})^2}+   H\sqrt{\frac{2L}{N^K_{s,a}+N^\src_{s,a}}} +\sqrt{|\mathbb{V}_{P_{s,a}^{k,\hyb}}(V^*_{h+1})-\mathbb{V}_{P_{s,a}^\tar}(V^*_{h+1})|}   +\sqrt{\mathbb{V}_{P_{s,a}^\tar}(V^*_{h+1})}\nonumber,
        \end{align}
where~\eqref{eq_lemma2_1} holds by Lemma~\ref{lemma_variance}, \eqref{eq_lemma2_2} is by the good events.

          Notice that
          \begin{align}
        &\quad \mathbb{V}_{P_{s,a}^{k,\hyb}}(V^*_{h+1})-\mathbb{V}_{P_{s,a}^\tar}(V^*_{h+1}) \nonumber\\
        &=      P_{s,a}^{k,\hyb}\cdot(V^*_{h+1})^2-(P_{s,a}^{k,\hyb}\cdot V^*_{h+1})^2-P_{s,a}^{\tar}\cdot(V^*_{h+1})^2+(P_{s,a}^{\tar}\cdot V^*_{h+1})^2 \nonumber\\
        &=(P_{s,a}^{k,\hyb}-P_{s,a}^{\tar})\cdot (V^*_{h+1})^2+(P_{s,a}^{\tar}\cdot V^*_{h+1}+P_{s,a}^{k,\hyb}\cdot V^*_{h+1})(P_{s,a}^{\tar}\cdot V^*_{h+1}-P_{s,a}^{k,\hyb}\cdot V^*_{h+1}) \nonumber\\
        &\leq \Vert P_{s,a}^{k,\hyb}-P_{s,a}^{\tar}\Vert_1 H^2+2H^2\Vert P_{s,a}^{k,\hyb}-P_{s,a}^{\tar}\Vert_1 \nonumber \\
        &\leq 3H^2\nu(s,a).\nonumber
          \end{align}

   Thus,
        \begin{align}
\sqrt{\frac{\mathbb{V}_{\hat{P}^{k,\hyb}}(V^k_{h+1})L}{N^k_{s,a}+N^\src_{s,a}}} &\leq  \sqrt{\frac{1}{H}\mathbb{E}_{\hat{P}^{k,\hyb}}(V^k_{h+1}-V^*_{h+1})^2 \cdot \frac{HL}{N^k_{s,a}+N^\src_{s,a}}}+\frac{2HL}{N^k_{s,a}+N^\src_{s,a}}\nonumber\\
&\quad +\sqrt{\frac{\mathbb{V}_{P_{s,a}^\tar}(V^*_{h+1})L}{N^k_{s,a}+N^\src_{s,a}}}+\sqrt{ H\nu(s,a)\cdot\frac{3HL}{N^k_{s,a}+N^\src_{s,a}}}\nonumber\\
     &\leq \frac{1}{H}\mathbb{E}_{\hat{P}^{k,\hyb}}(V^k_{h+1}-V^*_{h+1})^2+\sqrt{\frac{\mathbb{V}_{P_{s,a}^\tar}(V^*_{h+1})L}{N^k_{s,a}+N^\src_{s,a}}}+\frac{6HL}{N^k_{s,a}+N^\src_{s,a}}+H\nu(s,a),\nonumber
        \end{align}
where the inequalities hold by fully using the conclusion that $\sqrt{ab}\leq a+b, a,b\geq 0$.

          Consequently, we can upper bound the bonus term as
          \begin{align}
      b^{k,\hyb}_h(s,a)\leq  \frac{2}{H}  \mathbb{E}_{\hat{P}^{k,\hyb}}(V^k_{h+1}-V^*_{h+1})^2+     2\sqrt{\frac{\mathbb{V}_{P_{s,a}^\tar}(V^*_{h+1})L}{N^k_{s,a}+N^\src_{s,a}}}+\frac{22HL}{N^k_{s,a}+N^\src_{s,a}}+3H\nu(s,a).\nonumber
          \end{align}
 \end{proof}
\end{lemma}

\begin{lemma}\label{lemma:surplus_vanilla} Given the success of Lemma~\ref{lemma:reg_indpendent_good}, we have 
\begin{align}
      V^k_{h}-V^*_{h}\leq \mathbb{E}^{\pi^k}\left[ \sum_{h'=h}^H H \land 17H\sqrt{\frac{SL}{N^k_{s_{h'},a_{h'}}+N^\src_{s_{h'},a_{h'}}}} +2H\nu(s,a) \Bigg\vert s_h=s \right].\nonumber
\end{align}

\begin{proof}
   Below, we provide the upper bound of $V^k_{h}-V^*_{h}$.
       \begin{align}
    V^k_{h}-V^*_{h}&\leq    V^k_{h}(s)-Q^*_h(s,\pi^k(s))\nonumber\\
  &  =r(s,a)+b^{k,\hyb}_h(s,a)+\hat{P}^{k,\hyb}\cdot V^k_{h+1}-r(s,a)-P_{s,a}^{\tar}\cdot V^*_{h+1}\nonumber\\
  &=(\hat{P}_{s,a}^{k,\hyb}-P_{s,a}^{k,\hyb})\cdot (V^k_{h+1}-V^*_{h+1})+(\hat{P}^{k,\hyb}-P_{s,a}^{k,\hyb})\cdot V^*_{h+1}\nonumber\\
  &\quad +(\hat{P}_{s,a}^{k,\hyb}-P_{s,a}^{\tar})\cdot V^k_{h+1}+P_{s,a}^{\tar}\cdot (V^k_{h+1}-V^*_{h+1}) +b^{k,\hyb}_h(s,a)\nonumber\\
    &\leq  P_{s,a}^{\tar}\cdot (V^k_{h+1}-V^*_{h+1}) +(\hat{P}^{k,\hyb}-P_{s,a}^{k,\hyb})\cdot (V^k_{h+1}-V^*_{h+1})\nonumber\\
    &\quad +(\hat{P}_{s,a}^{k,\hyb}-P_{s,a}^{k,\hyb})\cdot V^*_{h+1}+H\nu(s,a)  +b^{k,\hyb}_h(s,a) ,\label{eq:lemma_gap}
       \end{align}
       where~\eqref{eq:lemma_gap} follows from  the fact that $(P^{\tar}_{s,a}-P_{s,a}^{k,\hyb})\cdot V^*_{h+1} \leq H\nu(s,a)$.

       By Lemma~\ref{lemma:reg_indpendent_good} and~\ref{lemma:reg_indpendent_good_two}, we have 
       \begin{align}
            V^k_{h}-V^*_{h} &\leq P_{s,a}^{\tar}\cdot (V^k_{h+1}-V^*_{h+1})+\sqrt{\frac{2S P_{s,a}^{k,\hyb}\cdot(V^k-V^*)^2L}{N^k_{s,a}+N^\src_{s,a}}}+\frac{SHL}{N^k_{s,a}+N^\src_{s,a}}\nonumber\\
  &\quad+\sqrt{2\frac{\mathbb{V}_{P_{s,a}^{k,\hyb}}(V^*_{h+1})L}{N^k_{s,a}+N^\src_{s,a}}}+\frac{HL}{N^k_{s,a}+N^\src_{s,a}} +H\nu(s,a)+2\sqrt{\frac{\mathbb{V}_{\hat{P}^{k,\hyb}}(V^k_{h+1})L}{N^k_{s,a}+N^\src_{s,a}}}\nonumber\\
  &\quad \quad +\frac{10HL}{N^k_{s,a}+N^\src_{s,a}}+\frac{H\nu(s,a)N^\src_{s,a}}{N^k_{s,a}+N^\src_{s,a}}\nonumber\\
  &\leq P_{s,a}^{\tar}\cdot (V^k_{h+1}-V^*_{h+1})+(2\sqrt{2}+2)H\sqrt{\frac{SL}{N^k_{s,a}+N^\src_{s,a}}}+\frac{12HSL}{N^k+N^\src}+2H\nu(s,a),\nonumber
       \end{align}

       If $SL\leq N^k_{s,a}+N^\src_{s,a}$ then we have
       \begin{align}
            V^k_{h}-V^*_{h} \leq \nonumber P_{s,a}^{\tar}\cdot (V^k_{h+1}-V^*_{h+1})+H\land 17H\sqrt{\frac{SL}{N^k_{s,a}+N^\src_{s,a}}}+2H\nu(s,a). 
       \end{align}

      If $SL> N^k_{s,a}+N^\src_{s,a}$ then the above inequality also holds since $ V^k_{h}-V^*_{h}\leq H$.  Thus we can conclude that
       \begin{align}
            V^k_{h}-V^*_{h} &\leq \mathbb{E}^{\pi^k}\left[ \sum_{h'=h}^H H \land 17H\sqrt{\frac{SL}{N_{s_{h'},a_{h'}}^k+N_{s_{h'},a_{h'}}^\src}}+2H\nu(s,a) \Bigg|s_h=s\right].\nonumber
       \end{align}

    
    \end{proof}
\end{lemma}

\begin{lemma}\label{lemma_surplus} Conditioned on success of Lemma~\ref{lemma:reg_indpendent_good}, if $a=\pi^k_h(s)$
    \begin{align}
      E^{k,\hyb}_h(s,a)&\leq \left( H\land 4\sqrt{\frac{\mathbb{V}_{P_{s,a}^{\tar}}(V^*_{h+1})L}{N^k_{s,a}+N^\src_{s,a}}}\right)+\mathbb{E}\left[\sum_{h'=h}^H 3H^2 \land \frac{3000SH^2L}{N^k_{s_{h'},a_{h'}}+N^\src_{s_{h'},a_{h'}}} \Bigg\vert (s_h,a_h)=(s,a)\right]\nonumber\\
      &\quad+40\max\{H^2\nu^2_{s,a},H\nu(s,a)\}.\nonumber
    \end{align}

    \begin{proof}     Recall the definition of the surplus $E^{k,\hyb}_h(s,a)$ is $E^{k,\hyb}_h(s,a)=Q^{k,\hyb}_h(s,a)-r(s,a)-P^{\tar}_{s,a}\cdot V^k_{h+1}$. Hence we have
  \begin{align}
          &\quad E^{k,\hyb}_h(s,a)\notag\\
          &=r(s,a)+b^{k,\hyb}_h(s,a)+\hat{P}^{k,\hyb}_{s,a}\cdot V^k_{h+1}-r(s,a)-P^{\tar}_{s,a}\cdot V^k_{h+1}\nonumber\\
      &=b^{k,\hyb}_h(s,a)+\hat{P}^{k,\hyb}_{s,a}\cdot V^k_{h+1}-P^{k,\hyb}_{s,a}\cdot V^k_{h+1}+P^{k,\hyb}_{s,a}\cdot V^k_{h+1}-P^{\tar}_{s,a}\cdot V^k_{h+1}\nonumber\\
      &\leq b^{k,\hyb}_h(s,a)+|(\hat{P}^{k,\hyb}_{s,a}-P^{k,\hyb}_{s,a})\cdot (V^k_{h+1}-V^*_{h+1})|+|(\hat{P}^{k,\hyb}_{s,a}-P^{k,\hyb}_{s,a})\cdot V^*_{h+1}|\nonumber\\
      &\quad +|(P^{k,\hyb}_{s,a}-P^{\tar}_{s,a})\cdot V^k_{h+1}|\nonumber \\
       &\leq b^{k,\hyb}_h(s,a)+ |(\hat{P}^{k,\hyb}_{s,a}-P^{k,\hyb}_{s,a})\cdot (V^k_{h+1}-V^*_{h+1})|+|(\hat{P}^{k,\hyb}_{s,a}-P^{k,\hyb}_{s,a})\cdot V^*_{h+1}|+H\nu(s,a).\nonumber
  \end{align}

  For the term $|(\hat{P}^{k,\hyb}_{s,a}-P^{k,\hyb}_{s,a})\cdot (V^k_{h+1}-V^*_{h+1})|$, we have
  \begin{align}
      |(\hat{P}^{k,\hyb}_{s,a}-P^{k,\hyb}_{s,a})\cdot (V^k_{h+1}-V^*_{h+1})|&\leq  \sqrt{\frac{2SP^{k,\hyb}_{s,a}\cdot (V^k_{h+1}-V^*_{h+1})^2 L}{N^k_{s,a}+N^{\src}_{s,a}}}+ \frac{SHL}{N^k_{s,a}+N^{\src}_{s,a}}\nonumber\\
      &\leq \frac{P^{k,\hyb}\cdot(V^k_{h+1}-V^*_{h+1})^2}{H}+ \frac{3SHL}{N^k_{s,a}+N^{\src}_{s,a}}.\nonumber
  \end{align}

  For the term $|(\hat{P}^{k,\hyb}_{s,a}-P^{k,\hyb}_{s,a})\cdot V^*_{h+1}|$ we have
  \begin{align}
      |(\hat{P}^{k,\hyb}_{s,a}-P^{k,\hyb}_{s,a})\cdot V^*_{h+1}|&\leq \sqrt{\frac{2\mathbb{V}_{P^{k,\hyb}}(V^*_{h+1})}{N^k_{s,a}+N^{\src}_{s,a}}}+\frac{HL}{N^k_{s,a}+N^{\src}_{s,a}} \nonumber\\
      &\leq  \sqrt{\frac{2\mathbb{V}_{P^{\tar}}(V^*_{h+1})}{N^k_{s,a}+N^{\src}_{s,a}}}+\frac{HL}{N^k_{s,a}+N^{\src}_{s,a}}  +\sqrt{\frac{6H^2\nu(s,a)L}{N^k_{s,a}+N^{\src}_{s,a}}}\nonumber\\
      &\leq    \sqrt{\frac{2\mathbb{V}_{P^{\tar}}(V^*_{h+1})}{N^k_{s,a}+N^{\src}_{s,a}}}+\frac{7HL}{N^k_{s,a}+N^{\src}_{s,a}}  +H\nu(s,a). \nonumber
  \end{align}

   Consequently, we can upper bound $E^k_h(s,a)$ as
    \begin{align}
       E^k_h(s,a)&\leq (2+\sqrt{2})\sqrt{\frac{\mathbb{V}_{P^\tar}(V^*_{h+1})L}{N^k_{s,a}+N^{\src}_{s,a}}}+\frac{1}{H} P^{k,\hyb}\cdot(V^k_{h+1}-V^*_{h+1})^2  \nonumber\\
    &\quad  +\frac{2}{H} \hat{P}^{k,\hyb}\cdot(V^k_{h+1}-V^*_{h+1})^2 +\frac{32SHL}{N^k_{s,a}+N^\src_{s,a}}+5H\nu(s,a)\nonumber\\
    &\leq (2+\sqrt{2})\sqrt{\frac{\mathbb{V}_{P^\tar}(V^*_{h+1})L}{N^k_{s,a}+N^{\src}_{s,a}}}+\frac{5}{H} P^{k,\hyb}\cdot(V^k_{h+1}-V^*_{h+1})^2   +\frac{36SHL}{N^k_{s,a}+N^\src_{s,a}}+5H\nu(s,a)\nonumber\\
     &\leq (2+\sqrt{2})\sqrt{\frac{\mathbb{V}_{P^\tar}(V^*_{h+1})L}{N^k_{s,a}+N^{\src}_{s,a}}}+\frac{5}{H} P^{\tar}\cdot(V^k_{h+1}-V^*_{h+1})^2   +\frac{36SHL}{N^k_{s,a}+N^\src_{s,a}}+10H\nu(s,a).\nonumber
    \end{align}.

    By Lemma~\ref{lemma:surplus_vanilla}, 
    \begin{align}
        (V^k_{h+1}-V^*_{h+1})^2\leq \mathbb{E}^{\pi^k}\left[ \sum_{h'=h}^H H^3 \land \frac{600SH^3L}{N^k_{s_{h'},a_{h'}}+N^{\src}_{s_{h'},a_{h'}}}+8H^2\nu^2_{s,a}\Bigg|(s_h=s,a_h=a) \right].\nonumber
    \end{align}

    Hence,
    \begin{align}
     &\quad E^{k,\hyb}_h(s,a) \notag\\
     &\leq \left( H\land 4\sqrt{\frac{\mathbb{V}_{P_{s,a}^{\tar}}(V^*_{h+1})L}{N^k_{s,a}+N^\src_{s,a}}}\right)+\mathbb{E}^{\pi^k}\left[\sum_{h'=h}^H 3H^2 \land \frac{3000SH^2L}{N^k_{s_{h'},a_{h'}}+N^\src_{s_{h'},a_{h'}}} \Bigg\vert (s_h,a_h)=(s,a)\right]\nonumber\\
      &\quad +40\max\{H^2\nu^2_{s,a},H\nu(s,a)\}.\nonumber
    \end{align}

    
    \end{proof}

\end{lemma}

\begin{proof} Now we begin to prove the regret upper bound.
\begin{align}
     &\quad \mathrm{Regret}(K)\nonumber\\
     &=\sum_{k=1}^K(V_1^{*}-V_1^{\pi^k})\nonumber\\
      &\leq \sum_{k=1}^K(V^{k}_1-V_1^{\pi^k})\label{eq:optimism}\\
      &\leq \sum_{k=1}^K \sum_{h=1}^H \mathbb{E}^{\pi^k}\left[ \min\{Q^k_h(s,a),Q^{k,\hyb}_h(s,a)\}-r(s,a)-P^{\tar}_{s,a}\cdot V^k_{h+1}\right]\label{eq:ucb_v_k}\\
      &=\sum_{k=1}^K  \sum_{h=1}^H \mathbb{E}^{\pi^k}\left[\min\{b^k_h(s,a)+(\hat{P}^k-P^\tar)\cdot V^k_{h+1},b^{k,\hyb}_h(s,a)+(\hat{P}^{k,\hyb}-P^\tar)\cdot V^k_{h+1}\}\right]\nonumber\\
      &\leq\sum_{k,h}\mathbb{E}^{\pi^k}\left[b^k_h(s,a)+(\hat{P}^k-P^\tar)\cdot V^k_{h+1}\right] \land \sum_{k,h} \mathbb{E}^{\pi^k}\left[b^{k,\hyb}_h(s,a)+(\hat{P}^{k,\hyb}-P^\tar)\cdot V^k_{h+1}\right],\label{eq:ucb_min}
\end{align}
where~\eqref{eq:optimism} holds by the optimism and~\eqref{eq:ucb_v_k} holds by our construction of $V^k_1$.

We then upper bound the second term in~\eqref{eq:ucb_min} and denote $\max_{s,a}40\max\{ H^3\nu^2_{s,a}, H^2\nu(s,a)\}$ as $\Tilde{\nu}_{\max}$. By Lemma~\ref{lemma_surplus}, we have
\begin{align}
        &   \quad \sum_{k,h} \mathbb{E}^{\pi^k}\left[b^{k,\hyb}_h(s,a)+(\hat{P}^{k,\hyb}-P^\tar)\cdot V^k_{h+1}\right]\nonumber\\
   &= \sum_{k,h} \mathbb{E}^{\pi^k}\left[2\sqrt{\frac{\mathbb{V}_{\hat{P}^{k,\hyb}}(V^k_{h+1})L}{N^k_{S^k_h,A^k_h}+N^{\src}_{S^k_h,A^k_h}}}+\frac{10HL+H\nu(s,a)N^{\src}_{S^k_h,A^k_h}}{N^k_{S^k_h,A^k_h}+N^{\src}_{S^k_h,A^k_h}}+(\hat{P}^{k,\hyb}-P^\tar)\cdot V^k_{h+1}\right]\nonumber\\
    &\leq KH\Tilde{\nu}_{\max} +\sum_{k,h} \mathbb{E}^{\pi^k} \left[ 4\sqrt{\frac{\mathbb{V}_{P^\tar}(V^*_{h+1})L}{N^k_{S^k_h,A^k_h}+N^\src_{S^k_h,A^k_h}}}\right] + 3000\sum_{k,h} \frac{SH^3L}{N^k_{S^k_h,A^k_h}}\nonumber\\
        &= KH\Tilde{\nu}_{\max} +\sum_{k,h} \mathbb{E}^{\pi^k} \left[ 4\sqrt{\frac{\mathbb{V}_{P^\tar}(V^*_{h+1})L}{N^k_{S^k_h,A^k_h}+N^\src_{S^k_h,A^k_h}}}\right] + 3000\sum_{s,a}\sum_{n=1}^{N^K_{s,a}} \frac{SH^3L}{n}\nonumber\\
     &\leq KH\Tilde{\nu}_{\max} +4\sqrt{\sum_{k}\mathbb{E}\sum_h\mathbb{V}(V^*_{h+1})L}\sqrt{\sum_{k,h}\frac{1}{N^k_{S^k_h,A^k_h}+N^\src_{S^k_h,A^k_h}}}+3000S^2AH^3L^2\nonumber\\
     &\leq KH\Tilde{\nu}_{\max} +4\sqrt{KH^2L}\sqrt{\sum_{k,h}\frac{1}{N^k_{S^k_h,A^k_h}+N^\src_{S^k_h,A^k_h}}}+3000S^2AH^3L^2 \label{eq:ucb_variance}\\
            &\leq KH\Tilde{\nu}_{\max} +4\sqrt{KH^2L}\sqrt{\sum_{s,a}\sum_{n=1}^{N^K_{s,a}}\frac{1}{n+N^\src_{s,a}}}+3000S^2AH^3L^2,\nonumber
\end{align}
where~\eqref{eq:ucb_variance} holds by Lemma~\ref{lemma:variance_bound}.


Following the similar derivation on the multi-armed bandit setting in~\citet{DBLP:conf/icml/CheungL24}, we claim that 
\begin{align}
\sqrt{\sum_{s,a}\sum_{n=1}^{N^K_{s,a}}\frac{1}{n+N^\src_{s,a}}} &\leq \sqrt{ \sum_{s,a}\sum_{n_{s,a}=1}^{\tau^{*}-N^\src_{s,a}} \frac{1}{n_{s,a}+N^\src_{s,a}}}.\nonumber
\end{align}

Given this claim, for each $(s,a) \in \State \times \Action$, we have
\begin{align}
\sqrt{ \sum_{s,a}\sum_{n_{s,a}=1}^{\tau^{*}-N^\src_{s,a}} \frac{1}{n_{s,a}+N^\src_{s,a}}} \leq \sqrt{\sum_{s,a}\frac{\tau_*-N^\src_{s,a}}{\tau_*}\sum_{n=1}^{\tau_*} \frac{1}{n} }\leq \sqrt{L\sum_{s,a}\frac{N^K_{s,a}+1}{\tau_*}}\leq \sqrt{\frac{HKL}{\tau_*}}.\nonumber
 \end{align}


To sum up, we have
\begin{align}
     \mathrm{Regret}(K)&\leq \Tilde{\mathcal{O}}\left( \min\left\{H\sqrt{SAK}, \left(KH\Tilde{\nu}_{\max}+KH\sqrt{H/\tau_*}\right)\right\} \right).\nonumber
\end{align}
\end{proof}


\section{Proof for Instance-Dependent Regret Upper Bound} \label{sec:regret_dependent_proof}

In this subsection, we provide the proof for the instance-dependent regret upper bound.

\begin{proof}
We first introduce some quantities frequently used in the following proofs.
\begin{align}
\Var^*(s,a)
&:=
\max_{h\in[H]}
\Var_{s'\sim P^{\tar}(\cdot\mid s,a)}
\left[V_{h+1}^*(s')\right],
\notag\\
\Delta(s,a)
&:=
\min_{h: \Delta_h(s,a)>0}\Delta_h(s,a),
\qquad
\Delta_h(s,a):=V_h^*(s)-Q_h^*(s,a).\notag
\end{align}

The summation below is over state-action pairs with $\Delta(s,a)>0$.  To simplify the analysis, we assume that for each $(s,a)$, the action is either optimal at all stages or suboptimal at all stages. Pairs
with zero step-independent gap can be absorbed into the standard lower-order
terms or handled by a global positive-gap convention. The pooled online count before episode $k$ is $N^k(s,a):=
\sum_{\tau<k}\sum_{h=1}^H
\1\{(s_h^\tau,a_h^\tau)=(s,a)\}$ and the pooled offline source count is denoted by $N^{\src}(s,a)$.  This
pooled version uses the time-homogeneous transition setting.  The reward may
still be step-dependent.

We then define the actual MIN-UCB-VI surplus by
\begin{align*}
 E_h^k(s,a)
&:=
 Q_h^k(s,a)
-
\left[
r(s,a)+P^{\tar}(\cdot\mid s,a)V_{h+1}^k
\right].
\end{align*}

The online and hybrid branch surpluses are
\begin{align*}
E_{h}^{k,\on}(s,a)
&:=
Q_{h}^{k,\on}(s,a)
-
\left[
r(s,a)+P^{\tar}(\cdot\mid s,a) V_{h+1}^k
\right],
\\
E^{k,\hyb}_h(s,a)
&:=
Q^{k,\hyb}_h(s,a)
-
\left[
r(s,a)+P^{\tar}(\cdot\mid s,a)V_{h+1}^k
\right].
\end{align*}
By the pointwise MIN construction,
\begin{align*}
 E_h^k(s,a)
&=
\min\{E_{h}^{k,\on}(s,a),E^{k,\hyb}_h(s,a)\}.
\end{align*}

Based on the analysis provided in the previous subsection, we know the following facts hold: On the good event, both branches are optimistic with respect to the target
Bellman backup and their leading surplus terms satisfy, for all $k,h,s,a$,
\begin{align}
E_{h}^{k,\on}(s,a)
&\le
c_b
\sqrt{
\frac{\Var^*(s,a)L}
{N^k(s,a)\vee 1}
}
+
\loth,
\\
E^{k,\hyb}_h(s,a)
&\le
c_b
\sqrt{
\frac{\Var^*(s,a)L}
{\bigl(N^k(s,a)+N^{\src}(s,a)\bigr)\vee 1}
}
+
\tilde{\nu}(s,a)
+
\loth . \notag
\end{align}

Below we define the clipping operator
\begin{align}
\clip[x\mid\epsilon]
&:=x\1\{x\ge \epsilon\},
\qquad
C_H:=4H. \notag
\end{align}

In this subsection. we use the step-independent clipping threshold $\epsilon(s,a):=\frac{\Delta(s,a)}{C_H}$.

Below we consider a fixed episode $k$ and hide the superscript $k$.  Let $\pi$ be the greedy
policy induced by $ Q$.  We define $D_h(s):= V_h(s)-V_h^\pi(s)$.

At the realized state-action pair $(s_h,a_h)$,
\begin{align}
D_h(s_h)
&=
 Q_h(s_h,a_h)
-
\left[
r(s_h,a_h)+P^{\tar}(\cdot\mid s_h,a_h)V_{h+1}^{\pi}
\right]
\\
&=
 E_h(s_h,a_h)
+
P^{\tar}(\cdot\mid s_h,a_h)( V_{h+1}-V_{h+1}^{\pi})
\\
&=
 E_h(s_h,a_h)+P^{\tar}D_{h+1}.\notag
\end{align}

Unrolling the recursion and using optimism gives
\begin{align}
\Reg(k)
&:=
V_1^*(s_1)-V_1^{\pi}(s_1)
\le
\E_{\pi}
\left[
\sum_{h=1}^H  E_h(s_h,a_h)\notag
\right].
\end{align}

Then we split each surplus into a clipped part and a small part:
\begin{align*}
 E_h(s_h,a_h)
&=
\clip\left[
 E_h(s_h,a_h)
\ \middle|\
\frac{\Delta(s_h,a_h)}{C_H}
\right]+
 E_h(s_h,a_h)
\1\left\{
 E_h(s_h,a_h)
<
\frac{\Delta(s_h,a_h)}{C_H}
\right\}.
\end{align*}

For the executed action $a_h=\pi_h(s_h)$, we define
$d_h(s):=V_h^*(s)-V_h^\pi(s)$.  Then
\begin{align}
d_h(s_h)
&=
\Delta_h(s_h,a_h)
+
P^{\tar}(V_{h+1}^*-V_{h+1}^{\pi})
\ge
\Delta_h(s_h,a_h)
\ge
\Delta(s_h,a_h).\notag
\end{align}

Therefore,
\begin{align}
 E_h(s_h,a_h)
\1\left\{
 E_h(s_h,a_h)<\frac{\Delta(s_h,a_h)}{C_H}
\right\}
&\le
\frac{d_h(s_h)}{C_H}.\notag
\end{align}

We further have
\begin{align*}
\E_{\pi}\sum_{h=1}^H d_h(s_h)
&\le
H\Reg(k) .
\end{align*}

Since $C_H=4H$, the total small part is at most $\Reg(k)/4$.  Absorbing it into
the left-hand side yields
\begin{align*}
\Reg_k
&\lesssim
\E_{\pi}\sum_{h=1}^H
\clip\left[
\bar E_h(s_h,a_h)
\ \middle|\
\frac{\Delta(s_h,a_h)}{C_H}
\right]
+
\loth .
\end{align*}

Summing over episodes further gives
\begin{align*}
\Reg(K)
&\lesssim
\sum_{k=1}^K\sum_{h=1}^H
\clip\left[
\bar E_h^k(s_h^k,a_h^k)
\ \middle|\
\frac{\Delta(s_h^k,a_h^k)}{C_H}
\right]
+
\loth .
\end{align*}

Below we define $B(s,a):=c_b^2\Var^*(s,a)L$. Consider a  given state-action pair $(s,a)$ and index all online visits to this pair across
all stages by $n=1,2,\ldots,N^K(s,a)$.  The online branch has leading bonus
$b_n^{\on}(s,a):=
\sqrt{\frac{B(s,a)}{n}}$.

Consider the clipped online charge
\begin{align*}
S_{s,a}^{\on}
&:=
\sum_{n=1}^{N^K(s,a)}
\clip\left[
b_n^{\on}(s,a)
\ \middle|\
\frac{\Delta(s,a)}{C_H}
\right].
\end{align*}

If the $n$-th term is nonzero, then we have $\sqrt{\frac{B(s,a)}{n}}
\ge
\frac{\Delta(s,a)}{C_H}$, and hence $n
\le
\frac{C_H^2B(s,a)}{\Delta^2(s,a)}$.

We then define
\begin{align*}
N_{\bud}^{\on}(s,a)
&:=
\left\lceil
\frac{C_H^2B(s,a)}{\Delta^2(s,a)}
\right\rceil.
\end{align*}

Only the first $N_{\bud}^{\on}(s,a)$ visits can have nonzero online clipped
charge.  Therefore,
\begin{align}
S_{s,a}^{\on}
&\le
\sum_{n=1}^{N_{\bud}^{\on}(s,a)}
\sqrt{\frac{B(s,a)}{n}}&\le
2\sqrt{B(s,a)N_{\bud}^{\on}(s,a)}
&\lesssim
\Delta(s,a)N_{\bud}^{\on}(s,a)
\lesssim
\frac{H^2\Var^*(s,a)L}{\Delta(s,a)},
\label{eq:si_online_final_scale}
\end{align}
where the last inequality uses $C_H=4H$.

Similarly, the  bonus  of the hybrid branch satisfies
\begin{align*}
b_n^{\hyb}(s,a)
\leq 
\sqrt{\frac{B(s,a)}{n+N^{\src}(s,a)}}
+
\tilde{\nu}(s,a).
\end{align*}

Consider
\begin{align*}
S_{s,a}^{\hyb}
&:=
\sum_{n=1}^{N^K(s,a)}
\clip\left[
b_n^{\hyb}(s,a)
\ \middle|\
\frac{\Delta(s,a)}{C_H}
\right].
\end{align*}

If the $n$-th term is nonzero and $\Delta(s,a)>C_H \tilde{\nu}(s,a)$, then
\begin{align*}
\sqrt{\frac{B(s,a)}{n+N^{\src}(s,a)}}
+
\tilde{\nu}(s,a)
&\ge
\frac{\Delta(s,a)}{C_H},
\end{align*}

and therefore
\begin{align*}
\sqrt{\frac{B(s,a)}{n+N^{\src}(s,a)}}
&\ge
\frac{\Delta(s,a)-C_H \tilde{\nu}(s,a)}{C_H}.
\end{align*}

Thus only
\begin{align*}
N_{\bud}^{\hyb}(s,a)
&:=
\left[
\frac{C_H^2B(s,a)}
{(\Delta(s,a)-C_H \tilde{\nu}(s,a))^2}
-
N^{\src}(s,a)
\right]_+
\end{align*}
online visits can produce nonzero hybrid clipped charge.  If
$\Delta(s,a)\le C_H \tilde{\nu}(s,a)$, the hybrid branch gives no useful subtraction, and
the online branch remains valid.

We now upper-bound the clipped hybrid sum.  Using
$\sum_{n=1}^{M}1/\sqrt{N+n}\le 2(\sqrt{N+M}-\sqrt N)$,
\begin{align*}
\sum_{n=1}^{N_{\bud}^{\hyb}(s,a)}
\sqrt{
\frac{B(s,a)}
{n+N^{\src}(s,a)}
}
&\le
2\sqrt{B(s,a)}
\left(
\sqrt{N^{\src}(s,a)+N_{\bud}^{\hyb}(s,a)}
-
\sqrt{N^{\src}(s,a)}
\right)
\\
&\quad\lesssim
\left(
\frac{\Delta(s,a)-C_H \tilde{\nu}(s,a)}{C_H}
\right)
N_{\bud}^{\hyb}(s,a).
\end{align*}

The bias part contributes
\begin{align*}
\tilde{\nu}(s,a)N_{\bud}^{\hyb}(s,a)
&=
\frac{C_H \tilde{\nu}(s,a)}{C_H}N_{\bud}^{\hyb}(s,a).
\end{align*}

We combine the two displays and then we have
\begin{align}
S_{s,a}^{\hyb}
&\lesssim
\Delta(s,a)N_{\bud}^{\hyb}(s,a)
\lesssim
\Delta(s,a)
\left[
\frac{C_H^2B(s,a)}
{(\Delta(s,a)-C_H \tilde{\nu}(s,a))^2}
-
N^{\src}(s,a)
\right]_+ .
\label{eq:si_hybrid_final}
\end{align}

Since $\bar E_h^k(s,a)=\min\{E_{h,\on}^k(s,a),E_{h,\hyb}^k(s,a)\}$ and all
surpluses are nonnegative on the good event, the cumulative clipped MIN charge
for each $(s,a)$ is bounded by the smaller of the online and hybrid charged
budgets.  Combining Eq.~\eqref{eq:si_online_final_scale} and
Eq.~\eqref{eq:si_hybrid_final}, we get
\begin{align*}
\Reg(K)
&\lesssim
\sum_{s,a}
\Delta(s,a)
\min\left\{
\frac{C_H^2B(s,a)}{\Delta^2(s,a)},
\left[
\frac{C_H^2B(s,a)}
{(\Delta(s,a)-C_H \tilde{\nu}(s,a))^2}
-
N^{\src}(s,a)
\right]_+
\right\}
+
\loth .
\end{align*}

We use the elementary inequality: for $A>0$, $\Delta>0$, $N\ge 0$, and
$c\nu\ge 0$,
\begin{align}
&\min\left\{
\frac{A}{\Delta^2},
\left[
\frac{A}{(\Delta-c\nu)^2}-N
\right]_+
\right\}
\le
\left[
\frac{2A}{\Delta^2}
-
N\left(1-\frac{c\nu}{\Delta}\right)_+^2
\right]_+ .
\label{eq:si_algebra_inequality}
\end{align}
Applying Eq.~\eqref{eq:si_algebra_inequality} with
\begin{align*}
A&=C_H^2B(s,a),
\qquad
\Delta=\Delta(s,a),
\qquad
c\nu=C_H \tilde{\nu}(s,a),
\qquad
N=N^{\src}(s,a),
\end{align*}
yields
\begin{align}
\Reg(K)
&\lesssim
\sum_{s,a}
\left[
\frac{2C_H^2B(s,a)}{\Delta(s,a)}
-
\Delta(s,a)N^{\src}(s,a)
\left(
1-\frac{C_H \tilde{\nu}(s,a)}{\Delta(s,a)}
\right)_+^2
\right]_+
+
\loth .
\label{eq:si_regret_form_bound}
\end{align}

Finally, we use $C_H=4H$, $B(s,a)=c_b^2\Var^*(s,a)L$, and
$C_H \tilde{\nu}(s,a)\le C_{\nu}H\tilde{\nu}(s,a)$  to obtain the below result
\begin{align}
\Reg(K)
&\lesssim
\sum_{s,a}
\left[
\frac{H^2\Var^*(s,a)L}{\Delta(s,a)}
-
\Delta(s,a)N^{\src}(s,a)
\left(
1-\frac{H\tilde{\nu}(s,a)}{\Delta(s,a)}
\right)_+^2
\right]_+
+
\loth.
\notag
\end{align}

\end{proof}

\section{Proof for Instance-Independent Sub-optimality Gap Upper Bound}\label{sec:bpi_independent_proof}

\begin{lemma}[Binomial concentration~\citep{NEURIPS2021_e61eaa38}]  \label{lemma:bino} Suppose $N\sim \mathrm{Bin}(n,p)$ where $n\geq 1$ and $p\in[0,1]$. Then with probability at least $1-\delta$, we have
\begin{align}
    \frac{p}{N}\leq \frac{8\log(1/\delta)}{n}.\nonumber
\end{align}
    
\end{lemma}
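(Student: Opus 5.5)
The plan is a two-case argument on the size of the mean $np$. Throughout I read $p/N$ with the convention $N\leftarrow\max\{N,1\}$, as is implicit where the lemma is applied to counts in the offline data. I also assume $\delta\le 1/e$, so that $\log(1/\delta)\ge 1$. For larger $\delta$ the statement can be recovered by replacing $\log(1/\delta)$ with $\max\{\log(1/\delta),1\}$, which costs only a constant.

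\emph{Small-mean case}, $np\le 8\log(1/\delta)$. Here no probability argument is needed. Since $\max\{N,1\}\ge 1$, we have deterministically
\begin{align}
\frac{p}{\max\{N,1\}}\le p\le \frac{8\log(1/\delta)}{n}.\nonumber
\end{align}

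\emph{Large-mean case}, $np>8\log(1/\delta)$. I will invoke the multiplicative Chernoff lower-tail bound $\Prb(N\le(1-\eta)np)\le\exp(-\eta^2np/2)$ with $\eta=7/8$. This gives
\begin{align}
\Prb\left(N\le \tfrac{np}{8}\right)\le \exp\left(-\tfrac{49}{128}np\right)\le \exp\left(-\tfrac{49}{16}\log(1/\delta)\right)\le\delta.\nonumber
\end{align}
On the complementary event, $N>np/8$. In particular $N\ge 1$, so
\begin{align}
\frac{p}{N}<\frac{8}{n}\le\frac{8\log(1/\delta)}{n},\nonumber
\end{align}
where the last step uses $\log(1/\delta)\ge 1$.

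Combining the two cases, the bound holds with probability at least $1-\delta$. The only delicate point is the constant bookkeeping at the threshold: the choice $\eta=7/8$ must make the exponent $\tfrac{49}{128}\cdot 8\log(1/\delta)$ at least $\log(1/\delta)$, and it does since $\tfrac{49}{16}>1$. The other point to handle carefully is the degenerate outcome $N=0$, which the $\max\{N,1\}$ convention absorbs in the small-mean case and which has probability at most $\delta$ in the large-mean case.
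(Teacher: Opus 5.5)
Your proof is correct and follows the standard argument for this lemma; the paper gives no proof of its own and just cites Xie et al. That argument splits on whether $np$ exceeds $8\log(1/\delta)$, uses $p/\max\{N,1\}\le p$ deterministically when the mean is small, and applies the multiplicative Chernoff lower tail when it is large. You are right that the $\max\{N,1\}$ convention, as in the original source, and some restriction on $\delta$ are genuinely needed. The usual choice $\eta=1/2$ gives $N\ge np/2$ and $p/N\le 2/n$, so it only requires $\log(1/\delta)\ge 1/4$ rather than your $\log(1/\delta)\ge 1$.
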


\begin{lemma}[Inequality (131) in~\citet{li2024settling}] \label{lemma:variance_offline}
    \begin{align}
        \sum_{j=h}^H\mathbb{E}^{\pi^*}[\mathbb{V}(\hat{V}_{j+1})]\leq 4H^2+4H\sum_{j=h}^H\sum_{s,a}d^{\pi^*}_h(s,a)\left[\min\left\{b^{K}_h(s,a),b^{K,\hyb}_h(s,a)\right\}\right].\nonumber
    \end{align}
\end{lemma}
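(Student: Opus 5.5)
The plan is to follow the law-of-total-variance telescoping argument behind inequality (131) of \citet{li2024settling}, adapted to the two-branch (online / hybrid) construction of MAX-LCB-VI. Throughout, $\mathbb{V}(\hat{V}_{j+1})$ denotes $\mathbb{V}_{P^{\tar}_{s_j,a_j}}(\hat{V}_{j+1})$, with $(s_j,a_j)$ drawn from $d^{\pi^*}_j$ in the target MDP. We write $b^{\min}_j(s,a):=\min\{b^K_j(s,a),b^{K,\hyb}_j(s,a)\}$.

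\textbf{Step 1 (variance expansion).} Expand the variance as
\[
\mathbb{V}_{P^{\tar}_{s,a}}(\hat V_{j+1})=P^{\tar}_{s,a}\cdot \hat V_{j+1}^2-\bigl(P^{\tar}_{s,a}\cdot \hat V_{j+1}\bigr)^2 .
\]
Then add and subtract $\hat V_j(s_j)^2$ and $\hat V_{j+1}(s_{j+1})^2$. Taking expectations under $\pi^*$, the terms
\[
P^{\tar}_{s_j,a_j}\cdot \hat V_{j+1}^2-\mathbb{E}\bigl[\hat V_{j+1}(s_{j+1})^2\mid s_j,a_j\bigr]
\]
vanish. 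The terms $\mathbb{E}[\hat V_{j+1}(s_{j+1})^2]-\mathbb{E}[\hat V_j(s_j)^2]$ telescope over $j=h,\dots,H$ to something at most $H^2$. What remains is
\[
\sum_j \mathbb{E}^{\pi^*}\Bigl[\hat V_j(s_j)^2-\bigl(P^{\tar}_{s_j,a_j}\cdot \hat V_{j+1}\bigr)^2\Bigr].
\]
Since $0\le \hat V\le H$, each summand is at most
\[
2H\bigl(\hat V_j(s_j)-P^{\tar}_{s_j,a_j}\cdot\hat V_{j+1}\bigr)_+ .
\]
Here I would keep the reward $r\le 1$ inside, which contributes at most an extra $2H\cdot H$ over the horizon. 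This, together with the truncation at $0$, is where the looser constant $4H^2$ comes from.

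\textbf{Step 2 (pessimistic one-step inequality along $\pi^*$).} The key is to bound $\hat V_j(s)-r(s,a)-P^{\tar}_{s,a}\cdot\hat V_{j+1}$ from the side needed, at $a=\pi^*_j(s)$. Because $\hat V_j(s)=\max_{a'}\max\{\hat Q^{K,\on}_j,\hat Q^{K,\hyb}_j\}$, we have $\hat V_j(s)\ge\hat Q^{\cdot}_j(s,\pi^*_j(s))$ for \emph{both} branches. Hence for whichever branch attains $b^{\min}_j$,
\[
\hat V_j(s)\ \ge\ r+\hat P^{\cdot}_{s,a}\cdot\hat V_{j+1}-b^{\cdot}_j(s,a).
\]
Since the quantity I actually need to control is the difference $\hat V_j-P^{\tar}\cdot\hat V_{j+1}$ in absolute value (it enters through $(x+y)(x-y)$), I would bound both signs:
\begin{itemize}
\item The upper direction uses $\hat V_j(s)\le \max\{\text{both branch Bellman backups}\}$ at the maximizing action. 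This is controlled by the same bonus concentration, using the argmax action for the upper side and $\pi^*$ for the lower side, exactly as in \citet{li2024settling}.
\item For the lower direction, it remains to show $\bigl|(\hat P^{\cdot}_{s,a}-P^{\tar}_{s,a})\cdot \hat V_{j+1}\bigr|\le b^{\cdot}_j(s,a)$ on the good event. This gives $\bigl|\hat V_j-r-P^{\tar}\cdot\hat V_{j+1}\bigr|\le 2b^{\min}_j$, and multiplying by $2H$ yields the $4H\sum d^{\pi^*}b^{\min}$ term.
\end{itemize}

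\textbf{Step 3 (main obstacle: concentration for the data-dependent $\hat V_{j+1}$).} For the online branch, the required bound is the standard Bernstein bound on the subsampled dataset. The two-fold subsampling (Algorithm~\ref{alg:LCB with subsampling}) makes transitions at each $(h,s)$ conditionally independent, and a leave-one-out / $\epsilon$-net argument handles the dependence of $\hat V_{j+1}$ on the data.

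For the hybrid branch, I would decompose
\[
\hat P^{K,\hyb}-P^{\tar}=(\hat P^{K,\hyb}-P^{K,\hyb})+(P^{K,\hyb}-P^{\tar}),
\]
with $P^{K,\hyb}$ the count-weighted mixture from Lemma~\ref{lemma:reg_indpendent_good}. The first piece is handled as in the online case, with sample size $N^K+N^{\src}$; the empirical variance $\mathbb{V}_{\hat P^{K,\hyb}}$ in the bonus dominates the true one up to lower-order terms by Lemma~\ref{lemma_variance}. The second piece equals $\frac{N^{\src}}{N^K+N^{\src}}(P^{\src}-P^{\tar})\cdot\hat V_{j+1}$. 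It is bounded by $\frac{H\nu(s,a)N^{\src}}{N^K+N^{\src}}$ via the valid bias bound, which is precisely the extra bonus term.

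Verifying the independence structure after merging offline source trajectories with online target trajectories is the delicate part. I expect it to require applying the subsampling separately to each source, so that the combined counts remain conditionally independent given $(N^K_{s,a},N^{\src}_{s,a})$.
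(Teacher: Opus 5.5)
The paper gives no argument of its own for this lemma; it imports inequality (131) of \citet{li2024settling}. So your proposal has to stand alone, and it breaks at Step~2.

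\textbf{The gap.} Your Step~1 reduces everything to bounding $\hat V_j(s)^2-(P^{\tar}_{s,a}\cdot\hat V_{j+1})^2\le 2H\,(\hat V_j(s)-P^{\tar}_{s,a}\cdot\hat V_{j+1})_+$ at $a=\pi^*_j(s)$. This needs an \emph{upper} bound on $\hat V_j(s)-r(s,a)-P^{\tar}_{s,a}\cdot\hat V_{j+1}$. The lower bound you derive correctly from $\hat V_j(s)\ge\hat Q^{\cdot}_j(s,\pi^*_j(s))$ is of no use for this telescoping.

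The upper bound you assert does not hold. $\hat V_j(s)$ is attained at $\hat a=\hat\pi_j(s)$, and pessimism plus monotonicity on the good event only give $\hat V_j(s)\le r(s,\hat a)+P^{\tar}_{s,\hat a}\cdot\hat V_{j+1}\le Q^*_j(s,\hat a)\le Q^*_j(s,a)$. That is,
\[
\hat V_j(s)-r(s,a)-P^{\tar}_{s,a}\cdot\hat V_{j+1}\le P^{\tar}_{s,a}\cdot\bigl(V^*_{j+1}-\hat V_{j+1}\bigr),
\]
which is a cumulative future deficit rather than a local bonus. Suppose $\hat V_{j+1}$ is very pessimistic on the successors of $\pi^*_j(s)$ but nearly exact on those of $\hat a$. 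Then the left side is of order $H$ while $b^{\min}_j(s,a)$ can be tiny. So $|\hat V_j-r-P^{\tar}\cdot\hat V_{j+1}|\le 2b^{\min}_j$ is false along $\pi^*$; your ``argmax for the upper side'' step compares against $P^{\tar}_{s,\hat a}$, not $P^{\tar}_{s,\pi^*_j(s)}$. Using the valid bound above instead only gives $2H\sum_j\sum_s d^{\pi^*}_{j+1}(s)(V^*_{j+1}-\hat V_{j+1})(s)\lesssim H^2\sum_j\sum_{s,a}d^{\pi^*}_j(s,a)b^{\min}_j(s,a)$, which is one factor of $H$ too many.

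\textbf{The repair.} The missing idea is to telescope a quantity for which only the available one-sided inequality is needed.
\begin{itemize}
\item Put $U_j:=(H-j+1)-\hat V_j$. Since $\hat P^K$ and $\hat P^{K,\hyb}$ are (sub-)probability vectors, $r\le1$, and the bonuses are nonnegative, we have $0\le\hat V_j\le H-j+1$ deterministically. Hence $0\le U_j\le H$, $U_{H+1}=0$, and $\mathbb{V}_{P^{\tar}}(\hat V_{j+1})=\mathbb{V}_{P^{\tar}}(U_{j+1})$.
\item Your lower inequality $\hat V_j(s)\ge r(s,a)+P^{\tar}_{s,a}\cdot\hat V_{j+1}-2b^{\min}_j(s,a)$ at $a=\pi^*_j(s)$ is correct, including the absorption of $\frac{N^{\src}}{N^K+N^{\src}}(P^{\src}-P^{\tar})\cdot\hat V_{j+1}$ by the bias part of $b^{K,\hyb}$. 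In terms of $U$ it becomes $U_j(s)-P^{\tar}_{s,a}\cdot U_{j+1}\le 1+2b^{\min}_j(s,a)$.
\item Hence $U_j^2-(P^{\tar}\cdot U_{j+1})^2\le 2H(1+2b^{\min}_j)$. The same telescoping as your Step~1 then yields $2H^2+4H\sum_{j\ge h}\sum_{s,a}d^{\pi^*}_j(s,a)b^{\min}_j(s,a)$, which is the claimed bound.
\end{itemize}
Alternatively, write $\hat V=V^*-\Delta$ with $\Delta\ge0$. Bound $\mathbb{V}(V^*)$ by the total-variance identity for $\pi^*$, and telescope $\Delta^2$ using $\Delta_j\le P^{\tar}\cdot\Delta_{j+1}+2b^{\min}_j$; this works at the price of worse constants.

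\textbf{Step 3.} This is fine in spirit, since the paper simply posits the needed concentration as a good event stated directly with the empirical variance. Note, however, that Lemma~\ref{lemma_variance} compares two random variables on the same probability space. It is not what relates $\mathbb{V}_{\hat P^{K,\hyb}}$ to $\mathbb{V}_{P^{K,\hyb}}$.
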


\begin{lemma}[Good events~\citep{li2024settling,NEURIPS2021_e61eaa38}] Let $L=\log(SAKH/\delta)$, with probability at least $1-\delta$, we have
\begin{align}
    |(\hat{P}^{K,\hyb}-P^{K,\hyb})\cdot (\hat{V}_{h+1})|&\leq \sqrt{\frac{48\mathbb{V}_{\hat{P}^{K,\hyb}}(\hat{V}_{h+1})}{N^K_{s,a}+N^\src_{s,a}}L}+\frac{48HL}{N^K_{s,a}+N^\src_{s,a}},\nonumber\\
    \mathbb{V}_{\hat{P}^{K,\hyb}}(\hat{V}_{h+1})&\leq 2\mathbb{V}_{P^{K,\hyb}}(\hat{V}_{h+1}) +\frac{5H^2L}{3(N^K_{s,a}+N^\src_{s,a})}\nonumber\\
   \frac{1}{N^\src_{h,s,a}} &\leq \frac{8L}{N d^{\pi^b}_h(s,a)},\nonumber
\end{align}
where $P^{K,\hyb}_{s,a}:=\frac{P^\tar N^K_{s,a}+P^\src N^\src_{s,a}}{N^K_{s,a}+N^\src_{s,a}}$.
    
\end{lemma}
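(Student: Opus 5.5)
The plan is to prove the three inequalities separately, conditioning throughout on the visitation counts produced by the two-fold subsampling of Algorithm~\ref{alg:LCB with subsampling}. That subsampling step is what lets us treat the retained transitions from each $(s,a)$ as independent draws. Conditioned on $\{N^K_{s,a},N^\src_{s,a}\}$, the $N^K_{s,a}$ online next-states are i.i.d.\ from $P^\tar_{s,a}$ and the $N^\src_{s,a}$ offline next-states are i.i.d.\ from $P^\src_{s,a}$, and the two groups are independent of each other. Hence $\hat P^{K,\hyb}_{s,a}$ is an average of $n:=N^K_{s,a}+N^\src_{s,a}$ independent, non-identically distributed indicator vectors whose mean is exactly $P^{K,\hyb}_{s,a}$. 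This is the only structural fact we need from the shifted setting; after it, everything reduces to the identical-environment arguments of \citet{NEURIPS2021_e61eaa38,li2024settling}.

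For the first inequality, fix first a deterministic $V\in[0,H]^S$ and apply Bernstein's inequality to the sum of the $n$ independent terms $V(s'_i)-\E[V(s'_i)]$. The sum of the per-sample variances is
\[
N^K\,\mathbb{V}_{P^\tar}(V)+N^\src\,\mathbb{V}_{P^\src}(V).
\]
By the law of total variance for the mixture $P^{K,\hyb}$, this is at most $n\,\mathbb{V}_{P^{K,\hyb}}(V)$. That gives a Bernstein bound in terms of $\mathbb{V}_{P^{K,\hyb}}(V)$ with a $HL/n$ remainder.

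The second inequality is the empirical-variance comparison. I would prove it the same way, by concentrating $\hat P^{K,\hyb}\cdot V^2$ and $\hat P^{K,\hyb}\cdot V$ around their means using the independent-sum Bernstein bound again (as in Lemma~\ref{lemma:reg_indpendent_good}). Combining it with the Bernstein bound above and $\sqrt{ab}\le a+b$ lets us replace the true variance by the empirical one, at the price of the constants $48$ and $5/3$.

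The main obstacle is that $\hat V_{h+1}$ is not fixed: it is computed from the same data. My first attempt is the leave-one-out decoupling of \citet{li2024settling}. For each $(s,a)$ and each scalar $u$ in a finite grid of $[0,H]$, build an auxiliary MDP in which the transitions out of $(s,a)$ are replaced by an absorbing state with value $u$. Run MAX-LCB-VI on it to get a value $\hat V^{(s,a),u}_{h+1}$ that is independent of the samples at $(s,a)$. Apply the fixed-$V$ bounds to it, union bound over the grid (size $\mathrm{poly}(KH)$, absorbed into $L$), and then transfer to $\hat V_{h+1}$ using the fact that the max-of-two-LCB Bellman operator is monotone and $1$-Lipschitz in $\ell_\infty$. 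Choosing $u$ near the true value at $(s,a)$ makes the discretization error $O(H/n)$, which is absorbed into the second-order term. If the max-over-branches makes the Lipschitz step delicate, the fallback is an $\epsilon$-net over $[0,H]^S$. That costs an extra $S$ inside $L$, which is harmless for the $\tilde{\mathcal O}$ statements.

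The third inequality comes from the offline data alone. The $N$ source trajectories are i.i.d.\ under $(\pi^b,P^\src)$, so $N^\src_{h,s,a}\sim\mathrm{Bin}(N,d^{\pi^b,P^\src}_h(s,a))$. Lemma~\ref{lemma:bino} applied with failure probability $\delta/(SAH)$ gives $d_h/N^\src_{h,s,a}\le 8\log(SAH/\delta)/N\le 8L/N$, which rearranges to the stated bound. The subsampling halves $N$, which only changes constants. A final union bound over $(h,s,a)$ and the three events, with $\delta$ rescaled, gives overall probability at least $1-\delta$.
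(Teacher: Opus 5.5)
\paragraph{Where the proposal breaks.} Your fixed-$V$ ingredients are sound. Given independent samples, Bernstein's inequality combined with the law-of-total-variance bound $N^K_{s,a}\Var_{P^\tar_{s,a}}(V)+N^\src_{s,a}\Var_{P^\src_{s,a}}(V)\le n\Var_{P^{K,\hyb}_{s,a}}(V)$, where $n:=N^K_{s,a}+N^\src_{s,a}$, is the right reduction. The third inequality follows from Lemma~\ref{lemma:bino} as you describe. The gap is exactly the step you flag, and neither of your fixes closes it.

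For context, the paper only cites \citet{li2024settling,NEURIPS2021_e61eaa38}. In the finite-horizon analysis of \citet{li2024settling}, independence comes from estimating a separate kernel $\hat P_h$ per step from i.i.d.\ trajectories, so $\hat V_{h+1}$ uses only later-step samples. That is not available for the pooled $\hat P^{K,\hyb}_{s,a}$ used here.

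\paragraph{The leave-one-out step.} Because the same $\hat P_{s,a}$ enters every step, $\hat V_{h+1}$ depends on the $(s,a)$-samples through all the backups $\hat Q_{h'}(s,a)$, $h'=h+1,\dots,H$. These take genuinely different values: e.g.\ $\hat Q_H(s,a)\le 1$, while $\hat Q_{h+1}(s,a)$ can be of order $H-h$.
\begin{itemize}
\item An auxiliary MDP that sends $(s,a)$ to an absorbing state of a single value $u$ cannot reproduce $\hat V_{h+1}$. The mismatch $\max_{h'>h}|\hat Q_{h'}(s,a)-u|$ is at least half the spread of these values, which can be of order $H$. So it cannot be made $O(H/n)$.
\item An exact construction must freeze a vector $u\in[0,H]^{H-h}$. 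A grid of resolution $H/n$ then has $n^{H-h}$ points, and the union bound replaces $L$ by roughly $HL$. That is an extra $\sqrt H$ in the leading term.
\item Your fallback is worse than you state. An $\epsilon$-net of $[0,H]^S$ has log-cardinality $S\log(H/\epsilon)$, so it multiplies $L$ by $S$. This is a polynomial loss, not a logarithmic one: it adds $\sqrt S$ to the leading term and would propagate into Theorem~\ref{thm:bpi_independent}.
\end{itemize}
Either way you do not obtain the inequalities with $L=\log(SAKH/\delta)$. The same issue hits the second inequality, which is also applied to the data-dependent $\hat V_{h+1}$.

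\paragraph{The independence premise.} The two-fold subsampling guarantee of \citet{li2024settling} is proved for i.i.d.\ trajectories from a fixed behavior policy. Here $\Dc_K$ is generated adaptively by MIN-UCB-VI. Splitting it into halves and trimming guarantees neither that the auxiliary count lower-bounds the main count, nor that the retained counts are independent of the retained samples.

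The leave-one-out value is a function of the counts and empirical kernels at all other pairs, and it must be independent of the samples at $(s,a)$. Under an adaptive policy, those counts depend on what was observed at $(s,a)$. For a fixed $V$ you can repair the online part with the usual per-pair sample-tape argument and a union bound over $N^K_{s,a}\in[KH]$. That does not deliver the decoupling for $\hat V_{h+1}$.

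Closing the gap needs either of the following:
\begin{itemize}
\item a structural change, for instance per-step kernels as in \citet{li2024settling}, with the online data for the final MAX-LCB-VI call collected by a fixed policy;
\item a weaker statement, in which $(\hat P^{K,\hyb}-P^{K,\hyb})\cdot\hat V_{h+1}$ is split into a Bernstein term for the fixed $V^*_{h+1}$ plus a remainder controlled by Lemma~\ref{lemma:reg_indpendent_good_two}.
\end{itemize}

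\paragraph{Two smaller slips.} To replace $\Var_{P^{K,\hyb}}$ by $\Var_{\hat P^{K,\hyb}}$ in the first inequality, you need the reverse comparison $\Var_{P}(V)\le 2\Var_{\hat P}(V)+O(H^2L/n)$, not the second inequality. Also, with a variance-dependent bonus the max-of-two-LCB operator is not $1$-Lipschitz in $\ell_\infty$, so your transfer step needs a separate monotonicity argument.
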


\begin{lemma}[Monotonicity for MAX-VI-LCB]
    \begin{align}
        \hat{V}_h(s)\leq V^{\hat{\pi}}_h(s)\leq V^*_h(s).\nonumber
    \end{align}
    
    \begin{proof}
    We first note that $V^{\hat{\pi}}_h(s)\leq V^*_h(s)$ holds by the definition of $V^*_h(s)$. Thus below we aim to prove that $  \hat{V}_h(s)\leq V^{\hat{\pi}}_h(s)$.
        \begin{align}
         V^{\hat{\pi}}_h(s)-      \hat{V}_h(s)=r(s,a)+P^{\tar}\cdot V^{\hat{\pi}}_{h+1}-\max\{\hat{Q}^{K}_h(s,a),\hat{Q}^{K,\hyb}_h(s,a)\}.\nonumber
        \end{align}

        Below we consider the situation that $\hat{Q}^{K,\hyb}_h(s,a)\geq \hat{Q}^{K}_h(s,a)$.
        \begin{align}
          &\quad  V^{\hat{\pi}}_h(s)-      \hat{V}_h(s)\nonumber\\
          &=P^{\tar}\cdot V^{\hat{\pi}}_{h+1}-\hat{P}^{K,\hyb}\cdot \hat{V}_{h+1}+b^{K,\hyb}_h(s,a)\nonumber\\
             &= P^{\tar}\cdot(V^{\hat{\pi}}_{h+1}-\hat{V}_{h+1})+(P^{\tar}-\hat{P}^{K,\hyb})\cdot \hat{V}_{h+1}+b^{K,\hyb}_h(s,a)\nonumber\\
              &\geq (P^{\tar}-\hat{P}^{K,\hyb})\cdot \hat{V}_{h+1}+b^{K,\hyb}_h(s,a)\nonumber\\
              &=(P^{\tar}-\hat{P}^{K,\hyb})\cdot \hat{V}_{h+1}+ c_1\sqrt{\frac{\mathbb{V}_{\hat{P}^{K,\hyb}}(\hat{V}_{h+1})}{N^K_{s,a}+N^{\src}_{s,a}}}+\frac{c_2HL}{N^K_{s,a}+N^{\src}_{s,a}}+\frac{H\nu(s,a)N^{\src}_{s,a}} {N^K_{s,a}+N^{\src}_{s,a}}\nonumber\\
 & =\left(\frac{P^\tar N^K_{s,a}+P^\src N^\src_{s,a}}{N^K+N^\src}-\frac{\hat{P}^\tar N^K_{s,a}+\hat{P}^\src N^\src_{s,a}}{N^K+N^\src}\right)\cdot \hat{V}_{h+1}+c_1\sqrt{\frac{\mathbb{V}_{\hat{P}^{K,\hyb}}(\hat{V}_{h+1})}{N^K_{s,a}+N^{\src}_{s,a}}}+\frac{c_2HL}{N^K_{s,a}+N^{\src}_{s,a}}\nonumber\\
 &\quad -\frac{N^\src(P^\tar-P^\src)\cdot \hat{V}_{h+1}}{N^K+N^\src}+\frac{H\nu(s,a)N^{\src}_{s,a}} {N^K_{s,a}+N^{\src}_{s,a}}\nonumber\\
  & \geq \left(\frac{P^\tar N^K_{s,a}+P^\src N^\src_{s,a}}{N^K+N^\src}-\frac{\hat{P}^\tar N^K_{s,a}+\hat{P}^\src N^\src_{s,a}}{N^K+N^\src}\right)\cdot \hat{V}_{h+1}+c_1\sqrt{\frac{\mathbb{V}_{\hat{P}^{K,\hyb}}(\hat{V}_{h+1})}{N^K_{s,a}+N^{\src}_{s,a}}}+\frac{c_2HL}{N^K_{s,a}+N^{\src}_{s,a}}\nonumber\\
 &\quad -\frac{N^\src}{N^K+N^\src}(P^\tar-P^\src)\cdot \hat{V}_{h+1}+\frac{\Vert\hat{V}_{h+1} \Vert_{\infty} \Vert P^\tar-P^\src \Vert_1} {N^K_{s,a}+N^{\src}_{s,a}}\nonumber\\
              &\geq 0,\nonumber
        \end{align}
      where  the last inequality holds by $\Vert\hat{V}_{h+1} \Vert_{\infty} \Vert P^\tar-P^\src \Vert_1\geq (P^\tar-P^\src)\cdot \hat{V}_{h+1}$.
    \end{proof}
\end{lemma}

\begin{lemma}[Performance decomposition for VI-LCB]
\begin{align}
    \sum_{s\in\mathcal{S}}d^{\pi^*}_h(s)(V^*_h(s)-\hat{V}_h(s))\leq 2\sum_{h'=h}^H\sum_{s,a \in \State \times \Action}d^{\pi^*}_{h'}(s,a)\min\left\{b^K_{h'}(s,a), b^{K,\hyb}_{h'}(s,a)+H\nu(s,a)\right\}.\nonumber
\end{align}

 \begin{proof}
 Based on the definition of $\hat{V}_h(s)$ we have
       \begin{align}
           &\quad \sum_{s\in\mathcal{S}}d^{\pi^*}_h(s)(V^*_h(s)-\hat{V}_h(s))\nonumber\\
           &\leq \sum_{s\in\mathcal{S}}d^{\pi^*}_h(s)(V^*_h(s)-\max_a \max\{\hat{Q}^K_h(s,a),\hat{Q}^{K,\hyb}_h(s,a)\})\nonumber\\
           &\leq  \sum_{s\in\mathcal{S}}d^{\pi^*}_h(s)(V^*_h(s)- \max\{\hat{Q}^K_h(s,\pi^*(s)),\hat{Q}^{K,\hyb}_h(s,\pi^*(s))\})\nonumber\\
           &=\sum_{h'=h}^H\sum_{s,a}d^{\pi^*}_{h'}(s,a)r(s,a)-\sum_{s}d^{\pi^*}_{h}(s)\max\{\hat{Q}^K_h(s_1,\pi^*(s)),\hat{Q}^{K,\hyb}_h(s_1,\pi^*(s))\}.\nonumber
       \end{align}

       Then we have 
       \begin{align}
             &\quad \sum_{s\in\mathcal{S}}d^{\pi^*}_h(s)(V^*_h(s)-\hat{V}_h(s))\nonumber\\&\leq \sum_{h'=h}^H\sum_{s,a} d^{\pi^*}_{h'}(s,a) r(s,a)-\sum_{h'=h}^H\sum_{s,a,s'}\Bigg(d^{\pi^*}_{h'}(s,a) \max\{\hat{Q}^K_{h'}(s,\pi^*(s)),\hat{Q}^{K,\hyb}_{h'}(s,\pi^*(s))\}\nonumber\\
             &\quad -d^{\pi^*}_{h'+1}(s)\hat{V}_{h'+1}(s')\Bigg)\notag \\
           &= \sum_{h'=h}^H\sum_{s,a} d^{\pi^*}_{h'}(s,a) r(s,a)-\sum_{h'=h}^H\sum_{s,a,s'}d^{\pi^*}_{h'}(s,a) \Bigg( \max\{\hat{Q}^K_{h'}(s,\pi^*(s)),\hat{Q}^{K,\hyb}_{h'}(s,\pi^*(s))\}\notag\\
           &\quad-[P^{\tar}\cdot\hat{V}_{h'+1}](s,a)\Bigg)\nonumber\\
           &=\sum_{h'}\sum_{s,a}d^{\pi^*}_{h'}(s,a)\left(r(s,a)+[P^{\tar}\cdot\hat{V}_{h'+1}](s,a)-\max\{\hat{Q}^K_{h'}(s,\pi^*(s)),\hat{Q}^{K,\hyb}_{h'}(s,\pi^*(s))\}\right)\nonumber\\
        &=\sum_{h'}\sum_{s,a}d^{\pi^*}_{h'}(s,a)\Bigg(\min\Big\{b^{K}_{h'}(s,a)+[(P^{\tar}-\hat{P}^K)\cdot\hat{V}_{h'+1}](s,a),b^{K,\hyb}_{h'}(s,a)\nonumber\\
        &\quad +[(P^{\tar}-\hat{P}^{K,\hyb})\cdot\hat{V}_{h'+1}](s,a)\Big\}\Bigg)\label{lemma_14_1}.
       \end{align}
           
 We notice that $b^{K,\hyb}_h(s,a)+[(P^{\tar}-\hat{P}^{K,\hyb})\cdot\hat{V}_{h'+1}](s,a)$ in~\eqref{lemma_14_1} can be bounded by
 \begin{equation}
     \begin{aligned}
  &\quad b^{K,\hyb}_h(s,a)+[(P^{\tar}-\hat{P}^{K,\hyb})\cdot\hat{V}_{h'+1}](s,a) \notag\\
  &=b^{K,\hyb}_h(s,a)+[(P^{\tar}-P^{K,\hyb}+P^{K,\hyb}-\hat{P}^{K,\hyb})\cdot\hat{V}_{h'+1}](s,a) \nonumber\\
 &\leq 2b^{K,\hyb}_{h'}(s,a)+H\nu(s,a).\nonumber
     \end{aligned}
 \end{equation}

 Thus we have
 \begin{equation}
     \begin{aligned}
         \sum_{s\in\mathcal{S}}d^{\pi^*}_h(s)(V^*_h(s)-\hat{V}_h(s)) &\leq 2\sum_{h'}\sum_{s,a}d^{\pi^*}_{h'}(s,a)\min\left\{b^K_{h'}(s,a),b^{K,\hyb}_{h'}(s,a)+H\nu(s,a)\right\}.\nonumber
     \end{aligned}
 \end{equation}
 \end{proof}
 \end{lemma}

\begin{proof} Then we move to prove the sub-optimality gap upper bound.
\begin{align}
    V^*_1-V^{\hat{\pi}}_1&\leq   V^*_1-\hat{V}_1\leq 2\min\left\{\mathbb{E}^{\pi^*}\left[\sum_{h=1}^H b^K_h(s,a)\right], \mathbb{E}^{\pi^*}\left[\sum_{h=1}^H b^{K,\hyb}_h(s,a)\right]+H^2\nu(s,a)\right\}.\nonumber
\end{align}

Below we focus on the term $    \mathbb{E}^{\pi^*}\left[\sum_{h=1}^H b^{K,\hyb}_h(s,a)\right]$.
\begin{align}
     \mathbb{E}^{\pi^*}\left[\sum_{h=1}^H b^{K,\hyb}_h(s,a)\right]=\sum_{s,a,h}d^{\pi^*}_h(s,a)\left( \sqrt{\frac{c\mathbb{V}_{\hat{P}^{K,\hyb}}(\hat{V}_{h+1})L}{N^K_{s,a}+N^{\src}_{s,a}}}+\frac{c HL}{N^K_{s,a}+N^{\src}_{s,a}}+\frac{\nu(s,a) H N^{\src}_{s,a}}{N^K_{s,a}+N^{\src}_{s,a}}\right).\nonumber
\end{align}

We first upper bound the third term following
\begin{align}
    \sum_h\sum_{s,a}d^{\pi^*}_h(s,a)\frac{\nu(s,a) H N^{\src}_{s,a}}{N^K_{s,a}+N^{\src}_{s,a}}\leq \sum_h\sum_{s,a}d^{\pi^*}_h(s,a)\nu(s,a) H\leq H^2\nu_{\max},\nonumber
\end{align}
where $\nu_{\max}:=\max_{s,a}\nu(s,a)$.

Then we focus on the second term. We notice that
\begin{align}
    \sum_h\sum_{s,a}d^{\pi^*}_h(s,a) \frac{HL}{N^K_{s,a}+N^{\src}_{s,a}}\leq \sum_h\sum_{s,a}d^{\pi^*}_h(s,a)   \frac{ HL}{N^{\src}_{s,a}}=c\sum_h\sum_{s,a}d^{\pi^*}_h(s,a)   \frac{HL}{\sum_h N^{\src}_{s,a,h}}.\nonumber
\end{align}

Under the good event we have $\frac{1}{N^{\src}_{s,a,h}} \leq \frac{L}{N^\src d^{\pi^b}_h(s,a)}$.

Thus we have
\begin{align}
     \sum_h\sum_{s,a}d^{\pi^*}_h(s,a) \frac{ HL}{N^K_{s,a}+N^{\src}_{s,a}}&\leq \frac{HL^2}{N^{\src}}\sum_{s,a}   \frac{\sum_hd^{\pi^*}_h(s,a)}{\sum_h d^{\pi^b}_h(s,a) }\notag\\
     &=\frac{HL^2}{N^{\src}}\sum_{s,a} \mathbbm{1}\{\pi^*(s)=a\}  \frac{\sum_hd^{\pi^*}_h(s,a)}{\sum_h d^{\pi^b}_h(s,a) } \nonumber\\
     &\leq \frac{HL^2 C^*_{\nu}S}{N^\src}.\nonumber
\end{align}

We also have
\begin{align}
       \sum_h\sum_{s,a}d^{\pi^*}_h(s,a) \frac{HL}{N^K_{s,a}+N^{\src}_{s,a}}\leq  \frac{HL}{K}\sum_{k,h}\sum_{s,a}d^{\pi^k}_h(s,a) \frac{1}{N^k_{s,a}}=\frac{HL}{K}\sum_{s,a}\sum_{n=1}^{N^k_{s,a}}\frac{1}{n}\leq \frac{HL^2SA}{K}.\nonumber
\end{align}

Thus this term can be bounded as 
\begin{align}
        \sum_h\sum_{s,a}d^{\pi^*}_h(s,a) \frac{c HL}{N^K_{s,a}+N^{\src}_{s,a}}\leq \frac{2cHL^2S}{ K/A+N^\src/C^*_{\nu}}.\nonumber
\end{align}

Thus, we have
\begin{align}
    \mathbb{E}^{\pi^*}\left[\sum_{h=1}^H b^{K,\hyb}_h(s,a)\right]
&\leq c\sum_h\sum_{s,a}d^{\pi^*}_h(s,a)\sqrt{\frac{\mathbb{V}_{\hat{P}^{K,\hyb}}(\hat{V}_{h+1})L}{N^K_{s,a}+N^{\src}_{s,a}}}+\frac{2cHL^2S}{ K/A+N^\src/C^*_{\nu}}+\nu_{\max} H^2.\nonumber
\end{align}

Then we move to bound the first term. We first observe that 
\begin{align}
    \mathbb{V}_{\hat{P}^{K,\hyb}}(\hat{V}_{h+1})\leq 2\mathbb{V}_{P^{\tar}}(\hat{V}_{h+1})+\frac{5H^2L}{3(N^K_{s,a}+N^{\src}_{s,a})}+6H^2\nu_{\max}.\nonumber
\end{align}

The reason is as follows. Based on the good event, we have
\begin{align}
&\quad\mathbb{V}_{\hat{P}^{K,\hyb}}(\hat{V}_{h+1})\nonumber\\
&\leq 2  \mathbb{V}_{P^{K,\hyb}}(\hat{V}_{h+1})+\frac{5H^2L}{3(N^K_{s,a}+N^{\src}_{s,a})}\nonumber\\
  &=2 \mathbb{V}_{P^{\tar}}(\hat{V}_{h+1})+2\left( \mathbb{V}_{P^{K,\hyb}}(\hat{V}_{h+1})-\mathbb{V}_{P^{\tar}}(\hat{V}_{h+1})\right)+\frac{5H^2L}{3(N^K_{s,a}+N^{\src}_{s,a})}\nonumber\\
    &=2 \mathbb{V}_{P^{\tar}}(\hat{V}_{h+1})+2\left( P^{K,\hyb}\cdot(\hat{V}_{h+1})^2-(P^{K,\hyb}\cdot\hat{V}_{h+1})^2-P^{\tar}\cdot(\hat{V}_{h+1})^2+(P^{\tar}\cdot\hat{V}_{h+1})^2\right)\notag\\
    &\quad +\frac{5H^2L}{3(N^K_{s,a}+N^{\src}_{s,a})}\nonumber\\
  &\leq 2 \mathbb{V}_{P^{\tar}}(\hat{V}_{h+1})+\frac{5H^2L}{3(N^K_{s,a}+N^{\src}_{s,a})}+6H^2\nu_{\max}.\nonumber
\end{align}


Thus we have
\begin{align}
   \sqrt{ \frac{\mathbb{V}_{\hat{P}^{K,\hyb}}(\hat{V}_{h+1})L}{N^K_{s,a}+N^{\src}_{s,a}}}\nonumber&\leq    \sqrt{ \frac{2\mathbb{V}_{P^{\tar}}(\hat{V}_{h+1})L}{N^K_{s,a}+N^{\src}_{s,a}}}+\sqrt{\frac{5H^2L^2}{3(N^K_{s,a}+N^{\src}_{s,a})^2}}+\sqrt{H\nu_{\max} \cdot\frac{6H\nu_{\max} L}{N^K_{s,a}+N^{\src}_{s,a}}}\nonumber\\
   &\leq    \sqrt{ \frac{2\mathbb{V}_{P^{\tar}}(\hat{V}_{h+1})L}{N^K_{s,a}+N^{\src}_{s,a}}}+\frac{2\sqrt{2}HL}{N^K_{s,a}+N^{\src}_{s,a}}+H\nu_{\max}.\nonumber\\
\end{align}

Then we move to upper-bound the term$\sum_h\sum_{s,a}d^{\pi^*}_h(s,a)\sqrt{\frac{\mathbb{V}(\hat{V}_{h+1})L}{N^K_{s,a}+N^{\src}_{s,a}}}$. First we notice that this term can be bounded as follows,
\begin{align}
 \sum_h\sum_{s,a}d^{\pi^*}_h(s,a)\sqrt{\frac{\mathbb{V}(\hat{V}_{h+1})L}{N^K_{s,a}+N^{\src}_{s,a}}}\nonumber
 &\leq      \sum_h\sum_{s,a}d^{\pi^*}_h(s,a)\sqrt{\frac{\mathbb{V}(\hat{V}_{h+1})L}{N^{\src}_{s,a}}}\nonumber\\
    &=\sum_h\sum_{s,a}d^{\pi^*}_h(s,a)\sqrt{\frac{\mathbb{V}(\hat{V}_{h+1})L}{\sum_hN^{\src}_{s,a,h}}}\nonumber\\
    &\leq \sum_h\sum_{s,a}d^{\pi^*}_h(s,a)\sqrt{\frac{\mathbb{V}(\hat{V}_{h+1})L^2}{N^{\src}\sum_h d_h^{\pi_b}(s,a)}}\label{eq:bino_1}\\
    &=\sqrt{\frac{L^2}{N^\src}}\sum_h\sum_{s,a}d^{\pi^*}_h(s,a)\sqrt{\frac{\mathbb{V}(\hat{V}_{h+1})}{\sum_h d_h^{\pi_b}(s,a)}}\nonumber\\
     &=\sqrt{\frac{L^2}{N^\src}}\sum_h\sum_{s,a}\sqrt{d^{\pi^*}_h(s,a)\mathbb{V}(\hat{V}_{h+1})}\sqrt{\frac{d^{\pi^*}_h(s,a)}{\sum_h d_h^{\pi_b}(s,a)}}\nonumber\\
    &\leq \sqrt{\frac{L^2}{N^\src}}\sum_{s,a}\sqrt{\sum_h d^{\pi^*}_h(s,a)\mathbb{V}(\hat{V}_{h+1})}\sqrt{\frac{\sum_h d^{\pi^*}_h(s,a)}{\sum_h d_h^{\pi_b}(s,a)}},\nonumber
\end{align}
where~\eqref{eq:bino_1} holds by Lemma~\ref{lemma:bino}, and the last inequality holds by the Cauchy–Schwarz inequality. 

Since $\frac{\sum_h d^{\pi^*}_h(s,a)}{\sum_h d_h^{\pi_b}(s,a)}\leq \max_h \frac{d^{\pi^*}_h(s,a)}{d^{\pi_b}_h(s,a)}\leq\max_{h,s,a}\frac{d^{\pi^*}_h(s,a)}{d^{\pi_b}_h(s,a)} =C^*_{\nu}$, we then have
\begin{align}\sum_h\sum_{s,a}d^{\pi^*}_h(s,a)\sqrt{\frac{\mathbb{V}(\hat{V}_{h+1})L}{N^K_{s,a}+N^{\src}_{s,a}}}  \nonumber
&\leq   \sqrt{\frac{L^2C^*_{\nu}}{N^\src}}\sum_{s,a}\sqrt{\sum_h d^{\pi^*}_h(s,a)\mathbb{V}(\hat{V}_{h+1})}\nonumber\\
&= \sqrt{\frac{L^2C^*_{\nu}}{N^\src}}\sum_{s,a}\sqrt{\sum_h 1\{\pi^*(s)=a\}d^{\pi^*}_h(s,a)\mathbb{V}(\hat{V}_{h+1})}\nonumber\\
  &  \leq \sqrt{\frac{L^2C^*_{\nu}}{N^\src}}\sqrt{\sum_{s,a} 1\{\pi^*(s)=a\}}\sqrt{\sum_{s,a}\sum_h d^{\pi^*}_h(s,a)\mathbb{V}(\hat{V}_{h+1})}\nonumber\\
    &  =\sqrt{\frac{L^2C^*_{\nu}S}{N^\src}}\sqrt{\sum_{s,a}\sum_h d^{\pi^*}_h(s,a)\mathbb{V}(\hat{V}_{h+1})},\nonumber
\end{align}
where the first equality holds since we assume $\pi^*$ is deterministic.

According to Lemma~\ref{lemma:variance_offline}, we know $\sum_{h=1}^H\mathbb{E}^{\pi^*}[\mathbb{V}(\hat{V}_{h+1})]\leq 4H^2+4H\mathbb{E}^{\pi^*}[\min\{b_h^{K},b_h^{K,\hyb}\}]$. Thus,
\begin{align}
\sum_h\sum_{s,a}d^{\pi^*}_h(s,a)\sqrt{\frac{\mathbb{V}(\hat{V}_{h+1})L}{N^K_{s,a}+N^{\src}_{s,a}}}  \nonumber
    &\leq \sqrt{\frac{L^2 C^*_{\nu}S}{N^\src}}\sqrt{4H^2+4H\mathbb{E}^{\pi^*}\left[\sum_h b^{K,\hyb}_h\right]}\nonumber\\
    &\leq 2\sqrt{\frac{C^*_{\nu}SH^2L^2}{N^\src}}+\sqrt{\frac{4HC^*_{\nu}SL^2}{N^\src}\mathbb{E}^{\pi^*}\left[\sum_h b^{K,\hyb}_h\right]}\nonumber\\
&\leq 2\sqrt{\frac{C^*_{\nu}SH^2L^2}{N^\src}}+\frac{8C^*_{\nu}HSL^2}{N^\src}+\frac{1}{2}\mathbb{E}^{\pi^*}\left[\sum_h b^{K,\hyb}_h\right].\nonumber
\end{align}

We  also know that the term $\sum_h\sum_{s,a}d^{\pi^*}_h(s,a)\sqrt{\frac{\mathbb{V}(\hat{V}_{h+1})L}{N^K_{s,a}+N^{\src}_{s,a}}}$ can be upper bounded by the following way
\begin{align}
\sum_h\sum_{s,a}d^{\pi^*}_h(s,a)\sqrt{\frac{\mathbb{V}(\hat{V}_{h+1})L}{N^K_{s,a}+N^{\src}_{s,a}}}&\leq      \sum_h\sum_{s,a}d^{\pi^*}_h(s,a)\sqrt{\frac{\mathbb{V}(\hat{V}_{h+1})L}{N^{K}_{s,a}}}\nonumber\\
      &\leq \sqrt{\sum_{s,a}\sum_h d^{\pi^*}_h(s,a)\mathbb{V}(\hat{V}_{h+1})}\sqrt{\sum_h\sum_{s,a}d^{\pi^*}_h(s,a)\frac{L}{N^K_{s,a}}}\nonumber\\
      &\leq \sqrt{4H^2+4H\mathbb{E}^{\pi^*}\left[\sum_h b^{K,\hyb}_h\right]}\sqrt{\sum_h\sum_{s,a}d^{\pi^*}_h(s,a)\frac{L}{N^K_{s,a}}}\nonumber\\
      &=\sqrt{4H^2+4H\mathbb{E}^{\pi^*}\left[\sum_h b^{K,\hyb}_h\right]}\sqrt{\frac{1}{K} \sum_{k,h}\sum_{s,a}d^{\pi^*}_h(s,a)\frac{L}{N^K_{s,a}}}\nonumber\\
      &\leq \sqrt{4H^2+4H\mathbb{E}^{\pi^*}\left[\sum_h b^{K,\hyb}_h\right]}\sqrt{\frac{1}{K} \sum_{k,h}\sum_{s,a}d^{\pi^k}_h(s,a)\frac{L}{N^K_{s,a}}}\nonumber\\
      & =\sqrt{4H^2+4H\mathbb{E}^{\pi^*}\left[\sum_h b^{K,\hyb}_h\right]}\sqrt{\frac{L}{K} \sum_{s,a}\sum_{n=1}^{N^k_{s,a}}\frac{1}{n}}\nonumber\\
      &\leq \sqrt{4H^2+4H\mathbb{E}^{\pi^*}\left[\sum_h b^{K,\hyb}_h\right]}\sqrt{\frac{SAL^2}{K} }\nonumber\\
&\leq 2\sqrt{\frac{H^2SAL^2}{K}}+\frac{8HSAL^2}{K}+\frac{1}{2}\mathbb{E}^{\pi^*}\left[\sum_h b^{K,\hyb}_h\right].\nonumber
\end{align}

Altogether, we have
\begin{align}
    \sum_h\sum_{s,a}d^{\pi^*}_h(s,a)\sqrt{\frac{2\mathbb{V}(\hat{V}_{h+1})L}{N^K_{s,a}+N^{\src}_{s,a}}}\leq 4\sqrt{\frac{SH^2}{K/A+N^\src/C^*_{\nu}}}+\frac{32HSL^2}{N^\src/C^*_{\nu}+K/A}+\frac{1}{2}\mathbb{E}^{\pi^*}\left[\sum_h b^{K,\hyb}_h\right]\nonumber
\end{align}

To sum up, we have
\begin{align}
    V_1^*-V_1^{\hat{\pi}}\leq \Tilde{\mathcal{O}}\left( \min\left\{ \sqrt{\frac{H^2SA}{K}}, \sqrt{\frac{SH^2}{K/A+N^{\src}/C^*_{\nu}}}+\nu_{\max}H^2\right\}\right).\nonumber
\end{align}

\end{proof}

\section{Proof for Instance-Dependent Sub-optimality Gap Upper Bound}\label{sec:bpi_dependent_proof}
In this subsection, we provide the proof for the instance-dependent sub-optimality gap upper bound.

We first define some quantities frequently used below. We define 
\begin{align}
\Delta_h(s,a)
&:=V_h^*(s)-Q_h^*(s,a),
\qquad \Delta_{\min}
:=
\min_{\substack{h,s,a:\\\Delta_h(s,a)>0}}
\Delta_h(s,a)>0.
\nonumber
\end{align}

Recall the pessimistic Bellman backups defined in our  devised MAX-LCB-VI algorithm:
\begin{align}
\widehat Q_h^{K,\rm on}(s,a)
&=
\left[
r(s,a)
+\widehat P^K_{s,a}\widehat V_{h+1}
-b_h^{K,\rm on}(s,a)
\right]_+,
\nonumber\\
\widehat Q_h^{K,\rm hyb}(s,a)
&=
\left[
r(s,a)
+\widehat P^{K,\rm hyb}_{s,a}\widehat V_{h+1}
-b_h^{K,\rm hyb}(s,a)
\right]_+,
\nonumber\\
\widehat Q_h(s,a)
&=
\max\left\{
\widehat Q_h^{K,\rm on}(s,a),
\widehat Q_h^{K,\rm hyb}(s,a)
\right\},
\nonumber
\end{align}

Following the deficit technique used in~\citet{wang2022gap}, 
we define the reward of an imaginary MDP with target transition kernel by
\begin{align}
\widetilde r_h(s,a)
:=
\widehat Q_h(s,a)
-
P^{\rm tar}_{s,a}\widehat V_{h+1},
\nonumber
\end{align}
and define its deficit relative to the true reward by
\begin{align}
D_h(s,a)
:=
r(s,a)-\widetilde r_h(s,a).
\nonumber
\end{align}

By pessimism, $D_h(s,a)\geq0$.
Moreover, $\widehat Q,\widehat V$ are the optimal value functions of this
imaginary MDP and $\widehat\pi$ is an optimal policy in the imaginary MDP.

Below we provide some useful lemmas used in this subsection. 
\begin{lemma}\label{lem:min-deficit}
On the good event,
\begin{align}
D_h(s,a)
\leq
2\min\left\{
b_h^{K,\rm on}(s,a),
b_h^{K,\rm hyb}(s,a)
\right\}.
\label{eq:min-deficit}
\end{align}
\end{lemma}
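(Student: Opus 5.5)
The plan is to unfold $D_h(s,a)=r(s,a)+P^{\tar}_{s,a}\widehat V_{h+1}-\widehat Q_h(s,a)$ and use that $\widehat Q_h$ is the larger of the two pessimistic branches. For any branch $j\in\{\on,\hyb\}$ we have $\widehat Q_h(s,a)\ge \widehat Q^{K,j}_h(s,a)\ge r(s,a)+\widehat P^{j}_{s,a}\widehat V_{h+1}-b^{K,j}_h(s,a)$, where we drop the $[\cdot]_+$ clipping since it only increases $\widehat Q^{K,j}_h$. Here $\widehat P^{\on}:=\widehat P^K$ and $\widehat P^{\hyb}:=\widehat P^{K,\hyb}$. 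Hence
\begin{align*}
D_h(s,a)\le \min_{j\in\{\on,\hyb\}}\Big\{ b^{K,j}_h(s,a)+(P^{\tar}_{s,a}-\widehat P^{j}_{s,a})\cdot \widehat V_{h+1}\Big\}.
\end{align*}
It therefore suffices to show, branch by branch, that $(P^{\tar}_{s,a}-\widehat P^{j}_{s,a})\cdot\widehat V_{h+1}\le b^{K,j}_h(s,a)$ on the good event. Taking the minimum over $j$ then gives the factor $2$. The nonnegativity $D_h\ge 0$ is already established by the monotonicity lemma and is not needed here.

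For the online branch, the good event (after the two-fold subsampling of \citet{li2024settling}, which makes $\widehat V_{h+1}$ independent of the samples at $(s,a,h)$) gives the Bernstein bound
\begin{align*}
|(\widehat P^K_{s,a}-P^{\tar}_{s,a})\cdot\widehat V_{h+1}|\le \sqrt{\frac{48\,\mathbb{V}_{\widehat P^K_{s,a}}(\widehat V_{h+1})L}{N^K_{s,a}}}+\frac{48HL}{N^K_{s,a}}.
\end{align*}
This is dominated by $b^{K}_h(s,a)$ for $c_1,c_2$ chosen large enough.

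For the hybrid branch, we split
\begin{align*}
(P^{\tar}-\widehat P^{K,\hyb})\cdot\widehat V_{h+1}=(P^{K,\hyb}-\widehat P^{K,\hyb})\cdot\widehat V_{h+1}+\frac{N^\src_{s,a}}{N^K_{s,a}+N^\src_{s,a}}(P^{\tar}-P^{\src})\cdot\widehat V_{h+1}.
\end{align*}
The first term is controlled by the hybrid good event stated earlier, and is at most the variance and $HL$ parts of $b^{K,\hyb}_h$. The second term is at most $\frac{H\nu(s,a)N^\src_{s,a}}{N^K_{s,a}+N^\src_{s,a}}$, using $\|\widehat V_{h+1}\|_\infty\le H$ and the valid bias bound. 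This is exactly the bias part of $b^{K,\hyb}_h$. This is the same computation as in the monotonicity lemma for MAX-LCB-VI, with the sign reversed.

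The main obstacle is the concentration step. $\widehat V_{h+1}$ is data-dependent, and the hybrid samples come from a mixture of two kernels, so the Bernstein bound must hold for the mean kernel $P^{K,\hyb}$. I would rely on the subsampling argument (conditional independence given the auxiliary counts) to apply the good-event inequalities as stated. If that fails, the fallback is the cruder $\sqrt{S}$-type bound of Lemma~\ref{lemma:reg_indpendent_good_two}, at the cost of enlarging the constants in the bonus.
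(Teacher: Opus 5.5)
Your proof is correct and follows the paper's argument: since $\widehat Q_h$ is the larger of the two clipped LCBs, $D_h$ is the smaller of the two branch deficits, and each branch deficit is at most $2b_h^{K,j}(s,a)$ once $(P^{\tar}_{s,a}-\widehat P^{K,j}_{s,a})\cdot\widehat V_{h+1}\le b_h^{K,j}(s,a)$ holds on the good event. The differences are only in presentation: you handle the clipping in one line via $[x]_+\ge x$ instead of the paper's two cases, and you make explicit the hybrid split into a $P^{K,\hyb}$-concentration term plus the shift $\frac{N^{\src}_{s,a}}{N^{K}_{s,a}+N^{\src}_{s,a}}(P^{\tar}_{s,a}-P^{\src}_{s,a})\cdot\widehat V_{h+1}$, which the bias part of $b_h^{K,\hyb}$ absorbs and which the paper leaves implicit.
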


\begin{proof}
For a branch $j\in\{\rm on,hyb\}$, we define
\begin{align}
D_h^j(s,a)
:=
r(s,a)+P^{\rm tar}_{s,a}\widehat V_{h+1}
-\widehat Q_h^{K,j}(s,a).
\nonumber
\end{align}

Since $\widehat Q_h$ is the maximum of the two lower confidence bounds $
\min\{
D_h^{\rm on}(s,a),
D_h^{\rm hyb}(s,a)
\}$. If truncation at zero is inactive, we know
$D_h^j(s,a)\leq2b_h^{K,j}(s,a)$.
If truncation is active, then
\(
r+\widehat P^{K,j}\widehat V_{h+1}-b_h^{K,j}\leq0
\),
and hence
\begin{align}
r+P^{\rm tar}\widehat V_{h+1}
&\leq
b_h^{K,j}
+
\left|
\bigl(P^{\rm tar}-\widehat P^{K,j}\bigr)
\widehat V_{h+1}
\right|
\leq2b_h^{K,j}.
\nonumber
\end{align}
Thus $D_h^j\leq2b_h^{K,j}$ in both cases, proving
\eqref{eq:min-deficit}.
\end{proof}

\begin{lemma}[Deficit thresholding]\label{lem:deficit-threshold}
For every pessimistic algorithm,
\begin{align}
V_1^*(\rho)-V_1^{\widehat\pi}(\rho)
\leq
2\sum_{h=1}^{H}
\E_{\pi^*,P^{\rm tar}}
\left[
\left(
D_h(s_h,a_h^*)
-
\frac{\Delta_{\min}}{2H}
\right)_+
\right].
\label{eq:deficit-threshold}
\end{align}
\end{lemma}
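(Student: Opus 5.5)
I would follow the deficit argument of \citet{wang2022gap}, working in the imaginary MDP with rewards $\widetilde r_h = r - D_h$ and the target kernel $P^{\rm tar}$. In this MDP, $\widehat Q$ and $\widehat V$ are the optimal value functions and $\widehat\pi$ is greedy. The first step is a sandwich. Pessimism gives $V_1^{\widehat\pi}\ge \widehat V_1$, because $\widehat V$ is the value of $\widehat\pi$ under the smaller rewards $\widetilde r\le r$. Optimality of $\widehat\pi$ in the imaginary MDP gives $\widehat V_1 \ge \widetilde V^{\pi^*}_1 = V_1^* - \E_{\pi^*}\sum_h D_h(s_h,a_h^*)$. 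Together these yield the unthresholded bound $V_1^*-V_1^{\widehat\pi}\le \E_{\pi^*}\sum_h D_h(s_h,a_h^*)$.

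The thresholding is the real content. The idea is that the suboptimality of $\widehat\pi$ is itself a sum of true gaps along its own trajectory. By the performance difference lemma, $V_1^*-V_1^{\widehat\pi}=\E_{\widehat\pi}\sum_h \Delta_h(s_h,\widehat\pi_h(s_h))$, and every nonzero term is at least $\Delta_{\min}$. Hence if the gap is positive, then $\widehat\pi$ takes a strictly suboptimal action with some probability. I would then introduce the shifted deficit $D'_h:=(D_h-\Delta_{\min}/(2H))_+$ and compare two imaginary MDPs, one with rewards $r-D$ and one with rewards $r-D'$. Since $D-D'\le \Delta_{\min}/(2H)$ pointwise, the imaginary values differ by at most $\Delta_{\min}/2$ uniformly over policies.

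Next I would run a case split on $g:=V_1^*-V_1^{\widehat\pi}$.

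\emph{Case $g\le \Delta_{\min}$ fails:} if $g> \Delta_{\min}$ is large, the additive loss of $\Delta_{\min}/2$ is at most $g/2$. Applying the sandwich with $D'$ in place of $D$ then costs an additive $\Delta_{\min}/2\le g/2$. Rearranging gives $g\le 2\E_{\pi^*}\sum_h D'_h + (\text{slack})$, which is the source of the factor $2$ in \eqref{eq:deficit-threshold}.

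\emph{Small $g$ case:} here I would argue through the gap structure directly. Let $g>0$. The performance difference identity gives $g\ge \Delta_{\min}\cdot\Pr_{\widehat\pi}(\text{some suboptimal action})$. I would combine this with the sandwich, applied by re-running the comparison from the first suboptimal step, to show that the total deficit mass removed by thresholding is at most half of $g$.

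The main obstacle is that $\widehat\pi$ is only optimal for the imaginary MDP with $D$, not with $D'$, so the sandwich does not transfer verbatim. My first attempt would be to use the uniform $\Delta_{\min}/2$ closeness of the two imaginary MDPs together with $g\ge\Delta_{\min}\,p$, where $p$ is the probability of deviating from $\pi^*$. This should give $g\le \E_{\pi^*}\sum_h D'_h + p\,\Delta_{\min}/2\le \E_{\pi^*}\sum_h D'_h+g/2$, and rearranging yields \eqref{eq:deficit-threshold}. Getting the $p$ factor rather than the cruder $\Delta_{\min}/2$ requires restricting the comparison to trajectories where $\widehat\pi$ deviates. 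On trajectories where $\widehat\pi$ agrees with $\pi^*$ throughout, the realized contributions to $V^*$ and $V^{\widehat\pi}$ coincide, so those trajectories need no deficit at all. The plan is to make that pathwise decoupling rigorous, which I would do by conditioning on the first deviation time.
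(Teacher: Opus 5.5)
Your final plan is correct and gives a self-contained proof. The paper's own proof is a one-line appeal to Corollary 6.1 of Wang, Cui, and Du. Carried out, your plan is the standard first-suboptimal-step (half-clipping) argument, and it makes both your case split and the comparison of imaginary MDPs with deficits $D$ and $D'$ unnecessary. The ``main obstacle'' disappears because optimality of $\widehat\pi$ for $r-D'$ is never used.

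Let $\tau$ be the first step at which $\widehat\pi$ plays an action of positive gap. Unrolling $V_h^*(s)-V_h^{\widehat\pi}(s)=\Delta_h(s,\widehat\pi_h(s))+P^{\tar}_{s,\widehat\pi_h(s)}(V_{h+1}^*-V_{h+1}^{\widehat\pi})$ through the zero-gap steps gives $V_1^*-V_1^{\widehat\pi}=\E_{\widehat\pi}[(V_\tau^*-V_\tau^{\widehat\pi})(s_\tau)\1\{\tau\le H\}]$. At $s=s_\tau$, restart your sandwich at step $\tau$ and then use $D_h\le D'_h+\Delta_{\min}/(2H)$ along $\pi^*$'s continuation. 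This gives $V_\tau^*(s)-V_\tau^{\widehat\pi}(s)\le V_\tau^*(s)-\widehat V_\tau(s)\le \E_{\pi^*}[\sum_{h\ge\tau}D'_h\mid s_\tau=s]+\Delta_{\min}/2$. The left side is at least $\Delta_{\min}$, so it is at most $2\E_{\pi^*}[\sum_{h\ge\tau}D'_h\mid s_\tau=s]$.

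Since $\widehat\pi$ and $\pi^*$ act identically before $\tau$, $(\tau,s_\tau)$ has the same law as the corresponding stopping time on the $\pi^*$-trajectory. The Markov property then yields the lemma for all values of $g$ at once. This also proves the displayed $(\cdot)_+$ form directly, since only $x\le (x-t)_++t$ is used. That is the stronger of the two usual thresholded forms, because $(x-t)_+\le x\1\{x\ge t\}$.

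One hypothesis must be stated, and the lemma itself needs it. You move between ``first deviation from $\pi^*$'' and ``first suboptimal action'', for instance in $g\ge\Delta_{\min}p$ with $p$ the probability of deviating from $\pi^*$. The shared pre-$\tau$ trajectory requires the former; the bound $V_\tau^*-V_\tau^{\widehat\pi}\ge\Delta_{\min}$ requires the latter. They coincide only if $\pi^*$ agrees with $\widehat\pi$ wherever $\widehat\pi_h(s)$ is an optimal action. This is automatic for a unique optimal policy and can otherwise be arranged by choosing $\pi^*$ depending on $\widehat\pi$.

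Without this condition the inequality can fail. Let $s_1$ have two optimal actions, with $\pi^*$ taking the one whose continuation has zero deficit. Let the other action reach, with probability $q<1/(2H)$, a state outside the support of $d^{\pi^*}$ where a large deficit on the optimal action makes $\widehat\pi$ play an action of gap $\Delta_{\min}$. Set all other deficits to zero, except a deficit on $(s_1,a_1^*)$ slightly above $q\Delta_{\min}$ but below $\Delta_{\min}/(2H)$. This steers $\widehat\pi$ onto the second branch, so the left side equals $q\Delta_{\min}>0$ while the right side is $0$. The paper's citation leaves this condition implicit; your write-up should make it explicit.
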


\begin{proof}
This is the deficit-thresholding lemma of
Wang, Cui, and Du~\cite[Corollary 6.1]{wang2022gap}.
\end{proof}

We combine Lemmas~\ref{lem:min-deficit} and
\ref{lem:deficit-threshold}, and use
$[2x-2t]_+\leq2[x-t]_+$ to obtain
\begin{align}
V_1^*(\rho)-V_1^{\widehat\pi}(\rho)
&\leq
4\sum_{h=1}^{H}
\E_{\pi^*,P^{\rm tar}}
\left[
\left(
\min\{b_h^{K,\rm on},b_h^{K,\rm hyb}\}
-
\frac{\Delta_{\min}}{4H}
\right)_+
\right].
\label{eq:thresholded-bonus}
\end{align}

\begin{lemma}\label{lem:linear-credit}
For $n,m\geq0$, $B>0$, $t>0$, and $0\leq\beta<t$,
\begin{align}
\left[
\sqrt{\frac{B}{(n+m)\vee1}}
+
\frac{m\beta}{(n+m)\vee1}
-t
\right]_+
\leq
\frac{B}{
t\left\{
n+m(1-\beta/t)
\right\}
}.
\label{eq:linear-credit}
\end{align}
If $\beta\geq t$:
\begin{align}
\left[
\sqrt{\frac{B}{n\vee1}}-t
\right]_+
\leq
\frac{B}{t(n\vee1)}.
\nonumber
\end{align}
\end{lemma}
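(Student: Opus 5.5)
The plan is to reduce both inequalities to one elementary fact: for $x\geq 0$ and $c>0$,
\begin{align}
[x-c]_+\leq \frac{x^2}{c}.
\nonumber
\end{align}
This holds because the left side is $0$ when $x\leq c$, and when $x>c$ we have $x-c\leq x< x\cdot\frac{x}{c}$. So in each case I only need to write the bracket as $x-c$ with $x$ a square-root term and $c>0$ an effective threshold, and then identify $x^2/c$ with the claimed right-hand side.

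\emph{First inequality.} Set $N:=(n+m)\vee 1$ and $x:=\sqrt{B/N}$. Move the bias term into the threshold by defining
\begin{align}
c:=t-\frac{m\beta}{N}=\frac{tN-m\beta}{N},
\nonumber
\end{align}
so the left-hand side of \eqref{eq:linear-credit} equals $[x-c]_+$. I treat two cases.
\begin{itemize}
\item If $n+m=0$, then $m=0$ and $N=1$, so $c=t>0$. The right-hand side of \eqref{eq:linear-credit} has denominator $t\{n+m(1-\beta/t)\}=0$, so it is $+\infty$ and the bound is trivial.
\item If $n+m\geq 1$, then $N=n+m$ and $tN-m\beta=tn+m(t-\beta)$. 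This is strictly positive because $\beta<t$ and $n+m\geq 1$, so $c>0$. The elementary fact then gives
\begin{align}
[x-c]_+\leq \frac{x^2}{c}=\frac{B/N}{(tN-m\beta)/N}=\frac{B}{tn+m(t-\beta)}=\frac{B}{t\{n+m(1-\beta/t)\}}.
\nonumber
\end{align}
\end{itemize}

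\emph{Second inequality ($\beta\geq t$).} Take $x=\sqrt{B/(n\vee 1)}$ and $c=t>0$. The same fact gives $[x-t]_+\leq x^2/t=B/(t(n\vee 1))$. In this regime the hybrid branch is simply discarded and only the online-branch bonus is kept.

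The only delicate point is checking that the effective threshold $c=t-m\beta/N$ stays positive, so that dividing by it is legitimate. This is exactly where the hypothesis $\beta<t$ is used: it turns $tN-m\beta$ into $tn+m(t-\beta)$, which exhibits the discounted offline count $m(1-\beta/t)$. The degenerate case $n+m=0$ is dispatched separately as above. Everything else is a one-line algebraic identity.
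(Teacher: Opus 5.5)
Your proof is correct and is essentially the paper's argument: you fold the bias term into the effective threshold $c=t-m\beta/N$, which equals $t\{n+m(1-\beta/t)\}/(n+m)$ when $n+m\ge 1$, and use $x-c\le x^2/c$ on the event $x>c$ (the paper gets the second inequality by setting $m=0$, whereas you prove it directly). You are actually more careful than the paper, which silently drops the $\vee 1$ and never checks $c>0$; note only that your case split into $n+m=0$ and $n+m\ge 1$ tacitly assumes the counts are integers, which is the intended setting (for real $0<n+m<1$ the bound still holds, since then $t-m\beta>tn+m(t-\beta)>0$).
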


\begin{proof}
The claim holds if the left-hand side of
\eqref{eq:linear-credit} is zero.  Otherwise,
\begin{align}
\sqrt{\frac{B}{n+m}}
&>
\frac{
t\{n+m(1-\beta/t)\}
}{
n+m
}.
\nonumber
\end{align}

On this event,
\begin{align}
\sqrt{\frac{B}{n+m}}
-
\frac{
t\{n+m(1-\beta/t)\}
}{
n+m
}
\leq
\frac{
B/(n+m)
}{
t\{n+m(1-\beta/t)\}/(n+m)
}
=
\frac{B}{
t\{n+m(1-\beta/t)\}
}.
\nonumber
\end{align}
This proves \eqref{eq:linear-credit}.  The online-only inequality is obtained
by setting $m=0$.
\end{proof}

Then we formally provide the proof of the instance-dependent sub-optimality gap upper bound.

\begin{proof}
The proof below is largely based on~\eqref{eq:thresholded-bonus}.  Since $\left[
\min\{x,y\}-t
\right]_+
\leq
\min\left\{
[x-t]_+,[y-t]_+
\right\}$, the total thresholded deficit is no larger than either the total online
thresholded deficit or the total hybrid thresholded deficit.

For the online branch, apply
$[x-t]_+\leq x^2/t$, use
$t=\Delta_{\min}/(4H)$, and separate the Bernstein variance and lower-order
terms.  Based on the instance-independent sub-optimality gap upper bound obtained in the previous subsection, it is easy to have 
\begin{align}
V_1^*-V_1^{\widehat\pi}
\lesssim
\frac{H^3SA L^2}{K\Delta_{\min}}.
\label{eq:online-gap-bound}
\end{align} 

For the hybrid branch, we have
\begin{align}
[x+y-t]_+
\leq
[x-t/2]_+ + [y-t/2]_+.
\nonumber
\end{align}

Apply Lemma~\ref{lem:linear-credit} to the square-root and weighted-bias
terms.  The source contribution in the resulting denominator is at least
$( 1- \frac{H^2\nu_{\max}}{\Delta_{\min}} )_+ N^{\rm src}(s,a)$.  Dropping the nonnegative online count only enlarges
the upper bound.  Then we have
\begin{align}
V_1^*-V_1^{\widehat\pi}
\lesssim
\frac{
H^3S C_\nu^*L^2
}{
( 1- \frac{ H^2\nu_{\max}}{\Delta_{\min}} )_+ N^{\rm src}\Delta_{\min}
}.
\label{eq:source-gap-bound}
\end{align}


If we set $x:=K/A$ and $
y:=( 1- \frac{ H^2\nu_{\max}}{\Delta_{\min}} )_+ N^{\rm src}/C_\nu^*$, equations~\eqref{eq:online-gap-bound} and
\eqref{eq:source-gap-bound} imply
\begin{align}
V_1^*(\rho)-V_1^{\widehat\pi}(\rho)
&\leq
\frac{CH^3SL^2}{\Delta_{\min}}
\min\left\{\frac1x,\frac1y\right\}
=
\frac{CH^3SL^2}{
\Delta_{\min}\max\{x,y\}
}
\leq
\frac{2CH^3SL^2}{
\Delta_{\min}(x+y)
}.
\nonumber
\end{align}

Then we have 
\begin{align}
  V_1^*(\rho)-V_1^{\widehat\pi}(\rho)
\leq
\frac{
CH^3SL^2
}{
\Delta_{\min}
\left\{
K/A+( 1- \frac{ H^2\nu_{\max}}{\Delta_{\min}} )_+ N^{\rm src}/C_\nu^*
\right\}
}.\nonumber
\end{align}

Hence, MAX-LCB-VI returns an
$\epsilon$-optimal policy with probability at least $1-\delta$ whenever $\frac KA
+
\frac{\left(
1-\frac{ H^2\nu_{\max}}{\Delta_{\min}}
\right)_+ N^{\rm src}}{C_\nu^*}
\geq
\frac{CH^3SL^2}{\epsilon\Delta_{\min}}$, i.e.,
\begin{align}
K
\geq
\Omega\left(A\left[
\frac{H^3SL^2}{\epsilon\Delta_{\min}}
-
\frac{N^{\rm src}}{^*}
\left(
1-\frac{ H^2\nu_{\max}}{\Delta_{\min}}
\right)_+
\right]_+\right).
\nonumber
\end{align}

\end{proof}

\section{Proof for Instance-Independent Regret Lower Bound}\label{sec:regret_independent_lower_proof}

In this subsection, we provide the proof for instance-independent regret lower bound. We first provide some useful lemmas below.

\begin{lemma}[Pinsker plus data processing]\label{lem:pinsker}
Let $\rho$ and $\widetilde\rho$ be two probability distributions.  For any $Z\in[0,1]$,
\begin{align}
\left|\E_\rho Z-\E_{\widetilde\rho}Z\right|
\le
\sqrt{\frac12\KL(\rho,\widetilde\rho)}.
\end{align}
\end{lemma}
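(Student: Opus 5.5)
The statement to prove is Lemma~\ref{lem:pinsker}: for distributions $\rho,\widetilde\rho$ and any $Z\in[0,1]$, $|\E_\rho Z-\E_{\widetilde\rho}Z|\le\sqrt{\tfrac12\KL(\rho,\widetilde\rho)}$. The plan is to reduce to total variation and then apply Pinsker's inequality.

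The first step bounds the mean difference of a bounded function by total variation. Write $\mathrm{TV}(\rho,\widetilde\rho)=\sup_{E}|\rho(E)-\widetilde\rho(E)|$. Since $Z\in[0,1]$, the layer-cake representation gives $\E_\rho Z=\int_0^1\rho(Z>t)\,dt$, and likewise for $\widetilde\rho$. Hence
\begin{align}
|\E_\rho Z-\E_{\widetilde\rho}Z|\le\int_0^1|\rho(Z>t)-\widetilde\rho(Z>t)|\,dt\le \mathrm{TV}(\rho,\widetilde\rho).\nonumber
\end{align}
Equivalently, one can write the difference as $\int Z\,d(\rho-\widetilde\rho)$, note that the centred version $Z-\tfrac12$ takes values in $[-\tfrac12,\tfrac12]$, and integrate it against the total mass $2\,\mathrm{TV}$ of the signed measure $\rho-\widetilde\rho$.

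The second step applies Pinsker's inequality, $\mathrm{TV}(\rho,\widetilde\rho)\le\sqrt{\tfrac12\KL(\rho,\widetilde\rho)}$, which is classical and can be cited. For a self-contained argument:
\begin{itemize}
\item Apply the data-processing inequality for KL to the indicator channel $x\mapsto\one[x\in E]$, where $E$ is the event attaining the total variation. This gives $\KL(\Ber(p),\Ber(q))\le\KL(\rho,\widetilde\rho)$ with $p=\rho(E)$ and $q=\widetilde\rho(E)$.
\item Verify the two-point inequality $\KL(\Ber(p),\Ber(q))\ge2(p-q)^2$. Fix $p$ and view $g(q)=\KL(\Ber(p),\Ber(q))-2(p-q)^2$ as a function of $q$. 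Then $g(p)=0$, and $g'(q)=(q-p)\big(\tfrac{1}{q(1-q)}-4\big)$, which has the sign of $q-p$ because $q(1-q)\le\tfrac14$. So $g\ge0$.
\end{itemize}
Combining the two steps gives the claim.

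The only delicate point is the constant. Total variation must be taken as the supremum over events, not as the $\ell_1$ norm, for the factor $\tfrac12$ inside the square root to be correct. Because the lemma is a standard fact, the main task is simply citing Pinsker correctly and handling $Z\in[0,1]$ rather than indicator functions, which the layer-cake step addresses.
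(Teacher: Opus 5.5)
Your proof is correct. The paper states this lemma without proof as a standard fact, and your route is exactly the argument its name refers to: bound the mean difference of a $[0,1]$-valued function by total variation via layer-cake, then prove Pinsker by data processing onto a Bernoulli together with the two-point inequality $\KL(\Ber(p),\Ber(q))\ge 2(p-q)^2$.
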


\begin{lemma}[Bernoulli KL bound]\label{lem:bernoulli}
There exists a universal constant $C_{\mathrm{kl}}>0$ such that, for all $|u|,|v|\le 1/4$,
\begin{align}
\kl\left(\frac12+u,\frac12+v\right)
\le
C_{\mathrm{kl}}(u-v)^2.
\end{align}
\end{lemma}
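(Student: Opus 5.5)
The statement to prove is Lemma~\ref{lem:bernoulli}: for $|u|,|v|\le 1/4$, $\kl(\frac12+u,\frac12+v)\le C_{\mathrm{kl}}(u-v)^2$. The plan is to go through the $\chi^2$ divergence, since it dominates KL and is explicit for Bernoulli distributions.

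\textbf{Step 1 (KL is at most $\chi^2$).} For any two distributions $p,q$ with $p\ll q$, Jensen's inequality applied to the concave function $\log$ gives
\[
\KL(p,q)=\E_p\!\left[\log\frac{p}{q}\right]\le \log \E_p\!\left[\frac{p}{q}\right]=\log\bigl(1+\chi^2(p,q)\bigr)\le \chi^2(p,q).
\]

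\textbf{Step 2 (Bernoulli $\chi^2$).} Write $p=\frac12+u$ and $q=\frac12+v$. For Bernoulli distributions,
\[
\chi^2\bigl(\Ber(p),\Ber(q)\bigr)=\frac{(p-q)^2}{q}+\frac{(p-q)^2}{1-q}=\frac{(u-v)^2}{q(1-q)}.
\]
Since $|v|\le 1/4$, we have $q\in[1/4,3/4]$, so $q(1-q)\ge 3/16$.

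\textbf{Step 3 (conclusion).} Combining the two steps gives $\kl(p,q)\le \frac{16}{3}(u-v)^2$, so one can take $C_{\mathrm{kl}}=16/3$.

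The argument does not require $|u|\le 1/4$; only the bound on $v$ is used, to keep $q$ away from $0$ and $1$. The only care point is the direction of the divergence: the denominator in the $\chi^2$ bound is evaluated at the second argument $q$, and that is the one the hypothesis on $v$ controls. There is no real obstacle here.
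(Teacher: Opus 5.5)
Your proof is correct. The chain $\KL\le\log(1+\chi^2)\le\chi^2$, the Bernoulli identity $\chi^2(\Ber(p),\Ber(q))=(p-q)^2/(q(1-q))$, and the bound $q(1-q)\ge 3/16$ on $[1/4,3/4]$ together give the claim with $C_{\mathrm{kl}}=16/3$. You are also right that only the constraint on $v$ is used, provided $\frac12+u\in[0,1]$.

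The paper states this lemma without proof, treating it as a standard fact, so there is no argument of its own to compare against. Where the lower-bound proofs need a concrete estimate, they compute the divergence directly in the special case where one argument equals $1/2$, e.g.\ $\KL(\Ber(1/2)\,\|\,\Ber(1/2+u))=-\tfrac12\log(1-4u^2)\le 4u^2$. Your $\chi^2$ route instead covers arbitrary pairs in the range with an explicit constant.
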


\begin{proof}
    Below we consider  $S\geq6$, $A\geq2$, $H\geq8$, and $K\geq1$.  For some integer
$n_0\geq0$, we suppose that $N^{\src}(x_i,a)=n_0,
i\in[S-3],\ a\in[A]$ and $N^{\src}(s,a)
\geq
n_0+\frac{KH}{(S-3)A},
\qquad
s\notin\{x_1,\ldots,x_{S-3}\},\ a\in[A]$. Let $\tau^*$ be the optimum of LP defined previously. 

We first notice that $\tau^*= n_0+\frac{KH}{(S-3)A}$. Then we provide the construction of MDP instances.
 
  \par\medskip\noindent\emph{Hard instances.}
For every vector $(a_1^*,\ldots,a_{S-3}^*) \in
[A]^{S-3}$, we construct a target MDP and a source MDP as follows.

\begin{itemize}
\item \emph{State space.}
There is one initial gateway state $s_0$, $S-3$ informative states
$\{x_i:i\in[S-3]\}$, one good absorbing state $g$, and one bad absorbing
state $b$:
\begin{align}
\mathcal S
&=
\{s_0,x_1,\ldots,x_{S-3},g,b\}.
\nonumber
\end{align}

\item \emph{Action space.}
The action space is $\mathcal A=[A]$ at every state.

\item \emph{Initial state.}
Every episode starts from the deterministic initial state $s_0$.

\item \emph{Transition at the gateway state.}
For every $a\in[A]$ and $i\in[S-3]$,
\begin{align}
P^{\tar}(x_i\mid s_0,a)
&=
P^{\src}(x_i\mid s_0,a)
=
\frac{1}{S-3}.
\nonumber
\end{align}

\item \emph{Target transition at an informative state.}
At $x_i$, the action $a_i^*$ is special.  For every $a\neq a_i^*$,
\begin{align}
P^{\tar}(x_i\mid x_i,a)
&=
1-\frac1H,
\quad
P^{\tar}(g\mid x_i,a)
=
P^{\tar}(b\mid x_i,a)
=
\frac1{2H}.
\nonumber
\end{align}
For the special action,
\begin{align}
P^{\tar}(x_i\mid x_i,a_i^*)
&=
1-\frac1H,
\quad
P^{\tar}(g\mid x_i,a_i^*)
=
\frac{1/2+\gamma}{H},
\quad
P^{\tar}(b\mid x_i,a_i^*)
=
\frac{1/2-\gamma}{H}.
\nonumber
\end{align}

\item \emph{Source transition at an informative state.}
Every non-special action has the same transition as above.  At
$a_i^*$,
\begin{align}
P^{\src}(x_i\mid x_i,a_i^*)
&=
1-\frac1H,
\quad
P^{\src}(g\mid x_i,a_i^*)
=
\frac{1/2+\gamma_{\src}}{H},
\quad
P^{\src}(b\mid x_i,a_i^*)
=
\frac{1/2-\gamma_{\src}}{H},
\nonumber
\end{align}
where $0\leq\gamma_{\src}\leq\gamma$.

\item \emph{Absorbing transitions.}
For every $a\in[A]$,
\begin{align}
P^{\tar}(g\mid g,a)
=
P^{\src}(g\mid g,a)
=1,
\qquad
P^{\tar}(b\mid b,a)
=
P^{\src}(b\mid b,a)
=1.
\nonumber
\end{align}
\end{itemize}

We consider that the source and target MDPs have the common step-independent reward $r(s,a)
=\mathbf 1\{s=g\}$. All transition probabilities not specified above are zero.  The parameters
$\gamma$ and $\gamma_{\src}$ are chosen at the end of the proof.

  \par\medskip\noindent\emph{Optimal policy and target action gap.}
In every hard target MDP, a deterministic optimal policy satisfies
\begin{align}
\pi_h^*(x_i)
&=
a_i^*,
\qquad
i\in[S-3],\ h\in[H],
\nonumber
\end{align}
and may take any action at $s_0,g$, and $b$.  Since
$V_{h+1}^*(g)=H-h$ and $V_{h+1}^*(b)=0$, for every $a\neq a_i^*$,
\begin{align}
Q_h^*(x_i,a_i^*)-Q_h^*(x_i,a)
&=
\frac{\gamma(H-h)}{H}
\geq
\frac{\gamma}{2},
\qquad
h\leq\frac H2.
\label{eq:reference-style-gap}
\end{align}

For $u\in[0,1/4]$, we have
\begin{align}
\KL\left(
\left(1-\frac1H,\frac1{2H},\frac1{2H}\right),
\left(1-\frac1H,\frac{1/2+u}{H},\frac{1/2-u}{H}\right)
\right)
&=
-\frac1{2H}\log(1-4u^2)
\leq
\frac{4u^2}{H}.
\label{eq:reference-style-one-step-kl}
\end{align}

Let $M_i$ be the number of target episodes that enter $x_i$.  Let $T_i(a)$
be the number of visits to $x_i$ at steps
$2,\ldots,\lfloor H/2\rfloor$ on which action $a$ is selected, and let
$T_i=\sum_aT_i(a)$.  The self-loop probability is $1-1/H$ under every
action, and therefore
\begin{align}
\E[T_i\mid M_i]
&=
M_i
\sum_{h=2}^{\lfloor H/2\rfloor}
\left(1-\frac1H\right)^{h-2}
\geq
\frac{H}{8}M_i.
\label{eq:reference-style-visits}
\end{align}

Fix $i$, fix the special actions at all states other than $x_i$, and
condition on the gateway outcomes.  Let $\Prb_{i,0}$ be the conditional law
of the source data, target observations, and algorithmic randomization when
all actions at $x_i$ are neutral in both domains.  For $a\in[A]$, let
$\Prb_{i,a}$ be the corresponding law when $a$ is special at $x_i$, with
target perturbation $\gamma$ and source perturbation $\gamma_{\src}$.

Under $\Prb_{i,0}$, the two laws differ in exactly $n_0$ source transitions
from $(x_i,a)$.  By the KL chain rule, the expected target contribution is
the one-step KL multiplied by the number of target visits to $(x_i,a)$.
Averaging over $a\in[A]$ and using
Equation~\eqref{eq:reference-style-one-step-kl} give
\begin{align}
\frac1A\sum_{a=1}^{A}
\KL(\Prb_{i,0},\Prb_{i,a})
&\leq
\frac{4n_0\gamma_{\src}^2}{H}
+
\frac{4\gamma^2M_i}{A}.
\label{eq:reference-style-average-kl}
\end{align}
Here the target contribution uses
$\sum_a N_i^K(a)\leq HM_i$, where $N_i^K(a)$ is the total number of target
visits to $(x_i,a)$ in the $M_i$ episodes.

For every $a$, Pinsker's inequality and $0\leq T_i(a)\leq HM_i$ imply
\begin{align}
\E_{\Prb_{i,a}}[T_i(a)]
&\leq
\E_{\Prb_{i,0}}[T_i(a)]
+
HM_i
\sqrt{\frac12\KL(\Prb_{i,0},\Prb_{i,a})}.
\nonumber
\end{align}
Averaging over $a$, using
$\sqrt{x+y}\leq\sqrt{x}+\sqrt{y}$ and Jensen's inequality, and then
averaging over the gateway outcomes yield
\begin{align}
\frac1A\sum_{a=1}^{A}\E_{\Prb_{i,a}}[T_i(a)]
&\leq
\frac{\E[T_i]}{A}
+
\sqrt{2}\,H\gamma_{\src}\sqrt{\frac{n_0}{H}}\E[M_i]
+
\frac{\sqrt{2}\,H\gamma}{\sqrt A}\E[M_i^{3/2}].
\label{eq:reference-style-pinsker}
\end{align}

The gateway transition gives
$M_i\sim\operatorname{Binomial}(K,1/(S-3))$.  Hence we have $\E[M_i^{3/2}]
\leq
\frac{K}{S-3}
\sqrt{1+\frac{K}{S-3}}$. We then select a universal constant $0<\kappa
\leq
\min\left\{
\frac18,\,
\frac{1}{32(\sqrt2+\sqrt3)}
\right\}$. Suppose that
\begin{align}
\gamma
&\leq
\kappa
\min\left\{
1,\sqrt{\frac{(S-3)A}{K}}
\right\},
\qquad
\gamma_{\src}\sqrt{\frac{n_0}{H}}
\leq
\kappa.
\label{eq:reference-style-testing-condition}
\end{align}

Using Equation~\eqref{eq:reference-style-visits} and
\begin{align}
\min\left\{
1,\sqrt{\frac{(S-3)A}{K}}
\right\}
\sqrt{\frac{1+K/(S-3)}{A}}
&\leq
\sqrt{\frac32},
\nonumber
\end{align}
the last two terms in Equation~\eqref{eq:reference-style-pinsker} are at
most $\E[T_i]/4$.  Since $A\geq2$,
\begin{align}
\frac1A\sum_{a=1}^{A}\E_{\Prb_{i,a}}[T_i(a)]
&\leq
\frac34\E[T_i].
\nonumber
\end{align}

Thus, if $a_i^*$ is uniform on $[A]$, the expected number of early
non-special actions at $x_i$ is at least $\E[T_i]/4$.

Apply the preceding inequality conditionally on the special actions at all
other informative states, and then average
$a_1^*,\ldots,a_{S-3}^*$ independently and uniformly over $[A]$.
Equation~\eqref{eq:reference-style-visits} and
$\sum_i\E[M_i]=K$ give
\begin{align}
\E\left[
\sum_{i=1}^{S-3}
\sum_{k=1}^{K}
\sum_{h=2}^{\lfloor H/2\rfloor}
\mathbf 1\{s_h^k=x_i,\ a_h^k\neq a_i^*\}
\right]
&\geq
\frac{KH}{32}.
\label{eq:reference-style-mistakes}
\end{align}

Consequently, at least one deterministic vector
$(a_1^*,\ldots,a_{S-3}^*)$ satisfies
Equation~\eqref{eq:reference-style-mistakes}.

The performance-difference identity gives
\begin{align}
\E[\Reg(K)]
&=
\E\left[
\sum_{k=1}^{K}\sum_{h=1}^{H}
\left(
V_h^*(s_h^k)-Q_h^*(s_h^k,a_h^k)
\right)
\right].
\nonumber
\end{align}

Combining Equations~\eqref{eq:reference-style-gap} and
\eqref{eq:reference-style-mistakes} yields
\begin{align}
\E[\Reg(K)]
&\geq
\frac{KH\gamma}{64}.
\label{eq:reference-style-regret-gamma}
\end{align}

We set
\begin{align}
\gamma
&=
\frac{\kappa}{2}
\min\left\{
1,\,
\sqrt{\frac{SA}{K}},\,
H\nu_{\max}+\sqrt{\frac{H}{\tau^*}}
\right\},
\nonumber\\
\gamma_{\src}
&=
\begin{cases}
0,
&
H\nu_{\max}\geq 2\gamma,
\\[0.2em]
\displaystyle
\gamma-\frac{H\nu_{\max}}{2},
&
H\nu_{\max}<2\gamma.
\end{cases}
\label{eq:reference-style-choice}
\end{align}

This choice has $\gamma\leq1/4$.  Since $S\geq6$ implies
$S/4\leq S-3$, the first condition in
Equation~\eqref{eq:reference-style-testing-condition} holds.  It remains to
verify the source testing condition and the source-target shift constraint.
We distinguish two cases according to the size of the bias.

  \par\medskip\noindent\emph{Case 1: $H\nu_{\max}\geq2\gamma$ (large-bias regime).}
In this case, set $\gamma_{\src}=0$.  The source testing condition holds
immediately:
\begin{align}
\gamma_{\src}\sqrt{\frac{n_0}{H}}
&=
0
\leq
\kappa.
\nonumber
\end{align}
Moreover, at every special action,
\begin{align}
\left\|
P^{\tar}(\cdot\mid x_i,a_i^*)
-
P^{\src}(\cdot\mid x_i,a_i^*)
\right\|_1
&=
\frac{2\gamma}{H}
\leq
\nu_{\max}.
\nonumber
\end{align}
Thus the source data are neutral at the special actions, while the complete
target perturbation is covered by the allowed bias.

  \par\medskip\noindent\emph{Case 2: $H\nu_{\max}<2\gamma$ (small-bias regime).}
In this case,
\begin{align}
\gamma_{\src}
&=
\gamma-\frac{H\nu_{\max}}2.
\nonumber
\end{align}
By the definition of $\gamma$,
\begin{align}
\gamma_{\src}
&\leq
\frac{\kappa}{2}
\left(
H\nu_{\max}+\sqrt{\frac{H}{\tau^*}}
\right)
-
\frac{H\nu_{\max}}2
=
\frac{\kappa}{2}\sqrt{\frac{H}{\tau^*}}
-
\frac{1-\kappa}{2}H\nu_{\max}
\leq
\frac{\kappa}{2}\sqrt{\frac{H}{\tau^*}}.
\nonumber
\end{align}

Using $n_0\leq\tau^*$, we then  obtain
\begin{align}
\gamma_{\src}\sqrt{\frac{n_0}{H}}
&\leq
\frac{\kappa}{2}\sqrt{\frac{n_0}{\tau^*}}
\leq
\kappa.
\nonumber
\end{align}

The source testing condition therefore holds.  At every special action,
\begin{align}
\left\|
P^{\tar}(\cdot\mid x_i,a_i^*)
-
P^{\src}(\cdot\mid x_i,a_i^*)
\right\|_1
&=
\frac{2(\gamma-\gamma_{\src})}{H}
=
\nu_{\max}.
\nonumber
\end{align}
Hence the source-target shift constraint also holds in the small-bias
regime.

In both cases, the two conditions in
Equation~\eqref{eq:reference-style-testing-condition} hold and the
constructed source-target pair is admissible.  Substituting
Equation~\eqref{eq:reference-style-choice} into
Equation~\eqref{eq:reference-style-regret-gamma} gives
\begin{align}
\E[\Reg(K)]
&\geq
\Omega\left(
KH
\min\left\{
1,\,
\sqrt{\frac{SA}{K}},\,
H\nu_{\max}+\sqrt{\frac{H}{\tau^*}}
\right\}
\right).
\end{align}
\end{proof}

\section{Proof for Instance-Dependent Regret Lower Bound}\label{sec:regret_dependent_lower_proof} 

In this subsection we provide the instance-dependent regret lower bound. We also provide some useful lemmas below.

\begin{lemma}[Offline-to-online KL decomposition]
Let $\mathcal M=(P^{\tar},P^{\src})$ and $\wt{\mathcal M}=(\wt P^{\tar},\wt P^{\src})$ be two environments sharing the same reward, initial distribution, offline sample count, and algorithm.  Suppose they differ only at one first-stage pair $x=(s,a)$.  If $\rho$ and $\wt\rho$ are the joint distribution of the offline data and online histories under $\mathcal M$ and $\wt{\mathcal M}$, then
\begin{equation}
    \KL(\rho,\wt\rho)
    =
    \E_{\mathcal M}[\bar N_K(x)]
      \KL(P_x^{\tar},\wt P_x^{\tar})
    +
    N_x^{\src}
      \KL(P_x^{\src},\wt P_x^{\src}).
    \label{eq:kl-decomposition}
\end{equation}
\end{lemma}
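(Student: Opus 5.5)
The plan is to write the joint law of the offline data and the online history explicitly as a product of conditional factors, take log-likelihood ratios, and apply the chain rule for KL divergence. Under either environment, the full observation consists of three parts. The first is the $N^{\src}$ offline trajectories drawn from $(P^{\src},\pi^b,\rho)$. The second is the internal randomization of the algorithm. The third is the $K$ online episodes, where each action is drawn from the algorithm's (history-dependent) kernel and each next state from $P^{\tar}$. I would condition on the offline state-action sequence structure as in the statement, treating the offline count $N^{\src}_x$ as the given, fixed sample count that the lemma says is shared by both environments.

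Key steps, in order:

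\emph{Step 1: factorize the likelihood.} Write the density as
\[
\rho = \prod_{\text{offline transitions}} P^{\src}(s'\mid s,a)\,\pi^b(a\mid s)\cdot \prod_{k,h} \pi^{\mathrm{alg}}_k(a^k_h\mid \text{history})\,P^{\tar}(s^k_{h+1}\mid s^k_h,a^k_h),
\]
together with the common initial-state factors, and write $\wt\rho$ analogously.

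\emph{Step 2: cancel the shared factors.} Initial-state factors, behavior-policy factors, algorithm kernels and reward factors are identical in both environments, so they cancel in $\log(\rho/\wt\rho)$. Transitions at pairs $y\neq x$ also cancel, because the environments differ only at $x$. What remains is
\[
\sum_{\text{offline visits to }x}\log\frac{P^{\src}_x(s')}{\wt P^{\src}_x(s')}+\sum_{\text{online visits to }x}\log\frac{P^{\tar}_x(s')}{\wt P^{\tar}_x(s')}.
\]

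\emph{Step 3: take expectations under $\mathcal M$.} For the offline sum, the $N^{\src}_x$ next states are i.i.d.\ from $P^{\src}_x$, which gives $N^{\src}_x\,\KL(P^{\src}_x,\wt P^{\src}_x)$. For the online sum, I would use the tower property on the filtration generated by the history up to each decision time. Conditional on reaching $x$ at a given time, the next state is distributed as $P^{\tar}_x$, so each such term has conditional mean $\KL(P^{\tar}_x,\wt P^{\tar}_x)$. This is the standard Wald-type argument, as in the divergence decomposition of Lattimore–Szepesv\'ari and Domingues et al.

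\emph{Step 4: sum.} Summing the conditional means gives $\E_{\mathcal M}[\bar N_K(x)]\,\KL(P^{\tar}_x,\wt P^{\tar}_x)$, where $\bar N_K(x)$ is the number of online visits to $x$ within the relevant (first-stage) steps. Adding the offline contribution yields \eqref{eq:kl-decomposition}.

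The main obstacle is the online term. Because $\bar N_K(x)$ is random and adapted to the history, the expectation cannot be factored directly. The step needs care: write the sum as $\sum_{k,h}\mathbf 1\{(s^k_h,a^k_h)=x\}\log(\cdot)$ and condition on $\mathcal F$ (the history up to $(s^k_h,a^k_h)$) before averaging. Integrability of the log-ratios also has to be checked, which I would do by assuming absolute continuity of $P_x$ with respect to $\wt P_x$; otherwise both sides are $+\infty$.

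A secondary subtlety is the offline count. If $N^{\src}_x$ were random under $\pi^b$, the same tower argument would give $\E[N^{\src}_x]$. The lemma treats it as fixed, consistent with the lower-bound constructions where the offline counts are prescribed, so I would state that convention explicitly.
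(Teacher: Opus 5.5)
Your proposal is correct and follows the same route as the paper: factor the joint likelihood into offline source-transition and online target-transition terms, cancel the common algorithm, behavior-policy and initial-state factors, keep only the log-ratios at $x$, and take expectation under $\mathcal M$. Your version is more careful than the paper's one-paragraph sketch. The tower-property treatment of the adaptive count $\bar N_K(x)$, the absolute-continuity caveat, and the explicit convention that $N^{\src}_x$ is a prescribed (non-random) count are exactly the details the paper leaves implicit.
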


\begin{proof}
The likelihood ratio factors into the product of offline source-transition likelihood ratios and online target-transition likelihood ratios.  The action-selection probabilities are induced by the same algorithm in both environments and cancel in the likelihood ratio.  Since the two environments differ only at $x$, only observations at $x$ contribute.  Taking expectation under $\mathcal M$ gives \eqref{eq:kl-decomposition}.
\end{proof}

\begin{lemma}[Bernoulli KL bound]
There is a universal constant $C_{\mathrm{kl}}>0$ such that for all $p,q\in[1/4,3/4]$,
\[
    \kl(p,q)\le C_{\mathrm{kl}}(p-q)^2.
\]
\end{lemma}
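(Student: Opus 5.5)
The claim is elementary: for all $p,q\in[1/4,3/4]$ we want $\kl(p,q)\le C_{\mathrm{kl}}(p-q)^2$, where $\kl(p,q)=p\log\frac pq+(1-p)\log\frac{1-p}{1-q}$. I would prove it with the standard $\chi^2$ comparison. Write
\begin{align*}
\kl(p,q)=p\log\frac pq+(1-p)\log\frac{1-p}{1-q}.
\end{align*}
Apply $\log x\le x-1$ to each logarithm; this is legitimate because the weights $p,1-p$ are nonnegative. It gives
\begin{align*}
\kl(p,q)\le p\Big(\frac pq-1\Big)+(1-p)\Big(\frac{1-p}{1-q}-1\Big)=\frac{p^2}{q}+\frac{(1-p)^2}{1-q}-1 .
\end{align*}

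Next, simplify the right-hand side. A direct computation shows it equals the Bernoulli $\chi^2$-divergence
\begin{align*}
\frac{(p-q)^2}{q}+\frac{(p-q)^2}{1-q}=\frac{(p-q)^2}{q(1-q)}.
\end{align*}
To verify this, expand $p^2/q=(p-q)^2/q+2p-q$ and $(1-p)^2/(1-q)=(p-q)^2/(1-q)+2(1-p)-(1-q)$. The linear terms sum to $1$, which cancels the $-1$.

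Finally, $q\in[1/4,3/4]$ gives $q(1-q)\ge 3/16$. Hence $\kl(p,q)\le \tfrac{16}{3}(p-q)^2$, so $C_{\mathrm{kl}}=16/3$ works. Only $q$ needs to be bounded away from $0$ and $1$; the constraint on $p$ is not used.

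The argument has no real obstacle. The only step needing care is the algebraic identity converting $p^2/q+(1-p)^2/(1-q)-1$ into the $\chi^2$ form.
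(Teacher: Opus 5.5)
Your proof is correct and complete. The bound $\log x\le x-1$, applied with the nonnegative weights $p$ and $1-p$, gives $\kl(p,q)\le \frac{p^2}{q}+\frac{(1-p)^2}{1-q}-1=\frac{(p-q)^2}{q(1-q)}$. Then $q(1-q)\ge 3/16$ on $[1/4,3/4]$ yields $C_{\mathrm{kl}}=16/3$.

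The paper does not prove this lemma; it states it as a standard fact. It does the same for the earlier variant $\kl(\frac12+u,\frac12+v)\le C_{\mathrm{kl}}(u-v)^2$ with $|u|,|v|\le 1/4$, which is the identical statement after setting $p=\frac12+u$, $q=\frac12+v$. Your $\chi^2$-comparison argument therefore just supplies the omitted justification.

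The paper checks a Bernoulli KL bound explicitly only in the best-policy-identification lower bounds. There it computes $\KL(\Ber(1/2)\,\|\,\Ber(1/2+u))=-\frac12\log(1-4u^2)\le 4u^2$ in closed form. That computation is specific to $p=1/2$ and gives a slightly better constant than your $16/3$. Your route handles arbitrary $p$ and, as you note, only needs $q$ bounded away from $0$ and $1$.
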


\begin{lemma}~\citep{simchowitz2019non}
For any uniformly good algorithm and every suboptimal pair $x=(s,a)$ of the baseline instance,
\begin{equation}
    \liminf_{K\to\infty}
    \frac{
        \KL(\rho_{\mathcal M},\rho_{\wt{\mathcal M}})
    }{\log K}
    \ge c_0,
    \label{eq:change-measure}
\end{equation}
where $c_0>0$ is a universal constant.
\end{lemma}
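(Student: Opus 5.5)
The plan is the classical Lai--Robbins change-of-measure argument, adapted to the hybrid data model through the offline-to-online KL decomposition \eqref{eq:kl-decomposition}. Fix the baseline instance $\mathcal M=(P^{\tar},P^{\src})$ and a suboptimal first-stage pair $x=(s,a)$, so that $\Delta_1(s,a)>0$.

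\emph{Step 1: the alternative instance.} I would build $\wt{\mathcal M}=(\wt P^{\tar},\wt P^{\src})$ that agrees with $\mathcal M$ everywhere except at $x$. At $x$, the target transition $\wt P^{\tar}_x$ moves mass toward states of higher $V^*_2$, enough that $a$ becomes the unique optimal action at $s$ in $\wt{\mathcal M}$ with a strictly positive gap to every other action. Three requirements constrain this choice.
\begin{itemize}
\item The support of $\wt P^{\tar}_x$ must equal that of $P^{\tar}_x$. Then $\KL(P^{\tar}_x,\wt P^{\tar}_x)<\infty$, and the same holds at the source, so both KL terms in \eqref{eq:kl-decomposition} are finite.
\item Probabilities must stay in $[1/4,3/4]$ on the relevant coordinates, as in the Bernoulli KL lemma. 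If the baseline does not allow this directly, I would rescale the construction.
\item The pair $(\wt P^{\tar},\wt P^{\src})$ must still satisfy the bias bound $\|\wt P^{\tar}_x-\wt P^{\src}_x\|_1\le\nu(x)$. I would ensure this by setting $\wt P^{\src}_x:=P^{\src}_x$ when that is admissible, and otherwise shifting $\wt P^{\src}_x$ in the same direction.
\end{itemize}
Since $x$ is a first-stage pair, $x$ is reached with probability $d:=\rho(s)>0$ under any policy. This avoids any reachability issue. The difficulty is the action choice, not reaching $s$.

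\emph{Step 2: consequences of uniform goodness.} A uniformly good algorithm has $\E[\Reg(K)]=o(K^{\alpha})$ for every $\alpha>0$ on every admissible instance. By the performance-difference identity, regret is at least the first-stage gap times the number of first-stage plays of the suboptimal actions at $s$. Let $N_1(x)$ be the number of episodes in which $x$ is played at stage $1$.
\begin{itemize}
\item Under $\mathcal M$, $\E_{\mathcal M}[N_1(x)]\le \E[\Reg(K)]/\Delta_1(x)=o(K^{\alpha})$.
\item Under $\wt{\mathcal M}$, every action other than $a$ is suboptimal at $s$. Hence $\E_{\wt{\mathcal M}}[Kd-N_1(x)]=o(K^{\alpha})$, up to the $O(\sqrt K)$ fluctuation of the number of visits to $s$, which is handled by concentration.
\end{itemize}

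\emph{Step 3: the testing step.} Let $E:=\{N_1(x)\le Kd/2\}$. By data processing and the standard inequality $\KL(\rho,\wt\rho)\ge \Prb_{\mathcal M}(E)\log\frac{1}{\Prb_{\wt{\mathcal M}}(E)}-\log 2$, it suffices to control the two probabilities.
\begin{itemize}
\item Markov's inequality under $\mathcal M$ gives $\Prb_{\mathcal M}(E)\to1$.
\item Markov's inequality under $\wt{\mathcal M}$ gives $\Prb_{\wt{\mathcal M}}(E)\le o(K^{\alpha})/(Kd/2)$, so $\log(1/\Prb_{\wt{\mathcal M}}(E))\ge(1-\alpha)\log K-O(1)$.
\end{itemize}
Dividing by $\log K$, taking $\liminf$, and then letting $\alpha\downarrow0$ yields \eqref{eq:change-measure} with $c_0=1$.

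I expect the main obstacle to be Step 1. The alternative must flip the optimal action at $x$ while keeping support equality and respecting the bias constraint relative to its own source kernel. The offline term $N^{\src}_x\KL(P^{\src}_x,\wt P^{\src}_x)$ is a fixed constant and does not affect the $\log K$ rate. Admissibility of $\wt{\mathcal M}$, however, is exactly what uniform goodness needs, so I would first try $\wt P^{\src}_x=P^{\src}_x$ and fall back to a joint shift otherwise.
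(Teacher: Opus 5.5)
Your argument is correct, and it is the standard Lai--Robbins-type change-of-measure argument for uniformly good algorithms: uniform goodness on both instances, the inequality $\KL(\rho,\wt\rho)\ge \Prb_{\mathcal M}(E)\log\frac{1}{\Prb_{\wt{\mathcal M}}(E)}-\log 2$, and $\alpha\downarrow 0$, giving $c_0=1$. The paper gives no proof of its own and simply imports this result by citing \citet{simchowitz2019non}; in its application, your Step~1 is supplied by the explicit alternative $\wt{\mathcal M}_x$ (which raises the target probability of $g$ at $x$ to $\frac12+\frac{2\Delta_x}{H-1}$, making $a$ uniquely optimal at $s$), and your support-equality and $[1/4,3/4]$ requirements are needed only for the later KL upper bound, not for this lemma.
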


\begin{proof} We consider an episodic finite-horizon MDP with horizon $H\ge 2$, state space
\[
    \Sset=[S]\cup\{g,b\},
\]
action space $\Aset=[A]$, and initial state $s_1$ sampled uniformly from $[S]$.  The two states $g$ and $b$ are absorbing.  The reward is deterministic and known:
\[
    r(g,a)=1,\qquad r(b,a)=\frac12,\qquad r(s,a)=0\quad \text{for all }s\in[S].
\]

 For every state-action pair $x=(s,a)$, the offline data contain $N_x^{\src}$ independent source-transition samples.  Since the hard instance below has only one nontrivial decision at the first stage, define
\[
    \bar N_K(s,a):=\sum_{k=1}^K \one\{S_1^k=s,\ A_1^k=a\}.
\]

We define $B_x$ as the one-sided value-scale source-target bias budget:
\begin{equation}
    \max_{h\in[H]}\sup_{0\le V\le H-h}
    \left|(P^{\src}-P^{\tar})V(s,a)\right|
    \le B_x .
    \label{eq:value-bias-budget}
\end{equation}

\par\medskip\noindent\emph{Hard instances.} For each $s\in[S]$, action $1$ is the unique optimal action.  For every $a\ne 1$, fix a gap $\Delta(s,a)>0$ satisfying $\Delta(s,a)\le c_0(H-1)$ for a sufficiently small numerical constant $c_0$, so that all probabilities below lie in $[1/4,3/4]$.

The target transition kernel of the baseline instance $\mathcal M$ is defined by
\begin{align}
    P^{\tar}(g\mid s,1)&=\frac12,\qquad
    P^{\tar}(b\mid s,1)=\frac12,\nonumber\\
    P^{\tar}(g\mid s,a)&=\frac12-\frac{2\Delta(s,a)}{H-1},\qquad
    P^{\tar}(b\mid s,a)=\frac12+\frac{2\Delta(s,a)}{H-1},\qquad a\ne 1.\nonumber
    \label{eq:baseline-target}
\end{align}

For any algorithm,
\begin{equation}
    \E_{\mathcal M}[\Reg(K)]
    \ge
    \sum_{s\in[S]}\sum_{a\ne 1}
    \Delta(s,a)\,\E_{\mathcal M}[\bar N_K(s,a)].
    \label{eq:regret-count-relation}
\end{equation}

\par\medskip\noindent\emph{Alternative instance.} Fix a suboptimal pair $x=(s,a)$ with $a\ne 1$ and write $ \Delta_x:=\Delta(s,a)$. Besides, we know $\tilde{\nu}_x \geq B_{s,a}$

Construct an alternative instance $\wt{\mathcal M}_x$ that coincides with $\mathcal M$ except at $x$.  At $x$, define the alternative target kernel by
\begin{align}
    \wt P^{\tar}(g\mid x)&=\frac12+\frac{2\Delta_x}{H-1},\nonumber\\
    \wt P^{\tar}(b\mid x)&=\frac12-\frac{2\Delta_x}{H-1}.
    \label{eq:alternative-target}
\end{align}

Then, under $\wt{\mathcal M}_x$, action $a$ is better than action $1$ at state $s$ by exactly $\Delta_x$.  The value-scale separation between $P_x^{\tar}$ and $\wt P_x^{\tar}$ is $2\Delta_x$, and hence
\begin{equation}
    \KL(P_x^{\tar},\wt P_x^{\tar})
    \le C\frac{\Delta_x^2}{H^2}.
    \label{eq:target-kl-bound}
\end{equation}

Now choose the source kernels.  Let $ m_x:=\min\{\tilde{\nu}_x,\Delta_x\}$. 

For the baseline instance, set
\begin{align}
    P^{\src}(g\mid x)
    &= P^{\tar}(g\mid x)+\frac{2m_x}{H-1},\nonumber\\
    P^{\src}(b\mid x)
    &=1-P^{\src}(g\mid x).
    \label{eq:baseline-source}
\end{align}

For the alternative instance, set
\begin{align}
    \wt P^{\src}(g\mid x)
    &= \wt P^{\tar}(g\mid x)-\frac{2m_x}{H-1},\nonumber\\
    \wt P^{\src}(b\mid x)
    &=1-\wt P^{\src}(g\mid x).
    \label{eq:alternative-source}
\end{align}

For all other pairs, take source and target kernels to coincide.

This construction satisfies the value-scale shift constraint because moving the probability of $g$ by $2m_x/(H-1)$ changes the value by exactly $m_x\le \tilde{\nu}_x$.  Moreover, the remaining source-source value-scale separation is
$ 2\pos{\Delta_x-\tilde{\nu}_x}$.

Therefore,
\begin{equation}
    \KL(P_x^{\src},\wt P_x^{\src})
    \le C\frac{\pos{\Delta_x-\tilde{\nu}_x}^2}{H^2}.
    \label{eq:source-kl-bound}
\end{equation}

If $\tilde{\nu}_x\ge \Delta_x$, the two source kernels at $x$ are identical, and the offline data have zero information for distinguishing $\mathcal M$ and $\wt{\mathcal M}_x$ at this pair.  If $\tilde{\nu}_x=0$, the source kernels coincide with the target kernels, so offline samples have the same information order as online samples.

Fix $x=(s,a)$ with $a\ne 1$.  By the KL decomposition \eqref{eq:kl-decomposition}, the target KL bound \eqref{eq:target-kl-bound}, and the source KL bound \eqref{eq:source-kl-bound},
\begin{align}
    \KL(\rho_{\mathcal M},\rho_{\wt{\mathcal M}_x})
    &\le
    C\E_{\mathcal M}[\bar N_K(x)]\frac{\Delta_x^2}{H^2}
    +
    C N_x^{\src}\frac{\pos{\Delta_x-\tilde{\nu}_x}^2}{H^2}.
    \label{eq:kl-final-upper}
\end{align}

On the other hand, by the change-of-measure lemma, for all sufficiently large $K$,
\[
    \KL(\rho_{\mathcal M},\rho_{\wt{\mathcal M}_x})
    \ge c_0\log K.
\]
Combining the two displays and multiplying by $H^2/(C\Delta_x^2)$ yields
\begin{align}
    \E_{\mathcal M}[\bar N_K(x)]
    &\ge
    c\frac{H^2\log K}{\Delta_x^2}
    -
    N_x^{\src}
    \left(\frac{\pos{\Delta_x-\tilde{\nu}_x}}{\Delta_x}\right)^2
    \nonumber\\
    &=
    c\frac{H^2\log K}{\Delta_x^2}
    -
    N_x^{\src}
    \left(1-\frac{\tilde{\nu}_x}{\Delta_x}\right)_+^2.
    \label{eq:count-lower-bound}
\end{align}

Since a count is nonnegative, we may take the positive part of the right-hand side.  Multiplying by $\Delta_x$ and summing over $s\in[S]$ and $a\ne 1$, and then using the regret-count relation \eqref{eq:regret-count-relation}, proves
\begin{equation}
    \E_{\mathcal M}[\Reg(K)]
   \gtrsim
    c
\sum_{s,a}
    \left[
        \frac{H^2L}{\Delta(s,a)}
        -
        N_{s,a}^{\src}\Delta(s,a)
        \left(1-\frac{\tilde{\nu}(s,a)}{\Delta(s,a)}\right)_+^2
    \right]_+.
\end{equation}
\end{proof}

\section{Proof for Instance-Independent Sub-optimality Gap Lower Bound}\label{sec:bpi_independent_lower_proof}

In this section, we follow the derivations from~\citet{NEURIPS2021_e61eaa38} to derive the instance-independent sub-optimality gap lower bound.  

\begin{proof}
The construction below is inspired by the bandit-state construction of
\citet{NEURIPS2021_e61eaa38}. We first assume that \(S\geq 8\), \(A\geq 2\), \(H\geq 8\), \(K\geq 1\),
\(N^{\src}\geq 0\), \(C_\nu^*\geq 2\), and \(\nu_{\max}\geq 0\).
 And we set $M:=\min\{\lfloor C_\nu^*\rfloor,A\}$. Let \(S_0:=S-4\).  The state space is defined as $\mathcal S=\{s_0,x_1,\ldots,x_{S_0},g,b,z\}$. Here \(s_0\) is the  initial state, the states \(x_i\) are
informative states, \(g\) is a good absorbing state, \(b\) is a bad absorbing state, and \(z\) is a dead absorbing state.  The action space is
\(\mathcal A=[A]\).  The deterministic reward is
\begin{align}
r(g,a)=1,
\qquad
r(s,a)=0
\quad\text{for every }s\neq g\text{ and }a\in[A].
\nonumber
\end{align}
The states \(g,b,z\) are absorbing under every action.  

\par\medskip\noindent\emph{Target hard family.}
For each vector
\((a_1^*,\ldots,a_{S_0}^*)\in[M]^{S_0}\), construct one target MDP.
At the gateway state, action \(1\) moves uniformly to the informative states,
whereas every other action moves to \(z\):
\begin{align}
P^{\tar}(x_i\mid s_0,1)=\frac{1}{S_0},
\qquad
P^{\tar}(z\mid s_0,a)=1\quad(a\neq 1).
\nonumber
\end{align}

We consider a parameter \(\theta\in(0,1/4]\) fixed near the end of the proof.  For every  \(x_i\) and every action \(a\), we set  $P^{\tar}(x_i\mid x_i,a)=1-\frac1H$. For a non-special action \(a\neq a_i^*\),
\begin{align}
P^{\tar}(g\mid x_i,a)
=P^{\tar}(b\mid x_i,a)
=\frac{1}{2H},
\nonumber
\end{align}
whereas the special action \(a_i^*\) satisfies
\begin{align}
P^{\tar}(g\mid x_i,a_i^*)
=\frac1H\left(\frac12+\theta\right),
\qquad
P^{\tar}(b\mid x_i,a_i^*)
=\frac1H\left(\frac12-\theta\right).
\nonumber
\end{align}
The total probability of leaving \(x_i\) is \(1/H\), independently of the
action.  The same action \(a_i^*\) is special at \(x_i\) at every stage;
consequently, \(P^{\tar}\) is genuinely time-homogeneous.

The deterministic target-optimal policy takes action \(1\) at \(s_0\), takes
\(a_i^*\) at \(x_i\), and takes action \(1\) at the absorbing states.  If a
policy takes a non-special action at \(x_i\) at stage \(h\), its probability of
moving to \(g\), rather than \(b\), is smaller by \(\theta/H\).  Since
\(V_{h+1}^*(g)=H-h\) and \(V_{h+1}^*(b)=0\), the corresponding optimal
action gap is
\begin{align}
Q_h^*(x_i,a_i^*)-Q_h^*(x_i,a)
=\frac{\theta(H-h)}{H},
\qquad a\neq a_i^*.
\nonumber
\end{align}


\par\medskip\noindent\emph{Source MDP and behavior policy.}
We define the source parameter by $\theta_{\src}:=(\theta-H\nu_{\max})_+$. The source MDP equals the target MDP at every state--action pair except
\((x_i,a_i^*)\).  At this pair, we consider
\begin{align}
P^{\src}(g\mid x_i,a_i^*)
=\frac1H\left(\frac12+\theta_{\src}\right),
\qquad
P^{\src}(b\mid x_i,a_i^*)
=\frac1H\left(\frac12-\theta_{\src}\right).
\nonumber
\end{align}

Therefore,
\begin{align}
\TV\!\left(
P^{\tar}(\cdot\mid x_i,a_i^*),
P^{\src}(\cdot\mid x_i,a_i^*)
\right)
=\frac{|\theta-\theta_{\src}|}{H}
\leq \nu_{\max}.
\nonumber
\end{align}
The total-variation distance is zero at every other state--action pair.

Let the source behavior policy take the gateway action with probability $\pi^b(1\mid s_0)=\frac{M}{C_\nu^*}$.
With the remaining probability it takes any action that moves to \(z\).
At every \(x_i\), it is uniform over the first \(M\) actions: $\pi^b(a\mid x_i)=\frac1M\1\{a\in[M]\}$.
At \(g,b,z\), it takes action \(1\).  The factor \(M/C_\nu^*\) at the
gateway is what produces an effective source sample size
\(N^{\src}/C_\nu^*\), rather than \(N^{\src}/M\).

We now verify single-policy concentrability.  At stage \(1\), the occupancy
ratio of the optimal gateway action is \(C_\nu^*/M\leq C_\nu^*\).
For \(h\geq2\), the probability of being at \(x_i\), conditional on having
passed the gateway, is $\frac1{S_0}\left(1-\frac1H\right)^{h-2}$. It is the same in the source and target MDPs because the self-loop probability
does not depend on the action or on the parameter.  Hence, at the optimal
informative state--action pair, we have
\begin{align}
\frac{d_h^{\pi^*,\tar}(x_i,a_i^*)}
     {d_h^{\pi^b,\src}(x_i,a_i^*)}
=
\frac{1}
     {(M/C_\nu^*)(1/M)}
=C_\nu^*.
\nonumber
\end{align}

At pairs not used by \(\pi^*\), the numerator is zero.  At \(g\), the
state-occupancy ratio is at most
\begin{align}
\frac{C_\nu^*}{M}
\frac{1/2+\theta}{1/2+\theta_{\src}/M}
\leq
\frac{3C_\nu^*}{2M}
\leq C_\nu^*,
\nonumber
\end{align}
where \(M\geq2\) and \(\theta\leq1/4\).  At \(b\), it is at most
\begin{align}
\frac{C_\nu^*}{M}
\frac{1/2-\theta}{1/2-\theta_{\src}/M}
\leq
\frac{C_\nu^*}{M}
\leq C_\nu^*.
\nonumber
\end{align}

The target-optimal occupancy of \(z\) is zero.  This proves
\(\sup_{h,s,a}d_h^{\pi^*,\tar}(s,a)/
d_h^{\pi^b,\src}(s,a)\leq C_\nu^*\), and equality is attained at each
\((x_i,a_i^*)\).

Below we put the uniform prior on
\((a_1^*,\ldots,a_{S_0}^*)\in[M]^{S_0}\).
For a fixed \(i\), compare an alternative in which \(a\in[M]\) is special at
\(x_i\) with a reference experiment in which every action at \(x_i\) has zero
parameter.  The special actions at the other informative states are left
unchanged.  Let \(N_i^K(a)\) and \(N_i^{\src}(a)\) be the numbers of observed
target and source transitions, respectively, from \((x_i,a)\).  These counts
include all stages, which is valid because the transition kernel is the same
at every stage.

For \(x\in[0,1/4]\), the transition vectors with zero and nonzero parameter are
\begin{align}
p_0
=\left(1-\frac1H,\frac1{2H},\frac1{2H}\right),
\qquad
p_x
=\left(1-\frac1H,\frac1H\left(\frac12+x\right),
\frac1H\left(\frac12-x\right)\right).
\nonumber
\end{align}

Their one-observation divergence satisfies
\begin{align}
\KL(p_0\|p_x)
=\frac1{2H}\log\frac1{1-4x^2}
\leq\frac{4x^2}{H}.
\nonumber
\end{align}

The chain rule for relative entropy remains valid under adaptive online data
collection.  If \(\Prb_{i,0}\) denotes the law of all source data, target
observations, and algorithmic randomness in the reference experiment, and
\(\Prb_{i,a}\) denotes the corresponding law when \(a\) is special at \(x_i\),
then
\begin{align}
\KL(\Prb_{i,0}\|\Prb_{i,a})
\leq
\frac{4\theta^2}{H}\E_{i,0}[N_i^K(a)]
+
\frac{4\theta_{\src}^2}{H}\E_{i,0}[N_i^{\src}(a)].
\nonumber
\end{align}

In one target episode, the expected number of visits to a fixed \(x_i\) is at
most \(H/S_0\).  This bound holds for an adaptive learner as well: it may
choose not to pass the gateway, but conditional on passing, the gateway selects
\(x_i\) uniformly.  Consequently, we have $\frac1M\sum_{a=1}^M\E_{i,0}[N_i^K(a)]
\leq\frac{KH}{S_0M}$. For a source episode, the behavior policy passes the gateway with probability
\(M/C_\nu^*\) and then selects each of the first \(M\) actions with
probability \(1/M\).  Hence, for every \(a\in[M]\), we have $\E_{i,0}[N_i^{\src}(a)]
\leq
\frac{N^{\src}H}{S_0C_\nu^*}$. Averaging the preceding KL bound over \(a\in[M]\) therefore gives
\begin{align}
\frac1M\sum_{a=1}^M\KL(\Prb_{i,0}\|\Prb_{i,a})
\leq
4\left\{
\frac{K\theta^2}{S_0M}
+
\frac{N^{\src}\theta_{\src}^2}{S_0C_\nu^*}
\right\}.
\nonumber
\end{align}

Fix an output stage \(h\).  Under the reference experiment, the average
probability that the output policy fails to select a uniformly chosen action
\(a\in[M]\) is at least \(1-1/M\geq1/2\).  Pinsker's inequality and Jensen's
inequality thus yield
\begin{align}
\frac1M\sum_{a=1}^M
\E_{i,a}\!\left[1-\widehat\pi_h(a\mid x_i)\right]
&\geq
\frac12
-
\frac1M\sum_{a=1}^M
\sqrt{\frac12\KL(\Prb_{i,0}\|\Prb_{i,a})}
\geq
\frac12
-
\sqrt{
2\left\{
\frac{K\theta^2}{S_0M}
+
\frac{N^{\src}\theta_{\src}^2}{S_0C_\nu^*}
\right\}}.
\nonumber
\end{align}

It follows that, whenever
\begin{align}
\frac{K\theta^2}{S_0M}
+
\frac{N^{\src}(\theta-H\nu_{\max})_+^2}
     {S_0C_\nu^*}
\leq \frac1{32},
\nonumber
\end{align}
the average probability of selecting a non-special action at every
\((h,x_i)\) is at least \(1/4\).  


We then assume that the output policy takes action \(1\) at \(s_0\): replacing
its action at \(s_0\) by this known gateway action can only increase its target
value, so a lower bound for the improved policy also applies to the original
one.  Under every output policy so modified,
\begin{align}
d_h^{\widehat\pi,\tar}(x_i)
=
\frac1{S_0}\left(1-\frac1H\right)^{h-2},
\qquad h\geq2,
\nonumber
\end{align}
since the total exit probability from \(x_i\) is action-independent.

The performance-difference identity and the action-gap calculation give the
exact expression
\begin{align}
V_1^{*,\tar}(s_0)-V_1^{\widehat\pi,\tar}(s_0)
=
\frac1{S_0}
\sum_{i=1}^{S_0}\sum_{h=2}^{H}
\left(1-\frac1H\right)^{h-2}
\frac{\theta(H-h)}{H}
\left[1-\widehat\pi_h(a_i^*\mid x_i)\right].
\nonumber
\end{align}

For \(2\leq h\leq\lfloor H/2\rfloor\),
\((1-1/H)^{h-2}\geq1/2\) and \((H-h)/H\geq1/2\).  Averaging the previous
display over the uniform prior and using the testing-error lower bound yields
\begin{align}
\E\!\left[
V_1^{*,\tar}(s_0)-V_1^{\widehat\pi,\tar}(s_0)
\right]
&\geq
\frac{\theta}{4S_0}
\sum_{i=1}^{S_0}
\sum_{h=2}^{\lfloor H/2\rfloor}
\E\!\left[1-\widehat\pi_h(a_i^*\mid x_i)\right]\geq c_0H\theta.
\nonumber
\end{align}

Since the average over the hard family is at least \(c_0H\theta\), at least
one member of the family has expected suboptimality at least this large.

We then take a sufficiently small universal constant \(c_1>0\) and set
\begin{align}
\theta
=
c_1\min\left\{
1,\,
\sqrt{\frac{S_0M}{K}},\,
H\nu_{\max}
+\sqrt{\frac{S_0}{K/M+N^{\src}/C_\nu^*}}
\right\}.
\nonumber
\end{align}

This choice ensures \(\theta\leq1/4\) and
\begin{align}
\frac{K\theta^2}{S_0M}\leq c_1^2.
\nonumber
\end{align}

Moreover, because \(c_1\leq1\),
\begin{align}
(\theta-H\nu_{\max})_+
\leq
c_1\sqrt{\frac{S_0}{K/M+N^{\src}/C_\nu^*}},
\nonumber
\end{align}

and hence
\begin{align}
\frac{N^{\src}(\theta-H\nu_{\max})_+^2}
     {S_0C_\nu^*}
\leq
c_1^2
\frac{N^{\src}/C_\nu^*}
     {K/M+N^{\src}/C_\nu^*}
\leq c_1^2.
\nonumber
\end{align}

Choosing \(c_1\) small enough makes the required information condition hold.
We have therefore proved
\begin{align}
\E\!\left[
V_1^{*,\tar}(s_0)-V_1^{\widehat\pi,\tar}(s_0)
\right]
\geq
cH\min\left\{
1,\,
\sqrt{\frac{S_0M}{K}},\,
H\nu_{\max}
+\sqrt{\frac{S_0}{K/M+N^{\src}/C_\nu^*}}
\right\}.
\nonumber
\end{align}
Finally, \(S_0=S-4\geq S/2\) for \(S\geq8\), so replacing \(S_0\) by \(S\)
only changes the universal constant.  

\end{proof}

\section{Proof for Instance-Dependent Sub-optimality Gap Lower Bound}\label{sec:bpi_dependent_lower_proof}

In this subsection, we follow the derivations from~\citet{wang2022gap} to provide the  proof for instance-dependent sub-optimality gap lower bound.

\begin{proof}
Let $S\geq 8$, $A\geq2$, $H\geq5$, $C_\nu^*\geq2$, and define $M:=\min\{\lfloor C_\nu^*\rfloor,A\}$.

\par\medskip\noindent\emph{The hard time-homogeneous MDP family.}
The state space contains the fixed initial state $s_1$, the $S-4$ informative
states $x_1,\ldots,x_{S-4},=$, and three absorbing states $g,b,z$.  The action space is $\mathcal{A}=[A]$.  We use action
$1$ as a gateway action at $s_1$.  The transition from $s_1$ under action $1$
is
\begin{align}
P(x_i\mid s_1,1)
=
\frac{8\epsilon}
{(S-4)(H-2)\Delta_{\min}},
\qquad
i\in[S-4],
\nonumber
\end{align}
and the remaining probability is assigned to $z$.  Every other action at
$s_1$ transitions to $z$ with probability one.  The assumed range of
$\epsilon$ guarantees that these probabilities are valid.

Each target MDP is indexed by
\begin{align}
(a_1^*,\ldots,a_{S-4}^*)
\in[M]^{S-4}.
\nonumber
\end{align}

At an informative state $x_i$, the target transition under its special action
$a_i^*$ is
\begin{align}
P^{\tar}(g\mid x_i,a_i^*)
=
\frac12+\Delta_{\min},
\qquad
P^{\tar}(b\mid x_i,a_i^*)
=
\frac12-\Delta_{\min}.
\nonumber
\end{align}
Every non-special action at $x_i$ transitions to $g$ and $b$ with probability
$1/2$ each.  The states $g,b,z$ are absorbing.

The known deterministic reward is defined as
\begin{align}
r(g,a)=1,
\qquad
r(s_1,1)=\Delta_{\min},
\qquad
r(s,a)=0
\quad\text{for every other }(s,a).
\nonumber
\end{align}

 The
known reward $\Delta_{\min}$ at $(s_1,1)$ merely makes the gateway action
unambiguously optimal and reveals no information about
$(a_1^*,\ldots,a_{S-4}^*)$.

If an informative state $x_i$ is visited at stage $h$, choosing $a_i^*$
instead of a non-special action increases the probability of entering $g$ by
$\Delta_{\min}$.  Since entering $g$ produces one unit of reward at each of
the remaining $H-h$ stages,
\begin{align}
Q_h^*(x_i,a_i^*)
-
Q_h^*(x_i,a)
=
(H-h)\Delta_{\min},
\qquad
a\neq a_i^*.
\nonumber
\end{align}
The smallest positive value of the right-hand side is attained at $h=H-1$ and
equals $\Delta_{\min}$.  The gap between the gateway action and every other
action at $s_1$ is at least its known immediate reward
$\Delta_{\min}$.  All actions at the absorbing states have zero gap.
Consequently,
\begin{align}
\min_{\substack{h,s,a:\\
V_h^*(s)-Q_h^*(s,a)>0}}
\left\{
V_h^*(s)-Q_h^*(s,a)
\right\}
=
\Delta_{\min}.
\nonumber
\end{align}

\par\medskip\noindent\emph{The source MDP and the transition shift.}
The source and target MDPs are identical except at the special actions.  For
each $i\in[S-4]$, define the source transition directly by
\begin{align}
P^{\src}(g\mid x_i,a_i^*)
&=
\frac12+
\sqrt{
\Delta_{\min}
(\Delta_{\min}-\nu_{\max})_+
},
\nonumber\\
P^{\src}(b\mid x_i,a_i^*)
&=
\frac12-
\sqrt{
\Delta_{\min}
(\Delta_{\min}-\nu_{\max})_+
}.
\nonumber
\end{align}

If $\nu_{\max}<\Delta_{\min}$, the total-variation distance at a special
action is
\begin{align}
\Delta_{\min}
-
\sqrt{
\Delta_{\min}
(\Delta_{\min}-\nu_{\max})
}
&=
\Delta_{\min}
\left[
1-
\sqrt{
1-\frac{\nu_{\max}}{\Delta_{\min}}
}
\right]\leq
\nu_{\max},
\nonumber
\end{align}
where the inequality follows from
$\sqrt{1-u}\geq1-u$ for $u\in[0,1]$.  If
$\nu_{\max}\geq\Delta_{\min}$, the distance is
$\Delta_{\min}\leq\nu_{\max}$.  Hence every source-target pair satisfies the
required shift constraint.  Moreover, the squared source parameter is exactly
\begin{align}
\Delta_{\min}
(\Delta_{\min}-\nu_{\max})_+
=
\Delta_{\min}^2
\left(
1-\frac{\nu_{\max}}{\Delta_{\min}}
\right)_+.
\label{eq:source-information-discount}
\end{align}

Equation~\eqref{eq:source-information-discount} is why a source observation
receives a linear bias discount rather than placing the bias inside the
leading gap denominator.

\par\medskip\noindent\emph{The source behavior policy and exact concentrability.}
At the initial state, the behavior policy selects the gateway action with
probability $\pi^b(1\mid s_1)
=
\frac{M}{C_\nu^*}$. Its remaining probability is assigned arbitrarily to the other actions, all
of which enter $z$.  At every informative state, the behavior policy is
uniform on the $M$ candidate actions: $\pi^b(a\mid x_i) = \frac1M,  a\in[M]$. It takes an arbitrary fixed action at the absorbing states.  


The target-optimal policy takes action $1$ at $s_1$ and action $a_i^*$ at
$x_i$.  At stage two,
\begin{align}
d_2^{\pi^*,P^{\tar}}(x_i,a_i^*)
&=
\frac{8\epsilon}
{(S-4)(H-2)\Delta_{\min}},
\nonumber\\
d_2^{\pi^b,P^{\src}}(x_i,a_i^*)
&=
\frac{M}{C_\nu^*}
\frac{8\epsilon}
{(S-4)(H-2)\Delta_{\min}}
\frac1M
=
\frac{1}{C_\nu^*}
\frac{8\epsilon}
{(S-4)(H-2)\Delta_{\min}}.
\nonumber
\end{align}
Therefore,
\begin{align}
\frac{
d_2^{\pi^*,P^{\tar}}(x_i,a_i^*)
}{
d_2^{\pi^b,P^{\src}}(x_i,a_i^*)
}
=
C_\nu^*.
\label{eq:exact-concentrability}
\end{align}
At $s_1$, the occupancy ratio of the gateway action is
$C_\nu^*/M\leq C_\nu^*$.  At $z$, the ratio is no larger than one.  At $g$,
the ratio between the target-optimal occupancy and the source-behavior
occupancy is at most
\begin{align}
\frac{C_\nu^*}{M}
\frac{
1/2+\Delta_{\min}
}{
1/2+
\sqrt{
\Delta_{\min}(\Delta_{\min}-\nu_{\max})_+
}/M
}
\leq
\frac{3C_\nu^*}{2M}
\leq
\frac{3C_\nu^*}{4},
\nonumber
\end{align}
because $M\geq2$ and $\Delta_{\min}\leq1/4$.  The corresponding ratio at
$b$ is at most $2C_\nu^*/3$.  Every other action selected with zero
probability by the target-optimal policy has zero numerator.  Combining these
bounds with \eqref{eq:exact-concentrability} shows that the single-policy
concentrability coefficient is exactly $C_\nu^*$.
\par\medskip\noindent\emph{Change of measure.}
Fix $x_i$ and the special actions at all other informative states.  Let
$\Prb_{i,0}$ be the law obtained by setting every action at $x_i$ to transition
to $g$ and $b$ with probability $1/2$ in both MDPs, and let $\Prb_{i,a}$ be the
corresponding law when $a$ is special at $x_i$.  Without loss of generality,
the output takes the known optimal gateway action at $s_1$.  Let
$N_i^{K}(a)$ and $N_i^{\src}(a)$ be the target and source counts of $(x_i,a)$.
Under $\Prb_{i,0}$,
\begin{align}
\frac1M
\sum_{a=1}^M
\E_{i,0}\!\left[N_i^{K}(a)\right]
&\leq
\frac{8K\epsilon}
{(S-4)(H-2)M\Delta_{\min}},
\nonumber\\
\E_{i,0}\!\left[N_i^{\src}(a)\right]
&=
\frac{8N^{\src}\epsilon}
{(S-4)(H-2)C_\nu^*\Delta_{\min}},
\qquad a\in[M],
\nonumber
\end{align}
where the first inequality holds because a target episode reaches $x_i$ with
probability at most
$8\epsilon/((S-4)(H-2)\Delta_{\min})$, independently of the algorithm's
past observations.  The second follows from the behavior policy just defined.

For $u\in[0,1/4]$, we have
\begin{align}
\KL\!\left(
\Ber(1/2)\,\middle\|\,\Ber(1/2+u)
\right)
=
-\frac12\log(1-4u^2)
\leq
4u^2.
\nonumber
\end{align}


The chain rule for KL divergence gives
\begin{align}
\KL\!\left(
\Prb_{i,0}\,\middle\|\,\Prb_{i,a}
\right)
\leq
4\Delta_{\min}^2
\E_{i,0}\!\left[N_i^{K}(a)\right]
&+
4\Delta_{\min}^2
\left(
1-\frac{\nu_{\max}}{\Delta_{\min}}
\right)_+
\E_{i,0}\!\left[N_i^{\src}(a)\right].
\nonumber
\end{align}

Let $\widehat a_i:=\widehat\pi_2(x_i)$. Averaging over the $M$ possible special actions gives
\begin{align}
\frac1M
\sum_{a=1}^M
\Prb_{i,0}(\widehat a_i\neq a)
\geq
1-\frac1M.
\nonumber
\end{align}

Based on Pinsker's inequality, we have 
\begin{align}
\frac1M
\sum_{a=1}^M
\Prb_{i,a}(\widehat a_i\neq a)
&\geq
1-\frac1M
-
\sqrt{
\frac{1}{2M}
\sum_{a=1}^M
\KL\!\left(
\Prb_{i,0}\,\middle\|\,\Prb_{i,a}
\right)
}
\nonumber\\
&\geq
\frac12
-
\sqrt{
\frac{
16\epsilon\Delta_{\min}
}{
(S-4)(H-2)
}
\left[
\frac{K}{M}
+
\frac{N^{\src}}{C_\nu^*}
\left(
1-\frac{\nu_{\max}}{\Delta_{\min}}
\right)_+
\right]
}.
\label{eq:per-state-testing-error}
\end{align}


The right-hand side of \eqref{eq:per-state-testing-error} is at least $1/4$
whenever its square-root term is at most $1/4$.  After squaring, multiplying
by $M$, and moving the source contribution to the right-hand side, this
requirement is exactly
\begin{align}
K
<
\left[
\frac{(S-4)(H-2)M}
{256\,\epsilon\Delta_{\min}}
-
\frac{M N^{\src}}{C_\nu^*}
\left(
1-\frac{\nu_{\max}}{\Delta_{\min}}
\right)_+
\right]_+.
\label{eq:direct-K-threshold}
\end{align}

The inequality ensures that, when the expression inside the positive
part is nonpositive, no nonnegative $K$ is incorrectly included.  Under
\eqref{eq:direct-K-threshold}, the expected number of informative states at
which the output policy selects a non-special action is therefore at least $\frac{S-4}{4}$.

Under the target-optimal policy, each informative state is reached at stage
two with probability $\frac{8\epsilon}
{(S-4)(H-2)\Delta_{\min}}$. An incorrect action at that state loses $(H-2)\Delta_{\min}$ in conditional value.
Consequently, for every realized output policy that takes the gateway action,
\begin{align}
V_1^*(s_1)-V_1^{\widehat\pi}(s_1)
=
\frac{8\epsilon}{S-4}
\sum_{i=1}^{S-4}
\mathbf 1\{\widehat a_i\neq a_i^*\}.
\label{eq:value-error-identity}
\end{align}
Averaging \eqref{eq:value-error-identity} over the uniform prior, the data, and
the algorithmic randomness gives
\begin{align}
\E\!\left[
V_1^*(s_1)-V_1^{\widehat\pi}(s_1)
\right]
\geq
2\epsilon.
\nonumber
\end{align}

The value loss in \eqref{eq:value-error-identity} is at most $8\epsilon$.
Splitting according to whether the value loss is smaller than $\epsilon$
therefore gives
\begin{align}
2\epsilon
&\leq
\E\!\left[
V_1^*(s_1)-V_1^{\widehat\pi}(s_1)
\right]\leq
\epsilon
+
7\epsilon
\Prb\!\left(
V_1^*(s_1)-V_1^{\widehat\pi}(s_1)
\geq\epsilon
\right).
\nonumber
\end{align}
Consequently,
\begin{align}
\Prb\!\left(
V_1^*(s_1)-V_1^{\widehat\pi}(s_1)
\geq\epsilon
\right)
\geq\frac17.
\nonumber
\end{align}
Since this bound holds after averaging over the finite hard family, at least
one member of the family satisfies it.  Combining this conclusion directly
with \eqref{eq:direct-K-threshold}, we have proved that  when $C_\nu^*\geq A$, if 
\begin{align}
K
<
\left[
\frac{(S-4)(H-2)A}
{256\,\epsilon\Delta_{\min}}
-
\frac{A N^{\src}}{C_\nu^*}
\left(
1-\frac{\nu_{\max}}{\Delta_{\min}}
\right)_+
\right]_+,
\nonumber
\end{align}
we have
\begin{align}
\Prb\!\left(
V_1^*(s_1)-V_1^{\widehat\pi}(s_1)
\geq\epsilon
\right)
\geq\frac17. \nonumber
\end{align}

\end{proof}

\vskip 0.2in
\bibliography{Reference}

\end{document}